\documentclass{article}

\usepackage{microtype}
\usepackage{graphicx}
\usepackage{subcaption}
\usepackage{booktabs} 

\usepackage{hyperref}

\usepackage[accepted]{icml2026}

\usepackage{amsmath}
\usepackage{amssymb}
\usepackage{mathtools}
\usepackage{amsthm}
\usepackage{bm}
\usepackage{subfiles}
\usepackage{booktabs}
\usepackage{bbm}
\usepackage[capitalize,noabbrev]{cleveref}

\usepackage{natbib}
\bibpunct[:]{(}{)}{;}{a}{,}{,}
\theoremstyle{plain}
\newtheorem{theorem}{Theorem}[section]

\newtheorem{lemma}[theorem]{Lemma}
\newtheorem{corollary}[theorem]{Corollary}
\theoremstyle{definition}
\newtheorem{definition}[theorem]{Definition}
\newtheorem{assumption}[theorem]{Assumption}
\theoremstyle{remark}
\newtheorem{remark}[theorem]{Remark}
\theoremstyle{condition}

\newcommand{\e}{\mathrm{e}}
\renewcommand{\d}{\mathrm{d}}
\newcommand{\E}{\mathbb{E}}

\renewcommand{\H}{\mathrm{He}}

\hypersetup{
    colorlinks,
    linkcolor={red!50!black},
    citecolor={blue!50!black},
    urlcolor={blue!80!black}
}
\usepackage[disable,textsize=tiny]{todonotes}

\icmltitlerunning{Dataset Distillation Efficiently Encodes Low-Dimensional Representations}

\begin{document}

\twocolumn[
  \icmltitle{Dataset Distillation Efficiently Encodes Low-Dimensional Representations from Gradient-Based Learning of Non-Linear Tasks}



  \icmlsetsymbol{equal}{*}

  \begin{icmlauthorlist}
    \icmlauthor{Yuri Kinoshita}{utokyo,rcbs}
    \icmlauthor{Naoki Nishikawa}{utokyo,raip}
    \icmlauthor{Taro Toyoizumi}{utokyo,rcbs}
  \end{icmlauthorlist}

\icmlaffiliation{utokyo}{The University of Tokyo, Tokyo, Japan}
\icmlaffiliation{rcbs}{Laboratory for Neural Computation and Adaptation, RIKEN Center for Brain Science, Wako, Japan}
\icmlaffiliation{raip}{RIKEN Center for Advanced Intelligence Project, Nihonbashi, Japan}

\icmlcorrespondingauthor{Yuri Kinoshita}{yuri-kinoshita111@g.ecc.u-tokyo.ac.jp}

  \icmlkeywords{Machine Learning, ICML}

  \vskip 0.3in
]



\printAffiliationsAndNotice{}  

\begin{abstract}
Dataset distillation, a training-aware data compression technique, has recently attracted increasing attention as an effective tool for mitigating costs of optimization and data storage. However, progress remains largely empirical. Mechanisms underlying the extraction of task-relevant information from the training process and the efficient encoding of such information into synthetic data points remain elusive. In this paper, we theoretically analyze practical algorithms of dataset distillation applied to the gradient-based training of two-layer neural networks with width $L$. By focusing on a non-linear task structure called multi-index model, we prove that the low-dimensional structure of the problem is efficiently encoded into the resulting distilled data. This dataset reproduces a model with high generalization ability for a required memory complexity of $\tilde \Theta(r^2d+L)$, where $d$ and $r$ are the input and intrinsic dimensions of the task. To the best of our knowledge, this is one of the first theoretical works that include a specific task structure, leverage its intrinsic dimensionality to quantify the compression rate and study dataset distillation implemented solely via gradient-based algorithms. 
\end{abstract}

\section{Introduction}\label{sec:intro}

\subsection{Background}
Over the past few years, deep learning has advanced in tandem with a training paradigm that benefits substantially from systematic increases in data scale. While this approach has yielded unprecedented results across a wide range of domains, it entails fundamental limitations, notably in terms of the costs of training, data storage and its transmission.

\textit{Dataset distillation} (DD), also known as dataset condensation, addresses these challenges by constructing a small set of \textit{trained synthetic} data points that distill essential information from the learning scenario of a given problem so that training with these resulting instances reproduces a high generalization score on the task~\citep{W2018dataset,Z2021dataset,C2022dataset,W2022cafe}. The effectiveness of DD in reducing the amount of data required to obtain a high-performing model has been reportedly observed and is nowadays attracting increasing attention in various fields that span modalities such as images~\citep{W2018dataset}, text~\citep{S2021softlabel}, medical data~\citep{L2020soft}, time series~\citep{D2024condtsf} and electrophysiological signals~\citep{G2025single}. Beyond this efficiency, DD offers a set of condensed training summaries, whose replay can be applied to promote transfer learning~\citep{L2023self}, neural architecture search~\citep{Z2021dataset}, continual learning~\citep{L2020mnemonics, M2020reducing, K2024hybrid}, federated learning~\citep{Z2020distilled}, data privacy~\citep{D2022privacy} and model interpretability~\citep{C2025dataset}.

In parallel to ongoing empirical development, theoretical work has been led to explain why DD can compress a substantial training effort, arising from the complex interaction between task structure, model architecture, and optimization dynamics, into a few iterations over a small number of synthetic data points. On the one hand, some studies have primarily considered the number of data points sufficient to reconstruct the exact linear ridge regression (LRR) or kernel ridge regression (KRR) solution of the training data~\citep{I2023theoretical,M2023onthesize,C2024provable}. On the other hand, a scaling law for the required number of distilled samples was recently proved~\citep{L2025utility}. Despite these advances, existing analyses either focus on essentially linear models or do not explicitly characterize how the intrinsic task structure mediates distillation. Especially, leveraging low-dimensionality of the problem is known to be central to the efficiency and adaptivity of gradient-based algorithms~\citep{D2022neural}. Similar considerations appear to apply in DD, where more challenging tasks tend to require larger distilled sets~\citep{Z2021dataset}. In short, the rigorous mechanisms through which DD operates in complex practical learning regimes, encompassing task structure, non-linear models, and gradient-based optimization dynamics, remain underexplored. 

Therefore, in this paper, we precisely theoretically study DD in such a framework, focusing on how task structure can be leveraged to achieve low memory complexity of distilled data that realizes high generalization ability when used at training. Particularly, we analyze two state-of-the-art DD algorithms, performance matching~\citep{W2018dataset} and gradient matching~\citep{Z2021dataset}, applied to the training of two-layer ReLU neural networks under a non-trivial non-linear task endowed with a latent structure called \textit{multi-index models}. All optimization procedures follow finite time gradient-based algorithms.

\begin{table*}[t]
    \centering
    \caption{Comparison of our contributions with prior work. They all treat regression problems of the form $\{x_n,f^\ast (x_n)+\epsilon_n\}$ with noise $\epsilon_n$. We cite only the most relevant results for KRR with non-linear kernels. $N$ is the training data size, $d$ the input dimension, $r$ the intrinsic dimension of multi-index model, $L$ the width of two-layer neural networks, $q$ the feature dimension of the kernels which equals to $L$ if the model is a two-layer neural network.}
    \begin{tabular}{ccccc}
    \toprule
            & $f^\ast$ form & Optimizations & Trained Model & Memory Cost  \\ \midrule
         \citet{I2023theoretical} & Unspecified & Exact & Linear (Gaussian kernel) & $\Theta(Nd)$ \\
         \citet{M2023onthesize} & Unspecified & Exact & Linear (shift invariant kernels) & $\Theta(qd)$\\
         \citet{C2024provable} & Unspecified & Exact & Linear (any kernel) & $\Theta(qd)$\\
         \textbf{Ours}& \textbf{Multi-index} & \textbf{Gradient-based} & \textbf{Non-Linear} & $\boldsymbol{\tilde \Theta(r^2d+L)}$ \\ 
         \bottomrule
    \end{tabular}
    \label{tab:comp_contr}
\end{table*}
\subsection{Contributions}
Our major contributions can be summarized as follows:
\begin{itemize}
    \item To the best of our knowledge, this is one of the first theoretical works to study DD (gradient and performance matching) implemented solely via gradient-based algorithms and to include a specific task structure with low intrinsic dimensionality.
    \item We prove that DD applied to two-layer ReLU neural networks with width $L$ learning a class of non-linear functions called \textit{multi-index models} efficiently encode latent representations into distilled data. 
    \item We show that this dataset reproduces a model with high generalization ability for a memory complexity of $\tilde \Theta(r^2d+L)$, where $d$ and $r$ are the input and intrinsic dimensions of the task. See Table~\ref{tab:comp_contr} for comparison.
    \item Theoretical results and its application to transfer learning are discussed and illustrated with experiments.
\end{itemize}

\subsection{Related Works}
This work connects perspectives and insights from three different lines of work.

Empirical investigations of DD have been applied to both regression~\citep{D2024condtsf,M2025toward} and classification tasks~\citep{W2018dataset}, and depending on which aspect of training information the distillation algorithm prioritizes, existing methods can be grouped into several categories~\citep{Y2024review}. Among them, \textit{performance matching} (PM) directly focuses on minimizing the training loss of a model trained on the distilled data~\citep{W2018dataset,S2021softlabel,N2021meta,N2021dataset,Z2022dataset}, while \textit{gradient matching} (GM) learns synthetic instances so that gradients of the loss mimic those induced by real data~\citep{Z2021dataset,L2022dataset,J2023delving}. Other methods such as distribution matching \citep{Z2023distribution}, trajectory matching \citep{C2022dataset} and diffusion-based distillation~\citep{G2024efficient} are out of the scope of this work as PM and GM serve as the core building blocks of some of them and provide a regime where one can mathematically trace the most fundamental interaction between task structure, non-linear models and optimization dynamics. Such algorithms constitute the ingredients of DD and can be incorporated into a wide variety of strategies. They have been developed to improve scalability~\citep{C2023scaling, C2023data}, incorporate richer information~\citep{S2025flip,Z2021siamese,D2022remember,L2022dataset}, and be tailored to pre-trained models~\citep{C2025dataset}. In this paper, we follow the procedure of \citet{C2023data} called \textit{progressive dataset distillation}, which learns a distilled dataset for each partitioned training phase. 

Current theory of DD is mainly formulated as the number of synthetic data points required to infer the exact final parameter optimized for the training data. \citet{I2023theoretical} showed that this number equals the input dimension for LRR, and the size of the original training set for KRR with a Gaussian kernel. For shift-invariant kernels~\citep{M2023onthesize} or general kernels~\citep{C2024provable}, this amount becomes the dimension of the kernel. Under LRR and KRR with surjective kernels, \citet{C2024provable} refined the bounds to one distilled point per class. \citet{I2023theoretical} showed that one data point suffices for a specific type of model called generalized linear models. As for recent work, \citet{L2025utility} characterize the generalization error of models trained with distilled data over a set of algorithm and initialization configurations. Nevertheless, these prior theoretical works do not take into account any kind of task structure. Gradient-based algorithms of DD such as GM for non-linear neural networks are not considered either. We provide a framework that both fills this gap and reveals the high compression rate of DD.

Our theory builds on analytical studies of feature learning in neural networks via gradient descent. Their optimization dynamics are rigorously inspected, yielding lower bounds on the sample complexity of training data required for low generalization loss of non-linear tasks with low-dimensional structures, such as single-index~\citep{D2020precise,G2020generalisation,B2022high, O2024pretrained, N2025nonlinear}, multi-index ~\citep{D2022neural, A2022merged,B2023learning}, and additive models~\citep{O2024learning, R2025emergence}. Typically, these results provide theoretical support for the effective feature learning of neural networks observed in practice beyond neural tangent kernel (NTK) regimes. This framework is thus suited to elucidate the intricate mechanism of DD beyond the NTK and KRR settings of previous works on this topic. We will indeed prove that feature learning happens in DD as well, leading to a compact memory complexity that leverages the low-dimensionality of the problem.

\paragraph{Organization} In Section~\ref{sec:prem}, we will explain the basic formulation of DD and its algorithms. In Section~\ref{sec:prob_set}, we will dive into a detailed clarification of our problem setting. Section~\ref{sec:result} will be devoted to the description of our theoretical analysis. Section~\ref{sec:exp} will illustrate our theoretical results, and their discussion will be provided in Section~\ref{sec:ccl}. 

\paragraph{Notation}
Throughout this paper, the Euclidean norm of a vector $x$ is defined as $\|x\|$, and the inner product as $\langle\cdot,\cdot\rangle$. For matrices, $\|A\|$ denotes the $l_2$ operator norm  $A$ and $\|A\|_F$ its Frobenius norm. $S^{d-1}$ corresponds to the unit sphere in $\mathbb R^d$. When we state for two algorithms $\mathcal A \vcentcolon= \mathcal A'$, $\mathcal A$ is defined as $\mathcal A'$ with the input arguments inherited from $\mathcal A$. $[i]$ is the set $\{1,\ldots,i\}$ and $a\vee b = \max\{a,b\}$. $\tilde O(\cdot)$ and $\tilde \Omega(\cdot)$ represent $O(\cdot)$ and $\Omega(\cdot)$ but with hidden polylogarithmmic terms.

\section{Preliminaries}\label{sec:prem}

In this section, we explain the mathematical background of DD and its algorithms.
\subsection{General Strategy}

Let us consider a usual optimization framework $\mathcal {O}^{Tr}$ of a model $f_\theta$, where $\theta$ is optimized based on a training data $\mathcal D^{Tr} = \{(x_n,y_n)\}_{n=1}^N$ and a loss $\mathcal L$ so that the resulting $\theta^\ast$ realizes a low generalization error $\E_{(x,y)\sim \mathcal P}[\mathcal L (\theta^\ast, (x,y))]$, where $\mathcal P$ is the data distribution. The goal of DD is to create a \textit{synthetic} dataset $\mathcal D^{S} = \{(\tilde x_m,\tilde y_m)\}_{m=1}^M$ and, occasionally, an alternative optimization framework $\mathcal O^S$, so that training $f_\theta$ with $\mathcal D^{S}$ along $\mathcal O^S$ returns a parameter $\tilde \theta^\ast$ that achieves low $\E_{(x,y)\sim \mathcal P}[\mathcal L (\tilde \theta^\ast, (x,y))]$. Ultimately, we expect that the memory complexity of $\mathcal D^S$ is smaller than that of $\mathcal D^{Tr}$ and, preferably, than the storage cost of the whole model; otherwise, saving $f_\theta$ may be sufficient in some settings. DD thus compares two training paradigms, \textit{teacher training} $\mathcal {O}^{Tr}$ tuned for the original training data and \textit{student training} $\mathcal {O}^{S}$ that employs instead the distilled data. Then, it \textit{distills} information from this comparison so that \textit{retraining} the model on distilled data can reproduce the performance or optimization dynamics of the teacher. These three phases are summarized in Algorithm~\ref{al:2}. Here, $\mathcal A$ and $\mathcal M$ are algorithms, $\xi$ is the number of iteration, $\eta$ the step size, and $\lambda$ the $L_2$ regularization coefficient. We opt for the strategy of \textit{progressive DD} proposed by~\citet{C2023data}, which separates training into multiple phases and applies DD within each of them. This allows DD to cover and distill the whole training procedure and enables us to precisely characterize what information is distilled at each time step. Although~\citet{C2023data} reuse distilled data from previous intervals in subsequent phases for retraining, in this work, we adopt a simplified variant where we retrain the model with each distilled data consecutively.

\subsection{Distillation Algorithms $\mathcal M$}

One of the primary interests of research in DD lies in identifying which properties of teacher training the distilled data should encode and in building an objective function according to this. The direct approach to achieve a low generalization error $\E_{(x,y)\sim \mathcal P}[\mathcal L (\tilde \theta^\ast, (x,y))]$ is to minimize the training loss $\mathcal L (\tilde \theta^\ast, D^{Tr})$ so that by definition, training on  $\mathcal D^S$ captures the information of the whole training~\citep{W2018dataset}. This method is called \textit{performance matching} (PM). While PM tries to find the best $\mathcal D^S$, this requires a complex bi-level optimization~\citep{V2022implicit}. A simple one-step version learning $\mathcal D^S$ that trains in one step a model with low training loss can be defined based on~\citet{W2018dataset} as follows:
\begin{definition}\label{def:pm}
    Consider a one-step student training $\mathcal A^S$ that outputs $\tilde \theta^{(1)}(\mathcal D^S) = \theta - \eta^S \left(\nabla_\theta\mathcal L(\theta,\mathcal D^S) + \lambda^S \theta\right)$ for a given initial parameter $\theta$, data $\mathcal D^S$ and loss $\mathcal L(\theta,\mathcal D^S)$. The \textit{one-step PM} with input $\tilde \theta^{(1)}(\mathcal D^S)$ is defined as $\mathcal M^{P}(\mathcal D^S,\tilde \theta^{(1)}(\mathcal D^S), \mathcal D^{Tr}, \xi, \eta,\lambda)$ so that $\mathcal D_\tau^S = \mathcal D_{\tau-1}^S -\eta (\nabla_{\mathcal D^S}\mathcal L (\tilde \theta^{(1)}(\mathcal D_{\tau-1}^S),\mathcal D^{Tr})+\lambda\mathcal D_{\tau-1}^S)$ for $\tau = 1,\ldots \xi$ with $\mathcal D_0^S = \mathcal D^S$ and $\mathcal M^{P}$ outputs $\mathcal D_{\xi}^S$.
\end{definition}
On the other hand, \textit{gradient matching} (GM) focuses on local representative information, namely, the gradient information, and optimizes $\mathcal D^S$ so that the gradient with respect to $\theta$ of the loss evaluated on $\mathcal D^S$ matches that induced by the training data~\citep{Z2021dataset}. Notably, a one-step GM can be inferred by their implementation and subsequent works~\citep{J2023delving}, defined as follows:
\begin{definition}\label{def:gm}
    Consider sets of gradients from $T$ iterations of teacher training $G^{Tr} = \{G^{Tr}_t\}_{t\in[T]}$ and one iteration of student training $G^{S}(\mathcal D^S) = \{G^{S}_1\}$. Each gradient is divided by layers with index $l\in[L]$ and by random initialization of the training with $j\in [J]$. The \textit{one-step GM} is defined as $\mathcal M^{G}(\mathcal D^S,G^{Tr},G^S(\mathcal D^S), \eta,\lambda)$ which provides as outputs $\mathcal D_1^S = \mathcal D^S -\eta (\nabla_{\mathcal D^S}\ m (G^{Tr},G^S(\mathcal D^S))+\lambda\mathcal D^S)$ where $m(G^{Tr},G^S) =1/J\sum_j( 1-1/L\sum_{l}\langle \sum_t G^{Tr}_{t,j,l}, G^{S}_{1,j,l} \rangle)$.
\end{definition}
In the original work of~\citet{Z2021dataset}, $m$ was defined as the cosine similarity. Here, we omitted the normalization factor. This is acceptable in our case because we consider only a single update step, which does not risk diverging, and our main concern is the \textit{direction} of $\mathcal D^S$ after one update. We now make the following important remark for GM.
\begin{remark}\label{rem:1}
    GM is not well-defined for ReLU activation function, as the gradient update leads to second derivatives of the activation function through $\nabla_{\mathcal{D}^S} G^{S}_{1,j,l}$. Therefore, we need to slightly adjust the student training and propose two approaches. We either replace the ReLU activation function in the student training with a surrogate $h$ where $h''$ is well-defined, and prove a rather strong result that applies to any well-behaved $h$, or, we propose a well-defined one-step GM update for ReLU. The former is treated in the main paper with Assumption~\ref{as:surrogate}, and the latter in Appendix~\ref{sec:apc}. Both lead to the same result qualitatively.
\end{remark}

\section{Problem Setting and Assumptions}\label{sec:prob_set}

In this section, we provide details on the problem setting, including task structure, model and algorithms. 
\subsection{Task Setup}
It has been repeatedly reported that, for simple tasks such as MNIST, as little as one distilled image per class can produce strong performance after retraining, whereas on CIFAR-10 even 50 images per class is insufficient~\citep{Z2021dataset}. More precisely, \citet{P2021intrinsic} estimates the intrinsic dimensions of MNIST, SVHN, and CIFAR10 to be approximately 13, 19, and 26, respectively. In parallel, the dataset distillation (DD) results reported in~\citet{Z2021dataset} with 500 distilled images decrease from MNIST to SVHN to CIFAR10 (roughly $98\%$, $82\%$, and $54\%$) for convolutional networks (ConvNet). Based on the \textit{manifold hypothesis}, which posits that real-world data distributions concentrate near a low-dimensional manifold~\citep{J2000global,F2016testing}, we hypothesize that the intrinsic structure of the task plays an important role in DD\footnote{Please also refer to Table~\ref{tab:overlap} for additional illustration}. We formalize and analyze this phenomenon by considering a task, called \textit{multi-index model}, that captures the essence of the complex interaction between latent structure and optimization procedure. This is a common setup in analyses of feature learning of neural networks trained under gradient descent~\citep{D2022neural, A2022merged,B2023learning}. 
\begin{assumption}\label{as:target}
Training data $\mathcal{D}^{Tr}$ is given by $N$ i.i.d. points $\{(x_n,y_n)\}_{n=1}^N$ with, $x_n\sim N(0,I_d)\in \mathbb R^d$, $y_n=f^\ast(x_n)+\epsilon_n\in\mathbb R$ where $\epsilon_n\sim \{\pm \zeta\}$ with $\zeta>0$. $f^\ast:\mathbb{R}^d\to\mathbb{R}$ is a normalized degree $p$ polynomial with $\E_x[f^\ast(x)^2]=1$, and there exist a $B = (\beta_1,\ldots,\beta_r)\in \mathbb R^{d\times r}$ and a function $\sigma^\ast :\mathbb{R}^r\to\mathbb {R}$ such that $f^\ast(x)= \sigma^\ast(B^\top x)=\sigma^\ast(\langle \beta_1,x\rangle, \ldots,\langle \beta_r,x\rangle) $. Without loss of generality, we assume $B^\top B=I_r$.
\end{assumption}
When $r=1$, we call it \textit{single index model}~\citep{D2020precise}. We define its principal subspace and orthogonal projection.
\begin{definition}
    $S^\ast\vcentcolon=\mathrm{span}\{\beta_1,\ldots,\beta_r\}$ is the principal subspace of $f^\ast$ and $\Pi^\ast $ the orthogonal projection onto $S^\ast$. 
\end{definition}
Understanding how and when DD captures this principal subspace constitutes one of the main focuses of this analysis. We impose the following additional condition on the structure of $f^\ast$. This guarantees that the gradient information is non-degenerate.
\begin{assumption}\label{as:target_H}
    $H\vcentcolon=\E_x[\nabla^2_xf^\ast(x)]$ has rank $r$ and satisfies $\mathrm{span}(H)=S^\ast$. $H$ is well-conditioned, with maximal eigenvalue $\lambda_\mathrm{max}$, minimal non-zero eigenvalue $\lambda_\mathrm{min}$ and $\kappa\vcentcolon = \lambda_\mathrm{max}/\lambda_\mathrm{min}$.
\end{assumption}

\begin{algorithm}[t]
    \caption{Dataset Distillation}
    \label{al:2}
    \begin{algorithmic}
     \STATE {\bfseries Input:} {Training dataset $\mathcal{D}^{Tr}$, model $f_\theta$, loss function $\mathcal{L}$}, reference parameter $\theta^{(0)}$
     \STATE {\bfseries Output:} {Distilled data $\mathcal{D}^S$}\\
    Initialize distilled data $\mathcal{D}_0^S$ randomly, $\alpha=\frac{1}{N}\sum_{n=1}^Ny_n$, $\gamma=\frac{1}{N}\sum_{n=1}^ny_nx_n$, preprocess $y_n\leftarrow y_n-\alpha-\langle \gamma, x_n\rangle$\\
    Sample initial states $\{\theta^{(0)}_j\}_{j=0}^{J}$.\;
    \FOR{$ t= 1$ {\bfseries to}  $T_{\mathrm{DD}}$}
        \STATE If we change distilled data (progressive step): keep old $\mathcal{D}^S \leftarrow \mathcal{D}^S\cup\mathcal{D}_{t-1}^S$, prepare new $\mathcal{D}_{t-1}^S$\\
        \STATE\textit{I. Training Phase}\\
        \FOR{$ j= 0$  {\bfseries to} $J$}
            \STATE Teacher Training $I^{Tr}_{j} =  \{ \theta_{j}^{(t)},\ G_{j}^{(t)}\}$ where\\
            $I^{Tr}_{j}\leftarrow \mathcal{A}^{Tr}_{t}(\theta^{(t-1)}_j,\mathcal D^{Tr},\xi^{Tr}_{t-1},\eta^{{Tr}}_{t-1},\lambda^{{Tr}}_{t-1})$\\
            Student Training $I^S_{j} =  \{\tilde \theta_{j}^{(t)},\ \tilde G_{j}^{(t)}\}$ where\\
            $I^S_{j}\leftarrow  \mathcal{A}_{t}^S(\theta^{(t-1)}_j,\mathcal{D}_{t-1}^S,\xi^S_{t-1},\eta^S_{t-1},\lambda^S_{t-1})$
        \ENDFOR\\
        \textit{II. Distillation Phase}\\
        $\mathcal{D}_{t}^S \leftarrow \mathcal{M}_t(\mathcal{D}_{t-1}^S,\mathcal{D}^{Tr},\{I^{Tr}_{j},I^S_{j}\},\xi^D_{t-1},\eta^D_{t-1},\lambda^D_{t-1})$\\
        \textit{III. Retraining Phase}\\
        $\theta_j^{(t)}\leftarrow \mathcal{A}^R_t(\theta_j^{(t-1)},\mathcal{D}_{t}^S,\xi^R_{t-1},\eta^R_{t-1},\lambda^R_{t-1})$\\
        for all $j=0,\ldots,J$.\\
        Resample $\{\theta^{(t)}_j\}_{j=0}^{J}$ if necessary.
    \ENDFOR\\
    {\bfseries return} $\mathcal{D}$
    \end{algorithmic}
\end{algorithm}

\subsection{Trained Model}
The task defined above is learned by a two-layer ReLU neural network $f_\theta$ with width $L$ and activation function $\sigma(x)=\max(0,x)$, i.e., for $a\in \mathbb{R}^L$, $W = (w_1,\ldots,w_L)\in \mathbb{R}^{d\times L}$, $b\in \mathbb{R}^L$, $\theta=(a,W,b)$, $f_\theta(x)\vcentcolon=\sum_{i=1}^La_i\sigma(\langle w_i,x\rangle+b_i\rangle$. For a dataset $\mathcal D = \{(x_n,y_n)\}_{n=1}^N$, the empirical loss is defined as $\mathcal{L}(\theta,\mathcal D)\vcentcolon=\frac{1}{2N}\sum_{n=1}^N\left(f(x_n)-y_n\right)^2$ (MSE loss). While this model may be simple, it is more complex than previous analyses of linear layers and embodies the complex interaction of task, model, and optimization we are interested in. We use a symmetric initialization. As mentioned in other work~\citep{D2022neural,O2024pretrained}, small random initializations should not change our statements qualitatively.
\begin{assumption}\label{as:init}
    $L$ is even. $f_\theta$ is initialized as $\forall i\in[L/2]$, $a_i\sim\{\pm 1\}$, $a_i= -a_{L-i}$, $w_i\sim N(0,I_d/d)$, $w_i=w_{L-i}$ and $ b_i=b_{L-i}=0$.
\end{assumption}
\citet{D2022neural} showed that $N\ge \tilde\Omega(d^2\vee d/\epsilon\vee1/\epsilon^4 )$ data points are required for a gradient-based training (See Definition~\ref{def:damian_alg}) to achieve a generalization error below $\epsilon$. The preprocessing of Algorithm~\ref{al:2} is inspired by their work.

\subsection{Optimization Dynamics and Algorithms}
Each algorithm of Algorithm~\ref{al:2} can be now presented straightforwardly. Please refer to Algorithm~\ref{al:3} in Appendix~\ref{subsec:summary_algorithm} for the complete specification.
\paragraph{Teacher and Student Training} We opt for the gradient-based training from~\citet{D2022neural} (Algorithm 1 in their work) as the teacher training since its mechanism is well studied for multi-index models. This is divided into two phases as shown in Definition~\ref{def:damian_alg} below. For our progressive type of DD, we prepare two different distilled datasets, one for each phase. Student training follows the same update rule but with only one iteration, which is sufficient for one-step PM and one-step GM. \citet{D2022neural} reinitialize $b$ between the two parts which is also incorporated just before $t=2$. For each step, we output the final parameter and gradient of the loss with respect to the parameter of each iteration as follows:
\begin{definition}\label{def:damian_alg}
    Based on the two phases of Algorithm 1 in ~\citet{D2022neural}, we define the following procedures for $\theta = (a,W,b)$.  $\mathcal{A}^{\mathrm{(I)}}(\theta,\mathcal D,\eta,\lambda)\vcentcolon =\left \{- \eta g,\{g\}\right \},$
    where $g = \nabla_W\mathcal{L} (\theta,D)$, and $\mathcal{A}^{\mathrm{(II)}}(\theta,\mathcal D,\xi, \eta,\lambda)\vcentcolon = \{a^{(\xi)},\{g_\tau\}_{\tau=1}^\xi \}$, where $a^{(\tau)}=a^{(\tau-1)} - \eta g_\tau\ (\tau \in [\xi])$, $g_\tau = \nabla_a\mathcal{L}((a^{(\tau-1)},W,b),D)+\lambda a$, and $a^{(0)} = a$. 
\end{definition}

\paragraph{Distillation and Retraining} One-step PM (Definition~\ref{def:pm} and one-step GM (Definition~\ref{def:gm}) are the two DDs we study. For the first phase $t=1$, we only consider the former as we do not have access to the final state of the model and cannot apply PM. The retraining algorithm follows the teacher training, except for PM which supposes a one step gradient update by definition. To evaluate DD, we retrain the model from a {fixed} reference initialization $\theta^{(0)}$, and build DD accordingly (see Assumptions~\ref{as:alg} and~\ref{as:al_init}), which is consistent with existing analyses~\citep{I2023theoretical}.\nobreak\footnote{$\theta^{(0)}$ can be viewed as a pre-trained model~\citep{C2025dataset}, or a structured initialization reducing its storage cost to some constant order. If the pre-trained model $\theta^{(0)}$ is trained more than once, then DD becomes beneficial as it avoids the storage of each fine-tuned model.} Fixing $\theta^{(0)}$ enables controlled evaluation and facilitates isolating fundamental interactions, an essential step towards understanding the mechanism of DD. 

\par In summary, the whole formulation of each optimization dynamics can be presented as follows.
\begin{assumption}\label{as:alg}
    Algorithm~\ref{al:2} applied to our problem setting is defined as follows. $T_\mathrm{DD}=2$, and for each $t$ we create a separate synthetic data, $\mathcal D^S_1$ and $\mathcal D^S_2$. For $t=1$, $\mathcal A_1^{Tr}$, $\mathcal A_1^{S}$, and $\mathcal A_1^{R}$ are all set to be $\mathcal{A}^{\mathrm{(I)}}$ with their respective arguments. As for the distillation, $\mathcal M_1 =\mathcal M^{G}$ with the gradient information of $\mathcal A_1^{Tr}$ and $\mathcal A_1^{S}$ as inputs. For $t=2$, $\mathcal A_2^{Tr} = \mathcal{A}^{\mathrm{(II)}}$ and $\mathcal A_2^{S}=\mathcal{A}^{\mathrm{(II)}}$, with their respective hyperparameters and $\xi_2^S=1$. If we use one-step GM at $t=2$, $\mathcal M_2=\mathcal M^{G}$ with the gradient information of $\mathcal A_2^{Tr}$ and $\mathcal A_2^{S}$ as inputs, and $\mathcal A_2^{R} = \mathcal{A}^{\mathrm{(II)}}$, otherwise if we use one-step PM at $t=2$, $\mathcal M_2= \mathcal M^{P}$ with input $\tilde \theta_0^{(2)}$, and $\mathcal A_2^{R} = \mathcal{A}^{\mathrm{(II)}}$ with $\xi_2^R=1$. The hyperparameters of $\mathcal A_1^{Tr}$ and $\mathcal A_2^{Tr}$ are kept the same as Theorem 1 of~\citet{D2022neural}. and those of $\mathcal A_1^{S}$ is also the same as the former.
\end{assumption}
The rest of the undefined hyperparameters will be specified later. Batch initializations are defined as follows.
\begin{assumption}~\label{as:al_init}
    For $t=1$, $\{\theta_j^{(0)}\}_{j=1}^{J}$ are sampled following the initialization of Assumption~\ref{as:init}, with $\theta_0^{(0)}=(a^{(0)}, W^{(0)}, 0)$, where the reference $\theta^{(0)}=(a^{(0)}, W^{(0)}, b^{(0)})$ satisfies $\forall i\in [L]$, $a_i\in \{\pm 1\}$, $a_i= -a_{L-i}$, $w_i=w_{L-i}$ and $b^{(0)}\sim N(0,I_L)$. For $t=2$, we resample $\{\theta_j^{(1)}\}_{j=1}^J$ by $W_j^{(1)}= W_0^{(1)} =W^{(1)}$, $b_j^{(1)} = b^{(0)}$, $a_j^{(1)}\sim \{\pm1\}$ for $j\in[J]$, and $a_0^{(1)} = 0$.
\end{assumption}
Finally, based on Remark~\ref{rem:1}, the student training at $t=1$ is approximated as follows:
\begin{assumption}\label{as:surrogate}
    ReLU activation function $\sigma$ of the student training at $t=1$, is replaced by a (surrogate) $C^2$ function $h$ so that $h''(t)>0$ for all $t\in [-1,1]$.
\end{assumption}
The last condition is satisfied by a large variety of continuous functions including surrogates of ReLU such as softplus.

\section{Main Result}\label{sec:result}

We now state our main result followed by a proof sketch. Our goal is to 1) explicitly formulate the result of each distillation process (at $t=1$ and $t=2$) and 2) evaluate the required size of created synthetic datasets $\mathcal D_1^S$ and $\mathcal D_2^S$ so that retraining $f_\theta$ with the former in the first phase and with the latter in the second phase leads to a parameter $\tilde \theta^\ast$ whose generalization performance preserves that of the baseline trained on the large dataset $\mathcal D^{Tr}$ across both phases. Theorem~\ref{th:main_1} establishes the behavior of the first distilled dataset, Theorem~\ref{th:main_2} then evaluates the sufficient size of $\mathcal D_1^S$ to successfully substitute $D^{Tr}$ in the first phase, and Theorem~\ref{th:main_3} analyzes the behavior and the sufficient size of $\mathcal D_2^S$ to replace $D^{Tr}$ in the second phase. Please refer to Appendix~\ref{sec:apb} for the proof of single index models, and Appendix~\ref{sec:apd} for that of multi-index models.
 
\subsection{Structure Distillation and Memory Complexity}\label{subsec:4.1}
Our first result is that the low-dimensional intrinsic structure of the task, represented here as $S^\ast$, is encoded into the first distilled data $\mathcal D_1^S$ by Algorithm~\ref{al:2}. 
\begin{theorem}[Latent Structure Encoding]\label{th:main_1}
    Under Assumptions~\ref{as:target}, \ref{as:target_H}, \ref{as:init}, \ref{as:alg}, ~\ref{as:al_init} and~\ref{as:surrogate}, we consider  $\mathcal D_1^S$ with initializations $\{\tilde x_m^{(0)},\tilde y_m^{(0)}\}_{m=1}^{M_1}$ where $\|\tilde x_m^{(0)}\|\sim U(S^{d-1})$ and $\tilde y_m^{(0)}$ is some constant. Then, with high probability, Algorithm~\ref{al:2} returns $\mathcal D_1^S=\{\tilde x_m^{(1)},\tilde y_m^{(0)}\}_{m=1}^{M_1}$ with $\tilde x_m^{(1)} \propto H\tilde x_m^{(0)}+ (\mathrm {lower\ order\ term})$ for all $m\in [M_1]$.
\end{theorem}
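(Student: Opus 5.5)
We compute the single GM update at $t=1$ explicitly and show that, after expanding in Hermite polynomials, its leading term is $H\tilde x_m^{(0)}$. At $t=1$ the teacher runs $\mathcal A^{\mathrm{(I)}}$ with $T=1$. So $G^{Tr}_{1,j,i}=\nabla_{w_i}\mathcal L(\theta_j^{(0)},\mathcal D^{Tr})$, and the student gradient is $G^S_{1,j,i}=\nabla_{w_i}\mathcal L(\theta_j^{(0)},\mathcal D_0^S)$ with $\sigma$ replaced by $h$. Since $b=0$ and the initialization is symmetric, $f_{\theta_j^{(0)}}\equiv 0$. Hence
\[
G^{Tr}_{1,j,i}=-\frac{a_{j,i}}{N}\sum_n y_n\sigma'(\langle w_{j,i},x_n\rangle)x_n,\qquad
G^S_{1,j,i}=-\frac{a_{j,i}}{M_1}\sum_m \tilde y_m^{(0)}h'(\langle w_{j,i},\tilde x_m\rangle)\tilde x_m .
\]
Since $\tilde y_m^{(0)}$ is not updated, the GM step moves $\tilde x_m$ in the direction
\[
\frac{\eta\,\tilde y_m^{(0)}}{JLM_1}\sum_{j,i}\bigl[h'(\langle w_{j,i},\tilde x_m\rangle)\,I+h''(\langle w_{j,i},\tilde x_m\rangle)\,\tilde x_m w_{j,i}^\top\bigr]g_{j,i},
\]
where $g_{j,i}=\frac1N\sum_n y_n\sigma'(\langle w_{j,i},x_n\rangle)x_n$ and $a_{j,i}^2=1$. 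Choosing $\lambda=1$ makes the term $-\eta\lambda\tilde x_m^{(0)}$ cancel the initial point. What remains is proportional to the sum above (this choice of $\lambda$ follows the teacher-training convention of \citet{D2022neural}).

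Next we invoke the population computation of \citet{D2022neural}. After the preprocessing that removes the constant and linear parts of $y$, one has
\[
\E_x[y\,\sigma'(\langle w,x\rangle)x]=c_2\,H w+O(\|w\|^2\text{-order Hermite terms}),\qquad c_2=\E[\sigma''(z)]\text{-type constant}>0 .
\]
This follows from Stein's lemma and the Hermite expansion of $\sigma'$ with $\|w\|\approx 1$. The empirical gradient concentrates around this population term up to $\tilde O(\sqrt{d/N})$, together with the preprocessing error. So $g_{j,i}=c_2 Hw_{j,i}+(\text{higher order})+(\text{noise})$. Substituting gives two pieces. The first is $\frac1{JL}\sum_{j,i}h'(\langle w_{j,i},\tilde x_m\rangle)Hw_{j,i}$. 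The second is $\frac1{JL}\sum_{j,i}h''(\langle w_{j,i},\tilde x_m\rangle)\langle w_{j,i},Hw_{j,i}\rangle\tilde x_m$.

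The key step is to average over the $JL/2$ i.i.d.\ Gaussian neurons $w\sim N(0,I_d/d)$. Here $\langle w,\tilde x_m\rangle\sim N(0,1/d)$ is small, so we Taylor-expand $h'$ around $0$, with $h''$ well-defined on $[-1,1]$ by Assumption~\ref{as:surrogate}. In the first piece, $\E[h'(\langle w,\tilde x\rangle)w]=\E[h''(\langle w,\tilde x\rangle)]\tilde x/d$ by Stein's lemma, which gives $\approx h''(0)H\tilde x_m^{(0)}/d$. The second piece is $\approx h''(0)\,\mathrm{tr}(H)/d\cdot\tilde x_m$. This is a multiple of $\tilde x_m^{(0)}$, so it must be either absorbed into $\lambda$ (by choosing $\lambda$ to cancel the full isotropic coefficient) or shown to be of lower order. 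The $h''(0)>0$ condition guarantees that the $H\tilde x$ coefficient does not vanish. We would handle the isotropic term by taking $\lambda$ equal to the total coefficient of $\tilde x_m^{(0)}$, so that only the $H\tilde x_m^{(0)}$ direction survives.

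The remaining work is concentration. Finite-$J L$ fluctuations of $\frac1{JL}\sum h' Hw$ are of order $\|H\|/\sqrt{JLd}$. Finite-$N$ errors are $\tilde O(\sqrt{d/N})$ by matrix Bernstein, following \citet{D2022neural}. Higher Hermite terms are suppressed by powers of $1/\sqrt d$. All of these must be shown small relative to $\|H\tilde x_m^{(0)}\|/d\gtrsim\lambda_{\min}\|\Pi^\ast\tilde x_m^{(0)}\|/d\approx\lambda_{\min}\sqrt{r}/d^{3/2}$ (for uniform $\tilde x$), uniformly over $m\in[M_1]$ via a union bound. I expect this comparison to be the main obstacle. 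The signal $H\tilde x_m^{(0)}$ is small because a uniform $\tilde x$ has only an $\sqrt{r/d}$ fraction of its mass in $S^\ast$. The plan is therefore to require $J L$ and $N$ polynomially large in $d$, and then to state ``lower order term'' relative to this scale.
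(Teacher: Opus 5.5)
There is a genuine gap, and it is in your first computation. You differentiated $\tilde x_m\mapsto h'(\langle w,\tilde x_m\rangle)\langle \tilde x_m,g\rangle$ by applying the Jacobian of $\tilde x_m\mapsto h'(\langle w,\tilde x_m\rangle)\tilde x_m$ to $g$. The gradient needs its transpose:
\[
\nabla_{\tilde x_m}\bigl[h'(\langle w,\tilde x_m\rangle)\langle \tilde x_m,g\rangle\bigr]=h'(\langle w,\tilde x_m\rangle)\,g+h''(\langle w,\tilde x_m\rangle)\,\langle \tilde x_m,g\rangle\,w,
\]
not $h'g+h''\langle w,g\rangle\,\tilde x_m$. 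This changes the result. Your spurious second piece, $\approx h''(0)c_2\,\mathrm{tr}(H)/d\cdot\tilde x_m$, can be as large as $\sqrt r/d$. That is up to a factor $\sqrt d$ \emph{larger} than the signal $\|H\tilde x_m\|/d\sim\sqrt r\,d^{-3/2}$, so your own expansion would say $\tilde x_m^{(1)}$ points essentially along $\tilde x_m^{(0)}$. Retuning $\lambda$ cannot repair this, for two reasons:
\begin{itemize}
\item $\lambda$ is a single scalar acting identically on every $\tilde x_m$, and cancelling the initial point already fixes $\eta\lambda=1$ (not $\lambda=1$).
\item The coefficient you want to remove carries the factor $\tilde y_m^{(0)}$ and its own $m$-dependent sampling fluctuations. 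You would have to cancel it to relative accuracy $o(d^{-1/2})$ uniformly in $m$.
\end{itemize}

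With the correct gradient no repair is needed, and $\lambda_1^D=1/\eta_1^D$ is exactly what the paper uses. The $h''$ piece becomes $\frac{c_2}{JL}\sum h''(\langle w,\tilde x_m\rangle)\,ww^\top H\tilde x_m$. Its population value is
\[
\E[h''(\langle w,\tilde x\rangle)ww^\top]H\tilde x=\frac{1}{d-1}\E_t[(1-t^2)h''(t)]\,H\tilde x+\tilde O(rd^{-2}),
\]
since the $\tilde x\tilde x^\top$ part is weighted by $\tilde x^\top H\tilde x=\tilde O(r/d)$. This term therefore \emph{adds} to the $\frac1d\E[h'']H\tilde x$ you obtained from Stein's lemma. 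The assumption $h''>0$ on $[-1,1]$ makes both weights positive, so the total coefficient $c_d=\Theta(1/d)$ stays bounded away from zero. This is the paper's $T_1^{(1)}+T_3^{(1)}$, computed via $w=t\tilde x+\sqrt{1-t^2}v$ and the spherical Stein identity $\E[th'(t)]=\frac1d\E[h''(z)]$. The isotropic components that do survive are $\tilde O(rd^{-2})$, because $\E[t(1-t^2)h'(t)]=O(1/d)$; these come, for example, from the $w\,\hat C_2(w,w)$ part of the teacher gradient that you set aside.

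Your claim that ``higher Hermite terms are suppressed by powers of $1/\sqrt d$'' is also too coarse as stated. The terms $\E_w[\hat C_4(w^{\otimes3})h'(\langle w,\tilde x\rangle)]$ and $\E_w[w\,\hat C_2(w,w)h'(\langle w,\tilde x\rangle)]$ are naively of order $d^{-3/2}$, the same as the signal. Pushing them to $\tilde O(rd^{-2})$ needs the extra factor $\|B^\top\tilde x\|\lesssim\sqrt{r/d}$ or integration by parts. Otherwise your route is the paper's: Damian et al.'s Hermite form of the teacher gradient, Stein-type averaging over neurons, and concentration in $N$ and $JL$.
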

Please refer to Appendices~\ref{subsec:distillation1_single_index} and~\ref{subsec:distillation1_multi_index} for the proofs. Assumption~\ref{as:target_H} directly implies that $\tilde x_m^{(0)}$ is effectively projected onto the principal subspace $S^\ast$ up to lower order terms, and $\mathcal D_1^S$ now contains much cleaner information on the latent structure than randomly generated training points. This set can be applied on its own to transfer learning (see Section~\ref{sec:exp}).
\par $W^{(0)}$ can be trained with $\mathcal D_1^S$ from $\theta^{(0)} = (a^{(0)}, W^{(0)},0)$, resulting in $\theta^{(1)} = (a^{(0)}, W^{(1)},b^{(0)})$ (retraining phase of step $t=1$) where $b^{(0)}$ is the value after reinitialization. Interestingly, only $M_1\sim\tilde\Theta(r^2)$ is sufficient to guarantee that the teacher training of $t=2$ can train the second layer and infer a model with low population loss.\footnote{We only show the dependence on $r$ and $d$ for the bounds here. The precise formulation of the statement can be found in the appendices.} 
\begin{theorem}\label{th:main_2}
    Under assumptions of Theorem~\ref{th:main_1}, consider the teacher training at $t=2$ of $f_{\theta_0^{(1)}}$ with $\theta^{(1)} = (a^{(0)}, W^{(1)},b^{(0)})$. If $M_1\ge \tilde \Omega(r^2)$, $d\ge \tilde \Omega(r^{8p+3})$ $N\ge \tilde \Omega (r^{8p+1}d^{4})$, $LJ\ge \tilde \Omega(r^{8p+1} d^{4})$ and $(\tilde y_m^{(0)})^2\sim\chi(d)$. Then, there exist hyperparameters $\eta_1^D$, $\lambda_1^D$, $\eta_1^R$, $\lambda_1^R$, $\eta_2^{Tr}$, $\lambda_2^{Tr}$ and $\xi_2^{Tr}$ so that at $t=2$ the teacher training finds $a^\ast$ that satisfies with probability at least 0.99 with $\theta^\ast = (a^\ast,W^{(1)},b^{(0)})$,
    \[
        \E_{x,y}[|f_{\theta^\ast}(x)-y|]-\zeta\le \tilde O\left(\sqrt{\frac{dr^{3p}}{N}}+\sqrt{\frac{r^{3p}}{L}}+\frac{1}{N^{1/4}}\right).
    \] 
\end{theorem}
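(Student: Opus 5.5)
The plan is to reduce the statement to the second-phase analysis of \citet{D2022neural} (their Theorem 1). The one new ingredient is showing that the first-layer weights $W^{(1)}$ obtained by retraining on $\mathcal D_1^S$ have the same structural properties as the weights their single large-batch gradient step produces from $\mathcal D^{Tr}$. Their second-phase argument uses only three facts about the first layer. (i) Each neuron $w_i^{(1)}$ lies essentially in $S^\ast$, i.e.\ $\|(I-\Pi^\ast)w_i^{(1)}\|$ is small relative to $\|\Pi^\ast w_i^{(1)}\|$. (ii) The normalized projections $\Pi^\ast w_i^{(1)}$ behave like $H z_i$ for $z_i$ Gaussian, so they are spread enough in $S^\ast$ to form a good random-feature basis. (iii) The norms are of constant order. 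Given these facts, the existence of a second layer $a$ with $\|a\|\lesssim r^{p}$-type norm approximating $f^\ast$ up to $\tilde O(\sqrt{r^{3p}/L})$ follows from their random-feature construction with the random biases $b^{(0)}\sim N(0,I_L)$. The ridge-regularized convex training of $a$ at $t=2$ then gives the stated bound through standard Rademacher/generalization arguments. This yields the $\sqrt{dr^{3p}/N}$ and $N^{-1/4}$ terms, the latter coming from the preprocessing step that removes $\alpha,\gamma$.

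\textbf{Step 1 (form of $W^{(1)}$).} By Theorem~\ref{th:main_1}, after one GM step each $\tilde x_m^{(1)}=c\,(H\tilde x_m^{(0)}+e_m)$, with $\|e_m\|$ controlled in terms of $N$, $LJ$, and $d$. The lower-order term combines the finite-sample deviation of the teacher gradient from its population $\propto H w_i$ and the finite-$LJ$ averaging over neurons in $m$. Retraining with $\mathcal A^{(I)}$ from $W^{(0)}$ at $b=0$ gives
\[
w_i^{(1)}=-\eta_1^R\,\frac{a_i}{M_1}\sum_{m}(f(\tilde x_m^{(1)})-\tilde y_m^{(0)})\,\sigma'(\langle w_i^{(0)},\tilde x_m^{(1)}\rangle)\,\tilde x_m^{(1)}.
\]
The symmetric initialization kills $f$, so $w_i^{(1)}$ is a linear combination of the $\tilde x_m^{(1)}$, with weights $\tilde y_m^{(0)}\mathbbm 1[\langle w_i^{(0)},\tilde x_m^{(1)}\rangle>0]$. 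This vector lies in $\mathrm{span}\{H\tilde x_m^{(0)}\}\subset S^\ast$ up to the $e_m$ errors. Choosing $\lambda_1^R$ as in Damian et al.\ (e.g.\ $\eta\lambda=1$) removes $W^{(0)}$, which gives (i).

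\textbf{Step 2 (diversity, $M_1\ge\tilde\Omega(r^2)$).} Writing $u_m=H\tilde x_m^{(0)}$, the neuron is $w_i^{(1)}\propto \sum_m \tilde y_m^{(0)}\mathbbm 1[\cdot]\,u_m$. The half-space indicators, driven by the independent Gaussian $w_i^{(0)}$, act as random signs/masks. Taking $\tilde y_m^{(0)}$ with $\chi(d)$-scaled magnitude makes the sum mimic a Gaussian vector in $S^\ast$ with covariance close to $\frac{1}{M_1}\sum_m u_mu_m^\top\approx H^2/d$. I would use matrix Bernstein to show that this empirical covariance concentrates in operator norm within a constant factor once $M_1\gtrsim r\log r$. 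The $r^2$ requirement should come from the finer need that the distribution of directions $w_i^{(1)}/\|w_i^{(1)}\|$ is close enough to Damian et al.'s $Hz$-distribution to transfer their approximation lemma. Concretely, I would bound the discrepancy of second and fourth moments, or a Wasserstein distance between the two, over the $r$-dimensional space; the fourth-moment tensor concentration is where $r^2$ enters. The well-conditioning $\kappa$ ensures that no direction in $S^\ast$ is missed.

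\textbf{Step 3 (plugging in).} With (i)--(iii) established, I would invoke the second-phase lemmas of \citet{D2022neural}: existence of $a^\ast$ with small norm and small empirical loss, and the generalization bound for ridge regression on fixed features. Choosing $\eta_2^{Tr},\lambda_2^{Tr},\xi_2^{Tr}$ as in their work completes the proof. The error $e_m$ must be small compared to $r^{-\Theta(p)}$ so that the perturbed features still approximate degree-$p$ polynomials. This is what forces $d\ge\tilde\Omega(r^{8p+3})$, $N\ge\tilde\Omega(r^{8p+1}d^4)$, and $LJ\ge\tilde\Omega(r^{8p+1}d^4)$: each $e_m$ has to be controlled uniformly in $m$ and then propagated through the ReLU features.

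\textbf{Main obstacle.} The hardest step is Step 2 together with the error propagation. The neurons share the same $M_1$ synthetic points and are therefore far from independent, unlike the i.i.d.\ $Hz_i$ in the original analysis. I would first try conditioning on $\{\tilde x_m^{(1)}\}$ and treating the $w_i^{(0)}$ as the only source of randomness across neurons. The random-feature approximation then needs only that the conditional law of $w_i^{(1)}$ is sufficiently nondegenerate on $S^\ast$, and that requirement is what $M_1=\tilde\Omega(r^2)$ is meant to guarantee.
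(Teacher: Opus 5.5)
Your Steps 1 and 3 match what the paper does. So does the idea in your last paragraph of conditioning on the synthetic points, so that $w_i^{(0)}$ is the only randomness across neurons. The gap is Step 2, where the law of the retrained features is misidentified.

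The masks $\mathbbm{1}[\langle w_i^{(0)},\tilde x_m^{(1)}\rangle>0]$, $m\in[M_1]$, are all set by the same half-space through $w_i^{(0)}$. They are therefore not independent signs, and there is no central-limit behaviour towards a Gaussian with covariance $\frac{1}{M_1}\sum_m u_mu_m^\top$. Write $w_i^{(1)}\propto g_M(w_i^{(0)})$ with $g_M(w)=\frac{\eta_1^D}{M_1}\sum_m\tilde z_m\sigma'(\langle w,\tilde z_m\rangle)$. Up to the $\tilde\epsilon_m$ perturbations, $g_M$ concentrates uniformly in $w$ around the deterministic population map $g(w)=\alpha_dH^2w/\|Hw\|$, where $\alpha_d=\eta_1^Dc_d/\sqrt{2\pi}$. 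The actual role of $(\tilde y_m^{(0)})^2\sim\chi(d)$ is to make $(\tilde y_m^{(0)})^2\tilde x_m^{(0)}\sim N(0,I_d)$, so that this expectation has a closed form by rotational symmetry.

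With $H=D\Lambda D^\top$, the limiting feature law is that of $\alpha_dD\Lambda^2u/\|\Lambda u\|$ with $u$ uniform on $S^{r-1}$. Its directions are distributed like $\Lambda^2u$ rather than $\Lambda u$, and its norms are pinned in $[\alpha_d\lambda_{\mathrm{min}},\alpha_d\lambda_{\mathrm{max}}]$. This is a genuinely different law from Damian et al.'s $Hz$ law: the directions already differ whenever $\kappa>1$, and the discrepancy does not shrink as $M_1$ grows. A transfer by Wasserstein or moment closeness therefore cannot work. Second- or fourth-moment concentration would not suffice anyway, since their Lemma 21 needs the $2k$-th moment matrices for every $k\le p$.

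The missing idea is to re-prove the Lemma 21 analogue directly for $g$, and to treat finite $M_1$ as a perturbation of the \emph{map} $w\mapsto g_M(w)$, not of its moments. Reducing to $u\in S^{r-1}$ by scale and rotation invariance, the paper shows two bounds for $\|T\|_F=1$:
\begin{align*}
\E_w[\langle T,g(w)^{\otimes k}\rangle^2] &\gtrsim \alpha_d^{2k}r^{-k}\kappa^{-2k},\\
\E_w[\|T(g(w)^{\otimes k-i})\|_F^2] &\lesssim \alpha_d^{-2i}r^i\kappa^{2k}\lambda_{\mathrm{min}}^{2i}\,\E_w[\langle T,g(w)^{\otimes k}\rangle^2].
\end{align*}
Then $\mathrm{Mat}(\E_w[(\Pi^\ast g_M(w))^{\otimes 2k}])\succeq(\alpha_d^{-2}r\kappa^2)^{-k}\Pi_{\mathrm{Sym}^k(S^\ast)}$ follows once $\E_w[\|\Pi^\ast(g_M-g)(w)\|^j]^{1/j}\lesssim\alpha_d/(\sqrt r\lambda_{\mathrm{min}}\hat\kappa_p)$ for all $j\le 4p$.

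The error $g_M-g$ splits into three pieces:
\begin{enumerate}
\item a finite-$M_1$ term, bounded uniformly in $w$ by $\lambda_{\mathrm{max}}\eta_1^Dc_d\sqrt{(r+\log(1/\delta))/M_1}$ because everything lives in the $r$-dimensional $S^\ast$;
\item a sign-flip term from $\tilde\epsilon_m$ changing the indicators, handled by an angle bound after truncating small $\|c_dH\tilde x_m\|$;
\item a direct $\tilde\epsilon_m$ term.
\end{enumerate}
Matching the first term against the scale $\alpha_d/\sqrt r$ is exactly where $M_1\gtrsim r^2$ comes from, not fourth-moment tensor concentration. The other two force $\|\tilde\epsilon_m\|\lesssim 1/(\hat r_pd)$ with $\hat r_p^2$ scaling as $r^{8p+1}$, which yields the $d$, $N$ and $LJ$ conditions. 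Only after this can the analogues of Damian et al.'s Corollary 22 and Lemma 23 be invoked. These carry $r^{3k}$ in place of $r^k$, which is the source of the $r^{3p}$ in the final bound.
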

Please refer to Appendix~\ref{subsec:teacher2_multi} for the proof. Importantly, Theorem 1 of~\citet{D2022neural} states learning $f^\ast$ requires $\tilde \Omega (d^2\vee d/\epsilon\vee1/\epsilon^4)$ general training data points. In contrast, thanks to DD, we only need to save $\mathcal D_1^S$ and the information of $a^\ast$, which amounts to a memory complexity of $\tilde \Theta(r^2d+L)$. This clearly shows that one of the key mechanisms behind the empirical success of DD in achieving a high compression rate resides in its ability to capture the low-dimensional structure of the task and translate it into well-designed distilled sets for smoother training. 
\par For single index models, we can prove a stronger result which shows that \textit{one} distilled data point $M_1=1$ can be sufficient. 
\begin{theorem}\label{th:main_2_bis}
    Under assumptions of Theorem~\ref{th:main_1}, consider the teacher training at $t=2$ of $f_{\theta_0^{(1)}}$ with $\theta^{(1)} = (a^{(0)}, W^{(1)},b^{(0)})$. For $r=1$, if $M_1=1$, $N\ge \tilde \Omega (d^4)$, $J\ge \tilde \Omega(d^{4})$, and  $\langle\beta_1,\tilde x^{(0)}_1\rangle$ is not too small (i.e., with order $\tilde \Theta(d^{-1/2})$), then there exist hyperparameters $\eta_1^D$, $\lambda_1^D$, $\eta_1^R$, $\lambda_1^R$, $\eta_2^{Tr}$, $\lambda_2^{Tr}$ and $\xi_2^{Tr}$ so that at $t=2$ the teacher training finds $a^\ast$ that satisfies with probability at least 0.99 with $\theta^\ast = (a^\ast,W^{(1)},b^{(0)})$,
    \[
        \E_{x,y}[|f_{\theta^\ast}(x)-y|]-\zeta\le \tilde O\left(\sqrt{\frac{d}{N}}+\sqrt{\frac{1}{L}}+\frac{1}{N^{1/4}}\right).
    \] 
\end{theorem}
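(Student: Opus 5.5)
Theorem~\ref{th:main_1} specialized to $r=1$ says that a single step of one-step GM sends the random initialization $\tilde x^{(0)}_1$ to $\tilde x^{(1)}_1 \propto H\tilde x^{(0)}_1 + (\text{lower order})$. Here $H=\lambda_1\beta_1\beta_1^\top$, since $S^\ast=\mathrm{span}(\beta_1)$, so the leading term is $\lambda_1\langle\beta_1,\tilde x_1^{(0)}\rangle\beta_1$. The assumption $\langle\beta_1,\tilde x^{(0)}_1\rangle=\tilde\Theta(d^{-1/2})$ ensures this leading term is not swamped. The error terms come from finite $N$ (order $\sqrt{d/N}$ in operator norm of the empirical Hessian-like matrix), finite $J$ (averaging over initializations), and higher Hermite terms. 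With $N,J\ge\tilde\Omega(d^4)$ these are $\tilde O(d^{-2})\cdot\mathrm{poly}$, so after normalization
\[
\tilde x_1^{(1)} = c\,\beta_1 + \delta, \qquad \|\delta\|\le \tilde O(d^{-1/2}),
\]
or better, with $c=\Theta(1)$ after choosing $\eta_1^D$ and $\lambda_1^D$ to fix the scale. The plan is to state this quantitatively as the first step, tracking the $d$-dependence of $\|\delta\|$ explicitly.

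Next I analyze the retraining $\mathcal A^{(\mathrm I)}$ on the single point $(\tilde x^{(1)}_1,\tilde y^{(0)}_1)$. With symmetric initialization $f_{\theta^{(0)}}\equiv 0$, so the gradient is $\nabla_{w_i}\mathcal L=-\tilde y\,a_i\sigma'(\langle w_i,\tilde x\rangle)\tilde x$. Choosing $\lambda_1^R$ to cancel $W^{(0)}$, in the manner of Damian et al., gives $w_i^{(1)}=\eta_1^R\tilde y\,a_i\mathbbm 1[\langle w_i^{(0)},\tilde x\rangle>0]\tilde x$. Every neuron is then either zero or $\pm\eta\tilde y\,\tilde x\approx\pm C(\beta_1+\delta)$. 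The neurons are rank one and aligned with $\beta_1$, which is exactly the situation the single-index analysis wants. Since $\langle w_i^{(0)},\tilde x\rangle$ is symmetric, about half the neurons (of each sign of $a_i$) are active. The pairs $i, L-i$ share $w$ but have opposite $a$, so both signs $\pm\tilde x$ appear. Together with the random biases $b^{(0)}\sim N(0,I_L)$, the features $\sigma(\pm C\langle\beta_1+\delta,x\rangle+b_i)$ form a random-features family in the one-dimensional variable $z=\langle\beta_1,x\rangle$.

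Then I show that second-layer training finds good $a^\ast$. By a random features approximation argument as in Damian et al. (their Lemma on univariate ReLU random features), there is $a^\ast$ with $\|a^\ast\|^2\lesssim \tilde O(1/L)\cdot\mathrm{poly}$ such that $\sum_i a^\ast_i\sigma(C z+b_i)$ approximates the degree-$p$ polynomial $\sigma^\ast(z)$ in $L^2$ up to $\tilde O(1/\sqrt L)$. Here the scaling $C$ must be $\tilde\Theta(1)$ (achieved via $\eta_1^R$ and $|\tilde y|\sim\sqrt d$ times the scale of $\tilde x$). The misalignment $\delta$ adds a perturbation $C\langle\delta,x\rangle$ of variance $\tilde O(\|\delta\|^2)$. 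I will try to choose the scaling so that this is absorbed into the $1/N^{1/4}$ term. Finally, ridge-regularized GD on $a$ (convex) with the $\xi_2^{Tr},\lambda_2^{Tr}$ of Damian et al. gives the stated bound. Standard Rademacher complexity of the norm-bounded linear class on features gives $\sqrt{d/N}$, and the $1/N^{1/4}$ comes from balancing the norm.

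The main obstacle is the first step: quantifying that the residual $\delta$ in Theorem~\ref{th:main_1} is small enough relative to the signal $\langle\beta_1,\tilde x^{(0)}\rangle\sim d^{-1/2}$. This requires bounding the empirical fluctuations of the GM gradient, which involves $h''$ of the surrogate and averages over $J$ initializations and $N$ samples, each of size $O(\sqrt{d/N})$ or $O(\sqrt{d/J})$ per direction. This is why $N,J\ge\tilde\Omega(d^4)$ is needed. I would first expand $h$ and $f^\ast$ in Hermite polynomials and use Gaussian concentration, using $h''>0$ on $[-1,1]$ so that the second-order Hermite coefficient is nonzero.
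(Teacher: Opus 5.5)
Your architecture matches the paper's. First, a quantitative form of Theorem~\ref{th:main_1}; the paper proves it separately as Theorem~\ref{ap_th:t1_distil} and plugs it in, so concentration is not where this theorem's difficulty lies. Second, one retraining step from the symmetric initialization with $\lambda_1^R=1/\eta_1^R$, so that every neuron is $0$ or $\pm$ a multiple of $\tilde x^{(1)}_1$ (Lemma~\ref{lem:W1}). Third, a univariate random-feature approximation with Gaussian biases and $\|a^\ddagger\|^2=\tilde O(1/L^\ast)$. Fourth, ridge regression viewed as norm-constrained regression, plus the generalization argument of Damian et al.

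However, your bookkeeping of the first step is off, and this matters later. The signal of the GM update is $c_d\langle\beta_1,\tilde x^{(0)}_1\rangle\beta_1$ with $c_d=\Theta(1/d)$, not $\lambda_1\langle\beta_1,\tilde x^{(0)}_1\rangle\beta_1$. Averaging over isotropic neurons, for which $t=\langle w,\tilde x\rangle$ is only of order $d^{-1/2}$, produces the factors $\E[th'(t)]=\frac1d\E[h''(z)]$ and $\E[vv^\top]=(I-\tilde x\tilde x^\top)/(d-1)$; $h''>0$ is what makes $c_d>0$. Hence the signal is $\tilde\Theta(d^{-3/2})$, against errors $\tilde O(d^{1/2}N^{-1/2}+d^{-2}+d^{1/2}J^{\ast-1/2})$. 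This is the origin of $d^4$. It is not ``$\sqrt{d/N}$ against $d^{-1/2}$'', which would only require $N\gtrsim d^2$. It also means that at $N,J^\ast=\tilde\Theta(d^4)$ the relative misalignment of $\tilde x^{(1)}_1$ from $\beta_1$ is $\tilde O(d^{-1/2}+d^2N^{-1/2}+d^2J^{\ast-1/2})$. That is only polylogarithmically small, not $\tilde O(d^{-1/2})$. (With $\tilde y^{(0)}$ constant, the paper takes $\eta_1^D=\tilde\Theta(\sqrt d)$ and $\eta_1^R=\tilde\Theta(d)$, which makes the effective slope $P=\tilde\Theta(1)$.)

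The step that fails is ``choose the scaling so that the perturbation $C\langle\delta,x\rangle$ is absorbed into the $N^{-1/4}$ term''. The hyperparameters $\eta_1^D,\eta_1^R$ and the label scale multiply signal and error by the same factor. So the angle between $\tilde x^{(1)}_1$ and $\beta_1$ is fixed by the distillation, and no scaling can shrink it. Its population part (the $\tilde O(d^{-2})$ remainder, which contains components along $\tilde x^{(0)}_1$ rather than $\beta_1$) does not decay with $N$ at all. Its sampling part $d^2N^{-1/2}$ exceeds $N^{-1/4}$ unless $N\gtrsim d^8$.

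The missing ingredient is a perturbation-robust univariate approximation result. The paper writes the neuron input as $P\langle\beta_1,x\rangle+c(x)$ with $c(x)=\tilde O(d^2N^{-1/2}+d^{-1/2}+d^2J^{\ast-1/2})$ with high probability over $x$. It then requires only $c(x)=o_d(P\log^{-2p+2}d)$, which is exactly what $N,J^\ast\ge\tilde\Omega(d^4)$ buys. Lemmas~\ref{lem:step2_continuous} and~\ref{lem:step2_discrete} (from Nishikawa et al.) then give $a^\ddagger$ with error $O(L^{\ast-1/2})+o_d(1)$ with high probability over $x$.

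Note that the paper does not absorb the misalignment into the displayed rate either. That $o_d(1)$ is silently dropped when it passes to $\mathcal L^{Tr}(\theta)-\zeta^2=O(1/L^\ast+1/\sqrt N)$. A rigorous version of your argument should therefore carry an explicit misalignment term in the final bound rather than try to hide it in $N^{-1/4}$.
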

Please refer to Appendix~\ref{subsec:teacher2_single} for the proof.
\subsection{Distillation of Second Phase $t=2$}\label{subsec:4.2}
We now turn to the second DD (distillation phase $t=2$) which distills the teacher training that finds $a^\ast$ of Theorems~\ref{th:main_2} and \ref{th:main_2_bis}. We start by observing that $a^\ast$ already has a compact memory storage of $\Theta(L)$, and we may just store it to achieve the lowest memory cost. Below, we discuss DD methods that creates $\mathcal D_2^S$ with the same order of memory complexity. On the one hand, we can prepare a set of points $\{\hat p_m, \hat y_m \}_{m=1}^{M_2}$ where $\hat p_m$ lies in the \textit{feature space} $\mathbb R^L$. Since we only train the second layer at the second step, this training is equivalent to a LRR. We can then employ the result of~\citet{C2024provable} and conclude that $M_2=1$ is sufficient. On the other hand, we show that Algorithm~\ref{al:2} also constructs a compact set and reproduces such $a^\ast$ at retraining under a regularity condition on the initialization of $\mathcal D_2^S=\{(\hat x_m^{(0)},\hat y_m^{(0)})\}_{m\in[M_2]}$. 
\begin{assumption}[Regularity Condition]\label{as:main_reg}
    The second distilled dataset $\mathcal D_2^S$ is initialized as $\{(\hat x_m^{(0)},\hat y_m^{(0)})\}_{m=1}^{M_2}$ so that the kernel of $f_{\theta^{(1)}}$ after $t=1$, $(\tilde K)_{im} = \sigma(\langle w_i^{(1)},\hat x_m^{(0)}\rangle + b_i^{(0)})$ has the maximum attainable rank, and its memory cost does not exceed $\tilde \Theta(r^2d+L)$. When $ D \vcentcolon = \{i\mid w_i^{(1)}\ne 0\} $, the maximum attainable rank is $|D|+1$ if there exists an $i_0\in[L]\setminus D$ such that $b_{i_0}>0$, and $|D|$ otherwise.
\end{assumption}
Please refer to Appendix~\ref{subsec:discussion_construction} for further discussion.\footnote{We believe that such a construction can be obtained empirically, since increasing $M_2$ never decreases the rank of $\tilde K$, and makes unfavorable event almost unlikely by randomness of $b^{(0)}$. In Appendix~\ref{subsec:discussion_construction} we provide different constructions that work well in our experiments and have a theoretical guarantee and compact memory cost as required.} Under this condition, one-step GM and one-step PM can directly create $\mathcal D_2^S$ that reconstructs the final layer.
\begin{theorem}\label{th:main_3}
    Under assumptions of Theorem~\ref{th:main_2} and regularity condition~\ref{as:main_reg}, there exist hyperparameters $\lambda_2^S$, $\eta_2^D$, $\lambda_2^D$, $\xi_2^D$, $\eta_2^R$, $\lambda_2^R$, $\xi_2^R$ so that second distillation phase of Algorithm~\ref{al:2} (both one-step PM and one-step GM) finds $\mathcal D_2^S= \{(\hat x_m^{(0)},\hat y_m^{(\xi_2^D)})\}_{m=1}^{M_2}$ so that retraining with initial state $\theta^{(0)}$ and dataset $\mathcal D_1^S\cup \mathcal D_2^S$ provides $\tilde \theta^\ast$ that achieves the same error bound as $\theta^\ast$ of Theorem~\ref{th:main_2}.
\end{theorem}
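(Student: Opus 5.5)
The plan is to reduce the second phase to a linear problem in the feature space and show that both distillation rules act linearly on the labels. The key observation is that at $t=2$ only the second layer $a$ is trained, and $W^{(1)},b^{(0)}$ are frozen. The model is then $f(x)=\langle a,\phi(x)\rangle$ with $\phi(x)_i=\sigma(\langle w_i^{(1)},x\rangle+b_i^{(0)})$. On the distilled inputs $\hat x_m^{(0)}$ this gives the feature matrix $\tilde K\in\mathbb R^{L\times M_2}$, and the distillation only updates the labels $\hat y$.

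\textbf{Student update.} From $a_0^{(1)}=0$, one student step is
\[
\tilde a(\hat y)=-\eta_2^S\Bigl(\tfrac{1}{M_2}\tilde K(\tilde K^\top a_0-\hat y)+\lambda_2^S a_0\Bigr)=\tfrac{\eta_2^S}{M_2}\tilde K\hat y .
\]
This is linear in $\hat y$, and its range is $\mathrm{range}(\tilde K)$.

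\textbf{Reachability of $a^\ast$.} I first show that $a^\ast$ (or a vector with identical predictions) lies in $\mathrm{range}(\tilde K)$. For $i\notin D$ we have $w_i^{(1)}=0$, so $\phi_i$ is the constant $\sigma(b_i^{(0)})$. These neurons are either zero or collapse to a single bias direction. Since the teacher in \Cref{th:main_2} runs regularized gradient descent on $a$ from $a_0^{(1)}=0$, its iterates stay in the span of the population/empirical feature vectors. That span is contained in the coordinate subspace indexed by $D$, together with the one constant direction $(\sigma(b_i^{(0)}))_{i\notin D}$ when some $b_{i_0}>0$. The dimension of this subspace is exactly the maximal attainable rank in \Cref{as:main_reg}, so $\mathrm{range}(\tilde K)$ equals it and $a^\ast\in\mathrm{range}(\tilde K)$. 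I expect this to be the main obstacle. One has to check carefully that the regularized updates of $\mathcal A^{(\mathrm{II})}$ with $\lambda a$ and the constant neurons never leave this subspace. One also has to handle the case where several neurons with $i\notin D$ have distinct positive $b_i$. They share one feature direction up to scaling, so their coordinates are forced to be proportional, and I will argue the teacher's iterates obey the same proportionality.

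\textbf{Distillation as least squares in $\hat y$.} For PM, the objective $\mathcal L(\tilde a(\hat y),\mathcal D^{Tr})+\frac\lambda2\|\hat y\|^2$ is a strongly convex quadratic in $\hat y$. Gradient descent with small $\eta_2^D$ and enough steps $\xi_2^D$ converges to the minimizer. Letting $\lambda_2^D\to0$ (or taking the minimum-norm minimizer), this gives the empirical least-squares solution restricted to $\mathrm{range}(\tilde K)$. I will match it to $a^\ast$ by choosing the teacher hyperparameters of \Cref{th:main_2} so that $a^\ast$ is the regularized least-squares solution, with any extra ridge term absorbed through $\lambda_2^D$ and the retraining step. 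Then $\tilde\theta^\ast$ coincides with $\theta^\ast$ up to an arbitrarily small error. For GM, the matching gradient is
\[
\nabla_{\hat y}m=-\tfrac{1}{J}\sum_j\tilde K^\top\textstyle\sum_t G_{t,j}^{Tr}\cdot c ,
\]
where $c$ is a constant from the derivative of the student gradient in $\hat y$. A single step from $\hat y^{(0)}$ with $\lambda_2^D=1/\eta_2^D$ (which cancels the initialization) gives $\hat y\propto\tilde K^\top\bar G$, where $\bar G$ is the averaged summed teacher gradient. Since $\sum_t G_t^{Tr}$ equals $(a^{(0)}-a^{(\xi)})/\eta$, averaged over $j$, this is proportional to $-a^\ast$ once the random $a_j^{(1)}\sim\{\pm1\}$ averages out. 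Retraining with $\mathcal A^{(\mathrm{II})}$ from $a=0$ then needs to output $a^\ast$. I will choose $\eta_2^R$ and $\xi_2^R=1$, with $\lambda_2^R$ to compensate, so that $\tilde K\tilde K^\top$ acts as a scaled identity on the relevant subspace. If it does not, I will instead take $\xi_2^R$ large so that retraining converges to the least-squares fit of the labels $\tilde K^\top\tilde K^{\dagger}a^\ast$-type targets, whose solution in $\mathrm{range}(\tilde K)$ is $a^\ast$.

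\textbf{Conclusion.} With $a$ reproduced, and $W^{(1)}$ reproduced by $\mathcal D_1^S$ as in \Cref{th:main_2}, the error bound transfers verbatim. The memory cost is that of $\mathcal D_1^S$ plus $\hat x^{(0)}$ and $\hat y$, which is $\tilde\Theta(r^2d+L)$ by \Cref{as:main_reg}.
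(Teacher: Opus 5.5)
Your feature-space reduction, the inclusion $\mathrm{col}(K)\subseteq\mathrm{col}(\tilde K)$ obtained from the rank condition, and the telescoping $\sum_\tau g^{Tr}_{\tau,j}=(a_{j,\mathrm{init}}-a_{j,\mathrm{fin}})/\eta_2^{Tr}$ all match the paper. Here $a_{j,\mathrm{init}}=a_j^{(1)}$ and $a_{j,\mathrm{fin}}$ is the teacher's output. Your fallback of retraining many steps from $a=0$, so that gradient descent returns $(\tilde K^\top)^\dagger\hat y$, also matches. The one-step retraining option cannot work in general: from $a=0$ the $\lambda_2^R$ term vanishes, and one step returns $\frac{\eta_2^R}{M_2}\tilde K\tilde K^\top a^\ast$.

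The gap in the GM part is the choice $\lambda_2^D=1/\eta_2^D$. It discards $\hat y^{(0)}$ and leaves
\[
\hat y^{(1)}=\frac{\eta_2^D}{M_2\eta_2^{Tr}}\,\tilde K^\top\bigl(\bar a_{\mathrm{fin}}-\bar a_{\mathrm{init}}\bigr),\qquad \bar a_{\mathrm{init}}=\frac1J\sum_{j=1}^J a_j^{(1)}.
\]
The term $\bar a_{\mathrm{init}}$ does not average out under the hypotheses. Its entries are random of order $J^{-1/2}$. Every feature vector $\phi(x)$ lies in $\mathrm{col}(\tilde K)$, so relative to the teacher the retrained predictions are shifted by $-\langle\bar a_{\mathrm{init}},\phi(x)\rangle$, typically of order $\sqrt{L/J}$. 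This is below the target $\sqrt{r^{3p}/L}$ only if $J\gtrsim L^2/r^{3p}$, whereas Theorem~\ref{th:main_2} only bounds $LJ$ from below. The paper removes this term exactly. It initializes the labels at the averaged network outputs $\hat y^{(0)}=\frac1J\sum_j\tilde K^\top a_j^{(1)}$ and takes $\lambda_2^D=0$, $\eta_2^D=M_2\eta_2^{Tr}$. This yields $\hat y^{(1)}=\tilde K^\top\bar a_{\mathrm{fin}}$, with every $a_{j,\mathrm{fin}}\to a^{(\infty)}$.

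In the PM part, your claim that the distilled labels reproduce $a^\ast$ up to arbitrarily small error is false in general, and the ways you propose to force it do not work. Write $c=\eta_2^S/M_2$ and $a=c\tilde K\hat y$. The smallest $\|\hat y\|^2$ representing $a\in\mathrm{col}(\tilde K)$ is $c^{-2}a^\top(\tilde K\tilde K^\top)^\dagger a$. So for $\lambda_2^D>0$ the PM minimizer is a generalized ridge solution with this penalty. It equals the teacher's isotropic ridge solution only if $\tilde K\tilde K^\top$ is a multiple of the identity on $\mathrm{col}(\tilde K)$, which the regularity condition does not provide. One-step retraining from $a=0$ outputs $\frac{\eta_2^R}{M_2}\tilde K\hat y$ for every $\lambda_2^R$, so nothing can be absorbed there. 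Sending $\lambda_2^D\to0$ gives the unregularized least-squares fit over $\mathrm{col}(\tilde K)$. Its training loss is at most that of $a^\ast$, but its norm is uncontrolled. The generalization step behind Theorem~\ref{th:main_2} (Lemma 14 of \citet{D2022neural}) needs a norm-bounded second layer.

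The missing idea is not to aim for $a^\ast$ at all. Because $a^{(\infty)}\in\mathrm{col}(K)\subseteq\mathrm{col}(\tilde K)$, some $\hat y^\ast$ satisfies $c\tilde K\hat y^\ast=a^{(\infty)}$. Ridge/norm-constrained equivalence in $\hat y$ then gives a $\lambda_2^D>0$ whose minimizer has training loss no larger than that of $a^{(\infty)}$ and $\|\hat y\|\le\|\hat y^\ast\|$. This is what the paper feeds back into the argument of Theorem~\ref{th:main_2}.
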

In all cases discussed above, we obtain the same memory complexity as follows
\begin{theorem}
  The overall memory complexity in terms of training data to obtain an $f_\theta$ with generalization error below $\epsilon$ is reduced by DD from $\Theta\mathrm(d^3\vee d^2/\epsilon\vee d/\epsilon^4\vee dL)$ to $\tilde \Theta(r^2d+L)$, which is lower than the model storage cost of $\Theta(dp)$.
\end{theorem}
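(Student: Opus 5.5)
The statement is essentially an accounting result, so I will prove it by comparing two memory budgets, each obtained by invoking the earlier theorems. On the baseline side, I count the memory of the training data needed by the original procedure of \citet{D2022neural}. Their Theorem~1 requires $N\ge\tilde\Omega(d^2\vee d/\epsilon\vee 1/\epsilon^4)$ samples, and each sample costs $\Theta(d)$ (a vector in $\mathbb R^d$ plus a scalar label). This gives $\Theta(d^3\vee d^2/\epsilon\vee d/\epsilon^4)$.

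The extra $dL$ term I will justify by reading off the conditions of Theorem~\ref{th:main_2}. The bound there contains $\sqrt{r^{3p}/L}$, so reaching error $\epsilon$ requires a sufficiently wide network. The teacher pipeline must also supply enough independent feature information to fit the $L$ second-layer weights; for a generic least-squares fit in $\mathbb R^L$ this means at least on the order of $L$ points of dimension $d$. Taking the maximum of all terms gives the first expression.

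On the distilled side, I combine Theorems~\ref{th:main_1}, \ref{th:main_2} and~\ref{th:main_3}. Theorem~\ref{th:main_2} shows that $M_1=\tilde\Theta(r^2)$ points of $\mathcal D_1^S$ suffice, each of dimension $d$ with one label, for a cost of $\tilde\Theta(r^2 d)$. Theorem~\ref{th:main_3}, together with Assumption~\ref{as:main_reg}, shows that $\mathcal D_2^S$ reproduces $a^\ast$ with memory not exceeding $\tilde\Theta(r^2d+L)$. The alternatives give the same order: storing $a^\ast$ directly costs $\Theta(L)$, and the feature-space LRR construction of \citet{C2024provable} with $M_2=1$ point in $\mathbb R^L$ also costs $\Theta(L)$. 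The retrained $\tilde\theta^\ast$ achieves the error bound of Theorem~\ref{th:main_2}. Choosing $N$ and $L$ large enough to make that bound at most $\epsilon$ affects only the hidden polylogarithmic and $r$-dependent factors, not the stored distilled data, which are independent of $N$. Summing the two pieces gives $\tilde\Theta(r^2d+L)$.

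Finally, I compare with the model storage cost $\Theta(dL)$ for $W\in\mathbb R^{d\times L}$, which is what the paper's ``$\Theta(dp)$'' presumably denotes. Since $r^2\ll L$ in the regime of Theorem~\ref{th:main_2}, where $LJ$ is polynomially large in $d$, we have $r^2d+L\ll dL$. The main obstacle is making the baseline claim honest. Theorems~\ref{th:main_2} and~\ref{th:main_3} give only upper bounds on what suffices, so the reduction statement has to be phrased as comparing sufficient memory under the known analysis of \citet{D2022neural}, not as a matching lower bound. I would also state explicitly the regime of $\epsilon$, $d$ and $r$ in which $\tilde\Theta(r^2d+L)$ is strictly smaller, namely $d\ge\tilde\Omega(r^{8p+3})$.
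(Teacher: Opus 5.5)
Your bookkeeping matches the paper's. Its proof is only a pointer to Appendices~\ref{subsec:distill2_single} and~\ref{subsec:distill2_multi}, which add up $M_1=\tilde\Theta(r^2)$ points in $\mathbb R^d$ for $\mathcal D_1^S$ and a $\Theta(L)$-cost $\mathcal D_2^S$. That second set is built by reusing directions of $\mathcal D_1^S$ under the regularity condition, or replaced by $a^\ast$ itself or by one feature-space point. The total is compared against the sample size of \citet{D2022neural} times $d$. Your remark that this compares sufficient memory rather than proving a lower bound is a fair correction of the $\Theta$ in the statement.

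Two of your supporting steps do not hold as written.

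(i) Your derivation of the $dL$ term fails. Ridge regression on $L$ features is well posed for every $N$. Neither Theorem~1 of \citet{D2022neural} nor Theorem~\ref{th:main_2} imposes $N\gtrsim L$. So the claim that a least-squares fit in $\mathbb R^L$ needs order $L$ points of dimension $d$ does not produce this term. The paper does not derive it from a sample requirement either. The only place $\Theta(Ld)$ is justified is as the memory of KRR-style distillation \citep{C2024provable} and of storing the full network. You should cite it in that role or leave it out.

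(ii) The condition $LJ\ge\tilde\Omega(r^{8p+1}d^4)$ constrains only the product $LJ$, so it does not give $L\gg r^2$; a large $J$ can carry the whole bound. Use the error term instead. Making $\sqrt{r^{3p}/L}\le\epsilon$ forces $L\gtrsim r^{3p}/\epsilon^2$. Since $H\neq 0$ by Assumption~\ref{as:target_H}, we have $p\ge 2$, so $L\gtrsim r^{6}/\epsilon^2$. This is what yields $r^2d+L\ll dL$, up to polylogarithmic factors.
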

Please refer to Appendices~\ref{subsec:distill2_single} and~\ref{subsec:distill2_multi} for the proofs.

\subsection{Proof Sketch}
The key points of our proof can be summarized as the following three parts. 

\paragraph{(i) Feature Extraction at $t=1$}  We show that the teacher gradient extracts information from the principal subspace (Lemma~\ref{lem:damian_pop_grad}), which follows from \citet{D2022neural}, and then that this is transmitted to the population gradient (Theorem~\ref{th:pop_grad} and~\ref{th:pop_grad_multi}), leading to $\mathcal D_1^S$ projected onto $S^\ast$. 
\paragraph{(ii) Teacher Learning at $t=2$} Since $\mathcal D_1^S$ now captures the information of the low dimensional structure, we prove by construction that a second layer with a good generalization ability of the whole task exists. Intuitively, as we already have the information of $B$ in $f^\ast (x)=\sigma^\ast(B^\top x )$, we look for the right coefficient to reconstruct $\sigma^\ast$ with the last layer. By equivalence between ridge regression and norm constrained linear regression, the ridge regression of teacher training at $t=2$ can find a second layer as good as the one we constructed.
\paragraph{(iii) Second Distillation Phase} We show that the final direction of the parameters can be encoded into $\mathcal D_2^S$ under a regularity condition~\ref{as:main_reg}, especially into its labels, with both one-step PM and GM. The crux is that under this condition, $a^\ast\in\mathrm{col}(\tilde K)$, and we can import information of the last layer to the set $\hat y_m^{(1)}$, which reconstructs $a^\ast$ at retraining.

\section{Synthetic Experiments}\label{sec:exp}

In this section, we provide experiments based on synthetic data. The goal of this section is twofold: (i) to examine the scaling trends that align with the predicted dependence of our main results, and (ii) to illustrate the efficiency of DD in downstream transfer learning tasks. Please refer to Appendix~\ref{sec:ape} for experiments using real-world data and practical models.
\subsection{Theoretical Illustration}\label{subsec:exp1}
Here, we present an illustration of our result in terms of sample complexity of $N$ and $J$ and show that it matches our theory. We set $f^\ast(x) = \sigma^\ast(\langle \beta,x\rangle)$ where $\sigma^\ast(z) = {\mathrm{He_2(z)}}{/2} + {\mathrm{He_4(z)}}{/4!}$, $\zeta=0$, $r=1$, $d=10$, and $L=100$. Since this is a single index model, we prepare one synthetic point for $\mathcal D_1^S=\{(\tilde x^{(0)},\tilde y^{(0)})\}$ following Theorem~\ref{th:main_2_bis}. $\mathcal D_2^S$ is initialized following the construction of Appendix~\ref{subsec:discussion_construction}. We run Algorithm~\ref{al:2} and compare the generalization error of the network trained according to five different paradigms, namely, the vanilla training with the full training data, that with the obtained distilled data and its corresponding random baseline where we replaced $\mathcal D_1^S$ and $\mathcal D_2^S$ used in the training with random points of $\mathcal D^{Tr}$ (Random II), the result of teacher training at $t=2$ and its corresponding random baseline where we replaced $\mathcal D_1^S$ used in the training by a random point of the original dataset $\mathcal D^{Tr}$ (Random I). Results are plotted in Figure~\ref{fig:exp1} for different sizes of $N=\{10,10^2,10^3,10^4,10^5\}$ and effective random initialization $J^\ast = LJ/2= \{10,10^2,10^3,10^4,10^5\}$, which is the actual number of directions the gradient update can see for the first distillation step (see Corollary~\ref{cor:tilde_x_form}). 
\begin{figure}
    \centering
    \includegraphics[width=0.45\linewidth]{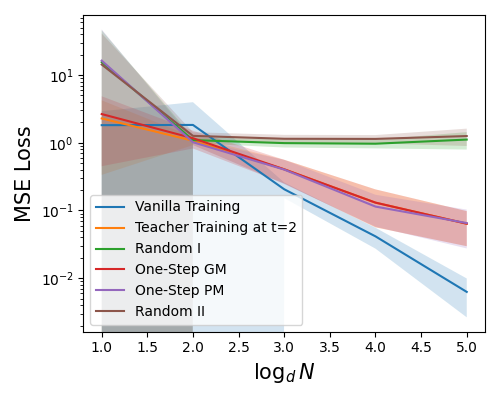}
    \includegraphics[width=0.45\linewidth]{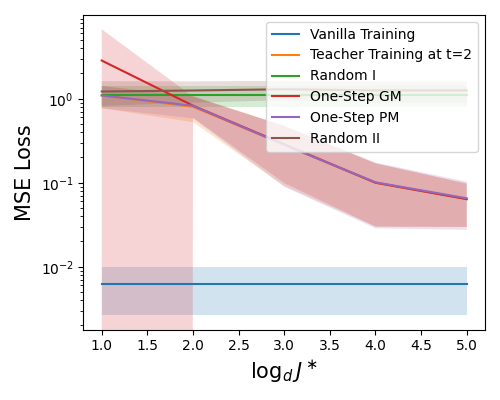}
    \caption{Dependence of training data size with $J^\ast =10^5$ (left figure) and initialization batch size with $N=10^5$ (right) with respect to the achieved MSE loss. Mean and standard deviation over five seeds.}
    \label{fig:exp1}
\end{figure}
\par As we can observe, random baselines cannot reproduce a performance close to the original training, while one distilled point for $\mathcal D_1^S$ is enough. Furthermore, the generalization loss of the distilled data decreases as $J$ and $N$ increase. These illustrate the compression efficiency of DD and the necessity of large $N$ and $J$ implied in Theorem~\ref{th:main_3}. 
\subsection{Application to Transfer Learning}
An implication of Theorem~\ref{th:main_2} is that $\mathcal D_1^S$ can be used to learn other functions $f^\ast(x)=\sigma^\ast(B^\top x)$ that possess the same principal subspace but different $\sigma^\ast$. We use $\mathcal D_1^S$ obtained from the previous experiment to learn a novel function  $g^\ast (x) = \mathrm{He}_3(\langle \beta,x\rangle)/\sqrt{3!}$ with the identical underlying structure as $f^\ast$. $\mathcal D_1^S$ is computed with $N$ training data of $f^\ast$ and $J^\ast$ initializations following Algorithm~\ref{al:2}. Weights were then pre-trained with the resulting $\mathcal D_1^S$, and we fine-tuned the second layer with $n$ samples from $g^\ast$. The result is plotted in Figure~\ref{fig:exp2}. 
\par Theorem 3 of~\citet{D2022neural} states that with such pre-trained weights the number of $n$ does not scale with $d$ anymore. We can observe the same phenomenon in Figure~\ref{fig:exp2} where population loss is already low for smaller $n$ compared to Figure~\ref{fig:exp1}. Moreover, spending more resources on computing the distilled set $\mathcal D_1^S$ for the first task leads to higher performance on the subsequent task. Note that learning $g^\ast$ is a difficult problem and neural tangent kernel requires $n\gtrsim d^3$ to achieve non-trivial loss~\citep{D2022neural}. Importantly, this result was computed over random initializations of the whole network, showing the robustness of this approach over initial configurations in this kind of problems. This result aligns with general transfer learning scenarios where we assume several tasks share a common underlying structure, and one general pre-trained model can be fine-tuned with low training cost to each of them. Since our distilled dataset needs less memory storage than the pre-trained model, this experiment supports the idea that DD provides a compact summary of previous knowledge that can be deployed at larger scale for applications such as transfer learning~\citep{L2023self}. 
\begin{figure}
    \centering
    \includegraphics[width=0.45\linewidth]{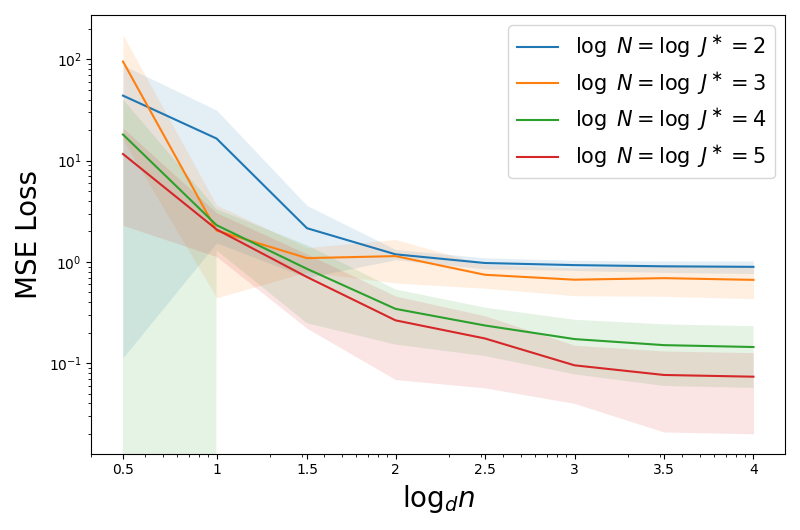}
    \caption{MSE loss with respect to the training data size $n$ used to fine-tune a model pre-trained with data distilled from the earlier training of a function with the same principal subspace. $N$ and $J^\ast$ are the parameters for this training before. Mean and standard deviation over five seeds.}
    \label{fig:exp2}
\end{figure}

\section{Discussion and Conclusion}\label{sec:ccl}

One notable feature of our result is that DD exploits the intrinsic dimensionality of the task to realize a high compression rate, primarily during the early stage of distillation, as frequently suggested in prior work~\citep{Z2021dataset}. This suggests that the necessary distilled size should scale with task complexity rather than ambient dimension alone and provides a concrete theory-supported perspective on how the distilled dataset size should scale in practice. This also confirms prior empirical findings that DD encodes information from the beginning of teacher training, which can be sufficient to attain strong distillation performance on certain tasks~\citep{Z2021dataset,Y2024what}. While the first distillation principally captures information from $S^\ast$ and can be utilized to other tasks such as transfer learning, the second distillation can be interpreted as an architecture-specific distillation that encodes more fine-tuned information of the training. Since a factor of $d$ is unavoidable as soon as we save a point in the input dimension, the obtained memory complexity of $\tilde \Theta(r^2d+L)$ depends only fundamentally on the intrinsic structure of the task and the architecture of the neural network. At the same time, this efficient retraining and compact distilled dataset come at the cost of increased computation relative to standard training. Especially, the initialization batch size $J$ scales with $d$. While we expect the exponents can be improved, this computational overhead appears to be the price for the strong performance of DD. Moreover, our analysis reveals a clear trade-off between PM and GM. The former has lower computational complexity during distillation but higher cost during retraining, and \textit{vice versa}. Finally, our result supports a principled strategy for DD. Distilling earlier training phases may capture intrinsic features that are not captured later. Consequently, aggregating information from the entire training trajectory into a single distilled dataset may be inefficient, and a careful distillation appropriate to each phase of the training constitutes a key factor to improve the performance of DD. From this perspective, the progressive DD approach of~\citet{C2025dataset} not only facilitates a cleaner analysis of the complex machinery of DD but also contributes to creating effective distilled datasets.

\paragraph{Comparison with Prior Works} To the best of our knowledge, we provide the first analysis of DD dynamics that accounts for task structure, non-linear models, and gradient-based optimization. This leads to several novel results that previous research could not obtain. Notably, our theoretical framework shows that DD can identify the intrinsic structure of the task, which helps explain why the performance of DD varies in function of task difficulty, beyond previous KRR settings. By carefully tracking the algorithmic mechanism when all layers of the model are trained, we could quantify the memory complexity of DD as $\tilde \Theta(r^2d+L)$. In comparison, for the same model,~\citet{C2024provable} would require a memory complexity $\Theta(Ld)$. $\Theta(Ld)$ is also the storage cost of the entire two-layer neural network. Our result is proved for one-step distillation methods, which also helps to explain why such techniques can be competitive in practice. Furthermore, throughout our proofs, random initializations play a crucial role in isolating the fundamental information to be distilled. This feature was not considered in prior work, and we highlight it accordingly.

\paragraph{Potential Benefit of Pre-Trained Models} In our theorem, the dependence on $d$ is mainly absorbed into $N$ and $J$. This dependence may be improved by leveraging information from pre-trained models which are also subjected to DD to reduce their fine-tuning cost~\citep{C2025dataset}. Indeed, as such models have been trained on a wide range of tasks, their weights may concentrate on a lower-dimensional structure that reflects a moderately small effective dimension $r^\ast$ of the environment which includes the small $r$-dimensional subspace of our problem setting. As a result, rather than using generic isotropic distributions for the initialization of the parameters as in our analysis, one could estimate a task-relevant subspace directly from the pre-trained weights, extracting a principal subspace of dimension $r^\ast$ and restrict the sampling distribution to that $r^\ast$-dimensional subspace. This would replace the dependence of $N$ and $J$ on $d$ by a dependence on $r^\ast$. We believe this is a promising approach for future work.

\paragraph{Extension to Deeper Networks} Our analysis can be interpreted in the context of deeper neural networks by viewing those models as consisting of a feature-learning layer followed by an output function comprising the remaining layers. Under this perspective, the first distilled dataset of Section~\ref{subsec:4.1} can be used to retrain the feature-learning layer, while the analysis in Section~\ref{subsec:4.2} extends to the subsequent layers. This suggests that similar distilled datasets capturing the low-dimensionality of the task exist that can be reused across different deeper architectures, expanding our theory to deeper networks and, at the same time, providing a theoretical explanation for cross-architecture transfer as well. Nevertheless, we do not consider this a full treatment of deeper networks, as it remains unclear what fundamentally new theoretical insights would arise beyond the core mechanism already identified in our paper when moving to deeper architectures, which would require a more involved analysis since learned representation evolves jointly across multiple layers in this broader setting. This is beyond the scope of our analysis and this work. 

\paragraph{Other Limitations} One limitation of our work resides in the function class we analyze, that of multi-index models, which may not fully represent practical settings. Nevertheless, this captures the essence of the complex interaction that happens inside DD, such as the low-dimensional latent structure of the task, which was not considered in prior theoretical investigations of DD. The feature-learning analysis in DD is more involved than in standard supervised training, since DD contains several interacting stages (teacher training, student training, distillation, and retraining) and each stage has its own optimization dynamics rather than a single end-to-end training process as we tried to keep the algorithmic procedure as close as possible to practice using gradient-based formulation. Analyzing feature learning across these coupled stages is therefore substantially more complex, and constituted one of the main technical challenges of the paper. Our work on its own thus advances theoretical understanding and develops tools that can serve as a foundation for even more practice-aligned models. The initialization of $\mathcal D_1^S$ and $\mathcal D_2^S$ may also look too model-specific. However, the scope of our work does not include cross-architecture generalization, and thus this does not undermine the validity of our results. Analogous construction procedures could be developed for other deeper neural networks, which is also left for future work.

In conclusion, we analyzed DD applied to an optimization framework for two-layer neural networks with ReLU activation functions trained on a task endowed with low-dimensional latent structures. We proved not only that DD leverages such a latent structure and encodes its information into well-designed distilled data, but also that the memory storage necessary to retrain a neural network with high generalization score is decreased to $\tilde \Theta(r^2d+L)$. While the primary goal of this work was to elucidate how DD exploits intrinsic patterns of a training regime, the setting we analyzed remains simplified relative to practical deployments, and extending the theory to more realistic regimes constitutes an essential avenue for future work. Nevertheless, we believe this paper contributes on its own to the development and deeper understanding of DD, ultimately helping advance it towards a reliable methodology for reducing the data and computational burdens of modern deep learning.

\section*{Acknowledgements}
YK was supported by JST ACT-X (JPMJAX25CA) and JST BOOST (JPMJBS2418). NN was partially supported by JST ACT-X (JPMJAX24CK) and JST BOOST (JPMJBS2418). TT is supported by RIKEN Center for Brain Science, RIKEN TRIP initiative (RIKEN Quantum), JST CREST program JPMJCR23N2, and JSPS KAKENHI 25K24466. We thank Taiji Suzuki for helpful discussions.


\section*{Impact Statement}

This paper presents work whose goal is to advance the field of Machine
Learning. There are many potential societal consequences of our work, none
which we feel must be specifically highlighted here.


\bibliography{biblio}

@inproceedings{B2022high,
 author = {Ba, Jimmy and Erdogdu, Murat A and Suzuki, Taiji and Wang, Zhichao and Wu, Denny and Yang, Greg},
 booktitle = {Advances in Neural Information Processing Systems},
 editor = {S. Koyejo and S. Mohamed and A. Agarwal and D. Belgrave and K. Cho and A. Oh},
 pages = {37932--37946},
 publisher = {Curran Associates, Inc.},
 title = {High-dimensional Asymptotics of Feature Learning: {H}ow One Gradient Step Improves the Representation},
 volume = {35},
 year = {2022}
}

@inproceedings{D2023smoothing,
 author = {Damian, Alex and Nichani, Eshaan and Ge, Rong and Lee, Jason D},
 booktitle = {Advances in Neural Information Processing Systems},
 editor = {A. Oh and T. Naumann and A. Globerson and K. Saenko and M. Hardt and S. Levine},
 pages = {752--784},
 publisher = {Curran Associates, Inc.},
 title = {Smoothing the Landscape Boosts the Signal for {SGD}: {O}ptimal Sample Complexity for Learning Single Index Models},
 volume = {36},
 year = {2023}
}

@article{R2025emergence,
  title={Emergence and scaling laws in {SGD} learning of shallow neural networks},
  author={Ren, Yunwei and Nichani, Eshaan and Wu, Denny and Lee, Jason D},
  booktitle = {Advances in Neural Information Processing Systems},
  year={2025}
}

@inproceedings{D2024condtsf,
 author = {Ding, Jianrong and Liu, Zhanyu and Zheng, Guanjie and Jin, Haiming and Kong, Linghe},
 booktitle = {Advances in Neural Information Processing Systems},
 doi = {10.52202/079017-4072},
 editor = {A. Globerson and L. Mackey and D. Belgrave and A. Fan and U. Paquet and J. Tomczak and C. Zhang},
 pages = {128227--128259},
 publisher = {Curran Associates, Inc.},
 title = {{CondTSF}: {O}ne-line Plugin of Dataset Condensation for Time Series Forecasting},
 volume = {37},
 year = {2024}
}

@inproceedings{N2021dataset,
 author = {Nguyen, Timothy and Novak, Roman and Xiao, Lechao and Lee, Jaehoon},
 booktitle = {Advances in Neural Information Processing Systems},
 editor = {M. Ranzato and A. Beygelzimer and Y. Dauphin and P.S. Liang and J. Wortman Vaughan},
 pages = {5186--5198},
 publisher = {Curran Associates, Inc.},
 title = {Dataset Distillation with Infinitely Wide Convolutional Networks},
 volume = {34},
 year = {2021}
}

@inproceedings{O2024pretrained,
 author = {Oko, Kazusato and Song, Yujin and Suzuki, Taiji and Wu, Denny},
 booktitle = {Advances in Neural Information Processing Systems},
 editor = {A. Globerson and L. Mackey and D. Belgrave and A. Fan and U. Paquet and J. Tomczak and C. Zhang},
 pages = {77316--77365},
 publisher = {Curran Associates, Inc.},
 title = {Pretrained {T}ransformer Efficiently Learns Low-Dimensional Target Functions In-Context},
 volume = {37},
 year = {2024}
}

@inproceedings{D2022remember,
 author = {Deng, Zhiwei and Russakovsky, Olga},
 booktitle = {Advances in Neural Information Processing Systems},
 editor = {S. Koyejo and S. Mohamed and A. Agarwal and D. Belgrave and K. Cho and A. Oh},
 pages = {34391--34404},
 publisher = {Curran Associates, Inc.},
 title = {Remember the Past: {D}istilling Datasets into Addressable Memories for Neural Networks},
 volume = {35},
 year = {2022}
}

@inproceedings{Z2022dataset,
 author = {Zhou, Yongchao and Nezhadarya, Ehsan and Ba, Jimmy},
 booktitle = {Advances in Neural Information Processing Systems},
 editor = {S. Koyejo and S. Mohamed and A. Agarwal and D. Belgrave and K. Cho and A. Oh},
 pages = {9813--9827},
 publisher = {Curran Associates, Inc.},
 title = {Dataset Distillation using Neural Feature Regression},
 volume = {35},
 year = {2022}
}

@inproceedings{C2024provable,
 author = {Chen, Yilan and Huang, Wei and Weng, Tsui-Wei},
 booktitle = {Advances in Neural Information Processing Systems},
 doi = {10.52202/079017-2816},
 editor = {A. Globerson and L. Mackey and D. Belgrave and A. Fan and U. Paquet and J. Tomczak and C. Zhang},
 pages = {88739--88771},
 publisher = {Curran Associates, Inc.},
 title = {Provable and Efficient Dataset Distillation for Kernel Ridge Regression},
 volume = {37},
 year = {2024}
}

@inproceedings{L2022dataset,
 author = {Liu, Songhua and Wang, Kai and Yang, Xingyi and Ye, Jingwen and Wang, Xinchao},
 booktitle = {Advances in Neural Information Processing Systems},
 editor = {S. Koyejo and S. Mohamed and A. Agarwal and D. Belgrave and K. Cho and A. Oh},
 pages = {1100--1113},
 publisher = {Curran Associates, Inc.},
 title = {Dataset Distillation via Factorization},
 volume = {35},
 year = {2022}
}

@inproceedings{C2025dataset,
  title={Dataset Distillation for Pre-Trained Self-Supervised Vision Models},
  author={Cazenavette, George and Torralba, Antonio and Sitzmann, Vincent},
    booktitle = {Advances in Neural Information Processing Systems},
  year={2025}
}

@inproceedings{M2023onthesize,
 author = {Maalouf, Alaa and Tukan, Murad and Loo, Noel and Hasani, Ramin and Lechner, Mathias and Rus, Daniela},
 booktitle = {Advances in Neural Information Processing Systems},
 editor = {A. Oh and T. Naumann and A. Globerson and K. Saenko and M. Hardt and S. Levine},
 pages = {61085--61102},
 publisher = {Curran Associates, Inc.},
 title = {On the Size and Approximation Error of Distilled Datasets},
 volume = {36},
 year = {2023}
}

@InProceedings{G2020generalisation,
  title = 	 {Generalisation error in learning with random features and the hidden manifold model},
  author =       {Gerace, Federica and Loureiro, Bruno and Krzakala, Florent and Mezard, Marc and Zdeborova, Lenka},
  booktitle = 	 {Proceedings of the 37th International Conference on Machine Learning},
  pages = 	 {3452--3462},
  year = 	 {2020},
  editor = 	 {III, Hal Daumé and Singh, Aarti},
  volume = 	 {119},
  series = 	 {Proceedings of Machine Learning Research},
  month = 	 {13--18 Jul},
  publisher =    {PMLR},
}

@InProceedings{D2022privacy,
  title = 	 {Privacy for Free: {H}ow does Dataset Condensation Help Privacy?},
  author =       {Dong, Tian and Zhao, Bo and Lyu, Lingjuan},
  booktitle = 	 {Proceedings of the 39th International Conference on Machine Learning},
  pages = 	 {5378--5396},
  year = 	 {2022},
  editor = 	 {Chaudhuri, Kamalika and Jegelka, Stefanie and Song, Le and Szepesvari, Csaba and Niu, Gang and Sabato, Sivan},
  volume = 	 {162},
  series = 	 {Proceedings of Machine Learning Research},
  month = 	 {17--23 Jul},
  publisher =    {PMLR},
}

@InProceedings{Z2021siamese,
  title = 	 {Dataset Condensation with Differentiable Siamese Augmentation},
  author =       {Zhao, Bo and Bilen, Hakan},
  booktitle = 	 {Proceedings of the 38th International Conference on Machine Learning},
  pages = 	 {12674--12685},
  year = 	 {2021},
  editor = 	 {Meila, Marina and Zhang, Tong},
  volume = 	 {139},
  series = 	 {Proceedings of Machine Learning Research},
  month = 	 {18--24 Jul},
  publisher =    {PMLR},
}

@InProceedings{V2022implicit,
  title = 	 {On Implicit Bias in Overparameterized Bilevel Optimization},
  author =       {Vicol, Paul and Lorraine, Jonathan P and Pedregosa, Fabian and Duvenaud, David and Grosse, Roger B},
  booktitle = 	 {Proceedings of the 39th International Conference on Machine Learning},
  pages = 	 {22234--22259},
  year = 	 {2022},
  editor = 	 {Chaudhuri, Kamalika and Jegelka, Stefanie and Song, Le and Szepesvari, Csaba and Niu, Gang and Sabato, Sivan},
  volume = 	 {162},
  series = 	 {Proceedings of Machine Learning Research},
  month = 	 {17--23 Jul},
  publisher =    {PMLR},
}

@InProceedings{N2025nonlinear,
  title = 	 {Nonlinear {T}ransformers can perform inference-time feature learning},
  author =       {Nishikawa, Naoki and Song, Yujin and Oko, Kazusato and Wu, Denny and Suzuki, Taiji},
  booktitle = 	 {Proceedings of the 42nd International Conference on Machine Learning},
  pages = 	 {46554--46585},
  year = 	 {2025},
  editor = 	 {Singh, Aarti and Fazel, Maryam and Hsu, Daniel and Lacoste-Julien, Simon and Berkenkamp, Felix and Maharaj, Tegan and Wagstaff, Kiri and Zhu, Jerry},
  volume = 	 {267},
  series = 	 {Proceedings of Machine Learning Research},
  month = 	 {13--19 Jul},
  publisher =    {PMLR},
}

@InProceedings{Y2024what,
  title = 	 {What is Dataset Distillation Learning?},
  author =       {Yang, William and Zhu, Ye and Deng, Zhiwei and Russakovsky, Olga},
  booktitle = 	 {Proceedings of the 41st International Conference on Machine Learning},
  pages = 	 {56812--56834},
  year = 	 {2024},
  editor = 	 {Salakhutdinov, Ruslan and Kolter, Zico and Heller, Katherine and Weller, Adrian and Oliver, Nuria and Scarlett, Jonathan and Berkenkamp, Felix},
  volume = 	 {235},
  series = 	 {Proceedings of Machine Learning Research},
  month = 	 {21--27 Jul},
  publisher =    {PMLR},
}

@InProceedings{C2023scaling,
  title = 	 {Scaling Up Dataset Distillation to {I}mage{N}et-1{K} with Constant Memory},
  author =       {Cui, Justin and Wang, Ruochen and Si, Si and Hsieh, Cho-Jui},
  booktitle = 	 {Proceedings of the 40th International Conference on Machine Learning},
  pages = 	 {6565--6590},
  year = 	 {2023},
  editor = 	 {Krause, Andreas and Brunskill, Emma and Cho, Kyunghyun and Engelhardt, Barbara and Sabato, Sivan and Scarlett, Jonathan},
  volume = 	 {202},
  series = 	 {Proceedings of Machine Learning Research},
  month = 	 {23--29 Jul},
  publisher =    {PMLR},
}

@article{L2023self,
  title={Self-supervised dataset distillation for transfer learning},
  author={Lee, Dong Bok and Lee, Seanie and Ko, Joonho and Kawaguchi, Kenji and Lee, Juho and Hwang, Sung Ju},
  journal={International Conference on Learning Representations},
  year={2024}
}

@article{P2021intrinsic,
  title={The intrinsic dimension of images and its impact on learning},
  author={Pope, Phillip and Zhu, Chen and Abdelkader, Ahmed and Goldblum, Micah and Goldstein, Tom},
  journal={International Conference on Learning Representations},
  year={2021}
}

@inproceedings{Z2021dataset,
  title={Dataset Condensation with Gradient Matching},
  author={Zhao, Bo and Mopuri, Konda Reddy and Bilen, Hakan},
  journal={International Conference on Learning Representations},
  year={2021}
}

@inproceedings{C2023data,
  title={Data distillation can be like vodka: {D}istilling more times for better quality},
  author={Chen, Xuxi and Yang, Yu and Wang, Zhangyang and Mirzasoleiman, Baharan},
  journal={International Conference on Learning Representations},
  year={2024}
}

@inproceedings{N2021meta,
  title={Dataset meta-learning from kernel ridge-regression},
  author={Nguyen, Timothy and Chen, Zhourong and Lee, Jaehoon},
  journal={International Conference on Learning Representations},
  year={2021}
}

@InProceedings{D2022neural,
  title = 	 {Neural Networks can Learn Representations with Gradient Descent},
  author =       {Damian, Alexandru and Lee, Jason and Soltanolkotabi, Mahdi},
  booktitle = 	 {Proceedings of Thirty Fifth Conference on Learning Theory},
  pages = 	 {5413--5452},
  year = 	 {2022},
  editor = 	 {Loh, Po-Ling and Raginsky, Maxim},
  volume = 	 {178},
  series = 	 {Proceedings of Machine Learning Research},
  month = 	 {02--05 Jul},
  publisher =    {PMLR},
}

@INPROCEEDINGS{J2023delving,
  author={Jiang, Zixuan and Gu, Jiaqi and Liu, Mingjie and Pan, David Z.},
  booktitle={2023 IEEE International Conference on Omni-layer Intelligent Systems (COINS)}, 
  title={Delving into Effective Gradient Matching for Dataset Condensation}, 
  year={2023},
  volume={},
  number={},
  pages={1-6},
  doi={10.1109/COINS57856.2023.10189244}}

@article{W2018dataset,
  title={Dataset distillation},
  author={Wang, Tongzhou and Zhu, Jun-Yan and Torralba, Antonio and Efros, Alexei A},
  journal={arXiv preprint arXiv:1811.10959},
  year={2018}
}

@InProceedings{C2022dataset,
    author    = {Cazenavette, George and Wang, Tongzhou and Torralba, Antonio and Efros, Alexei A. and Zhu, Jun-Yan},
    title     = {Dataset Distillation by Matching Training Trajectories},
    booktitle = {Proceedings of the IEEE/CVF Conference on Computer Vision and Pattern Recognition (CVPR) Workshops},
    month     = {June},
    year      = {2022},
    pages     = {4750-4759}
}

@InProceedings{S2025flip,
author="Son, Byunggwan
and Oh, Youngmin
and Baek, Donghyeon
and Ham, Bumsub",
editor="Leonardis, Ale{\v{s}}
and Ricci, Elisa
and Roth, Stefan
and Russakovsky, Olga
and Sattler, Torsten
and Varol, G{\"u}l",
title="{FYI}: {F}lip Your Images for Dataset Distillation",
booktitle="Computer Vision -- ECCV 2024",
year="2025",
publisher="Springer Nature Switzerland",
address="Cham",
pages="214--230",
}

@InProceedings{W2022cafe,
    author    = {Wang, Kai and Zhao, Bo and Peng, Xiangyu and Zhu, Zheng and Yang, Shuo and Wang, Shuo and Huang, Guan and Bilen, Hakan and Wang, Xinchao and You, Yang},
    title     = {{CAFE}: {L}earning To Condense Dataset by Aligning Features},
    booktitle = {Proceedings of the IEEE/CVF Conference on Computer Vision and Pattern Recognition (CVPR)},
    month     = {June},
    year      = {2022},
    pages     = {12196-12205}
}

@INPROCEEDINGS{S2021softlabel,
  author={Sucholutsky, Ilia and Schonlau, Matthias},
  booktitle={2021 International Joint Conference on Neural Networks (IJCNN)}, 
  title={Soft-Label Dataset Distillation and Text Dataset Distillation}, 
  year={2021},
  volume={},
  number={},
  pages={1-8},
  doi={10.1109/IJCNN52387.2021.9533769}}

@inproceedings{M2025toward,
  title={Toward Dataset Distillation for Regression Problems},
  author={Mahowald, Jamie and Srinivasan, Ravi and Wang, Zhangyang},
  booktitle={{ES-FoMo III}: 3rd Workshop on Efficient Systems for Foundation Models},
  year={2025}
}

@InProceedings{L2020mnemonics,
author = {Liu, Yaoyao and Su, Yuting and Liu, An-An and Schiele, Bernt and Sun, Qianru},
title = {Mnemonics Training: {M}ulti-Class Incremental Learning Without Forgetting},
booktitle = {Proceedings of the IEEE/CVF Conference on Computer Vision and Pattern Recognition (CVPR)},
month = {June},
year = {2020}
}

@inproceedings{I2023theoretical,
    title={A Theoretical Study of Dataset Distillation},
    author={Zachary Izzo and James Zou},
    booktitle={NeurIPS 2023 Workshop on Mathematics of Modern Machine Learning},
    year={2023},
}

@InProceedings{M2020reducing,
author = {Masarczyk, Wojciech and Tautkute, Ivona},
title = {Reducing Catastrophic Forgetting With Learning on Synthetic Data},
booktitle = {Proceedings of the IEEE/CVF Conference on Computer Vision and Pattern Recognition (CVPR) Workshops},
month = {June},
year = {2020}
}

@article{K2024hybrid,
  title={Hybrid memory replay: {B}lending real and distilled data for class incremental learning},
  author={Kong, Jiangtao and Shi, Jiacheng and Gao, Ashley and Hu, Shaohan and Zhou, Tianyi and Shao, Huajie},
  journal={arXiv preprint arXiv:2410.15372},
  year={2024}
}

@article{L2025utility,
  title={Utility Boundary of Dataset Distillation: {S}caling and Configuration-Coverage Laws},
  author={Luo, Zhengquan and Xu, Zhiqiang},
  journal={arXiv preprint arXiv:2512.05817},
  year={2025}
}

@article{G2025single,
  title={Single-channel {EEG}-based sleep stage classification via hybrid data distillation},
  author={Guo, Hanfei and Xu, Junhao and Li, Chang and Zhao, Wei and Peng, Hu and Han, Zhihui and Wang, Yuanguo and Chen, Xun},
  journal={Journal of Neural Engineering},
  volume={22},
  number={6},
  pages={066013},
  year={2025},
  publisher={IOP Publishing}
}

@ARTICLE{Y2024review,
  author={Yu, Ruonan and Liu, Songhua and Wang, Xinchao},
  journal={IEEE Transactions on Pattern Analysis and Machine Intelligence}, 
  title={Dataset Distillation: {A} Comprehensive Review}, 
  year={2024},
  volume={46},
  number={1},
  pages={150-170},
  doi={10.1109/TPAMI.2023.3323376}}

@InProceedings{A2022merged,
  title = 	 {The merged-staircase property: {A} necessary and nearly sufficient condition for {SGD} learning of sparse functions on two-layer neural networks},
  author =       {Abbe, Emmanuel and Adsera, Enric Boix and Misiakiewicz, Theodor},
  booktitle = 	 {Proceedings of Thirty Fifth Conference on Learning Theory},
  pages = 	 {4782--4887},
  year = 	 {2022},
  editor = 	 {Loh, Po-Ling and Raginsky, Maxim},
  volume = 	 {178},
  series = 	 {Proceedings of Machine Learning Research},
  month = 	 {02--05 Jul},
  publisher =    {PMLR},
}

@InProceedings{O2024learning,
  title = 	 {Learning sum of diverse features: {C}omputational hardness and efficient gradient-based training for ridge combinations},
  author =       {Oko, Kazusato and Song, Yujin and Suzuki, Taiji and Wu, Denny},
  booktitle = 	 {Proceedings of Thirty Seventh Conference on Learning Theory},
  pages = 	 {4009--4081},
  year = 	 {2024},
  editor = 	 {Agrawal, Shipra and Roth, Aaron},
  volume = 	 {247},
  series = 	 {Proceedings of Machine Learning Research},
  month = 	 {30 Jun--03 Jul},
  publisher =    {PMLR},
}

@article{Z2020distilled,
  title={Distilled one-shot federated learning},
  author={Zhou, Yanlin and Pu, George and Ma, Xiyao and Li, Xiaolin and Wu, Dapeng},
  journal={arXiv preprint arXiv:2009.07999},
  year={2020}
}

@INPROCEEDINGS{L2020soft,
  author={Li, Guang and Togo, Ren and Ogawa, Takahiro and Haseyama, Miki},
  booktitle={2020 IEEE International Conference on Image Processing (ICIP)}, 
  title={Soft-Label Anonymous Gastric {X}-Ray Image Distillation}, 
  year={2020},
  volume={},
  number={},
  pages={305-309},
  doi={10.1109/ICIP40778.2020.9191357}}

@InProceedings{Z2023distribution,
    author    = {Zhao, Bo and Bilen, Hakan},
    title     = {Dataset Condensation With Distribution Matching},
    booktitle = {Proceedings of the IEEE/CVF Winter Conference on Applications of Computer Vision (WACV)},
    month     = {January},
    year      = {2023},
    pages     = {6514-6523}
}

@book{V2018high,
  title={High-dimensional probability: {A}n introduction with applications in data science},
  author={Vershynin, Roman},
  volume={47},
  year={2018},
  publisher={Cambridge university press}
}

@article{D2020precise,
  title={A precise performance analysis of learning with random features},
  author={Dhifallah, Oussama and Lu, Yue M},
  journal={arXiv preprint arXiv:2008.11904},
  year={2020}
}

@article{F2016testing,
  title={Testing the manifold hypothesis},
  author={Fefferman, Charles and Mitter, Sanjoy and Narayanan, Hariharan},
  journal={Journal of the American Mathematical Society},
  volume={29},
  number={4},
  pages={983--1049},
  year={2016}
}

@article{
    J2000global,
    author = {Joshua B. Tenenbaum  and Vin de Silva  and John C. Langford },
    title = {A Global Geometric Framework for Nonlinear Dimensionality Reduction},
    journal = {Science},
    volume = {290},
    number = {5500},
    pages = {2319-2323},
    year = {2000}
}

@article{B2023learning,
  title={On learning gaussian multi-index models with gradient flow},
  author={Bietti, Alberto and Bruna, Joan and Pillaud-Vivien, Loucas},
  journal={arXiv preprint arXiv:2310.19793},
  year={2023}
}

@InProceedings{G2024efficient,
    author    = {Gu, Jianyang and Vahidian, Saeed and Kungurtsev, Vyacheslav and Wang, Haonan and Jiang, Wei and You, Yang and Chen, Yiran},
    title     = {Efficient Dataset Distillation via Minimax Diffusion},
    booktitle = {Proceedings of the IEEE/CVF Conference on Computer Vision and Pattern Recognition (CVPR)},
    month     = {June},
    year      = {2024},
    pages     = {15793-15803}
}
\bibliographystyle{icml2026}

\newpage
\appendix
\onecolumn
\section{Further Notations, Tensor Computation and Hermite Polynomials}\label{sec:apa}

\subsection{Notations}
In this paper, we employ the notion of high probability events defined as follows:
\begin{definition}
    Throughout the proofs, $\iota$ is used to denote any quantity such that $\iota = C\iota \log(NLd)$ for sufficiently large constant $C_\iota$, and will be redefined accordingly. Moreover, an event is said to happen \textit{with high probability} if its probability is at least $1-\mathrm{poly}(N,L,d)\e^{-\iota}$ where $\mathrm{poly}(N,L,d)$ is a polynomial of $N$, $L$ and $d$ that does not depend on $C_\iota$.
\end{definition}
Any union bound of a number of $\mathrm{poly}(N,L,d)$ of high probability events is itself a high probability events. Since all our parameters will scale up to $\mathrm{poly}(N,L,d)$ orders, we can take such union bound safely.
For example, the following holds for the Gaussian distribution and the uniform distribution over $S^{d-1}$.
\begin{lemma}\label{lem:norm_gaus}
    If $x\sim N(0,I_d)$, then $\|x\|^2 =\tilde \Theta(d)$ with high probability.
\end{lemma}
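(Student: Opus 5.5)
The plan is to reduce the statement to a standard concentration inequality for the $\chi^2_d$ distribution and then express the deviation in terms of $\iota$. Since $x\sim N(0,I_d)$, we have $\|x\|^2=\sum_{k=1}^d x_k^2\sim\chi^2_d$, with $\E\|x\|^2=d$. Each summand $x_k^2-1$ is a centered sub-exponential variable with $\psi_1$-norm bounded by an absolute constant. So either Bernstein's inequality for sub-exponential sums \citep{V2018high} or the Laurent--Massart bound gives, for every $t>0$,
\[
\Pr\bigl(\|x\|^2\ge d+2\sqrt{dt}+2t\bigr)\le \e^{-t},\qquad \Pr\bigl(\|x\|^2\le d-2\sqrt{dt}\bigr)\le \e^{-t}.
\]

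Next I would set $t=\iota=C_\iota\log(NLd)$. With probability at least $1-2\e^{-\iota}$, which is a high probability event in the sense of the definition above, we then have $d-2\sqrt{d\iota}\le\|x\|^2\le d+2\sqrt{d\iota}+2\iota$.

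The upper bound gives $\|x\|^2\le 3d+4\iota\le 7d\,\iota=\tilde O(d)$. This uses $2\sqrt{d\iota}\le d+\iota$ together with $\iota\ge1$ and $d\ge1$.

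For the lower bound I would use the regime $d\ge 16\iota$, i.e.\ $d\ge\tilde\Omega(1)$, which is implicit throughout the paper (for instance $d\ge\tilde\Omega(r^{8p+3})$ in Theorem~\ref{th:main_2}). In that regime $2\sqrt{d\iota}\le d/2$, so $\|x\|^2\ge d/2=\tilde\Omega(d)$. Combining the two bounds gives $\|x\|^2=\tilde\Theta(d)$ with high probability.

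The only delicate point is the lower tail when $d$ is at most polylogarithmic. There a lower bound of the form $d/\mathrm{polylog}$ cannot hold with probability $1-\e^{-\iota}$: for $d=1$, $\Pr(x_1^2\le\varepsilon)\asymp\sqrt{\varepsilon}$. I would therefore state the $d\gtrsim\iota$ condition explicitly, noting that it is harmless because it is absorbed in the paper's standing assumption that $d$ is polynomially large. Finally, since only $\mathrm{poly}(N,L,d)$ such vectors ever appear, a union bound keeps the statement simultaneously valid for all of them.
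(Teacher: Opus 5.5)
Your proposal is correct and is essentially the paper's argument. The paper applies the Gaussian norm concentration of \citet{V2018high} (Theorem 3.1.1) with deviation $t=\frac12\sqrt d$, obtaining $\frac12\sqrt d\le\|x\|\le\frac32\sqrt d$ with probability $1-2\e^{-cd}$, whereas you apply the $\chi^2_d$ tail bound at level $t=\iota$. The paper's claim that $1-2\e^{-cd}$ is a high probability event implicitly requires $cd\gtrsim\iota$, which is exactly the condition $d\gtrsim\iota$ you state, so your remark on the small-$d$ lower tail is a correct clarification rather than a gap.
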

\begin{proof}
    From Theorem 3.1.1 of \citet{V2018high}, $\sqrt{d}-t\le \|x\|\le \sqrt{d}+t$ with probability at least $1-2\e^{-ct^2}$, where $c$ is a universal constant independent of $d$ and $t$. $t=\frac{1}{2}\sqrt{d}$ leads to  $\frac{1}{2}\sqrt{d}\le \|x\|\le \frac{3}{2}\sqrt{d}$ with probability at least $1-2\e^{-cd}$, which is a high probability event.
\end{proof}
\begin{lemma}\label{lem:inner_product_whp}
    If $x\sim U(S^{d-1})$ and $\|\beta\|=1$, then $ |\langle x,\beta\rangle |\lesssim 1/\sqrt d$ with high probability.
\end{lemma}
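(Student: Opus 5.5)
The plan is to use rotation invariance of the uniform distribution to reduce the statement to a single coordinate, and then apply a standard concentration bound for that coordinate. Since $U(S^{d-1})$ is invariant under orthogonal transformations, we may choose an orthogonal $Q$ with $Q\beta = e_1$. Then $\langle x,\beta\rangle = \langle Qx, e_1\rangle$ has the same law as $x_1$, the first coordinate of a uniform point on the sphere, so it suffices to show $|x_1|\lesssim \sqrt{\iota/d}$ with high probability. As elsewhere in the appendix, the symbol $\lesssim$ absorbs polylogarithmic factors.

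To handle $x_1$, represent $x = g/\|g\|$ with $g\sim N(0,I_d)$, so that $x_1 = g_1/\|g\|$. For the numerator, the Gaussian tail bound gives $\Pr(|g_1|>t)\le 2\e^{-t^2/2}$. Taking $t=\sqrt{2\iota}$ yields $|g_1|\le\sqrt{2\iota}$ except on an event of probability at most $2\e^{-\iota}$, which counts as high probability under the definition above. For the denominator, Lemma~\ref{lem:norm_gaus} gives $\|g\|\ge \tfrac12\sqrt d$ except with probability $2\e^{-cd}$. This is also a high-probability event whenever $d\gtrsim \iota$, and the regime $d\ge\tilde\Omega(\cdot)$ used throughout the paper guarantees that.

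A union bound over the two events then gives $|\langle x,\beta\rangle|\le 2\sqrt{2\iota}/\sqrt d = \tilde O(d^{-1/2})$ with high probability, which is the claim.

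The only step needing care is the bookkeeping of $\iota$. The bound holds up to the $\sqrt{\iota}$ factor, which is polylogarithmic. This is consistent with how $\lesssim$ is used throughout, for instance Theorem~\ref{th:main_2_bis} speaks of order $\tilde\Theta(d^{-1/2})$.

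If one wants to avoid the Gaussian representation, an alternative is to use the explicit density of $x_1$, which is proportional to $(1-s^2)^{(d-3)/2}$. This density is bounded by $\e^{-(d-3)s^2/2}$ times a normalizing constant of order $\sqrt d$, and integrating the tail from $s=\sqrt{C\iota/d}$ gives the same conclusion directly.
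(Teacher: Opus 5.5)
Your proof is correct, but it is not the paper's route. The paper's proof is a one-line citation of Corollary 46 of \citet{D2022neural}. You instead derive the bound directly: rotation invariance, the representation $x=g/\|g\|$, a Gaussian tail bound for $g_1$, and the paper's own Lemma~\ref{lem:norm_gaus} for $\|g\|$.

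Your route is self-contained and makes explicit what the statement actually means, namely $|\langle x,\beta\rangle|\lesssim\sqrt{\iota/d}$. The $\sqrt{\iota}$ cannot be dropped: for a fixed constant $C$, $\Pr(|x_1|>C/\sqrt d)$ tends to the positive constant $\Pr(|Z|>C)$ with $Z\sim N(0,1)$, which is not a high-probability failure rate in the paper's sense. This reading matches how the paper later invokes the same corollary, as $\|B^\top\tilde x\|\lesssim\sqrt{r\iota/d}$.

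The citation's advantage is generality: it covers projection onto an $r$-dimensional subspace, which the paper needs later. Your argument extends to that case directly by replacing $g_1$ with $B^\top g\sim N(0,I_r)$, whose norm is $\lesssim\sqrt r+\sqrt\iota$ with high probability.

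One small tightening: you do not need the regime $d\gtrsim\iota$ for the denominator event. When $d\lesssim\iota$ the claim is trivial, since $|\langle x,\beta\rangle|\le 1\lesssim\sqrt{\iota/d}$, so a case split makes your proof unconditional. Your alternative via the explicit density of $x_1$ avoids the issue altogether, because it never needs a lower bound on $\|g\|$.
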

\begin{proof}
    This follows from Corollary 46 of \citet{D2022neural}.
\end{proof}
We also denote for a matrix $K$ the span of its column as $\mathrm{col}(K)$.
\subsection{Tensor Computation}
Our proofs for single-index and multi-index models involve tensor computations. Therefore, we present a brief clarification of symbols and notations used in this paper. Those are usual notations also used in other works~\citep{D2022neural,D2023smoothing, O2024pretrained}.
\begin{definition}
    A $k$-tensor is defined as the generalization of matrices with multiple indices. Let $A\in(\mathbb{R}^d)^{\otimes k}$ and $B\in(\mathbb{R}^d)^{\otimes l}$ be respectively a $k$-tensor and an $l$-tensor. The $(i_1,\ldots,i_k)$-th entry of $A$ is denoted by $A_{i_1,\ldots,i_k}$ with $i_1,\ldots,i_k\in [d]$. Moreover, when $k\ge l$, the tensor action $A(B)$ is defined as 
\[
A(B)_{i_1,\ldots,i_{k-l}} \vcentcolon= \sum_{j_1,\ldots,j_l}A_{i_1,\ldots,i_{k-l}, j_1,\ldots,j_l}B_{j_1,\ldots,j_l} 
\]
and is thus a $k-l$-tensor. We also remind the following usual definitions. 

\end{definition}
\begin{definition}
    For a vector $v\in \mathbb R^d$, $v^{\otimes k}$ is a $k$-tensor with $(i_1,\ldots,i_k)$-th entry $v_{i_1}\cdots v_{i_k}$. 
\end{definition}
\begin{definition}
    For differentiable function $f:\mathbb R^d\to\mathbb R$, $\nabla^k f(x)$ is a $k$-tensor with $(i_1,\ldots,i_k)$-th entry 
    \[
    \frac{\partial }{\partial x_{i_1}}\cdots  \frac{\partial }{\partial x_{i_k}}f(x).
    \]
\end{definition}
\begin{definition}
    For a matrix $M\in \mathbb R^{d\times r}$, we define the action $M^{\otimes k}$ on a $k$-tensor $A\in(\mathbb{R}^r)^{\otimes k}$ as $M^{\otimes k} A\in (\mathbb{R}^d)^{\otimes k}$ with $(i_1,\ldots,i_k)$-th entry 
    \[
    \sum_{j_1,\ldots,j_k\in[r]} M_{i_1,j_1}\cdots M_{i_k,j_k}A_{j_1,\ldots,j_k}.
    \]
\end{definition}
\begin{lemma}
    For a sufficiently smooth function $\sigma^\ast:\mathbb R^r\to \mathbb R$ and $B\in\mathbb R^{d\times r}$, $\nabla_x^k \sigma^\ast (B^\top x)=B^{\otimes k}\nabla_z^k \sigma^\ast (B^\top x)$. 
\end{lemma}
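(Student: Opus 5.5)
The statement is the chain rule for a composition with a linear map, so the plan is induction on $k$ in coordinates. Write $z = B^\top x$, so $z_j = \sum_{i} B_{i,j} x_i$ and $\partial z_j/\partial x_i = B_{i,j}$. The case $k=0$ is trivial, since $B^{\otimes 0}$ acts as the identity on scalars. For $k=1$, the ordinary chain rule gives
\[
\frac{\partial}{\partial x_{i}}\sigma^\ast(B^\top x) = \sum_{j\in[r]} B_{i,j}\,(\partial_{z_j}\sigma^\ast)(B^\top x).
\]
This is exactly $(B^{\otimes 1}\nabla_z\sigma^\ast(B^\top x))_i$ by the definition of the action of $M^{\otimes k}$ on tensors.

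For the inductive step, assume that for every index tuple $(i_1,\ldots,i_k)$,
\[
\partial_{x_{i_1}}\cdots\partial_{x_{i_k}}\sigma^\ast(B^\top x)=\sum_{j_1,\ldots,j_k} B_{i_1,j_1}\cdots B_{i_k,j_k}\,(\partial_{z_{j_1}}\cdots\partial_{z_{j_k}}\sigma^\ast)(B^\top x).
\]
Apply $\partial_{x_{i_{k+1}}}$ to both sides. The coefficients $B_{i_\ell,j_\ell}$ do not depend on $x$, so the derivative passes through the finite sum. Each term is then a function of the form $g(B^\top x)$ with $g=\partial_{z_{j_1}}\cdots\partial_{z_{j_k}}\sigma^\ast$, and the $k=1$ case applied to $g$ gives $\sum_{j_{k+1}} B_{i_{k+1},j_{k+1}}(\partial_{z_{j_{k+1}}}g)(B^\top x)$. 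Collecting terms yields the $(k+1)$-fold formula, which is precisely the $(i_1,\ldots,i_{k+1})$ entry of $B^{\otimes (k+1)}\nabla_z^{k+1}\sigma^\ast(B^\top x)$.

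The only care point is the smoothness hypothesis. It must guarantee that each intermediate $g$ is $C^1$, so that the chain rule applies at every step; $C^k$ suffices. In this paper $\sigma^\ast$ is a polynomial, so the hypothesis holds for all $k$. Mixed partials commute under the same hypothesis, so the index ordering convention in $\nabla^k$ does not matter. No step is a genuine obstacle; the main effort is keeping the index bookkeeping consistent with the paper's definition of the action $M^{\otimes k}A$.
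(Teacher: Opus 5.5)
Your proposal is correct and is essentially the paper's argument. The paper writes the chain rule as the operator identity $\partial_{x_i}=\sum_{j} B_{ij}\,\partial_{z_j}$ and composes it $k$ times to obtain $\sum_{j_1,\ldots,j_k}\bigl(\prod_{l} B_{i_l j_l}\bigr)\partial_{z_{j_1}}\cdots\partial_{z_{j_k}}\sigma^\ast$; your induction on $k$ is the same computation spelled out step by step.
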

\begin{proof}
    Let $x\in\mathbb{R}^d$, and set $z=B^\top x\in\mathbb{R}^r$. Then,
\[
z_j=\sum_{i=1}^d B_{ij}x_i.
\]
By the chain rule,
\[
\frac{\partial}{\partial x_i} = \sum_{j=1}^r \frac{\partial z_j}{\partial x_i}\frac{\partial}{\partial z_j}
= \sum_{j=1}^r B_{ij}\frac{\partial}{\partial z_j}.
\]
As a result, 
\[
\frac{\partial^k \sigma^\ast (B^\top x)}{\partial x_{i_1}\cdots \partial x_{i_k}}
=
\left(\sum_{j_1=1}^r B_{i_1 j_1}\frac{\partial}{\partial z_{j_1}}\right)\cdots
\left(\sum_{j_k=1}^r B_{i_k j_k}\frac{\partial}{\partial z_{j_k}}\right)\sigma^\ast (z)
=
\sum_{j_1,\dots,j_k=1}^r
\left(\prod_{l=1}^k B_{i_l j_l}\right)
\frac{\partial^k \sigma^\ast (z)}{\partial z_{j_1}\cdots \partial z_{j_k}}.
\]
\end{proof}
\begin{lemma}\label{lem:invariance_frobenius}
    For an orthogonal matrix $B\in \mathbb{R}^{d\times r}$ with $B^\top B = I_r$ and a $T\in (\mathbb R ^{r})^{\otimes k}$, $\|B^{\otimes k} T\|_F = \|T\|_F$. 
\end{lemma}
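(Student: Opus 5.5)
The plan is to reduce the Frobenius norm to a sum of squared entries and use the orthonormality of the columns of $B$ index by index. By definition,
\[
\|B^{\otimes k}T\|_F^2=\sum_{i_1,\ldots,i_k\in[d]}\Bigl(\sum_{j_1,\ldots,j_k\in[r]}B_{i_1j_1}\cdots B_{i_kj_k}T_{j_1,\ldots,j_k}\Bigr)^2 .
\]
We expand the square as a double sum over multi-indices $(j_1,\ldots,j_k)$ and $(j_1',\ldots,j_k')$. We then exchange the order of summation so that the sum over $(i_1,\ldots,i_k)$ is innermost. All sums are finite, so this exchange is immediate.

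The inner sum factorizes over the $k$ positions:
\[
\sum_{i_1,\ldots,i_k}\prod_{l=1}^k B_{i_lj_l}B_{i_lj_l'}=\prod_{l=1}^k\Bigl(\sum_{i}B_{ij_l}B_{ij_l'}\Bigr)=\prod_{l=1}^k (B^\top B)_{j_lj_l'}=\prod_{l=1}^k\delta_{j_lj_l'},
\]
where the last equality uses $B^\top B=I_r$. Substituting this back collapses the double sum to $\sum_{j_1,\ldots,j_k}T_{j_1,\ldots,j_k}^2=\|T\|_F^2$. Taking square roots gives the claim.

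An equivalent and more conceptual route is to observe that $B^{\otimes k}$ acts as the Kronecker product $B\otimes\cdots\otimes B$ on $\mathrm{vec}(T)\in\mathbb R^{r^k}$. Its Gram matrix is $(B^\top B)^{\otimes k}=I_{r^k}$, so it is an isometry on the vectorized tensor. I would write out the index computation above, since it relies only on the definitions already given.

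No step is a real obstacle. The only point needing care is the factorization of the inner sum over $(i_1,\ldots,i_k)$ into a product over $l$, which is valid because each $i_l$ appears only in the $l$-th factor. The hypothesis used is $B^\top B=I_r$, not $BB^\top=I_d$, which would be false for $r<d$.
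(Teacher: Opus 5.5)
Your proof is correct and is essentially the paper's argument. The paper writes $\|B^{\otimes k}T\|_F^2$ as the full contraction of $B^{\otimes k}T$ with itself, moves the factor across as $(B^\top)^{\otimes k}$, and uses $(B^\top B)^{\otimes k}=I$; your index computation, with its factorization over $l$, is the coordinate version of exactly this, and your Kronecker/Gram-matrix remark is the paper's argument verbatim.
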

\begin{proof}
    By definition of the Frobenius norm, 
    \[
    \|B^{\otimes k} T\|_F^2 = B^{\otimes k} T(B^{\otimes k} T) = T ((B^\top)^{\otimes k}  B^{\otimes k} T) = T ((B^\top B)^{\otimes k} T) = T (T ) = \|T\|_F.
    \]
\end{proof}
We also define the symmetrization as follows.
\begin{definition}
    For a $k$-tensor $T$, its symmetrization $\mathrm{Sym}(T)$ is defined as
    \[
    (\mathrm{Sym}(T))_{i_1,\ldots,i_k} \vcentcolon= \frac{1}{k!}\sum_{\pi\in \mathcal P_k} T_{i_{\pi(1)},\ldots,i_{\pi(k)}},
    \]
    where $\pi$ is a permutation and $\mathcal P_k$ is the symmetric group on $1,\ldots,k$.
\end{definition}
\begin{lemma}
    For any tensor $T$, $\|\mathrm{Sym}(T)\|_F\le \|T\|_F$.
\end{lemma}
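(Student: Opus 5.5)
The claim is that symmetrization is a contraction in Frobenius norm. I will prove it by writing $\mathrm{Sym}$ as an average of linear maps that each preserve the Frobenius norm, and then applying the triangle inequality.

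For each permutation $\pi\in\mathcal P_k$, define the index-permutation operator $P_\pi$ on $k$-tensors by
\[
(P_\pi T)_{i_1,\ldots,i_k} = T_{i_{\pi(1)},\ldots,i_{\pi(k)}}.
\]
By the definition of symmetrization, $\mathrm{Sym}(T)=\frac{1}{k!}\sum_{\pi\in\mathcal P_k}P_\pi T$.

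The first key step is that each $P_\pi$ is an isometry for the Frobenius norm. The map $(i_1,\ldots,i_k)\mapsto(i_{\pi(1)},\ldots,i_{\pi(k)})$ is a bijection of $[d]^k$ onto itself. Hence
\[
\|P_\pi T\|_F^2=\sum_{i_1,\ldots,i_k}T_{i_{\pi(1)},\ldots,i_{\pi(k)}}^2=\sum_{j_1,\ldots,j_k}T_{j_1,\ldots,j_k}^2=\|T\|_F^2,
\]
since the middle sum is just a reindexing of the sum of squared entries.

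The second step is the triangle inequality. The Frobenius norm is the Euclidean norm on $(\mathbb R^d)^{\otimes k}\cong\mathbb R^{d^k}$, so
\[
\|\mathrm{Sym}(T)\|_F\le\frac{1}{k!}\sum_{\pi\in\mathcal P_k}\|P_\pi T\|_F=\frac{1}{k!}\cdot k!\,\|T\|_F=\|T\|_F.
\]

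There is no real obstacle here. The only point requiring care is to justify the reindexing step properly, namely that a permutation of tensor slots induces a bijection on the multi-index set.
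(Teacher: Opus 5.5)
Your proof is correct and essentially the same as the paper's: the paper gives no independent argument but defers to Lemma 1 of Damian et al.\ (2023), which is proved exactly as you do it. That is, you write $\mathrm{Sym}(T)$ as the average of the index-permutation operators $P_\pi$, note that each $P_\pi$ preserves the Frobenius norm, and apply the triangle inequality.
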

\begin{proof}
    Please refer to Lemma 1 of \citet{D2023smoothing}.
\end{proof}
\begin{definition}
    A $k$-tensor $T$ is symmetric if for any permutation $\pi\in \mathcal P_k$,
    \[
    T_{i_1,\ldots,i_k} = T_{i_{\pi(1)},\ldots,i_{\pi(k)}}.
    \]
\end{definition}
\begin{lemma}
    $v^{\otimes k}$ is a symmetric $k$-tensor. If $f$ is sufficiently smooth, then $\nabla^k f(x)$ is also a symmetric $k$-tensor. 
\end{lemma}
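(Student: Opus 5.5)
The first claim is that $v^{\otimes k}$ is symmetric. I will check this directly from the definition, entrywise. By definition $(v^{\otimes k})_{i_1,\ldots,i_k}=v_{i_1}\cdots v_{i_k}$. For any permutation $\pi\in\mathcal P_k$, the entry at $(i_{\pi(1)},\ldots,i_{\pi(k)})$ is $v_{i_{\pi(1)}}\cdots v_{i_{\pi(k)}}$. This is the same product of the same real numbers, only reordered. Since multiplication in $\mathbb R$ is commutative, the two entries agree, which is exactly the defining condition of a symmetric $k$-tensor.

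The second claim is that $\nabla^k f(x)$ is symmetric, where "sufficiently smooth" is read as $f\in C^k$ in a neighbourhood of $x$. By definition $(\nabla^k f(x))_{i_1,\ldots,i_k}=\partial_{i_1}\cdots\partial_{i_k}f(x)$, so the task is to show that mixed partial derivatives of order $k$ do not depend on the order of differentiation. Since $\mathcal P_k$ is generated by adjacent transpositions $(j\ j{+}1)$, it suffices to show invariance under swapping two neighbouring indices. For such a swap, set $g=\partial_{i_{j+2}}\cdots\partial_{i_k}f$. Then $g\in C^{j+1}$, so in particular $g\in C^2$, and Schwarz's (Clairaut's) theorem gives $\partial_{i_j}\partial_{i_{j+1}}g=\partial_{i_{j+1}}\partial_{i_j}g$. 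Applying the remaining outer derivatives $\partial_{i_1}\cdots\partial_{i_{j-1}}$ to both sides keeps the equality, since these derivatives exist by the $C^k$ assumption. Composing adjacent swaps then yields invariance under every $\pi\in\mathcal P_k$.

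The only real content is the appeal to Schwarz's theorem, and it relies on the intermediate derivative $g$ being $C^2$. This is the one point that needs care, and it is guaranteed by the $C^k$ regularity assumption. In the setting where the lemma is used, $f^\ast$ is a polynomial, so $f$ is $C^\infty$ and the hypothesis holds trivially.
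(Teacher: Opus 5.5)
Your proof is correct. The first part is just commutativity of real multiplication. The second part is the standard reduction to adjacent transpositions combined with Schwarz's theorem. Reading ``sufficiently smooth'' as $C^k$ on a neighbourhood of $x$ is the right choice: it makes the identity $\partial_{i_j}\partial_{i_{j+1}}g=\partial_{i_{j+1}}\partial_{i_j}g$ hold on an open set, which is what lets you differentiate it further.

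The paper states this lemma without any proof, treating it as a standard fact, so there is no competing argument to compare against. Your argument supplies exactly the justification the paper implicitly relies on. The hypothesis is amply met wherever the lemma is used, since there the tensors are derivatives (or expectations of derivatives) of the polynomials $f^\ast$, $\hat f^\ast$, $\sigma^\ast$.
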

\begin{lemma}\label{lem:chain_b}
    If $C$ is a symmetric $k$-tensor and $B$ an $l$-tensor so that $l\le k$, then $C(B) = C(\mathrm{Sym}(B))$.
\end{lemma}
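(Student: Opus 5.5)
The plan is to expand both sides in coordinates and use the symmetry of $C$ to absorb the averaging over permutations. By definition of the tensor action,
\[
C(B)_{i_1,\ldots,i_{k-l}} = \sum_{j_1,\ldots,j_l} C_{i_1,\ldots,i_{k-l},j_1,\ldots,j_l} B_{j_1,\ldots,j_l},
\]
and
\[
C(\mathrm{Sym}(B))_{i_1,\ldots,i_{k-l}} = \frac{1}{l!}\sum_{\pi\in\mathcal P_l}\sum_{j_1,\ldots,j_l} C_{i_1,\ldots,i_{k-l},j_1,\ldots,j_l} B_{j_{\pi(1)},\ldots,j_{\pi(l)}}.
\]
So it suffices to show that each summand over $\pi$, namely the inner sum $\sum_{j} C_{i,j_1,\ldots,j_l}B_{j_{\pi(1)},\ldots,j_{\pi(l)}}$, already equals $C(B)_{i_1,\ldots,i_{k-l}}$. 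The identity then follows by averaging over the $l!$ permutations.

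For a fixed $\pi$, I would reindex the sum over the multi-index $(j_1,\ldots,j_l)\in[d]^l$ by setting $j'_s = j_{\pi(s)}$. Since $\pi$ is a bijection, this is a bijection of $[d]^l$ onto itself. In the new indices, $j_t = j'_{\pi^{-1}(t)}$, so the summand becomes $C_{i_1,\ldots,i_{k-l},j'_{\pi^{-1}(1)},\ldots,j'_{\pi^{-1}(l)}}B_{j'_1,\ldots,j'_l}$.

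Next I apply the symmetry of $C$ with the permutation of $[k]$ that fixes the first $k-l$ slots and acts by $\pi$ on the last $l$ slots. This gives $C_{i,j'_{\pi^{-1}(1)},\ldots,j'_{\pi^{-1}(l)}} = C_{i,j'_1,\ldots,j'_l}$, so the inner sum equals exactly $C(B)_{i_1,\ldots,i_{k-l}}$.

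The only delicate point is bookkeeping: the permutation acting on the trailing $l$ indices has to be extended to an element of $\mathcal P_k$, which is what lets the symmetry hypothesis on $C$ be applied. Everything else is finite reindexing.
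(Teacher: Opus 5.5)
Your proposal is correct and follows essentially the same route as the paper: you expand $C(\mathrm{Sym}(B))$ in coordinates, reindex each permutation's sum via $j'_s = j_{\pi(s)}$, and use the symmetry of $C$ in its last $l$ slots to recover $C(B)$ before averaging over the $l!$ permutations. Your explicit remark that the permutation of the trailing indices must be extended to an element of $\mathcal P_k$ fixing the first $k-l$ slots is a small clarification the paper leaves implicit.
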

\begin{proof}
    By definition,
    \begin{align*}
        C(\mathrm{Sym}(B))_{i_1,\ldots, i_{k-l}} = & \sum_{j_1,\ldots,j_l}C_{i_1,\ldots,i_{k-l}, j_1,\ldots,j_l}\mathrm{Sym}(B)_{j_1,\ldots,j_l} \\
         =  & \sum_{j_1,\ldots,j_l}C_{i_1,\ldots,i_{k-l}, j_1,\ldots,j_l}\frac{1}{l!}\sum_{\pi\in \mathcal P_l} B_{j_{\pi(1)},\ldots,j_{\pi(l)}} \\
         =  &\frac{1}{l!}\sum_{\pi\in \mathcal P_l} \sum_{j_1,\ldots,j_l}C_{i_1,\ldots,i_{k-l}, j_1,\ldots,j_l} B_{j_{\pi(1)},\ldots,j_{\pi(l)}}.
    \end{align*}
   Since the permutation $\pi$ is a bijective operation, the change of variable $j'_1 = j_{\pi(1)}, \ldots, j'_l = j_{\pi(l)}$ leads to 
       \begin{align*}
        C(\mathrm{Sym}(B))_{i_1,\ldots, i_{k-l}} = &\frac{1}{l!}\sum_{\pi\in \mathcal P_l} \sum_{j_1,\ldots,j_l}C_{i_1,\ldots,i_{k-l}, j_1,\ldots,j_l} B_{j_{\pi(1)},\ldots,j_{\pi(l)}}\\
         =  &\frac{1}{l!}\sum_{\pi\in \mathcal P_l} \sum_{j'_1,\ldots,j'_l}C_{i_1,\ldots,i_{k-l}, j'_{\pi^{-1}(1)},\ldots,j'_{\pi^{-1}(l)}} B_{j'_{1},\ldots,j'_{l}}\\
         =  &\frac{1}{l!}\sum_{\pi\in \mathcal P_l} \sum_{j'_1,\ldots,j'_l}C_{i_1,\ldots,i_{k-l}, j'_{1},\ldots,j'_{l}} B_{j'_{1},\ldots,j'_{l}}\\
         =  &\frac{1}{l!}\sum_{\pi\in \mathcal P_l} C(B)_{i_1,\ldots,i_{k-l}}\\
         =  &\frac{1}{l!}l! C(B)_{i_1,\ldots,i_{k-l}}\\
         = & C(B)_{i_1,\ldots,i_{k-l}},
    \end{align*}
    where we used the symmetry of $C$ in the third equality.
\end{proof}
\subsection{Hermite Polynomials}
Based on the above notation, we introduce Hermite Polynomials and Hermite expansion. These will be briefly used to explain a few properties.
\begin{definition}
    The $k$-th Hermite polynomial $\mathrm{He}_k$ is a $k$-tensor in $(\mathbb{R}^d)^{\otimes k}$ defined as 
    \[
    \mathrm{He}_k(x)\vcentcolon = (-1)^k\frac{\nabla^k \mu (x)}{\mu (x)},
    \]
    where $\mu(x)=\frac{1}{(2\pi)^{d/2}}\e^{-\|x\|^2/2}$. 
\end{definition}

\begin{definition}
   The Hermite expansion of a function $f:\mathbb R^d\to\mathbb R$ with $\E_{x\sim N(0,I_d)}[f(x)^2]<\infty$ is defined as 
\[
f = \sum_{k\ge 0} \frac{1}{k!}\langle \mathrm {He}_k(x), T_k\rangle,
\]
where $T_k$ is a symmetric $k$-tensor such that $T_k=\E_{x\sim N(0,I_d)}[\nabla _x^kf(x)^2]$.
 
\end{definition}
For example, the Hermite expansion of $\sigma(x) = \max \{0,x\}$ is 
\begin{align*}
    \sigma(x) &= \frac{1}{\sqrt{2\pi}}+\frac{1}{2}x +\frac{1}{\sqrt{2\pi}}\sum_{k\ge 1}\frac{(-1)^{k-1}}{k!2^k(2k-1)}\mathrm{He}_{2k}(x)\\
    &=\vcentcolon \sum_{k\ge 0}\frac{c_k}{k!}\mathrm{He}_k(x),
\end{align*}
where $0=c_3=c_5=\ldots$. For its derivative, we obtain
\begin{align*}
    \sigma'(x) &= \frac{1}{2} +\frac{1}{\sqrt{2\pi}}\sum_{k\ge 0}\frac{(-1)^{k}}{k!2^k(2k+1)}\mathrm{He}_{2k+1}(x)\\
    &= \sum_{k\ge 0}\frac{c_{k+1}}{k!}\mathrm{He}_k(x).
\end{align*}

\section{Proof of Main Theorems: Single Index Models}\label{sec:apb}

In this appendix, we prove our result for the single index setting where we define $f^\ast(x)$ as $\sigma^\ast(\langle \beta,x\rangle)$, notably showing that $M_1=1$ can be enough. Please refer to Appendix~\ref{sec:apd} for the general multi-index case. Some statements will be proved in the multi-index setting so that they can be used later in Section~\ref{sec:apd}, while others correspond to the concrete case $r=1$ of theorems, lemmas, or corollaries presented in that later section. Their proofs are nevertheless included here, as single-index models are extensively studied as an independent topic within the broader framework of machine learning~\citep{D2020precise,G2020generalisation,B2022high, O2024pretrained, N2025nonlinear}.
\par The proof is constructed following each step of Algorithm~\ref{al:2} to keep as much clarity as possible. That is, we analyze Algorithm~\ref{al:2} phase by phase, using the result of each phase to study the next. Our goal is to 1) explicitly formulate the result of each distillation process (at $t=1$ and $t=2$) and 2) evaluate the required size of created synthetic datasets $\mathcal D_1^S$ and $\mathcal D_2^S$ so that retraining $f_\theta$ with the former in the first phase and with the latter in the second phase leads to a parameter $\tilde \theta^\ast$ whose generalization performance preserves that of the baseline trained on the large dataset $\mathcal D^{Tr}$ across both phases. As ReLU is invariant to scaling, we can consider without loss of generality that all the weights are normalized $\|w_i\|=1$ and $w_i\sim U(S^{d-1})$. We use abbreviations like $\mathcal{L}^{Tr}(\cdot)$ and $\mathcal{L}^{S}(\cdot)$ when the data we use to evaluate the loss for a given parameter is clear.
\subsection{$t=1$ Teacher Training}
The corresponding equation in Algorithm~\ref{al:2} is 
\[
   I_j^{Tr} =\{ \theta_{j}^{(1)},\ G_{j}^{(1)}\}= \mathcal{A}^{Tr}_1(\theta_{j}^{(0)},\mathcal{D}^{Tr},\xi_1^{Tr},\eta^{{Tr}}_1,\lambda^{{Tr}}_1).
\]
From Assumption~\ref{as:alg}, this is equivalent to 
\[
    W_{j}^{(1)} = W_{j}^{(0)} - \eta^{{Tr}}_1\left\{\nabla_W \mathcal{L}^{Tr}(\theta_{j}^{(1)})+\lambda^{{Tr}}_1  W_{j}^{(1)} \right\}, 
\]
where $\eta^{{Tr}}_1 = \tilde \Theta(\sqrt{d})$ and $\lambda^{{Tr}}_1 =\left(\eta^{{Tr}}_1\right)^{-1}$. The teacher gradient for the $j$-th initialization of weight $w_i$ can be defined as  
\[
g_{i,j}^{Tr} \vcentcolon = (G_{j}^{(1)})_i =  \nabla_{w_i} \mathcal{L}^{Tr}(\theta_{j}^{(0)}).
\]
$\nabla_{w_i} \mathcal{L}^{Tr}(\theta^{(0)}_j)$ can be concretely computed with the following lemma. We denote $w_{i,j}^{(0)}$ as the $j$-th realization of the initialization of the $i$-th weight $w_i$.
\begin{definition}\label{def:hat_f}
    We define $\hat f^\ast (x)$ as $f^\ast(x)-\alpha -\langle \gamma, x\rangle $.
\end{definition}
\begin{lemma}\label{lem:grad_w}
Given a training data $\mathcal D = \{(x_n,y_n)\}_{n=1}^N$, the gradient of the training loss with respect to the weight $w_i$ is 
\begin{align*}
    \nabla_{w_i} \mathcal{L}(\theta,\mathcal D) = & \frac{1}{N}\sum_n\left(f_{\theta}(x_n)-y_n\right)\nabla_{w_i}f_{\theta}(x_n)\\
     = & \frac{1}{N}\sum_n\left(f_{\theta}(x_n)-y_n\right)a_ix_n\sigma'(\langle w_i,x_n\rangle).
\end{align*}
\end{lemma}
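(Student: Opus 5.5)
This is a direct chain-rule computation with no probabilistic content, so I will simply differentiate the empirical loss term by term. Recall that $\mathcal{L}(\theta,\mathcal D)=\frac{1}{2N}\sum_{n=1}^N (f_\theta(x_n)-y_n)^2$ and that the labels $y_n$ do not depend on $\theta$. Since the gradient is linear and each summand is the square of a differentiable function of $\theta$ wherever $f_\theta$ is differentiable, the chain rule gives
\[
\nabla_{w_i}\mathcal{L}(\theta,\mathcal D)=\frac{1}{N}\sum_{n=1}^N \bigl(f_\theta(x_n)-y_n\bigr)\nabla_{w_i} f_\theta(x_n).
\]
The factor $\frac{1}{2}$ cancels against the exponent $2$. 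This is the first line of the statement.

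For the second line, I will use $f_\theta(x)=\sum_{k=1}^L a_k\sigma(\langle w_k,x\rangle+b_k)$. Only the $k=i$ term depends on $w_i$, so the other summands contribute zero. Applying the chain rule to $w_i\mapsto a_i\sigma(\langle w_i,x_n\rangle+b_i)$, using $\nabla_{w_i}\langle w_i,x_n\rangle=x_n$, gives $a_i x_n\sigma'(\langle w_i,x_n\rangle+b_i)$. At the $t=1$ teacher step, which is where this lemma is invoked, the biases are $b_i=0$ by Assumption~\ref{as:init} and Assumption~\ref{as:al_init} ($\theta_j^{(0)}$ has zero bias). So the argument reduces to $\langle w_i,x_n\rangle$, matching the displayed formula. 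For general $b$ I would write the same expression with $b_i$ added inside $\sigma'$.

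The only point needing care, and it is a minor one, is that ReLU is not differentiable at $0$. I will adopt the standard convention $\sigma'(0)=0$, or any fixed subgradient value in $[0,1]$. Since $x_n\sim N(0,I_d)$ and $w_i\neq 0$ almost surely, the event $\langle w_i,x_n\rangle=0$ has probability zero. The formula therefore holds almost surely independently of that convention, and it agrees with the gradient of the surrogate $h$ in the student training of Assumption~\ref{as:surrogate} after replacing $\sigma'$ by $h'$.
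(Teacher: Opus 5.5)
Your proof is correct and matches the paper's, which simply states that the formula follows directly from the definitions of the MSE loss and the two-layer network. Your remark that the displayed formula implicitly assumes $b_i=0$ is a correct precision the paper leaves unstated: this holds at the $t=1$ initialization $\theta_j^{(0)}$ where the lemma is used, and in general the factor should read $\sigma'(\langle w_i,x_n\rangle+b_i)$.
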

\begin{proof}
    This follows directly from the definition of the loss and neural network.
\end{proof}
\begin{corollary}\label{cor:grad_w}
For our training data, due to preprocessing and noise,
\begin{align*}
    \nabla_{w_i} \mathcal{L}^{Tr}(\theta) = & \frac{1}{N}\sum_n\left(f_{\theta}(x_n)-\hat f^\ast(x_n) -\epsilon_n\right)\nabla_{w_i}f_{\theta}(x_n)\\
     = & \frac{1}{N}\sum_n\left(f_{\theta}(x_n)-\hat f^\ast(x_n) - \epsilon_n\right)a_ix_n\sigma'(\langle w_i,x_n\rangle).
\end{align*}
\end{corollary}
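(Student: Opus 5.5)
The plan is to specialize Lemma~\ref{lem:grad_w} to the preprocessed training set $\mathcal D^{Tr}$ and rewrite the labels in terms of $\hat f^\ast$ from Definition~\ref{def:hat_f}. The argument is pure substitution; no probabilistic estimates are needed.

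First, I will write out the labels actually used in the loss. By Assumption~\ref{as:target}, the raw labels are $y_n = f^\ast(x_n)+\epsilon_n$. The preprocessing step of Algorithm~\ref{al:2} then replaces them by
\[
y_n \leftarrow y_n-\alpha-\langle\gamma,x_n\rangle,
\]
where $\alpha=\frac1N\sum_n y_n$ and $\gamma=\frac1N\sum_n y_nx_n$ are fixed scalars and vectors computed once from the data. They are therefore constants with respect to $\theta$. Hence the preprocessed label equals
\[
f^\ast(x_n)-\alpha-\langle\gamma,x_n\rangle+\epsilon_n=\hat f^\ast(x_n)+\epsilon_n .
\]

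Second, I will insert this label into the first line of Lemma~\ref{lem:grad_w}. This gives
\[
\nabla_{w_i}\mathcal L^{Tr}(\theta)=\frac1N\sum_n\bigl(f_\theta(x_n)-\hat f^\ast(x_n)-\epsilon_n\bigr)\nabla_{w_i}f_\theta(x_n).
\]
Next I will compute $\nabla_{w_i}f_\theta(x_n)=a_i x_n\sigma'(\langle w_i,x_n\rangle+b_i)$. Only the $i$-th neuron depends on $w_i$, and the chain rule applies wherever $\sigma$ is differentiable. Since the $t=1$ teacher training is evaluated at $\theta_j^{(0)}$, where $b=0$ by Assumptions~\ref{as:init} and~\ref{as:al_init}, the derivative reduces to $\sigma'(\langle w_i,x_n\rangle)$. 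This yields the second displayed line of the corollary.

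The only point needing care is the non-differentiability of ReLU at $0$. I will adopt the usual convention $\sigma'(0)\in\{0,1\}$ (say $\sigma'(0)=0$). I will also note that, since $x_n\sim N(0,I_d)$ and $w_i\neq0$, the event $\langle w_i,x_n\rangle=0$ has probability zero. The formula therefore holds almost surely regardless of the convention.
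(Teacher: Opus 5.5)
Your proposal is correct and is the same route as the paper, which gives no separate proof: substitute the preprocessed labels $y_n=\hat f^\ast(x_n)+\epsilon_n$ into Lemma~\ref{lem:grad_w} and compute $\nabla_{w_i}f_\theta$ by the chain rule. Your remark that the displayed form with $\sigma'(\langle w_i,x_n\rangle)$ (no $b_i$) relies on $b=0$ at the $t=1$ initializations is a useful precision, since the paper leaves this implicit even in Lemma~\ref{lem:grad_w}.
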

\subsection{$t=1$ Student Training}
The corresponding equation in Algorithm~\ref{al:2} is 
\[
I_j^{S}  = \{\tilde \theta_{j}^{(1)},\ \tilde G_{j}^{(1)}\}=  \mathcal{A}_1^S(\theta_{j}^{(0)},\mathcal{L}^S(\theta_{j}^{(0)}),\xi^S_1,\eta^S_1,\lambda^S_1).
\]
From Assumption~\ref{as:alg}, this is equivalent to 
\[
    \tilde W_{j}^{{(1)}} = W_{j}^{(0)} - \eta^{{S}}_1\left\{\nabla_W \mathcal{L}(\theta_{j}^{(0)},\mathcal D_1^S)+\lambda^{S}_1  W_{j}^{(0)} \right\}, 
\]
where $\lambda^{{S}}_1 =\left(\eta^{{S}}_1\right)^{-1}$. The teacher gradient for the $j$-th initialization of weight $w_i$ can be defined as  
\[
g_{i,j}^{S} \vcentcolon = (\tilde G_{j}^{{(1)}})_i =  \nabla_{w_i} \mathcal{L}^{S}(\theta_{j}^{(0)}).
\]

\subsection{$t=1$ Distillation}\label{subsec:distillation1_single_index}
\subsubsection{Problem Formulation and Result}
For the distillation, we consider one-step gradient matching. As mentioned in the main paper, it is difficult to consider performance matching since we do not have access to the final result of each training yet. Then,
\[
\mathcal{D}_1^S \leftarrow \mathcal{M}_1(\mathcal{D}_0^S, \mathcal{D}^{Tr}, \{I_j^{Tr},I_j^S\},\xi^D_1,\eta^D_1,\lambda^D_1)
\]
becomes

\begin{equation}
    \mathcal{D}_1^S = \mathcal{D}_0^S -  \eta^D_1\left(\frac{1}{J}\sum_j\nabla_{\mathcal{D}^S} \left(1-\frac{1}{L}\sum_{i=1}^{L}\langle g_{i,j}^{S},g_{i,j}^{{Tr}} \rangle\right) + \lambda^D_1\mathcal{D}_0^S \right)\label{eq:t1_x},
\end{equation}
where we set $\lambda^D_1 = (\eta^D_1)^{-1}$. 
\par When $\mathcal{D}_0 = \{(\tilde x^{(0)}, \tilde y^{(0)})\}$, we can show that  $\tilde x^{(1)}$ aligns to $\beta$, i.e., the principal subspace $S^\ast$ of the problem:
\begin{theorem}\label{ap_th:t1_distil}
Under Assumptions~\ref{as:target}, \ref{as:target_H}, \ref{as:init}, \ref{as:alg}, and~\ref{as:surrogate}, when $\mathcal{D}_0 = \{(\tilde x^{(0)}, \tilde y^{(0)})\}$, where $\tilde x^{(0)} \sim U(S^{d-1})$, the first step of distillation gives $\tilde x^{(1)}$ such that with high probability,
    \[
    \tilde x^{(1)} = -\eta_1^D\tilde y^{(0)}\left(c_d\langle\beta,\tilde x\rangle\beta+ \tilde O\left(d^{\frac{1}{2}}N^{-\frac{1}{2}}  + d^{-2}+ d^{\frac{1}{2}}{J^\ast}^{-\frac{1}{2}} \right)\right),\]
    where $c_d = \tilde \Theta\left(d^{-1}\right)$, $J^\ast = LJ/2$.
\end{theorem}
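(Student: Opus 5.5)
The plan is to compute the one-step GM update \eqref{eq:t1_x} in closed form. Each term is then replaced by its population counterpart, and the resulting expectation over the random first-layer weights is evaluated.

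\textbf{Step 1 (closed form of the update).} By the symmetric initialization (Assumption~\ref{as:init}), $f_{\theta_j^{(0)}}\equiv 0$. Lemma~\ref{lem:grad_w} and Corollary~\ref{cor:grad_w} then give the student gradient $g^S_{i,j}=-\tilde y^{(0)}a_i\tilde x\, h'(\langle w_i,\tilde x\rangle)$, with the surrogate $h$ of Assumption~\ref{as:surrogate}. They also give the teacher gradient $g^{Tr}_{i,j}=a_i v(w_i)$, where
\[
v(w)\vcentcolon=-\frac1N\sum_n(\hat f^\ast(x_n)+\epsilon_n)x_n\sigma'(\langle w,x_n\rangle),
\]
and $v$ does not depend on $\tilde x$. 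Since $a_i^2=1$ and $\lambda_1^D=(\eta_1^D)^{-1}$, the $\mathcal D_0^S$ terms cancel and
\[
\tilde x^{(1)}=-\frac{\eta_1^D\tilde y^{(0)}}{JL}\sum_{i,j}\Bigl(h'(\langle w_{i,j},\tilde x\rangle)\,v(w_{i,j})+h''(\langle w_{i,j},\tilde x\rangle)\langle v(w_{i,j}),\tilde x\rangle\, w_{i,j}\Bigr).
\]
Because $w_i=w_{L-i}$, only $J^\ast=LJ/2$ independent directions appear, which I take uniform on $S^{d-1}$ by scale invariance.

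\textbf{Step 2 (replace empirical quantities by population ones).}
\begin{itemize}
\item Write $v(w)=\bar v(w)+\Delta(w)$, where $\bar v(w)=-\E_x[\hat f^\ast(x)x\sigma'(\langle w,x\rangle)]$; the noise $\epsilon_n$ has mean zero and is absorbed into $\Delta$.
\item A uniform concentration bound over $w$ (as in \citet{D2022neural}, using Lemma~\ref{lem:norm_gaus} and a net argument) gives $\|\Delta(w)\|=\tilde O(\sqrt{d/N})$ with high probability. This produces the $d^{1/2}N^{-1/2}$ term, using $|h'|,|h''|=O(1)$ on $[-1,1]$.
\item For $\bar v$, invoke the population-gradient expansion of \citet{D2022neural} (Lemma~\ref{lem:damian_pop_grad}). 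After preprocessing, the constant and linear Hermite components of $\hat f^\ast$ vanish, so $\bar v(w)=-c_2 Hw+O(\|w\|^2\text{-order terms})$, with $H=\E[\nabla^2 f^\ast]$ and, for $r=1$, $H=\lambda\beta\beta^\top$. The higher-order remainder, after the averaging in Step 3, contributes the $d^{-2}$ term.
\end{itemize}

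\textbf{Step 3 (average over $w$).} Replace the empirical average over the $J^\ast$ directions by $\E_{w\sim U(S^{d-1})}$. Vector Bernstein/Hoeffding bounds control the deviation, with each summand of norm $\tilde O(1)$ in the relevant scaling, giving $\tilde O(\sqrt{d/J^\ast})$.

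For the expectation itself, Lemma~\ref{lem:inner_product_whp} gives $\langle w,\tilde x\rangle=\tilde O(d^{-1/2})$, so I Taylor expand $h'(\langle w,\tilde x\rangle)=h'(0)+h''(0)\langle w,\tilde x\rangle+O(d^{-1})$ and $h''(\langle w,\tilde x\rangle)=h''(0)+O(d^{-1/2})$. Then:
\begin{itemize}
\item $\E[Hw]=0$ kills the $h'(0)$ term.
\item $\E[\langle w,\tilde x\rangle Hw]=H\tilde x/d$.
\item $\E[\langle Hw,\tilde x\rangle w]=\E[ww^\top]H\tilde x=H\tilde x/d$.
\end{itemize}
Hence the leading term is $2c_2h''(0)H\tilde x/d=c_d\langle\beta,\tilde x\rangle\beta$ with $c_d=\tilde\Theta(d^{-1})$, which is nonzero because $h''(0)>0$ (Assumption~\ref{as:surrogate}). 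Odd-moment cancellations show the Taylor remainders are of lower order, folding into $d^{-2}$.

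\textbf{Main obstacle.} I expect the hardest part to be the bookkeeping of error terms at the $d^{-2}$ scale. The signal is only $\Theta(d^{-1})\cdot|\langle\beta,\tilde x\rangle|\sim d^{-3/2}$, so I must verify three things:
\begin{itemize}
\item the higher Hermite components of $\bar v(w)$ (degree $\ge 3$ in $w$) average to $O(d^{-2})$ against $h'$ and $h''$;
\item the Taylor remainders of $h$ do the same;
\item uniform concentration over $w$ holds simultaneously for all $J^\ast$ directions.
\end{itemize}
I would handle the first two by expanding in spherical moments and using parity, since odd moments of $w$ vanish. For the third I would use a union bound over the $\mathrm{poly}(N,L,d)$ high-probability events.
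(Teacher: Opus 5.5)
Your proposal is correct and follows essentially the same route as the paper. Both derive the closed-form GM update using $f_{\theta_j^{(0)}}\equiv 0$ and $a_i^2=1$, pass to the population gradient via the Hermite expansion of \citet{D2022neural}, and use spherical-moment computations to show that the $k=1$ terms give $c_d H\tilde x$ while the higher-degree terms contribute $\tilde O(d^{-2})$. Both then control the deviation with concentration in $N$ and $J^\ast$.

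The only real difference is how the leading coefficient is evaluated. The paper computes it exactly via $w=t\tilde x+\sqrt{1-t^2}v$ and the integration-by-parts identity $\E[t h'(t)]=\frac1d\E_z[h''(z)]$, rather than Taylor expanding at $0$. This needs only $h\in C^2$ with $h''>0$, whereas your $O(t^2)$ remainders implicitly require $h''$ to be Lipschitz. This is harmless, because those remainders equal $\E[th'(t)]H\tilde x$ and $\E[h''(t)ww^\top]H\tilde x$ by symmetry, so they are parallel to $H\tilde x$ up to $\tilde O(d^{-2})$ and can be absorbed into $c_d$.
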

We prove this theorem step by step from the next section (from Section~\ref{subsec:t1_dist_start} to \ref{subsec:t1_dist_end}) .
\subsubsection{Formulation of distilled data point}\label{subsec:t1_dist_start}

In our setting of gradient matching, the right hand side of Equation~\eqref{eq:t1_x} can be simplified to 
\begin{equation}
    \tilde x^{(1)} = -\frac{\eta_1^D}{J}\sum_j\nabla_{\mathcal{D}^S} \left(1-\frac{1}{L}\sum_i \langle g^S_{i,j}, g^{Tr}_{i,j} \rangle\right),\label{eq:t1_x_2}
\end{equation}
which can be written further as follows:
\begin{lemma}\label{lem:tilde_x_form}
Under Assumptions~\ref{as:target}, \ref{as:target_H}, \ref{as:init}, \ref{as:alg}, ~\ref{as:al_init} and~\ref{as:surrogate}, 
\begin{align*}
    \tilde x^{(1)} =& -\eta_1^D\tilde y^{(0)} \left(G + \epsilon \right),
\end{align*}
where 
\begin{align*}
    G = & \frac{1}{LJ}\sum_{i,j}g_{i,j} h'(\langle w_{i,j}^{(0)},\tilde x^{(0)}\rangle) + \frac{1}{LJ}\sum_{i,j}w_{i,j}^{(0)}h''(\langle w_{i,j}^{(0)},\tilde x^{(0)}\rangle)\left\langle g_{i,j} ,\tilde x^{(0)}\right\rangle \\
    \epsilon = &  \frac{1}{LJ}\sum_{i,j}\left\{\frac{1}{N}\sum_n \epsilon_nx_n\sigma'(\langle w_{i,j}^{(0)},x_n\rangle)\right\} h'(\langle w_{i,j}^{(0)},\tilde x^{(0)}\rangle) \\
    &+ \frac{1}{LJ}\sum_{i,j}w_{i,j}^{(0)}h''(\langle w_{i,j}^{(0)},\tilde x^{(0)}\rangle)\left\langle\frac{1}{N}\sum_n \epsilon_n x_n\sigma'(\langle w_{i,j}^{(0)},x_n\rangle),\tilde x^{(0)}\right\rangle,
\end{align*}
with $g_{i,j} = \frac{1}{N}\sum_n \hat f^\ast(x_n)x_n\sigma'(\langle w_{i,j}^{(0)},x_n\rangle)$.
\end{lemma}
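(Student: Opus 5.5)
This is a direct computation: differentiate the gradient-matching objective in \eqref{eq:t1_x_2} with respect to the synthetic input, then substitute the explicit teacher and student gradients. The inputs are Corollary~\ref{cor:grad_w} for the teacher (ReLU, preprocessed labels $\hat f^\ast + \epsilon_n$) and Lemma~\ref{lem:grad_w} for the student, with $\sigma$ replaced by the surrogate $h$ (Assumption~\ref{as:surrogate}).

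\textbf{Step 1: the student gradient at initialization.} By Assumption~\ref{as:al_init} the $t=1$ initializations have $b=0$, and with the symmetric initialization ($a_i=-a_{L-i}$, $w_i=w_{L-i}$) the network output vanishes: $f_{\theta_j^{(0)}}\equiv 0$ for both the ReLU teacher and the $h$-student, since the paired terms cancel for any activation. Hence, for the single point $(\tilde x,\tilde y)$,
\[
g^S_{i,j} = -\tilde y\, a_i\, \tilde x\, h'(\langle w_{i,j}^{(0)},\tilde x\rangle),
\qquad
g^{Tr}_{i,j} = -\frac{a_i}{N}\sum_n(\hat f^\ast(x_n)+\epsilon_n)x_n\sigma'(\langle w_{i,j}^{(0)},x_n\rangle)=-a_i(g_{i,j}+e_{i,j}),
\]
where $e_{i,j}$ is the noise term.

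\textbf{Step 2: differentiate the inner product.} The teacher gradient does not depend on $\mathcal D^S$, and $a_i^2=1$, so
\[
\langle g^S_{i,j},g^{Tr}_{i,j}\rangle = \tilde y\, h'(\langle w_{i,j}^{(0)},\tilde x\rangle)\langle \tilde x, g_{i,j}+e_{i,j}\rangle .
\]
Taking the gradient in $\tilde x$ by the product rule gives
\[
\tilde y\big[h'(\langle w_{i,j}^{(0)},\tilde x\rangle)(g_{i,j}+e_{i,j}) + h''(\langle w_{i,j}^{(0)},\tilde x\rangle)\, w_{i,j}^{(0)}\langle g_{i,j}+e_{i,j},\tilde x\rangle\big].
\]
Averaging with $-\frac{1}{LJ}\sum_{i,j}$ and applying the factor $-\eta_1^D$ gives $\tilde x^{(1)}$ as stated. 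The two terms built from $g_{i,j}$ form $G$, and the two built from $e_{i,j}$ form $\epsilon$.

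\textbf{Step 3: the weight decay.} Since $\lambda_1^D=(\eta_1^D)^{-1}$, the term $-\eta_1^D\lambda_1^D\mathcal D_0^S$ cancels $\mathcal D_0^S$, which is why only the gradient term survives in \eqref{eq:t1_x_2}. This step also explains why the label stays $\tilde y^{(0)}$ in the statement: the label is constrained to remain $\tilde y^{(0)}$ in the returned dataset, i.e.\ only $\tilde x$ is updated.

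\textbf{Main obstacle.} The only delicate points are bookkeeping. First, the vanishing of $f_\theta$ at initialization must be confirmed for both networks; this uses $b_i=0$ and the pairing $i\leftrightarrow L-i$. Second, the signs must be tracked so that the two minus signs combine into the overall $-\eta_1^D\tilde y^{(0)}$.
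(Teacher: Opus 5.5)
Your route is the paper's: use the symmetric initialization (with $b=0$) to get $f_{\theta_j^{(0)}}\equiv 0$ for both the ReLU teacher and the $h$-student, use $a_i^2=1$, and differentiate the teacher--student inner product by the product rule. Steps 1 and 2 are correct. Forming the scalar $\tilde y^{(0)}h'(\langle w_{i,j}^{(0)},\tilde x\rangle)\langle\tilde x,g_{i,j}+e_{i,j}\rangle$ before differentiating is a bit cleaner than the paper's contraction of the Jacobian of $g^S_{i,j}$ against $g^{Tr}_{i,j}$.

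The error is in the final sign, which you assert rather than compute. Since $m=\frac{1}{J}\sum_j\bigl(1-\frac{1}{L}\sum_i\langle g^S_{i,j},g^{Tr}_{i,j}\rangle\bigr)$, you have $\nabla_{\tilde x}m=-\frac{1}{LJ}\sum_{i,j}\nabla_{\tilde x}\langle g^S_{i,j},g^{Tr}_{i,j}\rangle$. Also $\tilde x^{(1)}=-\eta_1^D\nabla_{\tilde x}m$, because the decay term cancels $\tilde x^{(0)}$. The two minus signs therefore multiply to $+\frac{\eta_1^D}{LJ}$. The bracket from Step 2 already carries $+\tilde y^{(0)}$, because the minus signs inside $g^S_{i,j}$ and $g^{Tr}_{i,j}$ cancelled there. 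Your computation therefore proves
\[
\tilde x^{(1)}=+\eta_1^D\tilde y^{(0)}\,(G+\epsilon),
\]
and the sentence ``the two minus signs combine into the overall $-\eta_1^D\tilde y^{(0)}$'' is false.

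No sign-tracking can produce the stated minus under Definition~\ref{def:gm}. The paper's own proof also ends with $+\frac{\eta_1^D\tilde y^{(0)}}{LJ}\sum_{i,j}(\cdots)$, so the minus sign in the statement is a typo. You should state the $+$ sign and flag the discrepancy rather than claim agreement. The sign is immaterial downstream: only the direction of $\tilde x^{(1)}$ along $H\tilde x^{(0)}$ and its magnitude enter the later arguments, and that direction's sign is already random through $\tilde x^{(0)}\sim U(S^{d-1})$.

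A smaller point: Step 3 does not explain why the label stays $\tilde y^{(0)}$. With $\lambda_1^D=1/\eta_1^D$, an updated label would be replaced by $-\eta_1^D\partial_{\tilde y}m$. Keeping $\tilde y^{(0)}$ is purely the convention that only $\tilde x$ is distilled at $t=1$, as your next clause says.
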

\begin{proof}
    From Equation~\eqref{eq:t1_x_2},
\begin{align*}
    \tilde x^{(1)}  =& -\frac{\eta_1^D}{J}\sum_j\nabla_{\tilde x^{(0)}} \left(1-\frac{1}{L}\sum_{i=1}^{L}\langle g_{i,j}^{S}, g_{i,j}^{Tr} \rangle\right)\\
    =& \frac{\eta_1^D}{LJ}\sum_{i,j}\nabla_{\tilde x^{(0)}}\left\langle g_{i,j}^{S}, g_{i,j}^{Tr}  \right\rangle\\
    =& \frac{\eta_1^D}{LJ}\sum_{i,j} \nabla_{\tilde x^{(0)}}^\top \left\{\left( f_{\theta_{j}^{(0)}}(\tilde x^{(0)})-\tilde y^{(0)}\right)\nabla_{w_i} f_{\theta_{j}^{(0)}}(\tilde x^{(0)}) \right\} \frac{1}{N}\sum_n\left(f_{\theta_{j}^{(0)}}(x_n)-\hat f^\ast(x_n)-\epsilon_n\right)\nabla_{w_i}f_{\theta_{j}^{(0)}}(x_n) \\
    =& \frac{\eta_1^D}{LJ}\sum_{i,j}\left\{\nabla_{\tilde x^{(0)}}^\top  f_{\theta_{j}^{(0)}}(\tilde x^{(0)}) \nabla_{w_i} f_{\theta_{j}^{(0)}}(\tilde x^{(0)}) + \tilde y^{(0)}\nabla_{\tilde x^{(0)}}^\top\nabla_{w_i} f_{\theta_{j}^{(0)}}(\tilde x^{(0)})\right\} \frac{1}{N}\sum_n\left(\hat f^\ast(x_n)+\epsilon_n\right)\nabla_{w_i}f_{\theta_{j}^{(0)}}(x_n)\\
    =& \frac{\eta_1^D}{LJ}\sum_{i,j}\tilde y^{(0)}\nabla_{\tilde x^{(0)}}^\top {a_i^{(0)}}\tilde x^{(0)}h'(\langle w_{i,j}^{(0)},\tilde x^{(0)}\rangle) \frac{1}{N}\sum_n\left(\hat f^\ast(x_n)+\epsilon_n\right){a_i^{(0)}}x_n\sigma'(\langle w_{i,j}^{(0)},x_n\rangle)\\
     &  + \frac{\eta_1^D}{LJ}\sum_{i,j}({a_i^{(0)}})^2\nabla_{\tilde x^{(0)}}^\top h(\langle w_{i,j}^{(0)},\tilde x^{(0)}\rangle) \nabla_{w_i}h(\langle w_{i,j}^{(0)},\tilde x^{(0)}\rangle) \frac{1}{N}\sum_n\left(\hat f^\ast(x_n)+\epsilon_n\right){a_i^{(0)}}x_n\sigma'(\langle w_{i,j}^{(0)},x_n\rangle) \\
    =& \frac{\eta_1^D\tilde y^{(0)}}{LJ}\sum_{i,j}(a_i^{(0)})^2\left( h'(\langle w_{i,j}^{(0)},\tilde x^{(0)}\rangle)I+\tilde x^{(0)} w_{i,j}^{(0)\top} h''(\langle w_{i,j}^{(0)},\tilde x^{(0)}\rangle) \right) \left\{\frac{1}{N}\sum_n \left(\hat f^\ast(x_n)+\epsilon_n\right){a_i^{(0)}}x_n\sigma'(\langle w_{i,j}^{(0)},x_n\rangle) \right\}\\
    &+ 0\\   
    =& \frac{\eta_1^D\tilde y^{(0)}}{LJ}\sum_{i,j}g_{i,j} h'(\langle w_{i,j}^{(0)},\tilde x^{(0)}\rangle) + \frac{\eta_1^D\tilde y^{(0)}}{LJ}\sum_{i,j}w_{i,j}^{(0)}h''(\langle w_{i,j}^{(0)},\tilde x^{(0)}\rangle)\left\langle g_{i,j} ,\tilde x^{(0)}\right\rangle\\
    & +\frac{\eta_1^D\tilde y^{(0)}}{LJ} \sum_{i,j}\left\{\frac{1}{N}\sum_n \epsilon_nx_n\sigma'(\langle w_{i,j}^{(0)},x_n\rangle)\right\} h'(\langle w_{i,j}^{(0)},\tilde x^{(0)}\rangle) \\
    &+\frac{\eta_1^D\tilde y^{(0)}}{LJ} \sum_{i,j}w_{i,j}^{(0)}h''(\langle w_{i,j}^{(0)},\tilde x^{(0)}\rangle)\left\langle\frac{1}{N}\sum_n \epsilon_n x_n\sigma'(\langle w_{i,j}^{(0)},x_n\rangle),\tilde x^{(0)}\right\rangle,
\end{align*}
where we defined $g_{i,j}\vcentcolon =\frac{1}{N}\sum_n \hat f^\ast(x_n)x_n\sigma'(\langle w_{i,j}^{(0)},x_n\rangle)$ at the last equality, used Lemma~\ref{lem:grad_w} and Corollary~\ref{cor:grad_w} for the third equality, the symmetric initialization such that $f_{\theta_{j}^{(0)}}(x)=0\ \forall x$ in the fourth equality, and ${a_i^{(0)}}^2=1$ for the last equality. Note that by symmetry of ${a_i^{(0)}}$ and $w_i^{(0)}$, the second term of the fifth equality is equivalent to 0.
\begin{remark}
    As mentioned in the main paper, the update of $\tilde x$ is not well-defined for ReLU activation function in the case of gradient matching, and any other type of DD that include gradient information of $\tilde x$ in the distillation loss. We believe this is a fundamental problem that will be unavoidable in future theoretical work as well. Therefore, we have to substitute ReLU that comes from student gradients with an approximation $h$ whose second derivative is well-defined. Actually, in this paper, we show a somewhat stronger result. That is, for any $h$ whose second derivative is well defined, $h'$ and $h''$ are bounded, and $h'(t)>h'(-t)$ for all $t\in (0,1)$, such as most continuous surrogates of ReLU such as softplus, DD (Algorithm~\ref{al:2}) will output a set of synthetic points with mainly the same compression efficiency and generalization ability. Moreover, in Appendix~\ref{sec:apc}, we will treat a well-defined update for ReLU, which will also lead to similar result shown in this appendix.
\end{remark}
\end{proof}
From this lemma, we can drastically simplify the notation as follows:
\begin{corollary}\label{cor:tilde_x_form}
    $\tilde x^{(1)}$ can be regarded as taking $J^\ast = LJ/2$ draws $w_j \sim U(S^{d-1})$ $(j = 1,\ldots, J^\ast)$, and written as
\begin{align*}
    \tilde x^{(1)} =& -\eta_1^D\tilde y^{(0)} \left(G + \epsilon \right),
\end{align*}
where 
\begin{align*}
    G = & \frac{1}{J^\ast}\sum_{j=1}^{J^\ast}g_{j} h'(\langle w_{j},\tilde x^{(0)}\rangle) + \frac{1}{J^\ast}\sum_{j=1}^{J^\ast}h''(\langle w_{j},\tilde x^{(0)}\rangle)\left\langle g_{j} ,\tilde x^{(0)}\right\rangle \\
    \epsilon = &  \frac{1}{J^\ast}\sum_{j=1}^{J^\ast}\left\{\frac{1}{N}\sum_n \epsilon_nx_n\sigma'(\langle w_{j},x_n\rangle)\right\} h'(\langle w_{j},\tilde x^{(0)}\rangle)  + \frac{1}{J^\ast}\sum_{j=1}^{J^\ast}w_{j}h''(\langle w_{j},\tilde x^{(0)}\rangle)\left\langle\frac{1}{N}\sum_n \epsilon_n x_n\sigma'(\langle w_{j},x_n\rangle),\tilde x^{(0)}\right\rangle,
\end{align*}
with $g_{j} = \frac{1}{N}\sum_n \hat f^\ast(x_n)x_n\sigma'(\langle w_{j},x_n\rangle)$.
\end{corollary}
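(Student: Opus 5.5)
The corollary is a reindexing of the double sum in Lemma~\ref{lem:tilde_x_form}, so I would start from that lemma and show that every summand depends on the pair $(i,j)$ only through the single vector $w_{i,j}^{(0)}$. In the lemma, the factors $(a_i^{(0)})^2=1$ have already been absorbed. Both $G$ and $\epsilon$ are therefore averages over $(i,j)\in[L]\times[J]$ of expressions $F(w_{i,j}^{(0)})$, where
\[
g_{i,j}=\frac{1}{N}\sum_n \hat f^\ast(x_n)x_n\sigma'(\langle w_{i,j}^{(0)},x_n\rangle)
\]
and the noise vectors $\frac1N\sum_n\epsilon_n x_n\sigma'(\langle w_{i,j}^{(0)},x_n\rangle)$ also depend on $(i,j)$ only through $w_{i,j}^{(0)}$. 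The training data $\{(x_n,\epsilon_n)\}$ and $\tilde x^{(0)}$ are common to all summands and play no role in the reindexing.

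The second step uses the symmetric initialization of Assumption~\ref{as:init}, together with the sampling of the batch $\{\theta_j^{(0)}\}$ in Assumption~\ref{as:al_init}. Within each initialization $j$, the neurons come in pairs $(i,\pi(i))$ with $w_{i,j}^{(0)}=w_{\pi(i),j}^{(0)}$, where $\pi$ is the involution $i\mapsto L-i$ in the paper's notation; I would read it as a perfect matching of $[L]$ into $L/2$ pairs. Hence
\[
\frac{1}{LJ}\sum_{i\in[L],\,j\in[J]}F(w_{i,j}^{(0)})=\frac{1}{LJ/2}\sum_{i\in[L/2],\,j\in[J]}F(w_{i,j}^{(0)}).
\]
The $LJ/2$ vectors appearing on the right are mutually independent, since distinct neurons within one initialization are drawn independently and distinct $j$ are independent draws. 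I would then relabel them as $w_1,\dots,w_{J^\ast}$ with $J^\ast=LJ/2$, which gives exactly the displayed forms of $G$ and $\epsilon$.

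The third step replaces $N(0,I_d/d)$ by $U(S^{d-1})$. For the $\sigma'$ factors this is legitimate because ReLU is positively homogeneous, so $\sigma'(\langle w,x\rangle)=\sigma'(\langle w/\|w\|,x\rangle)$, and $w/\|w\|$ is exactly uniform on the sphere. I expect this to be the only delicate point, because $h'$ and $h''$ are not scale invariant. To handle it, I would invoke the normalization convention fixed at the start of this appendix, under which all first-layer weights are taken with $\|w_i\|=1$ and $w_i\sim U(S^{d-1})$. If a more careful justification were wanted, Lemma~\ref{lem:norm_gaus} gives $\|w\|=1\pm\tilde O(d^{-1/2})$ with high probability, and the resulting perturbation of $h'$ and $h''$ could be absorbed into the lower-order terms later.

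Finally, the second term of $G$ should carry the factor $w_j$, inherited from $w_{i,j}^{(0)}$ in Lemma~\ref{lem:tilde_x_form}, and I would state it with that factor.
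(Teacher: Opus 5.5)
Your proposal is correct and follows the paper's own argument. The paper's proof is the one-line appeal to $w_{i,j}=w_{L-i,j}$ from Assumption~\ref{as:init}, which collapses the $LJ$-term average into $J^\ast=LJ/2$ distinct draws, combined with the appendix's convention of normalizing weights to $U(S^{d-1})$; you have written this out in full, and you are right that the second term of $G$ in the statement is missing the factor $w_j$ inherited from Lemma~\ref{lem:tilde_x_form}.

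One small correction to your optional remark: Lemma~\ref{lem:norm_gaus} as stated only gives $\|w\|=\tilde\Theta(1)$. The sharper bound $\|w\|=1\pm\tilde O(d^{-1/2})$ requires the Gaussian norm concentration used in that lemma's proof, applied with deviation $t=\tilde O(1)$ instead of $t=\tfrac12\sqrt d$.
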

\begin{proof}
    This follows from the symmetric assumption of $w_{i,j} = w_{L-i,j}$ (Assumption~\ref{as:init}).
\end{proof}
\begin{remark}
    Note that we will only use this corollary to keep notation clean. We will never use this to blindly oversimplify our conclusion.  
\end{remark}
\subsubsection{Formulation of Population Gradient}
In this section, we will focus on the population version of the empirical gradient $G$ derived in Corollary~\ref{cor:tilde_x_form} and analyze its properties. It can be written as follows:
\begin{definition}\label{def:g_hat_h}
We define the population gradient of $G$ as
\begin{align*}
    \hat G \vcentcolon = &\E_w\left[\E_x\left[\hat f^\ast(x)x\sigma'(\langle w,x\rangle)\right]h '(\langle w,\tilde x\rangle)\right]+\E_w\left[w h''(\langle w,\tilde x\rangle)\left \langle\E_x\left[\hat f^\ast(x)x\sigma'(\langle w,x\rangle)\right],\tilde x \right\rangle\right],
\end{align*}
where $\tilde x \vcentcolon = \tilde x^{(0)}$.
\end{definition}
Let us first remind some properties from~\citet{D2022neural}.
\begin{definition}\label{def:f_hermite}
We define the Hermite expansion of $\hat f^\ast$ as
\[
\hat f^\ast (x)= \sum_{k=0}^p\frac{\langle \hat C_k,\H_k(x)\rangle}{k!},
\]
where $\hat  C_k$ is a symmetric $k$-tensor defined as $ \hat C_k\vcentcolon = \E_{x\sim\mathcal N(0,I_d)}[\nabla_x^k \hat f^\ast(x)]\in (\mathbb{R}^d)^{\otimes k}$, 
and likewise,
\[
f^\ast (x)= \sum_{k=0}^p\frac{\langle \bar C_k,\H_k(x)\rangle}{k!},\ \mathrm{and} \ \sigma^\ast (z) = \sum_{k=0}^p\frac{C_k}{k!} \H_k(z),
\]
where $ \bar C_k\vcentcolon = \E_{x\sim\mathcal N(0,I_d)}[\nabla_x^k f^\ast(x)]\in (\mathbb{R}^d)^{\otimes k}$ and $  C_k\vcentcolon = \E_{z\sim\mathcal N(0,I_r)}[\nabla_z^k \sigma^\ast(x)]\in (\mathbb{R}^r)^{\otimes k}$.
\end{definition}
The following relation holds between $\hat C_k$, $\bar C_k$ and $C_k$.
\begin{lemma}\label{lem:ck_form}
    Under Assumption~\ref{as:target}, $\hat C_k = \bar C_k =  B^{\otimes k}C_k$ for $k\ge 2$, $\hat C_0 = \bar C_0 - \alpha$ and $\hat C_1 = \bar C_1 - \gamma $. 
\end{lemma}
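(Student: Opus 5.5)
The plan is to compute the Hermite coefficients of $\hat f^\ast$ directly from its definition $\hat f^\ast(x)=f^\ast(x)-\alpha-\langle\gamma,x\rangle$ (Definition~\ref{def:hat_f}), using linearity of $\nabla_x^k$ and of the Gaussian expectation. We treat $\alpha$ and $\gamma$ as fixed quantities (conditioning on the training data that defines them), so that $\E_{x\sim N(0,I_d)}$ acts only on the test variable $x$. Then
\[
\hat C_k=\E_x[\nabla_x^k f^\ast(x)]-\E_x[\nabla_x^k \alpha]-\E_x[\nabla_x^k\langle\gamma,x\rangle],
\]
and the proof reduces to evaluating the last two terms for each $k$.

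The cases are as follows.
\begin{itemize}
\item For $k=0$, the derivative is the identity, so $\hat C_0=\E_x[f^\ast(x)]-\alpha-\langle\gamma,\E_x[x]\rangle=\bar C_0-\alpha$, since $\E_x[x]=0$.
\item For $k=1$, $\nabla_x\alpha=0$ and $\nabla_x\langle\gamma,x\rangle=\gamma$, so $\hat C_1=\bar C_1-\gamma$.
\item For $k\ge2$, both the constant and the linear function have vanishing $k$-th derivatives. Hence $\hat C_k=\bar C_k$.
\end{itemize}

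It remains to identify $\bar C_k$ with $B^{\otimes k}C_k$. We use the chain-rule lemma for tensors stated earlier, $\nabla_x^k\sigma^\ast(B^\top x)=B^{\otimes k}\nabla_z^k\sigma^\ast(B^\top x)$. Taking expectations and using the linearity of the action of $B^{\otimes k}$ gives $\bar C_k=B^{\otimes k}\E_x[\nabla_z^k\sigma^\ast(B^\top x)]$. Since $x\sim N(0,I_d)$ and $B^\top B=I_r$ by Assumption~\ref{as:target}, the vector $z=B^\top x$ is distributed as $N(0,B^\top B)=N(0,I_r)$. Therefore the inner expectation equals $\E_{z\sim N(0,I_r)}[\nabla_z^k\sigma^\ast(z)]=C_k$.

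The only step requiring any care is this distributional identity $B^\top x\sim N(0,I_r)$, which is where the orthonormality assumption is used. All derivatives and expectations are finite because $f^\ast$ is a polynomial of degree $p$, so exchanging differentiation, linear maps and expectation is harmless.
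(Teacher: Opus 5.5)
Your proof is correct and takes essentially the same route as the paper. You apply linearity of $\nabla_x^k$ and $\E_x$ to $\hat f^\ast=f^\ast-\alpha-\langle\gamma,x\rangle$ to relate $\hat C_k$ and $\bar C_k$, then use the tensor chain rule $\nabla_x^k\sigma^\ast(B^\top x)=B^{\otimes k}\nabla_z^k\sigma^\ast(B^\top x)$ together with $B^\top x\sim N(0,B^\top B)=N(0,I_r)$ to obtain $\bar C_k=B^{\otimes k}C_k$. You are also slightly more careful than the written proof: at the chain-rule step it cites the symmetrization lemma, whereas the chain-rule lemma you use is the relevant one, and it calls the $k=0,1$ cases ``clear'' without noting that $\E_x[x]=0$.
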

\begin{proof}
    Clearly $\hat C_k = \bar C_k$ for $k\ge 2$, $\hat C_0 = \bar C_0 - \alpha$ and $\hat C_1 = \bar C_1 - \gamma $ (see Definition~\ref{def:hat_f}). 
    Now, between $\bar C_k$ and $C_k$, $f^\ast(x) = \sigma^\ast(B^\top x)$ implies that
    \[
    \nabla_x^k f(x) = \nabla_x^k \sigma^\ast(B^\top x) = B^{\otimes k} \nabla_z^k \sigma^\ast(B^\top x),
    \]
    where we used Lemma~\ref{lem:chain_b}.
    Taking the expectation over $x\sim N(0,I_d)$ and since $B^\top B = I_r$,
    \[
    \E_x[\nabla_x^k f(x) ] = B^{\otimes k} \E_x[\nabla_z^k \sigma^\ast(B^\top x)] =  B^{\otimes k} \E_z[\nabla_z^k \sigma^\ast(z)]  = B^{\otimes k}C_k,
    \]
    where we used that $z\sim\mathcal N(0,B^\top B )= \mathcal N(0,I_r)$.
\end{proof}
This leads to the following lemma.
\begin{lemma}\label{lem:order1_ck}
For the $\hat C_k$ defined as above and $k\ge 2$, under Assumption~\ref{as:target}, 
    \[
    \|\hat C_k\|_F^2\le k!, \ \mathrm{and}\  \| C_k\|_F^2\le k!
    \]
\end{lemma}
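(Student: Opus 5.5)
The plan is to read the bound off the Hermite structure of $f^\ast$ via Gaussian orthogonality. By Lemma~\ref{lem:ck_form}, for $k\ge 2$ we have $\hat C_k=\bar C_k=B^{\otimes k}C_k$. By Lemma~\ref{lem:invariance_frobenius}, $\|\hat C_k\|_F=\|B^{\otimes k}C_k\|_F=\|C_k\|_F$. It therefore suffices to prove $\|C_k\|_F^2\le k!$, or equivalently $\|\bar C_k\|_F^2\le k!$; the claim for $\hat C_k$ then follows immediately.

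For the bound on $\bar C_k$, I would use the orthogonality of the multivariate Hermite tensors under $x\sim N(0,I_d)$. For symmetric tensors $A$ and $A'$,
\[
\E_x[\langle A,\mathrm{He}_k(x)\rangle\langle A',\mathrm{He}_l(x)\rangle]=\delta_{kl}\,k!\,\langle A,A'\rangle .
\]
I would justify this by Gaussian integration by parts: $\E[g(x)\mathrm{He}_k(x)]=\E[\nabla^k g(x)]$, which follows from the definition $\mathrm{He}_k=(-1)^k\nabla^k\mu/\mu$. Applying it to $g=\langle A',\mathrm{He}_l\rangle$ gives zero for $l<k$, since $g$ is a polynomial of degree $l$. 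For $l=k$ it gives $k!A'$, since $\nabla^k\langle A',\mathrm{He}_k\rangle=k!A'$ for symmetric $A'$.

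Plugging in the expansion $f^\ast=\sum_{k=0}^p\langle\bar C_k,\mathrm{He}_k\rangle/k!$ from Definition~\ref{def:f_hermite} and using the normalization $\E_x[f^\ast(x)^2]=1$ from Assumption~\ref{as:target}, Parseval gives
\[
1=\sum_{k=0}^p \frac{\|\bar C_k\|_F^2}{k!}.
\]
Every term is nonnegative, so each satisfies $\|\bar C_k\|_F^2/k!\le 1$, that is, $\|\bar C_k\|_F^2\le k!$. The same bound for $C_k$ follows either from the invariance identity above or directly from Parseval for $\sigma^\ast$ under $z\sim N(0,I_r)$, since $\E_z[\sigma^\ast(z)^2]=\E_x[f^\ast(x)^2]=1$.

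The only step needing real care is the orthogonality identity, specifically the factor $k!$ and the use of symmetry of $\bar C_k$ (a $k$-th derivative tensor) to drop the symmetrization. Lemma~\ref{lem:chain_b} handles that symmetrization. The restriction $k\ge 2$ matters only for $\hat C_k$: for $k\in\{0,1\}$ the empirical corrections $\alpha$ and $\gamma$ change $\hat C_k$ away from $\bar C_k$, and the identity $\hat C_k=\bar C_k$ no longer holds.
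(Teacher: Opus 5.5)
Your proof is correct and follows the paper's route. You reduce to $\bar C_k$ via $\hat C_k=\bar C_k=B^{\otimes k}C_k$ and the Frobenius-norm invariance under $B^{\otimes k}$, then bound $\|\bar C_k\|_F^2\le k!$; the paper cites Lemma~9 of \citet{D2022neural} for that last bound, while you derive it inline from Hermite orthogonality and $\E_x[f^\ast(x)^2]=1$, which is the same Parseval argument that underlies the cited lemma.
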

\begin{proof}
    From Lemma 9 from \citet{D2022neural}, $\|\bar C_k\|^2\le k!$. Therefore, $\| C_k\|_F^2 = \|B^{\otimes k}C_k\|^2_F = \|\hat C_k\|_F^2=\|\bar C_k\|^2_F\le k!$, where we used Lemma~\ref{lem:invariance_frobenius} for the first equality and Lemma~\ref{lem:ck_form} for the second and third equalities. 
\end{proof}
Note that under Assumption~\ref{as:target}, we also have $C_k\left (x^{\otimes k}\right) = C_k\left(\left(\Pi^\ast x\right)^{\otimes k}\right) $.
\par Especially for $\hat C_0$ and $\hat C_1$, we know the following results:
\begin{lemma}[Lemma 10 from \citet{D2022neural}]~\label{lem:c0_c1_bounds}
    Under Definition~\ref{def:f_hermite} and Assumption~\ref{as:target}, with high probability 
    \[
    |\hat C_0| = \tilde O\left(\frac{1}{\sqrt{{N}}}\right),\quad and\quad \|\hat C_1\|= \tilde O\left( \sqrt{\frac{d}{N}}\right).
    \]
\end{lemma}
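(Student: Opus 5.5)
The plan is to reduce both bounds to the concentration of an empirical mean around its population mean, for Gaussian polynomial data. By Definition~\ref{def:hat_f} and the preprocessing in Algorithm~\ref{al:2}, we have $\alpha = \frac1N\sum_n y_n$ and $\gamma=\frac1N\sum_n y_n x_n$, with $y_n = f^\ast(x_n)+\epsilon_n$. From Lemma~\ref{lem:ck_form}, $\hat C_0 = \bar C_0-\alpha$ with $\bar C_0=\E_x[f^\ast(x)]$. Similarly $\hat C_1 = \bar C_1-\gamma$ with $\bar C_1 = \E_x[\nabla f^\ast(x)] = \E_x[x f^\ast(x)]$, by Gaussian integration by parts (Stein's lemma). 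The noise $\epsilon_n$ is symmetric in $\{\pm\zeta\}$, independent of $x_n$, and has mean zero, so $\E[y]=\bar C_0$ and $\E[yx]=\bar C_1$. Hence
\[
\hat C_0 = -\frac1N\sum_n\bigl(y_n-\E[y]\bigr),\qquad \hat C_1 = -\frac1N\sum_n\bigl(y_nx_n-\E[yx]\bigr),
\]
and it suffices to show these centered i.i.d.\ averages are $\tilde O(N^{-1/2})$ and $\tilde O(\sqrt{d/N})$ with high probability.

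For $\hat C_0$, I split $y_n = f^\ast(x_n)+\epsilon_n$ and treat the two parts separately.
\begin{itemize}
\item The noise part is a sum of independent bounded mean-zero variables. Hoeffding's inequality gives $|\frac1N\sum_n\epsilon_n|\lesssim \zeta\sqrt{\iota/N}$ with high probability.
\item For the polynomial part, $f^\ast(x)$ is a degree-$p$ polynomial of a Gaussian vector with $\E[f^\ast(x)^2]=1$. Gaussian hypercontractivity then gives $\|f^\ast(x)\|_{L^q}\le (q-1)^{p/2}$, so $f^\ast(x)$ has tails $\exp(-c\,t^{2/p})$. In particular, with high probability $|f^\ast(x_n)|\le \iota^{p/2}$ simultaneously for all $n$, by a union bound.
\item I then truncate at level $\iota^{p/2}$ and apply Hoeffding to the truncated variables, giving a deviation of order $\iota^{(p+1)/2}/\sqrt N$. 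The bias introduced by truncation is controlled by Cauchy--Schwarz together with the superpolynomially small tail probability, so it is negligible.
\end{itemize}
This gives $|\hat C_0| = \tilde O(1/\sqrt N)$.

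For $\hat C_1$, I use the same decomposition, now with vector-valued summands $Z_n = y_nx_n - \E[yx]$.
\begin{itemize}
\item On the high-probability event, $\|x_n\|\lesssim\sqrt d$ for all $n$ by Lemma~\ref{lem:norm_gaus} and a union bound, and $|y_n|\le \iota^{p/2}+\zeta$. Hence $\|Z_n\|\lesssim \sqrt d\,\iota^{p/2}$ after truncation.
\item The variance proxy is $\E\|y x\|^2\le \sqrt{\E[y^4]\,\E\|x\|^4}\lesssim d$, using hypercontractivity for $\E[y^4]=O(1)$.
\item Vector Bernstein (or Hoeffding in a Hilbert space) then yields $\|\frac1N\sum_n Z_n\|\lesssim \sqrt{d\,\iota/N} + \sqrt d\,\iota^{p/2}\iota/N$. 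For $N\ge \tilde\Omega(1)$ this is $\tilde O(\sqrt{d/N})$.
\end{itemize}
An alternative that avoids vector concentration is to bound each coordinate via $\langle u, Z_n\rangle$ for a fixed unit vector $u$, and then combine with an $\epsilon$-net over $S^{d-1}$; this produces the same $\sqrt{d}$ factor.

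The main obstacle is the heavy (non-sub-Gaussian) tail of $f^\ast(x)x$, which prevents a direct application of Hoeffding or Bernstein. I handle it by the truncation argument above. The key points to verify are that the truncation event holds with the paper's notion of high probability, and that the mean shift caused by truncation, $\E[|yx|\mathbbm 1\{|y|>\iota^{p/2}\}]\le\sqrt{\E\|yx\|^2}\,\Pr(|y|>\iota^{p/2})^{1/2}$, is at most $\mathrm{poly}(d)\,e^{-\iota}$, which is smaller than the target rate once $C_\iota$ is large. Since the statement is exactly Lemma~10 of \citet{D2022neural}, one could also simply invoke that result; the sketch above is how I would reprove it in the present preprocessing convention.
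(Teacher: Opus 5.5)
Your argument is correct, but the paper does not prove this lemma at all. It imports the bound directly from Lemma~10 of \citet{D2022neural}, which applies verbatim because the preprocessing in Algorithm~\ref{al:2} (taking $\alpha$ and $\gamma$ as the empirical means of $y_n$ and $y_nx_n$) is exactly theirs. Your route reconstructs that external result in a self-contained way. First, you identify $\hat C_0$ and $\hat C_1$ as centered empirical averages, using Lemma~\ref{lem:ck_form}, Stein's identity $\E[\nabla f^\ast(x)]=\E[xf^\ast(x)]$, and the fact that the $\pm\zeta$ noise is mean zero and independent of $x$. Second, you handle the non-sub-Gaussian tails of $f^\ast(x)x$ by hypercontractive truncation followed by scalar Hoeffding and vector Bernstein. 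The citation buys brevity. Your version makes the hidden polylogarithmic factors explicit, shows why the noise is harmless, and checks that the failure probabilities fit the paper's notion of high probability.

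Three small points on the details.
\begin{itemize}
\item The restriction $N\ge\tilde\Omega(1)$ is unnecessary. The Bernstein remainder $\sqrt d\,\iota^{(p+2)/2}/N$ is at most $\sqrt d\,\iota^{(p+2)/2}/\sqrt N$ for every $N\ge1$.
\item For the truncation of $\|x_n\|$, Lemma~\ref{lem:norm_gaus} as proved only holds with probability $1-2e^{-cd}$. That is high probability in the paper's sense only when $d\gtrsim\iota$. Use instead $\|x_n\|\le\sqrt d+\sqrt{2\iota}$, which holds with probability $1-e^{-\iota}$ by Gaussian concentration of the norm and changes nothing beyond polylogarithmic factors.
\item If you take the $\epsilon$-net alternative, truncate only $y_n$ and use that the truncated $y_n\langle u,x_n\rangle$ is $\iota^{p/2}$-sub-Gaussian for each fixed $u$. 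If you instead truncate $\langle u,x_n\rangle$ or $\|x_n\|$ and apply bounded Bernstein with a union bound over a net of size $e^{Cd}$, you pick up an extra term of order $d^{3/2}/N$. That term is $\tilde O(\sqrt{d/N})$ only when $N\gtrsim d^2$.
\end{itemize}
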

Now we can compute the population gradient of $\E_x\left[\hat f^\ast(x)x\sigma'(\langle w,x\rangle)\right]$ in closed form.
\begin{lemma}~\label{lem:damian_pop_grad}
Under Assumptions~\ref{as:target}, with high probability,
    \[
    \E_x\left[\hat f^\ast(x)x\sigma'(\langle w,x\rangle)\right] = \sum_{k=1}^{p-1}\frac{c_{k+1}\hat C_{k+1}(w^{\otimes k})}{k!} + w \sum_{k=2}^{p}\frac{c_{k+2}\hat C_{k}(w^{\otimes k})}{k!}+ \tilde O\left(\sqrt{\frac{d}{N}}\right).
    \]
\end{lemma}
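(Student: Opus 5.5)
The proof will apply Gaussian integration by parts in $x$ and then use orthogonality of Hermite polynomials along the direction $w$. Throughout, $\|w\|=1$, which is the normalization fixed at the start of this appendix.

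\textbf{Step 1 (Stein's lemma).} I would first write
\[
\E_x\left[\hat f^\ast(x)x\sigma'(\langle w,x\rangle)\right]=\E_x\left[\nabla\hat f^\ast(x)\sigma'(\langle w,x\rangle)\right]+w\,\E_x\left[\hat f^\ast(x)\sigma''(\langle w,x\rangle)\right],
\]
where $\sigma''$ is interpreted distributionally (a Dirac mass at $0$).

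To make this rigorous I would do one of two things. The first option is to replace $\sigma$ by a mollified $\sigma_\delta$, apply Stein's lemma to the smooth polynomially bounded function, and let $\delta\to 0$. This is legitimate because both sides depend continuously on the $L^2(N(0,1))$ Hermite coefficients of $\sigma'_\delta$.

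The second option avoids Stein's lemma altogether. Decompose $x=\langle w,x\rangle w+(I-ww^\top)x$. Use the recursion $z\,\mathrm{He}_k(z)=\mathrm{He}_{k+1}(z)+k\,\mathrm{He}_{k-1}(z)$ for the parallel part. Handle the orthogonal part by integration by parts only in the directions orthogonal to $w$, where $\sigma'(\langle w,x\rangle)$ is constant. Both routes lead to the same formula.

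\textbf{Step 2 (Hermite orthogonality).} For a unit vector $w$, a symmetric $k$-tensor $T$ and any $m$, the identity
\[
\E_x\left[\langle T,\mathrm{He}_k(x)\rangle\,\mathrm{He}_m(\langle w,x\rangle)\right]=\delta_{km}\,k!\,T(w^{\otimes k})
\]
holds. I would apply it to each term from Step 1.

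For the first term, use the Hermite expansion of $\nabla\hat f^\ast$, whose $k$-th coefficient is the $(k+1)$-tensor $\hat C_{k+1}$ in the sense $\E[\nabla^k(\nabla\hat f^\ast)]=\hat C_{k+1}$. Pair it with $\sigma'=\sum_k \frac{c_{k+1}}{k!}\mathrm{He}_k$ from Appendix~\ref{sec:apa}. This gives $\sum_{k=0}^{p-1}\frac{c_{k+1}}{k!}\hat C_{k+1}(w^{\otimes k})$.

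For the second term, pair $\hat f^\ast$ with $\sigma''=\sum_k\frac{c_{k+2}}{k!}\mathrm{He}_k$. This gives $w\sum_{k=0}^{p}\frac{c_{k+2}}{k!}\hat C_k(w^{\otimes k})$.

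All sums are finite because $\hat f^\ast$ is a polynomial of degree $p$. Interchanging the sum with the expectation is therefore harmless.

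\textbf{Step 3 (low-order terms).} The terms not present in the claimed formula are:
\begin{itemize}
\item the $k=0$ term of the first sum, $c_1\hat C_1=\tfrac12\hat C_1$;
\item the $k=0$ term of the second sum, $c_2\hat C_0\,w$;
\item the $k=1$ term of the second sum, which vanishes since $c_3=0$.
\end{itemize}
By Lemma~\ref{lem:c0_c1_bounds}, with high probability $\|\hat C_1\|=\tilde O(\sqrt{d/N})$ and $|\hat C_0|=\tilde O(1/\sqrt N)$. Since $\|w\|=1$, both surviving terms are absorbed into $\tilde O(\sqrt{d/N})$. This yields the stated expansion, and the high-probability qualifier comes only from Lemma~\ref{lem:c0_c1_bounds}.

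\textbf{Main obstacle.} The only delicate point is Step 1, namely justifying the $\sigma''$ term for ReLU. The mollification or orthogonal-direction argument resolves it. Everything else is bookkeeping of Hermite coefficients, where care is mainly needed with the index shifts between $\nabla\hat f^\ast$ and $\hat C_{k+1}$.
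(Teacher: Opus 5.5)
Your proposal is correct and follows the paper's route. The paper imports the exact identity, including the extra terms $\tfrac12\hat C_1+\tfrac{1}{\sqrt{2\pi}}\hat C_0\,w$, from the proof of Lemma 13 of \citet{D2022neural}, and then absorbs those two terms into $\tilde O(\sqrt{d/N})$ via Lemma~\ref{lem:c0_c1_bounds} exactly as in your Step 3; the only difference is that you re-derive that identity yourself (Stein's lemma plus Hermite orthogonality, with the $\sigma''$ justification for ReLU) instead of citing it.
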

\begin{proof}
    The following equality immediately follows from the proof of Lemma 13 of~\citet{D2022neural}:
    \[
    \E_x\left[\hat f^\ast(x)x\sigma'(\langle w,x\rangle)\right] = \left(\frac{C_1}{2}+\frac{wC_0}{\sqrt{2\pi}}\right)+ \sum_{k=1}^{p-1}\frac{c_{k+1}\hat C_{k+1}(w^{\otimes k})}{k!} + w \sum_{k=2}^{p}\frac{c_{k+2}\hat C_{k}(w^{\otimes k})}{k!},
    \]
    which leads to the statement by Lemma~\ref{lem:c0_c1_bounds}, 

\end{proof}

\begin{corollary}\label{cor:hat_g_pop}
For single index models, with high probability,
\begin{align*}
    \hat G =& \beta\sum_{k=1}^{p-1}\frac{c_{k+1}C_{k+1}}{k!}\E_w[\langle\beta , w\rangle^kh'(\langle w, \tilde x\rangle)]+  \sum_{k=2}^{p}\frac{c_{k+2}C_{k}}{k!}\E_w[w\langle \beta, w\rangle^kh'(\langle w, \tilde x\rangle)] \\
    &+ \sum_{k=1}^{p-1}\frac{c_{k+1}C_{k+1}}{k!}\langle \beta, \tilde x\rangle \E_w[w\langle\beta , w\rangle^kh''(\langle w, \tilde x\rangle)] + \sum_{k=2}^{p}\frac{c_{k+2}C_{k}}{k!}\E_w[w\langle w, \tilde x\rangle \langle \beta, w\rangle^kh''(\langle w, \tilde x\rangle)]\\
    & + \tilde O\left(\sqrt{\frac{d}{N}}\right).
\end{align*}
    
\end{corollary}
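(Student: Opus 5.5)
We substitute the expansion of Lemma~\ref{lem:damian_pop_grad} directly into Definition~\ref{def:g_hat_h}, specialized to $r=1$. For single index models, Lemma~\ref{lem:ck_form} gives $\hat C_k=\bar C_k=C_k\beta^{\otimes k}$ for $k\ge 2$, with $C_k\in\mathbb R$. Hence
\[
\hat C_{k+1}(w^{\otimes k})=C_{k+1}\langle\beta,w\rangle^k\beta
\quad\text{and}\quad
\hat C_k(w^{\otimes k})=C_k\langle\beta,w\rangle^k .
\]
Therefore, with high probability over the training data,
\[
\E_x\left[\hat f^\ast(x)x\sigma'(\langle w,x\rangle)\right]=\beta\sum_{k=1}^{p-1}\frac{c_{k+1}C_{k+1}}{k!}\langle\beta,w\rangle^k+w\sum_{k=2}^{p}\frac{c_{k+2}C_k}{k!}\langle\beta,w\rangle^k+R(w),
\]
where $R(w)$ collects $\hat C_1/2+w\hat C_0/\sqrt{2\pi}$. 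By Lemma~\ref{lem:c0_c1_bounds}, the event controlling $\hat C_0,\hat C_1$ depends only on the data and not on $w$, so $\sup_{w\in S^{d-1}}\|R(w)\|=\tilde O(\sqrt{d/N})$ holds simultaneously for all $w$.

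Next we plug this into the two terms of $\hat G$ and use linearity of $\E_w$. In the first term, multiplying by $h'(\langle w,\tilde x\rangle)$ and taking $\E_w$ gives the first two displayed sums. In the second term, we take the inner product with $\tilde x$, multiply by $w\,h''(\langle w,\tilde x\rangle)$, and take $\E_w$. The $\beta$-part then yields $\langle\beta,\tilde x\rangle\,\E_w[w\langle\beta,w\rangle^k h'']$. The $w$-part yields $\E_w[w\langle w,\tilde x\rangle\langle\beta,w\rangle^k h'']$. These are exactly the third and fourth sums.

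The remainder contributions are $\E_w[R(w)h'(\langle w,\tilde x\rangle)]$ and $\E_w[w\,h''(\langle w,\tilde x\rangle)\langle R(w),\tilde x\rangle]$. Since $\|w\|=\|\tilde x\|=1$, we have $\langle w,\tilde x\rangle\in[-1,1]$. On this interval $h'$ and $h''$ are bounded because $h\in C^2$ (Assumption~\ref{as:surrogate}). Both terms are therefore bounded in norm by $O(\sup_w\|R(w)\|)=\tilde O(\sqrt{d/N})$.

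The only step requiring care is making the $\tilde O(\sqrt{d/N})$ error uniform in $w$ so that it can pass through $\E_w$. We handle this by noting that the error is an explicit affine function of $w$ with data-dependent coefficients $\hat C_0,\hat C_1$, rather than a pointwise high-probability statement. Everything else is bookkeeping of tensor contractions.
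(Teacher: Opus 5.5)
Your proposal is correct and follows essentially the same route as the paper: specialize Lemma~\ref{lem:ck_form} to $r=1$ so that $\hat C_k=C_k\beta^{\otimes k}$, plug the expansion of Lemma~\ref{lem:damian_pop_grad} into Definition~\ref{def:g_hat_h}, and expand by linearity. Your extra step makes explicit what the paper leaves implicit. You treat the remainder as the data-dependent affine term $\hat C_1/2+w\hat C_0/\sqrt{2\pi}$, which is uniformly $\tilde O(\sqrt{d/N})$ over $w\in S^{d-1}$, and you bound $h'$ and $h''$ on $[-1,1]$; together these justify passing the error through $\E_w$.
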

\begin{proof}
 By combining Lemma~\ref{lem:ck_form} and Lemma~\ref{lem:damian_pop_grad}, we obtain
    \[
    \E_x\left[\hat f^\ast(x)x\sigma'(\langle w,x\rangle)\right] = \beta \sum_{k=1}^{p-1}\frac{c_{k+1}C_{k+1}\langle \beta, w\rangle^k}{k!} + w \sum_{k=2}^{p}\frac{c_{k+2}C_{k}\langle \beta, w\rangle^k}{k!}+ \tilde O\left(\sqrt{\frac{d}{N}}\right).
    \]
   It suffices to substitute this to the definition of $\hat G$ (Definition~\ref{def:g_hat_h}) to obtain the desired result. 
\end{proof}

Let us now estimate each expectation present in the formulation of $\hat G$ as shown in the above corollary. 

\begin{lemma}\label{lem:order_g}
When the activation function is a $C^2$ class function $h$ with $\sup_{|z|\le 1}|h'(z)|\le M_1$ and $\sup_{|z|\le 1}|h''(z)|\le M_2$, and $\|\beta\|=1,$ $\|\tilde x\|=1$, then for $k\ge 0$, 
    \begin{align*}
        \E_w[w\langle \beta, w\rangle^k h'(\langle w, \tilde x\rangle)] = & A_k(d)\tilde x +B_k(d)\beta_\perp,\\
        \E_w[\langle\beta , w\rangle^kh'(\langle w, \tilde x\rangle)] = & D_k(d) ,\\
        \E_w[w\langle \beta, w\rangle^kh''(\langle w, \tilde x\rangle)] = & E_k(d)\tilde x +F_k(d)\beta_\perp,\\
        \E_w[w\langle w,\tilde x\rangle\langle\beta , w\rangle^kh''(\langle w, \tilde x\rangle)] = & G_k(d)\tilde x +H_k(d)\beta_\perp,\\
    \end{align*}
    where $\beta_\perp = \beta-\langle\beta,\tilde x\rangle\tilde x$, $A_k(d)=O(M_1d^{-\frac{k+1}{2}}), B_k(d)=O(M_1d^{-\frac{k+1}{2}}),$ $ D_k(d)=O(M_1d^{-\frac{k}{2}})$, $E_k(d)=O(M_2d^{-\frac{k+1}{2}})$, $ F_k(d)=O(M_2d^{-\frac{k+1}{2}})$, $G_k(d)=O(M_2d^{-\frac{k+2}{2}})$, and $ H_k(d)=O(M_2d^{-\frac{k+2}{2}})$.
\end{lemma}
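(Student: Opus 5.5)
The plan is to combine a symmetry argument, which fixes the direction of each expectation, with a scaling argument, which fixes the size of its coefficients. We work with $w\sim U(S^{d-1})$; by the normalization at the start of this appendix this is equivalent to $w\sim N(0,I_d/d)$. The distribution of $w$ is invariant under every orthogonal map that fixes both $\tilde x$ and $\beta$. Hence each vector-valued expectation must lie in $\mathrm{span}\{\tilde x,\beta\}=\mathrm{span}\{\tilde x,\beta_\perp\}$.

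To see this, let $R$ be the reflection that fixes $\mathrm{span}\{\tilde x,\beta\}$ pointwise and negates a unit vector $u$ orthogonal to both. Then $\langle u,\E[w\,\phi(\langle w,\tilde x\rangle,\langle w,\beta\rangle)]\rangle=-\langle u,\E[w\,\phi(\cdot)]\rangle$, so this component vanishes. This gives the decompositions $A_k\tilde x+B_k\beta_\perp$, and likewise for $E_k,F_k$ and $G_k,H_k$. The scalar $D_k$ needs no such argument.

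\textbf{Coefficients as two-dimensional integrals.} Write $\beta_\perp=\|\beta_\perp\|e$ with $e\perp\tilde x$ a unit vector; if $\beta_\perp=0$, only the $\tilde x$ coefficient is needed. Set $s=\langle w,\tilde x\rangle$ and $t=\langle w,e\rangle$, so that $\langle w,\beta\rangle=\langle\beta,\tilde x\rangle s+\|\beta_\perp\|t$. Then
\[
A_k=\E[s\langle\beta,w\rangle^k h'(s)],\qquad B_k\|\beta_\perp\|=\E[t\langle\beta,w\rangle^k h'(s)],
\]
and analogously for $E_k,F_k$ (with $h''$ in place of $h'$) and for $G_k,H_k$ (with an extra factor $s$).

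\textbf{Moment bounds.} Since $\|w\|=1$, we have $|s|\le1$. Therefore $|h'(s)|\le M_1$ and $|h''(s)|\le M_2$ everywhere, and these derivatives can be pulled out in absolute value. The bounds then follow from Cauchy--Schwarz together with the standard moment bound $\E|\langle w,v\rangle|^q\lesssim_q d^{-q/2}$ for a unit vector $v$, which comes from sub-Gaussianity of spherical marginals with variance $1/d$. Concretely,
\[
|A_k|\le M_1\,\E\bigl[|s|\,|\langle\beta,w\rangle|^k\bigr]\le M_1\,\bigl(\E s^2\bigr)^{1/2}\bigl(\E\langle\beta,w\rangle^{2k}\bigr)^{1/2}=O\bigl(M_1 d^{-(k+1)/2}\bigr).
\]
The same computation gives $D_k=O(M_1d^{-k/2})$, $E_k=O(M_2d^{-(k+1)/2})$, and $G_k=O(M_2d^{-(k+2)/2})$, the last because of the extra factor $s$.

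\textbf{Main obstacle: the $\beta_\perp$ coefficients.} For $B_k$, $F_k$ and $H_k$ we must divide by $\|\beta_\perp\|$, which may be small. The estimate $|B_k|\,\|\beta_\perp\|\le M_1(\E t^2)^{1/2}(\E\langle\beta,w\rangle^{2k})^{1/2}$ only gives $B_k=O\bigl(M_1d^{-(k+1)/2}/\|\beta_\perp\|\bigr)$.

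To remove the division, I would expand $\langle\beta,w\rangle^k$ binomially in $s$ and $t$. Only odd powers of $t$ survive in $\E[t\cdot\,\cdot\,]$, because $(s,t)\mapsto(s,-t)$ preserves the distribution of $w$. Each surviving term has the form
\[
\binom{k}{j}\langle\beta,\tilde x\rangle^{k-j}\|\beta_\perp\|^{j}\,\E\bigl[s^{k-j}t^{j+1}h'(s)\bigr]\quad\text{with } j\ge1 \text{ odd},
\]
so it carries at least one factor of $\|\beta_\perp\|$. This factor cancels the division, using $\|\beta_\perp\|\le1$ and $|\langle\beta,\tilde x\rangle|\le1$. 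The remaining mixed moment $\E|s^{k-j}t^{j+1}|$ is $O(d^{-(k+1)/2})$ by Hölder's inequality. This yields $B_k=O(M_1d^{-(k+1)/2})$, and the same argument handles $F_k$ and $H_k$, where the extra factor $s$ gives the additional $d^{-1/2}$ for $H_k$.

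The only delicate bookkeeping is this parity cancellation; I expect everything else to be routine.
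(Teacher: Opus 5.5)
Your proof is correct, and it reaches the bounds by a different route from the paper's.

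The paper writes $w=t\tilde x+\sqrt{1-t^2}\,v$ with $v$ uniform on the unit sphere of $\tilde x^\perp$. It expands $\langle\beta,w\rangle^k$ binomially and evaluates the $v$-moments exactly via $\E_v[\langle e,v\rangle^{i}]=c_{i/2}(d)$ and $\E_v[\langle e,v\rangle^{i}v]=c_{(i+1)/2}(d)\,e$. This gives the span statement and, through the surviving odd powers, the factor $\|\beta_\perp\|^{2i}$ in $B_k,F_k,H_k$. It then bounds the resulting one-dimensional integrals against $f_d(t)\propto(1-t^2)^{(d-3)/2}$ using Beta functions and Stirling's formula.

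You instead get the span from a reflection symmetry and keep both projections $(s,t)$ as random variables. You replace the Beta/Gamma asymptotics by Cauchy--Schwarz and H\"older together with the spherical moment bound $\E|\langle w,v\rangle|^q\lesssim_q d^{-q/2}$. Your parity step $(s,t)\mapsto(s,-t)$ is the same mechanism the paper uses to absorb the $1/\|\beta_\perp\|$.

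Your argument is shorter and avoids the Beta-function bookkeeping, but it yields only upper bounds. The paper's exact finite-sum representation is what it reuses later, in Lemma~\ref{lem:multi_terms}, Lemma~\ref{lem:rough_bound_multi}, and the integration-by-parts computation behind Corollary~\ref{cor:multi_lowerterm_order}. There the exact leading coefficient $c_d=\Theta(d^{-1})$ and its positivity are needed, not just an $O(\cdot)$ bound.

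One minor point: your aside that $U(S^{d-1})$ is equivalent to $N(0,I_d/d)$ fails for a non-homogeneous $h$. Under the Gaussian law $|s|\le 1$ no longer holds, so $|h'(s)|\le M_1$ could not be invoked. Since you actually work on the sphere throughout, this does not affect your proof.
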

\begin{proof}
    Let us first prove the first equality. Let $t=\langle w,\tilde x \rangle$, then we can write $w=t\tilde x + \sqrt{1-t^2}v$ where $v\sim U(\{v\mid v\in S^{d-1}, \langle v, \tilde x\rangle =0\})\cong U(S^{d-2})$. Since $\|\tilde x\|^2 = 1$, the distribution of $t$ is equivalent to that of the first coordinate $w_1$ which is
    \[
    f_d(t) = \frac{\Gamma(\frac{d}{2})}{\sqrt{\pi}\Gamma(\frac{d-1}{2})}(1-t^2)^\frac{d-3}{2}.
    \]
    Moreover, 
    \begin{align*}
        \langle\beta,w\rangle = & st + \sqrt{1-t^2}\langle\beta_\perp,v\rangle,
    \end{align*}
    where $s= \langle\beta,\tilde x\rangle$, and $\beta_\perp = \beta -\langle\beta,\tilde x\rangle\tilde x$.
    Now,
    \begin{align*}
         \E_w[w\langle \beta, w\rangle^k&h'(\langle w, \tilde x\rangle)] \\
         = &   \E_{t,v}\left[\left(t\tilde x + \sqrt{1-t^2}v\right)\left(st + \|\beta_\perp\|\sqrt{1-t^2}\langle{\beta_\perp}/{\|\beta_\perp\|},v\rangle\right)^kh'(t)\right]\\
         =& \frac{\Gamma(\frac{d}{2})}{\sqrt{\pi}\Gamma(\frac{d-1}{2})} \int_{-1}^1h'(t)(1-t^2)^\frac{d-3}{2}E_v\left[\left(t\tilde x + \sqrt{1-t^2}v\right)\left(st + \|\beta_\perp\|\sqrt{1-t^2}\langle{\beta_\perp}/{\|\beta_\perp\|},v\rangle\right)^k\right]\d t.
    \end{align*}
    Since
    \[
    \left(st + \|\beta_\perp\|\sqrt{1-t^2}\langle{\beta_\perp}/{\|\beta_\perp\|},v\rangle\right)^k = \sum_{i=0}^k \left(\begin{array}{c}
         k\\
         i
    \end{array}\right) (st)^{k-i}(\|\beta_\perp\|\sqrt{1-t^2})^i\langle {\beta_\perp}/{\|\beta_\perp\|},v\rangle^i,
    \]
    we can further develop the expectation as follows:
    \begin{align*}
     \E_w[w\langle \beta, w\rangle^k&h'(\langle w, \tilde x\rangle)] \\
     =&\frac{\Gamma(\frac{d}{2})}{\sqrt{\pi}\Gamma(\frac{d-1}{2})} \int_{-1}^1h'(t)(1-t^2)^\frac{d-3}{2}\E_v\left[\left(t\tilde x + \sqrt{1-t^2}v\right)\left(st + \|\beta_\perp\|\sqrt{1-t^2}\langle{\beta_\perp}/{\|\beta_\perp\|},v\rangle\right)^k\right]\d t\\
     =& \frac{\Gamma(\frac{d}{2})}{\sqrt{\pi}\Gamma(\frac{d-1}{2})} \int_{-1}^1h'(t)(1-t^2)^\frac{d-3}{2}\left\{t\sum_{i=0}^k \left(\begin{array}{c}
         k\\
         i
    \end{array}\right) (st)^{k-i}(\|\beta_\perp\|\sqrt{1-t^2})^i\E_v\left[\langle {\beta_\perp}/{\|\beta_\perp\|},v\rangle^i\right]\tilde x\right.\\
    & +\left. \sqrt{1-t^2}\sum_{i=0}^k \left(\begin{array}{c}
         k\\
         i
    \end{array}\right) (st)^{k-i}(\|\beta_\perp\|\sqrt{1-t^2})^i\E_v\left[\langle {\beta_\perp}/{\|\beta_\perp\|},v\rangle^iv\right]\right\}\d t.
    \end{align*}
    By rotation symmetry, $\E_v\left[\langle {\beta_\perp}/{\|\beta_\perp\|},v\rangle^i\right] = \E_{z\sim U(S^{d-2})}[z_1^i]= c_{\frac{i}{2}}(d)$ if $i$ is even and 0 if $i$ is odd. Likewise, $\E_v\left[\langle {\beta_\perp}/{\|\beta_\perp\|},v\rangle^iv\right] = \E_{z\sim U(S^{d-2})}[z_1^{i+1}]{\beta_\perp}/{\|\beta_\perp\|} = c_{\frac{i+1}{2}}(d){\beta_\perp}/{\|\beta_\perp\|}$ if $i$ is odd and 0 if $i$ is even. Here, we used the definition of $c_k(d)$ defined in Lemma~\ref{lem:c_k_d}.
    \par Therefore, 
     \begin{align*}
     \E_w[w\langle \beta, w\rangle^k& h'(\langle w, \tilde x\rangle)]\\
     = & \frac{\Gamma(\frac{d}{2})}{\sqrt{\pi}\Gamma(\frac{d-1}{2})} \int_{-1}^1h'(t)(1-t^2)^\frac{d-3}{2}\left\{t\sum_{i=0}^{\left \lfloor{\frac{k}{2}}\right \rfloor} \left(\begin{array}{c}
         k\\
         2i
    \end{array}\right) (st)^{k-2i}(\|\beta_\perp\|\sqrt{1-t^2})^{2i}c_i(d)\tilde x\right.\\
    & +\left. \sqrt{1-t^2}\sum_{i=0}^{\left \lfloor{\frac{k-1}{2}}\right \rfloor} \left(\begin{array}{c}
         k\\
         2i+1
    \end{array}\right) (st)^{k-(2i+1)}(\|\beta_\perp\|\sqrt{1-t^2})^{2i+1}c_{i+1}(d){\beta_\perp}/{\|\beta_\perp\|}\right\}\d t\\
    = & \frac{\Gamma(\frac{d}{2})}{\sqrt{\pi}\Gamma(\frac{d-1}{2})}\sum_{i=0}^{\left \lfloor{\frac{k}{2}}\right \rfloor} \left(\begin{array}{c}
         k\\
         2i
    \end{array}\right)s^{k-2i}\|\beta_\perp\|^{2i}c_i(d) \int_{-1}^1h'(t)(1-t^2)^{\frac{d-3}{2}+i}t^{k-2i+1}\d t\cdot \tilde x\\
    & +\frac{\Gamma(\frac{d}{2})}{\sqrt{\pi}\Gamma(\frac{d-1}{2})} \sum_{i=0}^{\left \lfloor{\frac{k-1}{2}}\right \rfloor} \left(\begin{array}{c}
         k\\
         2i+1
    \end{array}\right) s^{k-(2i+1)}\|\beta_\perp\|^{2i}c_{i+1}(d)\int_{-1}^1h'(t) (1-t^2)^{\frac{d-3}{2}+i+1}t^{k-(2i+1)}\d t\cdot {\beta_\perp}\\
    = & A_k(d)\tilde x+B_k(d) {\beta_\perp},
\end{align*}
where we defined 
\[
A_k(d) \vcentcolon = \frac{\Gamma(\frac{d}{2})}{\sqrt{\pi}\Gamma(\frac{d-1}{2})}\sum_{i=0}^{\left \lfloor{\frac{k}{2}}\right \rfloor} \left(\begin{array}{c}
         k\\
         2i
    \end{array}\right)s^{k-2i}\|\beta_\perp\|^{2i}c_i(d) \int_{-1}^1h'(t)(1-t^2)^{\frac{d-3}{2}+i}t^{k-2i+1}\d t,
\]
and 
\[
B_k(d) \vcentcolon = \frac{\Gamma(\frac{d}{2})}{\sqrt{\pi}\Gamma(\frac{d-1}{2})} \sum_{i=0}^{\left \lfloor{\frac{k-1}{2}}\right \rfloor} \left(\begin{array}{c}
         k\\
         2i+1
    \end{array}\right) s^{k-(2i+1)}\|\beta_\perp\|^{2i}c_{i+1}(d)\int_{-1}^1h'(t) (1-t^2)^{\frac{d-3}{2}+i+1}t^{k-(2i+1)}\d t.
\]
Using Lemma~\ref{lem:int_bound}, we obtain
\begin{align*}
    \left | A_k(d)\right | \le &  \left | \frac{\Gamma(\frac{d}{2})}{\sqrt{\pi}\Gamma(\frac{d-1}{2})}\sum_{i=0}^{\left \lfloor{\frac{k}{2}}\right \rfloor} \left(\begin{array}{c}
         k\\
         2i
    \end{array}\right)s^{k-2i}\|\beta_\perp\|^{2i}c_i(d) \int_{-1}^1h'(t)(1-t^2)^{\frac{d-3}{2}+i}t^{k-2i+1}\d t\right|\\
     \le & \frac{\Gamma(\frac{d}{2})}{\sqrt{\pi}\Gamma(\frac{d-1}{2})}\sum_{i=0}^{\left \lfloor{\frac{k}{2}}\right \rfloor} \left(\begin{array}{c}
         k\\
         2i
    \end{array}\right)|s|^{k-2i}\|\beta_\perp\|^{2i}c_i(d)\left | \int_{-1}^1h'(t)(1-t^2)^{\frac{d-3}{2}+i}t^{k-2i+1}\d t\right|\\
    \le & 2M_1\frac{\Gamma(\frac{d}{2})}{\sqrt{\pi}\Gamma(\frac{d-1}{2})}\sum_{i=0}^{\left \lfloor{\frac{k}{2}}\right \rfloor} \left(\begin{array}{c}
         k\\
         2i
    \end{array}\right)|s|^{k-2i}\|\beta_\perp\|^{2i}c_i(d) \int_{0}^1(1-t^2)^{\frac{d-3}{2}+i}t^{k-2i+1}\d t\\
    = & 2M_1\frac{\Gamma(\frac{d}{2})}{\sqrt{\pi}\Gamma(\frac{d-1}{2})}\sum_{i=0}^{\left \lfloor{\frac{k}{2}}\right \rfloor} \left(\begin{array}{c}
         k\\
         2i
    \end{array}\right)|s|^{k-2i}\|\beta_\perp\|^{2i}c_i(d) \frac{1}{2}\mathrm{Beta}\left(\frac{k-2i+2}{2}, \frac{d-3}{2}+i+1\right) \\
    \lesssim & M_1\frac{\Gamma(\frac{d}{2})}{\Gamma(\frac{d-1}{2})}\sum_{i=0}^{\left \lfloor{\frac{k}{2}}\right \rfloor}c_i(d)\mathrm{Beta}\left(\frac{k-2i+2}{2}, \frac{d-3}{2}+i+1\right)\\
    \lesssim & M_1 d^{1/2}\sum_{i=0}^{\left \lfloor{\frac{k}{2}}\right \rfloor} d^{-i} \left(\frac{d-3}{2}+i+1\right)^{-\frac{k-2i+2}{2}}\\
    \lesssim & M_1d^{1/2}\sum_{i=0}^{\left \lfloor{\frac{k}{2}}\right \rfloor} d^{-i} d^{-\frac{k+2}{2}+i}\\
    \lesssim & M_1d^{-\frac{k+1}{2}},
\end{align*}
where we used the definition of the Beta function
\[
\mathrm{Beta}(p_1,p_2) \vcentcolon = \int_{-1}^1 t^{p_1-1}(1-t)^{p_2-1}\d t = 2 \int_{0}^1 t^{2p_1-1}(1-t^2)^{p_2-1}\d t,\ p_1,\ p_2>0,
\]
in the equality, and Stirling's approximation when the second argument $p_2$ is large in the fifth inequality.
\par Likewise, for $B_k(d)$, we obtain
\begin{align*}
   |B_k(d)| \le & 2M_1\frac{\Gamma(\frac{d}{2})}{\sqrt{\pi}\Gamma(\frac{d-1}{2})} \sum_{i=0}^{\left \lfloor{\frac{k-1}{2}}\right \rfloor} \left(\begin{array}{c}
         k\\
         2i+1
    \end{array}\right) |s|^{k-(2i+1)}\|\beta_\perp\|^{2i}c_{i+1}(d)\int_{0}^1 (1-t^2)^{\frac{d-3}{2}+i+1}t^{k-(2i+1)}\d t\\
    = & 2M_1\frac{\Gamma(\frac{d}{2})}{\sqrt{\pi}\Gamma(\frac{d-1}{2})} \sum_{i=0}^{\left \lfloor{\frac{k-1}{2}}\right \rfloor} \left(\begin{array}{c}
         k\\
         2i+1
    \end{array}\right) |s|^{k-(2i+1)}\|\beta_\perp\|^{2i}c_{i+1}(d)\frac{1}{2}\mathrm{Beta}\left(\frac{k-2i}{2}, \frac{d-3}{2}+i+2\right)\\
    \lesssim & M_1\frac{\Gamma(\frac{d}{2})}{\Gamma(\frac{d-1}{2})}\sum_{i=0}^{\left \lfloor{\frac{k-1}{2}}\right \rfloor} c_{i+1}(d)\mathrm{Beta}\left(\frac{k-2i}{2}, \frac{d-3}{2}+i+2\right)\\
    \lesssim & M_1d^\frac{1}{2} \sum_{i=0}^{\left \lfloor{\frac{k-1}{2}}\right \rfloor} d^{-(i+1)}\left(\frac{d-3}{2}+i+2\right)^{-\frac{k-2i}{2}}\\
    \lesssim & M_1d^\frac{1}{2} \sum_{i=0}^{\left \lfloor{\frac{k-1}{2}}\right \rfloor} d^{-(i+1)}d^{-\frac{k}{2}+i}\\
    \lesssim & M_1d^{-\frac{k+1}{2}}.
    \end{align*}
    We can proof analogously for $\E_w[w\langle \beta, w\rangle^k h''(\langle w, \tilde x\rangle)] $ and obtain
    \[
    \E_w[w\langle \beta, w\rangle^kh''(\langle w, \tilde x\rangle)]  =E_k(d)\tilde x +F_k(d)\beta_\perp
    \]
    such that $E_k(d) = O(M_2d^{-\frac{k+1}{2}})$, and $F_k(d) = O(M_2d^{-\frac{k+1}{2}})$.
    Next, we similarly prove the second equality of the statement. By the same change of variable,
    \begin{align*}
     \E_w[\langle \beta, w\rangle^k h'(\langle w, \tilde x\rangle)] =& \frac{\Gamma(\frac{d}{2})}{\sqrt{\pi}\Gamma(\frac{d-1}{2})} \int_{-1}^1h'(t)(1-t^2)^\frac{d-3}{2}\E_v\left[\left(st + \|\beta_\perp\|\sqrt{1-t^2}\langle{\beta_\perp}/{\|\beta_\perp\|},v\rangle\right)^k\right]\d t\\
     =& \frac{\Gamma(\frac{d}{2})}{\sqrt{\pi}\Gamma(\frac{d-1}{2})} \int_{-1}^1h'(t) (1-t^2)^\frac{d-3}{2}\sum_{i=0}^k \left(\begin{array}{c}
         k\\
         i
    \end{array}\right) (st)^{k-i}(\|\beta_\perp\|\sqrt{1-t^2})^i\E_v\left[\langle {\beta_\perp}/{\|\beta_\perp\|},v\rangle^i\right]\d t\\
    =& \frac{\Gamma(\frac{d}{2})}{\sqrt{\pi}\Gamma(\frac{d-1}{2})} \int_{-1}^1h'(t)(1-t^2)^\frac{d-3}{2}\sum_{i=0}^{\left \lfloor{\frac{k}{2}}\right \rfloor} \left(\begin{array}{c}
         k\\
         2i
    \end{array}\right) (st)^{k-2i}(\|\beta_\perp\|\sqrt{1-t^2})^{2i}c_i(d)\d t\\
    =& \frac{\Gamma(\frac{d}{2})}{\sqrt{\pi}\Gamma(\frac{d-1}{2})} \sum_{i=0}^{\left \lfloor{\frac{k}{2}}\right \rfloor} \left(\begin{array}{c}
         k\\
         2i
    \end{array}\right) s^{k-2i}\|\beta_\perp\|^{2i}c_i(d)\int_{-1}^1h'(t)(1-t^2)^{\frac{d-3}{2}+i}t^{k-2i}\d t\\
    \le & \left |\frac{\Gamma(\frac{d}{2})}{\sqrt{\pi}\Gamma(\frac{d-1}{2})} \sum_{i=0}^{\left \lfloor{\frac{k}{2}}\right \rfloor} \left(\begin{array}{c}
         k\\
         2i
    \end{array}\right) s^{k-2i}\|\beta_\perp\|^{2i}c_i(d)\int_{-1}^1h'(t)(1-t^2)^{\frac{d-3}{2}+i}t^{k-2i}\d t \right|\\
    \le & 2M_1 \frac{\Gamma(\frac{d}{2})}{\sqrt{\pi}\Gamma(\frac{d-1}{2})} \sum_{i=0}^{\left \lfloor{\frac{k}{2}}\right \rfloor} \left(\begin{array}{c}
         k\\
         2i
    \end{array}\right) |s|^{k-2i}\|\beta_\perp\|^{2i}c_i(d)\int_{0}^1(1-t^2)^{\frac{d-3}{2}+i}t^{k-2i}\d t \\
    =& 2M_1 \frac{\Gamma(\frac{d}{2})}{\sqrt{\pi}\Gamma(\frac{d-1}{2})} \sum_{i=0}^{\left \lfloor{\frac{k}{2}}\right \rfloor} \left(\begin{array}{c}
         k\\
         2i
    \end{array}\right) |s|^{k-2i}\|\beta_\perp\|^{2i}c_i(d)\frac{1}{2}\mathrm{Beta}\left(\frac{k-2i+1}{2},\frac{d-3}{2}+i+1\right).
    \end{align*}
    
    Therefore,
    \begin{align*}
        D_k(d)\vcentcolon = \E_w[\langle \beta, w\rangle^k h'(\langle w, \tilde x\rangle)] \lesssim & M_1\frac{\Gamma(\frac{d}{2})}{\Gamma(\frac{d-1}{2})} \sum_{i=0}^{\left \lfloor{\frac{k}{2}}\right \rfloor} c_i(d)\mathrm{Beta}\left(\frac{k-2i+1}{2},\frac{d-3}{2}+i+1\right) \\
        \lesssim & M_1 d^{-\frac{k}{2}}.
    \end{align*}
    Finally, let us focus on $\E_w[w\langle w,\tilde x\rangle\langle\beta , w\rangle^kh''(\langle w, \tilde x\rangle)]$. Following the same approach, we can reformulate this expectation as 
    \begin{align*}
        \E_w[w\langle\beta , w\rangle^k &\langle w,\tilde x\rangle h''(\langle w, \tilde x\rangle)] \\
        =&   \E_{t,v}\left[\left(t\tilde x + \sqrt{1-t^2}v\right)\left(st + \|\beta_\perp\|\sqrt{1-t^2}\langle{\beta_\perp}/{\|\beta_\perp\|},v\rangle\right)^k th''(t)\right]\\
         =& \frac{\Gamma(\frac{d}{2})}{\sqrt{\pi}\Gamma(\frac{d-1}{2})} \int_{-1}^1t h''(t)(1-t^2)^\frac{d-3}{2}E_v\left[\left(t\tilde x + \sqrt{1-t^2}v\right)\left(st + \|\beta_\perp\|\sqrt{1-t^2}\langle{\beta_\perp}/{\|\beta_\perp\|},v\rangle\right)^k\right]\d t\\
          =& \frac{\Gamma(\frac{d}{2})}{\sqrt{\pi}\Gamma(\frac{d-1}{2})} \int_{-1}^1 th''(t)(1-t^2)^\frac{d-3}{2}\left\{t\sum_{i=0}^k \left(\begin{array}{c}
         k\\
         i
    \end{array}\right) (st)^{k-i}(\|\beta_\perp\|\sqrt{1-t^2})^i\E_v\left[\langle {\beta_\perp}/{\|\beta_\perp\|},v\rangle^i\right]\tilde x\right.\\
    & +\left. \sqrt{1-t^2}\sum_{i=0}^k \left(\begin{array}{c}
         k\\
         i
    \end{array}\right) (st)^{k-i}(\|\beta_\perp\|\sqrt{1-t^2})^i\E_v\left[\langle {\beta_\perp}/{\|\beta_\perp\|},v\rangle^iv\right]\right\}\d t\\
    = & \frac{\Gamma(\frac{d}{2})}{\sqrt{\pi}\Gamma(\frac{d-1}{2})}\sum_{i=0}^{\left \lfloor{\frac{k}{2}}\right \rfloor} \left(\begin{array}{c}
         k\\
         2i
    \end{array}\right)s^{k-2i}\|\beta_\perp\|^{2i}c_i(d) \int_{-1}^1h''(t)(1-t^2)^{\frac{d-3}{2}+i}t^{k-2i+2}\d t\cdot \tilde x\\
    & +\frac{\Gamma(\frac{d}{2})}{\sqrt{\pi}\Gamma(\frac{d-1}{2})} \sum_{i=0}^{\left \lfloor{\frac{k-1}{2}}\right \rfloor} \left(\begin{array}{c}
         k\\
         2i+1
    \end{array}\right) s^{k-(2i+1)}\|\beta_\perp\|^{2i}c_{i+1}(d)\int_{-1}^1h'(t) (1-t^2)^{\frac{d-3}{2}+i+1}t^{k-(2i+1)+1}\d t\cdot {\beta_\perp}\\
    = & G_k(d)\tilde x+H_k(d) {\beta_\perp},
    \end{align*}
where we defined 
\[
G_k(d) \vcentcolon = \frac{\Gamma(\frac{d}{2})}{\sqrt{\pi}\Gamma(\frac{d-1}{2})}\sum_{i=0}^{\left \lfloor{\frac{k}{2}}\right \rfloor} \left(\begin{array}{c}
         k\\
         2i
    \end{array}\right)s^{k-2i}\|\beta_\perp\|^{2i}c_i(d) \int_{-1}^1h'(t)(1-t^2)^{\frac{d-3}{2}+i}t^{k-2i+2}\d t,
\]
and 
\[
H_k(d) \vcentcolon = \frac{\Gamma(\frac{d}{2})}{\sqrt{\pi}\Gamma(\frac{d-1}{2})} \sum_{i=0}^{\left \lfloor{\frac{k-1}{2}}\right \rfloor} \left(\begin{array}{c}
         k\\
         2i+1
    \end{array}\right) s^{k-(2i+1)}\|\beta_\perp\|^{2i}c_{i+1}(d)\int_{-1}^1h'(t) (1-t^2)^{\frac{d-3}{2}+i+1}t^{k-(2i+1)+1}\d t.
\]
Therefore, 
\begin{align*}
    \left | G_k(d)\right |  \le & \frac{\Gamma(\frac{d}{2})}{\sqrt{\pi}\Gamma(\frac{d-1}{2})}\sum_{i=0}^{\left \lfloor{\frac{k}{2}}\right \rfloor} \left(\begin{array}{c}
         k\\
         2i
    \end{array}\right)|s|^{k-2i}\|\beta_\perp\|^{2i}c_i(d)\left | \int_{-1}^1h'(t)(1-t^2)^{\frac{d-3}{2}+i}t^{k-2i+2}\d t\right|\\
    \le & 2M_2\frac{\Gamma(\frac{d}{2})}{\sqrt{\pi}\Gamma(\frac{d-1}{2})}\sum_{i=0}^{\left \lfloor{\frac{k}{2}}\right \rfloor} \left(\begin{array}{c}
         k\\
         2i
    \end{array}\right)|s|^{k-2i}\|\beta_\perp\|^{2i}c_i(d) \int_{0}^1(1-t^2)^{\frac{d-3}{2}+i}t^{k-2i+2}\d t\\
    \le & 2M_2\frac{\Gamma(\frac{d}{2})}{\sqrt{\pi}\Gamma(\frac{d-1}{2})}\sum_{i=0}^{\left \lfloor{\frac{k}{2}}\right \rfloor} \left(\begin{array}{c}
         k\\
         2i
    \end{array}\right)|s|^{k-2i}\|\beta_\perp\|^{2i}c_i(d) \frac{1}{2}\mathrm{Beta}\left(\frac{k-2i+3}{2}, \frac{d-3}{2}+i+1\right) \\
    \lesssim & M_2\frac{\Gamma(\frac{d}{2})}{\Gamma(\frac{d-1}{2})}\sum_{i=0}^{\left \lfloor{\frac{k}{2}}\right \rfloor}c_i(d)\mathrm{Beta}\left(\frac{k-2i+3}{2}, \frac{d-3}{2}+i+1\right)\\
    \lesssim & M_2 d^{1/2}\sum_{i=0}^{\left \lfloor{\frac{k}{2}}\right \rfloor} d^{-i} \left(\frac{d-3}{2}+i+1\right)^{-\frac{k-2i+3}{2}}\\
    \lesssim & M_2d^{1/2}\sum_{i=0}^{\left \lfloor{\frac{k}{2}}\right \rfloor} d^{-i} d^{-\frac{k+3}{2}+i}\\
    \lesssim & M_2d^{-\frac{k+2}{2}},
\end{align*}
and 
\begin{align*}
   |H_k(d)| \le & 2M_2\frac{\Gamma(\frac{d}{2})}{\sqrt{\pi}\Gamma(\frac{d-1}{2})} \sum_{i=0}^{\left \lfloor{\frac{k-1}{2}}\right \rfloor} \left(\begin{array}{c}
         k\\
         2i+1
    \end{array}\right) |s|^{k-(2i+1)}\|\beta_\perp\|^{2i}c_{i+1}(d)\int_{0}^1 (1-t^2)^{\frac{d-3}{2}+i+1}t^{k-(2i+1)+1}\d t\\
    = & 2M_2\frac{\Gamma(\frac{d}{2})}{\sqrt{\pi}\Gamma(\frac{d-1}{2})} \sum_{i=0}^{\left \lfloor{\frac{k-1}{2}}\right \rfloor} \left(\begin{array}{c}
         k\\
         2i+1
    \end{array}\right) |s|^{k-(2i+1)}\|\beta_\perp\|^{2i}c_{i+1}(d)\frac{1}{2}\mathrm{Beta}\left(\frac{k-2i+1}{2}, \frac{d-3}{2}+i+2\right)\\
    \lesssim & M_2\frac{\Gamma(\frac{d}{2})}{\Gamma(\frac{d-1}{2})}\sum_{i=0}^{\left \lfloor{\frac{k-1}{2}}\right \rfloor} c_{i+1}(d)\mathrm{Beta}\left(\frac{k-2i+1}{2}, \frac{d-3}{2}+i+2\right)\\
    \lesssim & M_2d^\frac{1}{2} \sum_{i=0}^{\left \lfloor{\frac{k-1}{2}}\right \rfloor} d^{-(i+1)}\left(\frac{d-3}{2}+i+2\right)^{-\frac{k-2i+1}{2}}\\
    \lesssim & M_2d^\frac{1}{2} \sum_{i=0}^{\left \lfloor{\frac{k-1}{2}}\right \rfloor} d^{-(i+1)}d^{-\frac{k+1}{2}+i}\\
    \lesssim & M_2d^{-\frac{k+2}{2}}.
    \end{align*}
    
\end{proof}
We provide tighter bounds for a few lower order terms.
\begin{corollary}~\label{cor:single_lowerterm_order}
    If $\tilde x\sim U(S^{d-1})$ and $h'$ and $h''$ are non-zero continuous in $[-1,1]$, then with high probability,
    \[
    D_1(d) + D_3(d) +A_2(d)\tilde x + B_2(d)\beta_\perp+\langle\beta ,\tilde x\rangle(E_1(d)\beta + F_1(d)\beta_\perp)= c_d\langle \beta,\tilde x\rangle\beta + \tilde O (d^{-2}),
    \]
    where $c_d = \Theta(d^{-1})$.
\end{corollary}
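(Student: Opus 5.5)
We will use the explicit integral formulas for $A_k,B_k,D_k,E_k,F_k$ from the proof of Lemma~\ref{lem:order_g}, now keeping track of the powers of $s=\langle\beta,\tilde x\rangle$ instead of bounding them by one. By Lemma~\ref{lem:inner_product_whp}, with high probability $|s|\lesssim \iota/\sqrt d$ and $\|\beta_\perp\|\le 1$. We will also use $\beta_\perp=\beta-s\tilde x$ to rewrite every $\tilde x$ and $\beta_\perp$ direction in terms of $\beta$ plus an $O(s)\tilde x$ correction. The basic moments are: $\E[t^{2m}]$ under $f_d$ is of order $d^{-m}$, $c_i(d)=\Theta(d^{-i})$, and $\Gamma(d/2)/\Gamma((d-1)/2)=\Theta(\sqrt d)$. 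The plan is to show that each term is either $\Theta(d^{-1})\,s\,\beta$ or $\tilde O(d^{-2})$.

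First consider the scalar terms. In $D_1(d)$ only $i=0$ contributes, giving $s\int h'(t)t(1-t^2)^{(d-3)/2}\,dt$. Since $t$ concentrates at scale $d^{-1/2}$, we Taylor expand $h'(t)=h'(0)+h''(0)t+O(t^2)$. The odd part vanishes, so $D_1=s\,h''(0)\E[t^2]+O(s\,d^{-2})=\Theta(s/d)$. For $D_3$, each summand carries $s^{3-2i}$ with $i\le1$, so it has at least one factor of $s$ and total order at most $|s|d^{-2}$ or smaller, which is $\tilde O(d^{-5/2})$. Note that the scalar terms $D_1+D_3$ appear multiplied by $\beta$ in Corollary~\ref{cor:hat_g_pop}. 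We therefore read the statement as being about the vector $(D_1+D_3)\beta+\dots$, and the $D_1$ contribution gives $h''(0)\E[t^2]\,s\beta$.

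Next consider the vector terms. $A_2$ has the $i=0$ piece $s^2\E[h'(t)t^3]=O(s^2d^{-2})$ and the $i=1$ piece $\|\beta_\perp\|^2c_1(d)\E[h'(t)t(1-t^2)]$. The latter is $d^{-1}\cdot h''(0)\E[t^2]=\Theta(d^{-2})$ after Taylor expansion, so $A_2\tilde x=\tilde O(d^{-2})$. $B_2$ has only the $i=0$ piece $2s\,c_1(d)\E[h'(t)t(1-t^2)]=O(s\,d^{-2})$, so $B_2\beta_\perp=\tilde O(d^{-5/2})$. For the last term, $sE_1\beta$ and $sF_1\beta_\perp$ use $E_1=O(d^{-1})$ and $F_1=O(d^{-1})$. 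This gives $s\cdot O(d^{-1})$ terms along $\beta$, namely $sF_1\beta$ and $sE_1\beta$, which are not negligible a priori. We must compute them: $F_1=c_1(d)\E[h''(t)(1-t^2)]\approx h''(0)d^{-1}$, plus a piece $sE_1\tilde x=O(s^2/d)\tilde x=\tilde O(d^{-2})$ and $-s^2F_1\tilde x=\tilde O(d^{-2})$. $E_1$ has the $i=0$ term $s\E[h''(t)t^2]$, so $sE_1=O(s^2/d)=\tilde O(d^{-2})$ as well.

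Collecting the $\beta$ coefficients gives $c_d=h''(0)\E[t^2]+c_1(d)h''(0)+O(d^{-2})=2h''(0)/d+O(d^{-2})$. This is $\Theta(d^{-1})$ because $h''(0)>0$ by Assumption~\ref{as:surrogate}.

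The main obstacle is twofold. One part is bookkeeping that no $O(d^{-1})$ contribution along $\tilde x$ survives. The other is making the Taylor-remainder estimates uniform, since $h$ is only $C^2$. For that we would use continuity of $h''$ at $0$ together with concentration of $t$: on $|t|\le \iota/\sqrt d$ the error is $o(1)\cdot d^{-1}$, and the tail is exponentially small. If a quantitative $d^{-2}$ remainder is required, we would instead assume $h''$ is Lipschitz near $0$, which holds for softplus, to get $O(\E|t|^3)=O(d^{-3/2})$ relative errors. The constant in front of $s\beta$ may also need to be rechecked against the exact signs of the terms in Corollary~\ref{cor:hat_g_pop}; only its positivity and order $d^{-1}$ matter.
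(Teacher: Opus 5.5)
Your proposal is correct. It follows the same term-by-term bookkeeping as the paper, which proves the statement through its multi-index version (Corollary~\ref{cor:multi_lowerterm_order}). Your $D_1\beta$ and $s F_1\beta_\perp$ are the paper's $T_1^{(1)}$ and $T_3^{(1)}$ specialized to $H=C_2\beta\beta^\top$, and your $D_3$, $A_2$, $B_2$, $sE_1$ correspond to $T_1^{(3)}$, $T_2^{(2)}$ and the $\tilde x$-parts of $T_3^{(1)}$.

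The genuine difference is how you evaluate the one-dimensional integrals against $f_d$. The paper integrates by parts to get the exact identity $\E_t[th'(t)]=\frac1d\E_{z\sim f_{d+2}}[h''(z)]$. This yields $c_d=\frac1d\E_z[h''(z)]+\frac{1}{d-1}\E_t[(1-t^2)h''(t)]$ with no remainder, using only $0<m_2\le h''\le M_2$ on $[-1,1]$. Your Taylor expansion at $0$ additionally identifies the leading constant $2h''(0)/d$, but that is exactly what creates the remainder problem you flag. (Your stated $O(d^{-2})$ error in $c_d$ would in fact only be $o(d^{-1})$ under continuity, or $O(d^{-3/2})$ under a Lipschitz assumption.)

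The problem is not real. Both signal contributions, $s\,\E_t[th'(t)]\,\beta$ and $s\,c_1(d)\,\E_t[(1-t^2)h''(t)]\,\beta$, are exactly proportional to $s\beta$. You can therefore define $c_d$ as their exact coefficient and only need $c_d=\Theta(d^{-1})$. That follows from the mean value theorem: $\E_t[th'(t)]=\E_t[t(h'(t)-h'(0))]\in[m_2\E_t t^2,\,M_2\E_t t^2]$, and likewise $m_2\le\E_t[(1-t^2)h''(t)]\le M_2$ up to constants. The same device bounds every negligible piece, e.g. $|\E_t[t(1-t^2)h'(t)]|\le M_2\E_t[t^2]$ and $|\E_t[t^3h'(t)]|\le M_2\E_t[t^4]$. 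So no Lipschitz assumption on $h''$ is needed.

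Your reading of the statement is consistent with how the corollary is used in Theorem~\ref{th:pop_grad}. The scalars $D_1+D_3$ multiply $\beta$, and the term written $E_1\beta$ is harmless because $sE_1=\tilde O(d^{-2})$ whether it multiplies $\beta$ or $\tilde x$.
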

\begin{proof}
    Please refer to Corollary~\ref{cor:multi_lowerterm_order}, which proves the statement for the general multi-index case.
\end{proof}
Combining this with Corollary~\ref{cor:hat_g_pop}, we obtain the following result.
\begin{theorem}~\label{th:pop_grad}
Under Assumptions~\ref{as:target}, \ref{as:target_H}, \ref{as:init}, \ref{as:alg}, and~\ref{as:surrogate}, with high probability,
    \[\hat G =  c_d\langle \beta,\tilde x\rangle\beta + \tilde O\left(d^\frac{1}{2}N^{-\frac{1}{2}}+d^{-2}\right),\]
    where $c_d = \tilde \Theta\left(d^{-1}\right)$.
\end{theorem}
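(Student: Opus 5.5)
The plan is to start from the explicit form of $\hat G$ in Corollary~\ref{cor:hat_g_pop}, which already isolates the $\tilde O(\sqrt{d/N})$ contribution of $\hat C_0,\hat C_1$ from Lemma~\ref{lem:c0_c1_bounds}. It then remains to expand each of the four finite sums over $k$ using the decompositions of Lemma~\ref{lem:order_g}. With $C_k$ scalars of order $\sqrt{k!}$ (Lemma~\ref{lem:order1_ck}) and $c_k$ the ReLU Hermite coefficients (absolute constants, zero for odd $k\ge 3$), the first sum contributes $\beta\sum_{k}\frac{c_{k+1}C_{k+1}}{k!}D_k(d)$. The second contributes $\sum_k \frac{c_{k+2}C_k}{k!}(A_k(d)\tilde x+B_k(d)\beta_\perp)$. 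The third contributes $\langle\beta,\tilde x\rangle\sum_k\frac{c_{k+1}C_{k+1}}{k!}(E_k(d)\tilde x+F_k(d)\beta_\perp)$. The fourth contributes $\sum_k\frac{c_{k+2}C_k}{k!}(G_k(d)\tilde x+H_k(d)\beta_\perp)$.

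Next I will sort the terms by order in $d$. By Lemma~\ref{lem:order_g}, $D_k=O(d^{-k/2})$, $A_k,B_k,E_k,F_k=O(d^{-(k+1)/2})$ and $G_k,H_k=O(d^{-(k+2)/2})$. Since $\|\tilde x\|=\|\beta_\perp\|\le1$ and $|\langle\beta,\tilde x\rangle|\lesssim d^{-1/2}$ with high probability (Lemma~\ref{lem:inner_product_whp}), every term with $k\ge 4$ in the first sum, or $k\ge 3$ in the others, is $\tilde O(d^{-2})$. The same holds for all of the fourth sum, whose $k$ starts at $2$. For the low-$k$ survivors I will use the parity of the ReLU coefficients: $c_{k+1}\neq0$ forces $k+1\in\{1,2,4,6,\dots\}$, so $k=1,3$ in the first and third sums, and $c_{k+2}\ne0$ forces $k=2$ in the second sum. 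The surviving terms, up to multiplicative constants, are exactly $D_1(d)+D_3(d)+A_2(d)\tilde x+B_2(d)\beta_\perp+\langle\beta,\tilde x\rangle(E_1(d)\tilde x+F_1(d)\beta_\perp)$. The $k=3$ term of the third sum carries an extra $d^{-1/2}$ and is negligible. I will need to check that the constants $c_2C_2, c_4C_2, c_4C_4,\dots$ combine into the form handled by Corollary~\ref{cor:single_lowerterm_order}, or else apply that corollary term by term. Its proof, via Corollary~\ref{cor:multi_lowerterm_order}, is designed for this bookkeeping, so I will invoke it. Rewriting $\beta_\perp=\beta-\langle\beta,\tilde x\rangle\tilde x$ shows that any $\tilde x$-component left over is $O(d^{-1}\cdot d^{-1/2}\cdot d^{-1/2})=\tilde O(d^{-2})$.

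This yields $\hat G=c_d\langle\beta,\tilde x\rangle\beta+\tilde O(d^{1/2}N^{-1/2}+d^{-2})$ with $c_d=\Theta(d^{-1})$. The prefactor comes from $C_2=\lambda_{\max}\neq0$ in the single-index case, which holds by Assumption~\ref{as:target_H}, and from $h''>0$ on $[-1,1]$ (Assumption~\ref{as:surrogate}), which prevents the leading integrals $E_1$ and the $t$-odd part of $D_1$ from vanishing. The main obstacle is this nondegeneracy: I must show that the $d^{-1}$ coefficients from the first-sum term $D_1$ and the third-sum term $E_1$ do not cancel. I would first try to write both as integrals of $h''$ against the positive density $(1-t^2)^{(d-1)/2}$ after an integration by parts, so that both have the same sign. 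Finally I will take a union bound over the high-probability events for $\hat C_0,\hat C_1$ and $\langle\beta,\tilde x\rangle$.
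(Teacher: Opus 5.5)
Your plan matches the paper's proof. You expand Corollary~\ref{cor:hat_g_pop} via Lemma~\ref{lem:order_g}, discard high-$k$ terms using the ReLU parity $c_3=c_5=\dots=0$ and the extra factor $\langle\beta,\tilde x\rangle$, and settle the survivors with Corollary~\ref{cor:single_lowerterm_order}, whose proof uses exactly your integration by parts $\E[th'(t)]=\frac{1}{d}\E_z[h''(z)]$.

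One small correction: the third-sum contribution along $\beta$ comes from $F_1=\frac{1}{d-1}\E_t[(1-t^2)h''(t)]$, the $\beta_\perp$-coefficient, not from $E_1$. Since $F_1$ and $D_1$ both carry the same prefactor $c_2C_2$ (you need only $C_2\neq 0$, not $C_2=\lambda_{\max}$), positivity of $h''$ immediately rules out cancellation.
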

\begin{proof}
\begin{align*}
    \hat G =& \beta\sum_{k=1}^{p-1}\frac{c_{k+1}C_{k+1}}{k!}\E_w[\langle\beta , w\rangle^kh'(\langle w, \tilde x\rangle)]+  \sum_{k=2}^{p}\frac{c_{k+2}C_{k}}{k!}\E_w[w\langle \beta, w\rangle^kh'(\langle w, \tilde x\rangle)] \\
    &+ \sum_{k=1}^{p-1}\frac{c_{k+1}C_{k+1}}{k!}\langle \beta, \tilde x\rangle \E_w[w\langle\beta , w\rangle^kh''(\langle w, \tilde x\rangle)] + \sum_{k=2}^{p}\frac{c_{k+2}C_{k}}{k!}\E_w[w\langle w, \tilde x\rangle \langle \beta, w\rangle^kh''(\langle w, \tilde x\rangle)]\\
    & + \tilde O\left(\sqrt{\frac{d}{N}}\right)\\
    = & \beta\sum_{k=1}^{p-1}\frac{c_{k+1}C_{k+1}}{k!}D_k(d)+  \sum_{k=2}^{p}\frac{c_{k+2}C_{k}}{k!}(A_k(d)\tilde x+B_k(d)\beta_\perp) \\
    &+ \sum_{k=1}^{p-1}\frac{c_{k+1}C_{k+1}}{k!}\langle \beta, \tilde x\rangle (E_k(d)\tilde x+F_k(d)\beta_\perp)+ \sum_{k=2}^{p}\frac{c_{k+2}C_{k}}{k!}(G_k(d)\tilde x +H_k(d)\beta_\perp)\\
    & + \tilde O\left(\sqrt{\frac{d}{N}}\right)\\
    = & \beta \frac{c_{2}C_{2}}{1!}D_1(d) + \beta \frac{c_{4}C_{4}}{3!}D_3(d)  +  \beta\sum_{k=4}^{p-1}\frac{c_{k+1}C_{k+1}}{k!}D_k(d)\\
    &+ \frac{c_{4}C_{2}}{2!}(A_2(d)\tilde x+B_2(d)\beta_\perp) +\sum_{k=3}^{p}\frac{c_{k+2}C_{k}}{k!}(A_k(d)\tilde x+B_k(d)\beta_\perp)\\
    & +\frac{c_{2}C_{2}}{1!}\langle \beta, \tilde x\rangle (E_1(d)\tilde x+F_1(d)\beta_\perp)+ \sum_{k=3}^{p-1}\frac{c_{k+1}C_{k+1}}{k!}\langle \beta, \tilde x\rangle (E_k(d)\tilde x+F_k(d)\beta_\perp)\\
    &+ \sum_{k=2}^{p}\frac{c_{k+2}C_{k}}{k!}(G_k(d)\tilde x +H_k(d)\beta_\perp)\\
    & + \tilde O\left(\sqrt{\frac{d}{N}}\right)\\
    = & c_d\langle\beta,\tilde x\rangle\beta + \tilde O\left(d^{-2}\right) + \tilde O\left(\sqrt{\frac{d}{N}}\right),
\end{align*}
where we used Corollary~\ref{cor:single_lowerterm_order} for the last equality, where $c_d=\tilde \Theta(d^{-1})$.

\end{proof}

\subsubsection{Concentration of Empirical Gradient}\label{subsec:t1_dist_end}
Now that we have computed the population gradient, we evaluate the concentration of the empirical gradient around the population gradient.

\begin{theorem}\label{th:emp_soft}
Under Assumptions~\ref{as:target}, \ref{as:target_H}, \ref{as:init}, \ref{as:alg}, ~\ref{as:al_init} and~\ref{as:surrogate}, with high probability, 
    \[G =c_d\langle\beta,\tilde x\rangle\beta+ \tilde O\left(d^{-\frac{1}{2}}N^{-\frac{1}{2}}  + d^{-2}+  d^{-\frac{1}{2}}{J^\ast}^{-\frac{1}{2}}\right).\]
\end{theorem}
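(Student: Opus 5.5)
We will bound $G-\hat G$ and then invoke Theorem~\ref{th:pop_grad}. By Corollary~\ref{cor:tilde_x_form}, $G=\frac{1}{J^\ast}\sum_{j=1}^{J^\ast}\Phi(w_j)$ with
\[
\Phi(w)\vcentcolon = g(w)\,h'(\langle w,\tilde x\rangle)+w\,h''(\langle w,\tilde x\rangle)\langle g(w),\tilde x\rangle ,\qquad g(w)=\frac{1}{N}\sum_n \hat f^\ast(x_n)x_n\sigma'(\langle w,x_n\rangle).
\]
We split
\[
G-\hat G = \underbrace{\frac{1}{J^\ast}\sum_j\bigl(\Phi(w_j)-\Phi_{\mathrm{pop}}(w_j)\bigr)}_{(\mathrm{I})}+\underbrace{\frac{1}{J^\ast}\sum_j\Phi_{\mathrm{pop}}(w_j)-\E_w[\Phi_{\mathrm{pop}}(w)]}_{(\mathrm{II})},
\]
where $\Phi_{\mathrm{pop}}$ is $\Phi$ with $g(w)$ replaced by $\bar g(w)=\E_x[\hat f^\ast(x)x\sigma'(\langle w,x\rangle)]$, so that $\E_w[\Phi_{\mathrm{pop}}]=\hat G$ by Definition~\ref{def:g_hat_h}.

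For term (I), we use the uniform concentration of $g(w)$ around $\bar g(w)$ over $w\in S^{d-1}$ established in \citet{D2022neural}, which is the same ingredient behind the $\tilde O(\sqrt{d/N})$ error in Lemma~\ref{lem:damian_pop_grad}. This gives $\sup_w\|g(w)-\bar g(w)\|=\tilde O(\sqrt{d/N})$ with high probability. A crude bound $|h'|\le M_1$ would then only give $\tilde O(\sqrt{d/N})$, whereas the target is $d^{-1/2}N^{-1/2}$. The main obstacle is gaining the extra factor $d^{-1}$. The plan is to average the deviation $\Delta(w)=g(w)-\bar g(w)$ against the weights $h'(\langle w_j,\tilde x\rangle)$ and exploit the fact that $\Delta(w_j)$ has random, nearly uncorrelated directions across different $w_j$.

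\begin{itemize}
\item Write $\Delta(w)=\frac1N\sum_n \bigl(Z_n(w)-\E Z_n(w)\bigr)$ with $Z_n(w)=\hat f^\ast(x_n)x_n\sigma'(\langle w,x_n\rangle)$.
\item Exchange the order of summation: $\frac1{J^\ast}\sum_j \Delta(w_j)h'(\langle w_j,\tilde x\rangle)=\frac1N\sum_n(\psi(x_n)-\E\psi)$ with $\psi(x)=\hat f^\ast(x)x\,\frac1{J^\ast}\sum_j\sigma'(\langle w_j,x\rangle)h'(\langle w_j,\tilde x\rangle)$.
\item Decompose the inner average as $\bar h'+\bigl(\text{fluctuation}\bigr)$, where $\bar h'=\frac{1}{J^\ast}\sum_j h'(\langle w_j,\tilde x\rangle)$. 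The constant part contributes $\bar h'\cdot\frac{1}{2}\cdot\frac1N\sum_n(\hat f^\ast(x_n)x_n-\E)$. This vanishes up to $\hat C_1$, which is exactly what the preprocessing $\gamma$ removes, so it is controlled via Lemma~\ref{lem:c0_c1_bounds}.
\item The remaining part of $\frac1{J^\ast}\sum_j\sigma'(\langle w_j,x\rangle)h'(\langle w_j,\tilde x\rangle)$ depends on $x$ only through its correlation with $\tilde x$, which is of order $d^{-1/2}$ (Lemma~\ref{lem:inner_product_whp}, Lemma~\ref{lem:order_g}-type Beta-integral computations). Applying sub-Weibull vector concentration (polynomial $\hat f^\ast$ of Gaussians) to $\psi$ then yields $\tilde O(d^{-1/2}N^{-1/2})$.
\item The second piece of $\Phi$ is handled the same way, using $|\langle\Delta(w),\tilde x\rangle|$, which is a scalar projection and hence of order $N^{-1/2}$, multiplied by the averaged $w_jh''$, which is of norm $O(d^{-1/2})$ after averaging.
\item The noise term $\epsilon$ from Lemma~\ref{lem:tilde_x_form} is treated identically, since $\epsilon_n$ are independent symmetric signs.
\end{itemize}

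For term (II), $\Phi_{\mathrm{pop}}(w_j)$ are i.i.d.\ vectors in $\mathbb R^d$. By Lemma~\ref{lem:damian_pop_grad} and Lemma~\ref{lem:order1_ck}, $\|\bar g(w)\|\lesssim\sum_k|\langle\beta,w\rangle|^k+|\langle\beta,w\rangle|^2$, which is $\tilde O(d^{-1/2})$ with high probability over $w$ by Lemma~\ref{lem:inner_product_whp}. Hence $\|\Phi_{\mathrm{pop}}(w)\|=\tilde O(d^{-1/2})$ with high probability, and a vector Bernstein/Hoeffding bound (after truncating on this high-probability event, with a negligible bias term) gives $\|(\mathrm{II})\|=\tilde O(d^{-1/2}{J^\ast}^{-1/2})$. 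One subtlety: the $w_j$ are only independent across the $J^\ast=LJ/2$ distinct draws. Corollary~\ref{cor:tilde_x_form} guarantees exactly this, so the count is correct.

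Finally, combining $G=\hat G+(\mathrm{I})+(\mathrm{II})$ with Theorem~\ref{th:pop_grad} gives the stated result. In that combination we note that the $\sqrt{d/N}$ term there stems only from the $\hat C_0,\hat C_1$ part of Lemma~\ref{lem:damian_pop_grad}. When this part is integrated against $h'(\langle w,\tilde x\rangle)$ and $wh''$, rather than bounded crudely, it is down-weighted by the same $d^{-1}$ factor. This yields the $\tilde O(d^{-1/2}N^{-1/2})$ rate in the statement. The step I expect to be hardest, and would verify first, is this $d^{-1}$ gain in both (I) and the $\hat C_1$ contribution.
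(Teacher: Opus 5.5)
The step that fails is the one you flagged as hardest: the extra factor $d^{-1}$ on the $N$-dependent error.

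In your expansion of (I), the constant part is $\tfrac{\bar h'}{2}\bigl(\frac1N\sum_n\hat f^\ast(x_n)x_n-\hat C_1\bigr)$, using Stein's lemma $\E_x[\hat f^\ast(x)x]=\hat C_1$. This is a centered average of $N$ vectors of norm $\asymp\sqrt d$. A direct computation shows that, up to $O(d/N)$, it equals $\frac1N\sum_n f_{\ge2}(x_n)x_n$, where $f_{\ge2}(x)=f^\ast(x)-\bar C_0-\langle\bar C_1,x\rangle$. Since $\E[f_{\ge2}^2]\ge\|H\|_F^2/2>0$, its norm is of order $\sqrt{d/N}$.

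The preprocessing makes the mean $\hat C_1$ small, not the fluctuation around it. Lemma~\ref{lem:c0_c1_bounds} itself only gives $\|\hat C_1\|=\tilde O(\sqrt{d/N})$. Since $\bar h'=\Theta(1)$ for softplus-type surrogates (about $1/2$), nothing in your remaining bullets recovers the missing factor.

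Your closing claim about $\hat G$ fails for the same reason. The $\hat C_1$ part of Lemma~\ref{lem:damian_pop_grad} enters $\hat G$ as $\E_w\bigl[\tfrac12\hat C_1h'(\langle w,\tilde x\rangle)\bigr]=\tfrac12\E_t[h'(t)]\,\hat C_1$, with no down-weighting. Only the $w\hat C_0$ piece (through $\E_t[th'(t)]=O(d^{-1})$) and the $h''$ pieces are actually suppressed.

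In fact, the $N$-part of the displayed rate does not hold in general. Adding your two pieces, the $\hat C_1$ terms cancel, and to leading order $G$ contains
$\tfrac12\E_t[h'(t)]\cdot\frac1N\sum_n\hat f^\ast(x_n)x_n=\tfrac12\E_t[h'(t)]\bigl((I-\hat\Sigma)\gamma-\alpha\bar x-\frac1N\sum_n\epsilon_nx_n\bigr)$,
where $\hat\Sigma=\frac1N\sum_nx_nx_n^\top$ and $\bar x=\frac1N\sum_nx_n$. Since $\zeta>0$, the noise part alone has norm $\asymp\zeta\sqrt{d/N}$, in a direction unrelated to $\beta$. For example, $N=d^4$ gives $d^{-3/2}\gg d^{-2}$.

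What the paper's proof actually establishes is $\tilde O(d^{1/2}N^{-1/2}+d^{-2}+d^{1/2}{J^\ast}^{-1/2})$. This is the rate recorded in Theorem~\ref{ap_th:t1_distil} and used downstream, so the $d^{-1/2}$ exponents in the statement should be read as $d^{1/2}$. The paper gets there crudely:
$\Delta_{1,1}$ and $\Delta_{2,1}$ come from the uniform bound $\sup_w\|g(w)-\bar g(w)\|=\tilde O(\sqrt{d/N})$ of \citet{D2022neural}, times $\sup|h'|$ and $\sup|h''|$.
$\Delta_{1,2}$ and $\Delta_{2,2}$ come from an $\epsilon$-net over $\tilde x\in S^{d-1}$ with a sub-Gaussian union bound.

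You should bound (I) the same crude way.

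Your treatment of (II) is fine, provided you keep the $\hat C_0,\hat C_1$ pieces of $\bar g$; they only add $\tilde O(\sqrt{d/N}\,{J^\ast}^{-1/2})$. Working at fixed $\tilde x$ with vector Bernstein, it even gives the sharper $d^{-1/2}{J^\ast}^{-1/2}$, because it avoids the $\sqrt d$ the paper pays for uniformity in $\tilde x$.

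One minor point: your claim that the fluctuating weight depends on $x$ only through $\langle x,\tilde x\rangle$ holds only after averaging over $w$, not for the finite sum over $w_j$. That is harmless here.
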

\begin{proof}
    Since
    \[
    G = \hat G + G - \hat G \le \hat G + \sup_{\tilde x\in S^{d-1}}\|G-\hat G\|,
    \]
    and the first term is already evaluated in Theorem~\ref{th:pop_grad}, we will evaluate the second. From the form of $G$ and $\hat G$, we can divide it into the following two terms:
    \begin{align*}
    \Delta_1\vcentcolon = &\sup_{\tilde x}\left \|\frac{1}{J^\ast}\sum_{j}g_j h'(\langle w_{j},\tilde x\rangle) -\E_w\left[\E_x\left[\hat f^\ast(x)x\sigma'(\langle w,x\rangle)\right]h'(\langle w,\tilde x\rangle)\right] \right \|,\\
    \Delta_2 \vcentcolon = &\sup_{\tilde x}\left \| \frac{1}{J^\ast}\sum_{j}w_{j}h''(\langle w_{j},\tilde x\rangle)\left\langle g_j,\tilde x\right\rangle - \E_w\left[wh''(\langle w,\tilde x\rangle)\left \langle\E_x\left[\hat f^\ast(x)x\sigma'(\langle w,x\rangle)\right],\tilde x \right\rangle\right] \right\|,
    \end{align*}
    such that $\sup_{\tilde x}\| G -\hat  G\|\le \Delta_1+\Delta_2$, and $g_j = \frac{1}{N}\sum_n\hat f^\ast (x_n)\sigma'(\langle w_j, x_n\rangle)$.
    \par Let us first evaluate $\Delta_1$.
    \begin{align*}
        \Delta_1 = & \sup_{\tilde x\in S^{d-1}}\left\|\frac{1}{J^\ast N}\sum_{j,n} \hat f^\ast(x_n)x_n\sigma'(\langle w_{j},x_n\rangle) h'(\langle w_{j},\tilde x\rangle)- \E_{w,x}\left[\hat f^\ast(x)x\sigma'(\langle w,x\rangle)h'(\langle w,\tilde x\rangle)\right]\right\|\\
         =  & \sup_{\tilde x\in S^{d-1}}\left\|\frac{1}{J^\ast N}\sum_{j,n} \hat f^\ast(x_n)x_n\sigma'(\langle w_{j},x_n\rangle) h'(\langle w_{j},\tilde x\rangle)- \frac{1}{J^\ast}\sum_j\E_{x}\left[\hat f^\ast(x)x\sigma'(\langle w_{j},x\rangle)h'(\langle w_{j},\tilde x\rangle)\right]\right\|\\
         & + \sup_{\tilde x\in S^{d-1}}\left\|\frac{1}{J^\ast}\sum_j\E_{x}\left[\hat f^\ast(x)x\sigma'(\langle w_{j},x\rangle)h'(\langle w_{j},\tilde x\rangle)\right]- \E_{w,x}\left[\hat f^\ast(x)x\sigma'(\langle w,x\rangle)h'(\langle w,\tilde x\rangle)\right]\right\|.
    \end{align*}
    We define the first term of the right hand side as $\Delta_{1,1}$ and the second as $\Delta_{1,2}$. As for $\Delta_{1,1}$,
    \begin{align*}
         \Delta_{1,1} = & \sup_{\tilde x\in S^{d-1}}\left\|\frac{1}{J^\ast}\sum_j\left\{\frac{1}{N}\sum_n \hat f^\ast(x_n)x_n\sigma'(\langle w_{j},x_n\rangle) h'(\langle w_{j},\tilde x\rangle)- \E_{x}\left[\hat f^\ast(x)x\sigma'(\langle w_{j},x\rangle)h'(\langle w_{j},\tilde x\rangle)\right]\right\}\right\|\\
         \le &  \sup_{\tilde x\in S^{d-1}}\left\|\sup_{w\in S^{d-1}} \left \|\frac{1}{N}\sum_n \hat f^\ast(x_n)x_n\sigma'(\langle w_{j},x_n\rangle) h'(\langle w_{j},\tilde x\rangle)- \E_{x}\left[\hat f^\ast(x)x\sigma'(\langle w_{j},x\rangle)h'(\langle w_{j},\tilde x\rangle)\right]\right\|\right\|\\
         = &  \sup_{\tilde x\in S^{d-1}}\left\|\sup_{w\in S^{d-1}} \left \|\left\{\frac{1}{N}\sum_n \hat f^\ast(x_n)x_n\sigma'(\langle w_{j},x_n\rangle) - \E_{x}\left[\hat f^\ast(x)x\sigma'(\langle w_{j},x\rangle)\right]\right\}h'(\langle w_{j},\tilde x\rangle)\right\|\right\|\\
         \le &  \sup_{\tilde x\in S^{d-1}}\left\|\sup_{w\in S^{d-1}} \left \|\frac{1}{N}\sum_n \hat f^\ast(x_n)x_n\sigma'(\langle w_{j},x_n\rangle) - \E_{x}\left[\hat f^\ast(x)x\sigma'(\langle w_{j},x\rangle)\right]\right\|\right\|,
    \end{align*}
    where for the last inequality we used that the derivative of $h$ is bounded.
    Now, from Lemma 32 of~\citet{D2022neural}, we know that with high probability, 
    \[
    \sup_{w\in S^{d-1}} \left \|\frac{1}{N}\sum_n \hat f^\ast(x_n)x_n\sigma'(\langle w_{j},x_n\rangle) - \E_{x}\left[\hat f^\ast(x)x\sigma'(\langle w_{j},x\rangle)\right]\right\| = \tilde O\left(\sqrt{\frac{d}{N}}\right).
    \]
    As a result, 
\begin{equation}
    \Delta_{1,1}= \tilde O\left(\sqrt{\frac{d}{N}}\right).\label{eq:delta11}
\end{equation}
Let us now focus on
    \[
    \Delta_{1,2} = \sup_{\tilde x\in S^{d-1}}\left\|\frac{1}{J^\ast}\sum_j\E_{x}\left[\hat f^\ast(x)x\sigma'(\langle w_{j},x\rangle)h'(\langle w_{j},\tilde x\rangle)\right]- \E_{w,x}\left[\hat f^\ast(x)x\sigma'(\langle w,x\rangle)h'(\langle w,\tilde x\rangle)\right]\right\|.
    \]
    We will proceed analogously to the proof of Lemma 32 of~\citet{D2022neural}. We define $Y(\tilde x)$ as
    \[
    Y(\tilde x)\vcentcolon = \frac{1}{J^\ast}\sum_j\E_{x}\left[\hat f^\ast(x)x\sigma'(\langle w_{j},x\rangle)h'(\langle w_{j},\tilde x\rangle)\right] = \frac{1}{J^\ast}\sum_j\E_{x}\left[\hat f^\ast(x)x\sigma'(\langle w_{j},x\rangle)\right]h'(\langle w_{j},\tilde x\rangle).
    \]
    Moreover, we define an $\epsilon$-net $\mathcal{N}_\epsilon$ such that for all $\tilde x\in S^{d-1}$, there exists $\hat x\in \mathcal{N}_\epsilon$ such that $\|\tilde x-\hat x\|\le \epsilon$. By a standard argument, such net exist with size $|\mathcal N_\epsilon|\le \e^{Cd\log(1/\epsilon)}$, for some constant $C$.
    \par Here, we will set $\epsilon =\sqrt{\frac{d}{J^\ast}}$, and also denote $\mathcal{N}_{\frac{1}{4}}$, the minimal $\frac{1}{4}$-net of $S^{d-1}$ with $|\mathcal{N}_{\frac{1}{4}}|\le \e^{Cd}$.
    
    Then,
    \begin{align*}
        \Delta_{1,2} = &\sup_{\tilde x\in S^{d-1}} \left\|\frac{1}{J^\ast}\sum_j\E_{x}\left[\hat f^\ast(x)x\sigma'(\langle w_{j},x\rangle)h'(\langle w_{j},\tilde x\rangle)\right]- \E_{w,x}\left[\hat f^\ast(x)x\sigma'(\langle w,x\rangle)h'(\langle w,\tilde x\rangle)\right]\right\| \\
        = &  \sup_{\tilde x\in S^{d-1}}\left\|Y(\tilde x) - \E_w[Y(\tilde x)]\right \|\\
        = & \sup_{\tilde x\in S^{d-1}}\sup_{u\in S^{d-1}}\left \langle u,Y(\tilde x) - \E_w[Y(\tilde x)]\right\rangle\\
        \le & 2\sup_{\tilde x\in S^{d-1}} \sup_{u\in \mathcal N_{1/4}}\left \langle u,Y(\tilde x) - \E_w[Y(\tilde x)]\right\rangle.
    \end{align*}
    Since $h'$ is $O(1)$-Lipschitz, for a fixed $u\in S^{d-1}$, $Y_w(\tilde x)\vcentcolon = \E_{x}\left[\hat f^\ast(x)\langle u, x\rangle\sigma'(\langle w,x\rangle)h'(\langle w,\tilde x\rangle)\right]$ is also $O(1)$-Lipschitz. Indeed, 
    \begin{align*}
        \left| Y_w(\tilde x_1) - Y_w(\tilde x_2 )\right | = & |\E_{x}\left[\hat f^\ast(x)\langle u, x\rangle\sigma'(\langle w,x\rangle)\right]\left|h'(\langle w,\tilde x_1\rangle) - h'(\langle w,\tilde x_2\rangle)\right|\\
        \lesssim & \left |\E_{x}\left[\hat f^\ast(x)\langle u, x\rangle\sigma'(\langle w,x\rangle)\right]\right|\|\tilde x_1 -\tilde x_2\|\\
        \le &  \E_{x}\left[\hat f^\ast(x)^2\right]^{1/2} \E_{x}\left[\langle u, x\rangle^2 \mathbbm{1}_{\langle w,x\rangle>0}\right]^{1/2}\|\tilde x_1 -\tilde x_2\|,
    \end{align*}
    where for the last inequality we used Cauchy-Shwarz inequality. As $\langle u, x\rangle\sim N(0,1)$, by symmetry,  $\E_{x}\left[\langle u, x\rangle^2 \mathbbm{1}_{\langle w,x\rangle>0}\right] =\frac{1}{2}\E_x[\langle u, x\rangle^2] = \frac{1}{2}$, which implies
    \begin{equation}\label{eq:bound_gradient}
        \left |\E_{x}\left[\hat f^\ast(x)\langle u, x\rangle\sigma'(\langle w,x\rangle)\right]\right| \le \frac{1}{\sqrt 2},
    \end{equation}
    which proves the $O( 1)$-Lipschitzness of $Y_w(\tilde x)$. Consequently,
    $\langle u, Y(\tilde x)\rangle$ and $\langle u, \E_w[Y(\tilde x )]\rangle$ are also $O(1)$-Lipschitz since Lipschitzness is preserved under mean and expectation, and  $\langle u, Y(\tilde x) - \E_w[Y(\tilde x )]\rangle$ as well since the difference of two $L$-Lipschitz functions is $2L$-Lipschitz. 
    \par Now, for all $\tilde x\in S^{d-1}$ and any $u$, there exist a $\hat x\in \mathcal N _\epsilon$ such that $\|\tilde x - \hat x\|\le \epsilon$. Combining with the Lipschitzness of  $\langle u,Y(\tilde x) - \E_w[Y(\tilde x )]\rangle$, we obtain
\begin{align*}
    |\langle u,Y(\tilde x) - \E_w[Y(\tilde x)]\rangle| = & |\langle u,Y(\hat x) - \E_w[Y(\hat x)]\rangle| + |\langle u,Y(\tilde x) - \E_w[Y(\tilde x)]\rangle - \langle u,Y(\hat x) - \E_w[Y(\hat x)]\rangle|\\
    \le & |\langle u,Y(\hat x) - \E_w[Y(\hat x)]\rangle| + O(  \epsilon)\\
    \le & \sup_{\hat x \in \mathcal N_\epsilon}\sup_{u\in \mathcal N_{1/4}}|\langle u,Y(\hat x) - \E_w[Y(\hat x)]\rangle| + O(  \epsilon).\\
\end{align*}
    Therefore, 
    \begin{align*}
        \sup_{\tilde x\in S^{d-1}} \sup_{u\in \mathcal N_{1/4}}\langle u, Y(\tilde x) - \E_w[Y(\tilde x)]\rangle \le \sup_{\tilde x \in \mathcal N_\epsilon}\sup_{u\in \mathcal N_{1/4}}|\langle u, Y(\hat x) - \E_w[Y(\hat x)]\rangle| + O(  \epsilon).
    \end{align*}
    Let us denote $Z_j(\hat x)\vcentcolon = \E_{x}\left[\hat f^\ast(x)\langle u,x\rangle\sigma'(\langle w_{j},x\rangle)h'(\langle w_{j},\hat x\rangle)\right]$. Since $\E_{x}\left[\hat f^\ast(x)\langle u,x\rangle\sigma'(\langle w_{j},x\rangle)\right]\le \frac{1}{2}$ as proved earlier in equation~\eqref{eq:bound_gradient} and  $\sup_{|z|\le 1}h'(z)< \infty$, $Z_j$ is bounded and $O(1)$-sub gaussian.
    \par As a result, for each $u\in \mathcal N_\frac{1}{4}$,
    \[
    \frac{1}{J^\ast}\sum_jZ_j(\hat x) -\E[Z_j(\hat x)]  = \langle u,Y(\hat x) - \E_w[Y(\hat x)]\rangle \lesssim \sqrt{\frac{2z}{J^\ast}},
    \]
    with probability $1-2\e^{-z}$. By taking the union bound over $\mathcal N_\epsilon$ and $\mathcal N_{1/4}$, we obtain that with probability $1-2\e^{Cd\log(J^\ast/\epsilon)-z}$,
    \[
    2\sup_{\tilde x \in \mathcal N_\epsilon,u\in \mathcal{N}_\frac{1}{4}}\langle u,Y(\tilde x) - \E_w[Y(\tilde x)]\rangle\lesssim  \sqrt{\frac{2z}{J^\ast}}.
    \]
    We choose $z = \tilde \Theta(Cd\log(J^\ast/\epsilon))$ and $\epsilon = \gamma^{-1}\sqrt{d/J^\ast}$ to obtain with high probability
\begin{equation}
    \Delta_{1,2} = \sup_{\tilde x\in S^{d-1}}\left\|Y(\tilde x) - \E_w[Y(\tilde x)]\right\| = \tilde O\left( \sqrt{\frac{d}{J^\ast}}\right).\label{eq:delta12}
\end{equation}
    We combine the obtained two inequalities~\eqref{eq:delta11} and~\eqref{eq:delta12} to derive 
\begin{equation}
    \Delta_1 \le \Delta_{1,1} + \Delta_{1,2} = \tilde O\left(\sqrt{\frac{d}{N}}+\sqrt{\frac{d}{J^\ast}}\right).\label{eq:delta1}
\end{equation}
We will now focus on the second term 
    \[
    \Delta_2 \vcentcolon = \sup_{\tilde x}\left \| \frac{1}{J^\ast}\sum_{j}w_{j}h''(\langle w_{j},\tilde x\rangle)\left\langle g_j,\tilde x\right\rangle - \E_w\left[wh''(\langle w,\tilde x\rangle)\left \langle\E\left[\hat f^\ast(x)x\sigma'(\langle w,x\rangle)\right],\tilde x \right\rangle\right] \right\|.
    \]
    We can similarly decompose this into two different terms
\begin{align*}
    \Delta_{2,1}\vcentcolon = & \sup_{\tilde x}\left \| \frac{1}{J^\ast}\sum_{j}w_{j}h''(\langle w_{j},\tilde x\rangle)\left\langle g_j,\tilde x\right\rangle -  \frac{1}{J^\ast}\sum_{j}w_{j}h''(\langle w_{j},\tilde x\rangle)\left \langle\E\left[\hat f^\ast(x)x\sigma'(\langle w_{j},x\rangle)\right],\tilde x \right\rangle \right\|,\\
    \Delta_{2,2}\vcentcolon = &\sup_{\tilde x}\left \| \frac{1}{J^\ast}\sum_{j}w_{j}h''(\langle w_{j},\tilde x\rangle)\left \langle\E\left[\hat f^\ast(x)x\sigma'(\langle w_{j},x\rangle)\right],\tilde x \right\rangle- \E_w\left[wh''(\langle w,\tilde x\rangle)\left \langle\E\left[\hat f^\ast(x)x\sigma'(\langle w,x\rangle)\right],\tilde x \right\rangle\right] \right\|.
\end{align*}
    In order to keep the notation simple, we define 
    \[
    \hat g_j\vcentcolon = \E_x\left[\hat f^\ast(x)x\sigma'(\langle w_{j},x\rangle)\right].
    \]
    Concerning the first term,
    \begin{align*}
        \Delta_{2,1} = & \sup_{\tilde x}\left \| \frac{1}{J^\ast}\sum_{j}w_{j}h''(\langle w_{j},\tilde x\rangle)\left\langle g_j,\tilde x\right\rangle -  \frac{1}{J^\ast}\sum_{j}w_{j}h''(\langle w_{j},\tilde x\rangle)\left \langle \hat g_j,\tilde x \right\rangle \right\|\\
        = & \sup_{\tilde x}\left \| \frac{1}{J^\ast}\sum_{j}w_{j}h''(\langle w_{j},\tilde x\rangle)\left\langle g_j- \hat g_j,\tilde x \right\rangle \right\|.
    \end{align*}
    Under the same conditions as the first half of the proof, with high probability, we already have 
\begin{align*}
     \Delta_{2,1} \le &  \sup_{\tilde x}\frac{1}{J^\ast}\sum_{j}\left \| h''(\langle w_{j},\tilde x\rangle)w_{j}\left\langle g_j- \hat g_j,\tilde x \right\rangle \right\|\\
     \le &  \sup_{\tilde x}\frac{1}{J^\ast}\sum_{j}| h''(\langle w_{j},\tilde x\rangle)|\|w_{j}\| \|g_j- \hat g_j\|\|\tilde x\|\\
     \lesssim & \sqrt{\frac{d}{N}}\sup_{\tilde x} \frac{1}{J^\ast}\sum_{j}h''(\langle w_{j},\tilde x\rangle),
\end{align*}
where we used Lemma 32 of~\citet{D2022neural}, $\|w_j\|=\|\tilde x\|=1$ for the third inequality. Since $h''$ is bounded,
\begin{equation}
    \Delta_{2,1}\lesssim \sqrt{\frac{d}{N}}.\label{eq:delta21}
\end{equation}
\par As for $\Delta_{2,2}$, 
\begin{align*}
   \Delta_{2,2}  = & \left\| \frac{1}{J^\ast}\sum_{j}w_{j} h''(\langle w_{j},\tilde x\rangle)\left \langle \hat g_j,\tilde x \right\rangle- \E_w\left[wh''(\langle w,\tilde x\rangle)\left \langle \hat g_j,\tilde x \right\rangle\right] \right\| \\
    = &  \sup_{u\in S^{d-1}} \left\langle u, \frac{1}{J^\ast}\sum_{j}w_{j}h''(\langle w_{j},\tilde x\rangle)\left \langle \hat g_j,\tilde x \right\rangle- \E_w\left[wh''(\langle w,\tilde x\rangle)\left \langle \hat g_j,\tilde x \right\rangle\right] \right\rangle \\
    = & \sup_{u\in S^{d-1}} \frac{1}{J^\ast}\sum_{j}\langle u,w_{j}\rangle h''(\langle w_{j},\tilde x\rangle)\left \langle \hat g_j,\tilde x \right\rangle- \E_w\left[\langle u,w_{j}\rangle h''(\langle w,\tilde x\rangle)\left \langle \hat g_j,\tilde x \right\rangle\right]\\
    \le  & 2\sup_{u\in \mathcal N_{1/4}} \frac{1}{J^\ast}\sum_{j}\langle u,w_{j}\rangle h''(\langle w_{j},\tilde x\rangle)\left \langle \hat g_j,\tilde x \right\rangle- \E_w\left[\langle u,w_{j}\rangle h''(\langle w,\tilde x\rangle)\left \langle \hat g_j,\tilde x \right\rangle\right].
\end{align*}
We define
\[
Z_{j}(u,\tilde x)\vcentcolon = \langle u,w_{j}\rangle h''(\langle w_{j},\tilde x\rangle)\left \langle \hat g_j,\tilde x \right\rangle- \E_w\left[\langle u,w_{j}\rangle h''(\langle w,\tilde x\rangle)\left \langle \hat g_j,\tilde x \right\rangle\right].
\]
Then,
\begin{align*}
    |Z_{j}(u,\tilde x)|\le &\left |\langle u,w_{j}\rangle h''(\langle w_{j},\tilde x\rangle)\left \langle \hat g_j,\tilde x \right\rangle\right |+\left | \E_w\left[\langle u,w_{j}\rangle h''(\langle w,\tilde x\rangle)\left \langle \hat g_j,\tilde x \right\rangle\right]\right|\\
    \le &\left |\langle u,w_{j}\rangle\right|\left | h''(\langle w_{j},\tilde x\rangle)\right|\left |\left \langle \hat g_j,\tilde x \right\rangle\right |+\E_w\left[\left |\langle u,w_{j}\rangle\right|\left | h''(\langle w,\tilde x\rangle)\right|\left |\left \langle \hat g_j,\tilde x \right\rangle\right|\right]\\
    \le & 2\cdot 1\cdot \sup_{|z|\le1}|h''(z)| \cdot \frac{1}{\sqrt{2}}\\
    \lesssim & 1,
\end{align*}
where we used $|\langle u,\tilde x\rangle|\le 1$, and $\left \langle \hat g_j,\tilde x \right\rangle\le {1}/{\sqrt{2}}$ from Equation~\eqref{eq:bound_gradient} for the fourth inequality, and that $h''$ is bounded on the compact set for the last inequality. Since $h''(\langle w_{j},\tilde x\rangle)$ and $\left \langle \hat g_j,\tilde x \right\rangle$ are bounded, and Lipschitz ($C^2$ functions on a compact set is Lipschitz), $ Z_{j}(u,\tilde x)$ is $O(1)$-Lipschitz with respect to $\tilde x$. Therefore, similarly to the previous argument, we obtain
\begin{align*}
     \sup_{\tilde x \in S^{d-1}}\sup_{u\in \mathcal N_{1/4}}\frac{1}{J^\ast}\sum_jZ_{j}(u,\tilde x) = & \sup_{\hat x \in \mathcal N_\epsilon}\sup_{u\in \mathcal N_{1/4}}\frac{1}{J^\ast}\sum_jZ_{j}(u,\hat x) + O( \epsilon).
\end{align*}
Since $Z_j(u,\tilde x) $ is $O(1)$-sub Gaussian, for each $u\in\mathcal N_{1/4}$, we have with probability $1-2\e^{-z}$
 \[
 \frac{1}{J^\ast}\sum_jZ_{j}(u,\tilde x)\lesssim \sqrt{\frac{2z}{J^\ast}},
 \]
which by union bound over $\mathcal N_\epsilon$ and $\mathcal N_{1/4}$ and setting $z$ and $\epsilon$ accordingly leads to 
\begin{equation}
    \Delta_{2,2} =\tilde O\left(\sqrt{\frac{d}{J^\ast}}\right).\label{eq:delta22}
\end{equation}
By combining the two inequalities~\eqref{eq:delta21} and~\eqref{eq:delta22}, we obtain
\begin{equation}
    \Delta_2 = \tilde O\left(\sqrt{\frac{d}{N}} + \sqrt{\frac{d}{J^\ast}}\right).\label{eq:delta2}
\end{equation}

Finally, combining the three bounds~\eqref{eq:delta1} and~\eqref{eq:delta2} with Lemma~\ref{th:pop_grad}, we conclude the desired result.

\end{proof}
\begin{lemma}\label{lem:epsilon_soft}
    Under Assumptions~\ref{as:target}, \ref{as:target_H}, \ref{as:init}, \ref{as:alg}, ~\ref{as:al_init} and~\ref{as:surrogate}, with high probability,
    \[
    \epsilon = \tilde O\left(d^{\frac{1}{2}}N^{-\frac{1}{2}} +d^{\frac{1}{2}}{J^\ast}^{-\frac{1}{2}} \right).
    \]
    where $\epsilon$ is defined in Corollary~\ref{cor:tilde_x_form}. 
\end{lemma}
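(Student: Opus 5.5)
The bound on $\epsilon$ will follow the same two-term decomposition used in the proof of Theorem~\ref{th:emp_soft}, with the target $\hat f^\ast$ replaced by the label noise. Define
\[
e_j \vcentcolon= \frac{1}{N}\sum_n \epsilon_n x_n\sigma'(\langle w_j,x_n\rangle).
\]
Then $\epsilon = \epsilon_a + \epsilon_b$, where
\[
\epsilon_a = \frac{1}{J^\ast}\sum_j e_j h'(\langle w_j,\tilde x\rangle), \qquad \epsilon_b = \frac{1}{J^\ast}\sum_j w_j h''(\langle w_j,\tilde x\rangle)\langle e_j,\tilde x\rangle .
\]
The key observation is that the population counterpart of $e_j$ vanishes. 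Indeed, $\epsilon_n$ is a symmetric $\pm\zeta$ variable independent of $x_n$, so $\E[\epsilon x\sigma'(\langle w,x\rangle)]=0$ for every $w$. Consequently no population term survives and only fluctuations remain.

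The first step is a uniform bound $\sup_{w\in S^{d-1}}\|e(w)\| = \tilde O(\sqrt{d/N})$ with high probability. I would obtain it exactly as Lemma 32 of \citet{D2022neural} is used for $g_j-\hat g_j$ in $\Delta_{1,1}$, since that argument only needs the labels to be bounded and sub-Gaussian, which $\epsilon_n$ is. Alternatively, condition on $\{x_n\}$. For fixed $u$ and $w$, the quantity $\langle u, e(w)\rangle$ is a Rademacher sum with weights $\zeta\langle u,x_n\rangle\sigma'(\langle w,x_n\rangle)/N$, which has variance proxy $\lesssim \zeta^2/N$ with high probability. A union bound over a $1/4$-net in $u$, together with an argument for $w$, then gives the $\sqrt{d/N}$ rate.

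The main obstacle is the discontinuity of $\sigma'$ in $w$. I would handle it as in \citet{D2022neural}: the vectors $(\sigma'(\langle w,x_n\rangle))_n$ range over at most $N^{O(d)}$ sign patterns (VC/hyperplane arrangement counting), so a union bound costs only a $\log$ factor, $d\log N$, which is absorbed into $\tilde O$.

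Given this uniform bound, the rest is a direct estimate. Since $h'$ and $h''$ are bounded on $[-1,1]$ and $\|w_j\|=\|\tilde x\|=1$, we get $\|\epsilon_a\|\le \sup_{|z|\le1}|h'(z)|\,\sup_w\|e(w)\| = \tilde O(\sqrt{d/N})$, and likewise $\|\epsilon_b\|\le \sup_{|z|\le 1}|h''(z)|\,\sup_w\|e(w)\| = \tilde O(\sqrt{d/N})$. This matches the estimates for $\Delta_{1,1}$ and $\Delta_{2,1}$.

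The $d^{1/2}{J^\ast}^{-1/2}$ term in the statement then comes for free, since the bound above already dominates what is claimed. If one prefers to mirror the structure of Theorem~\ref{th:emp_soft}, one can instead center around $\E_w$ and reuse the $\epsilon$-net and sub-Gaussian argument of $\Delta_{1,2}$ and $\Delta_{2,2}$ verbatim. That route yields an additional $\tilde O(\sqrt{d/J^\ast})$ fluctuation term. Either way, summing the two contributions gives $\epsilon=\tilde O(d^{1/2}N^{-1/2}+d^{1/2}{J^\ast}^{-1/2})$ with high probability.
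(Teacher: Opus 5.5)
Your proposal is correct and follows the paper's argument. You use the same split into the $h'$ and $h''$ terms; the paper bounds the $h'$ term by the uniform noise bound, citing Lemma 33 of \citet{D2022neural}, the noise analogue of the Lemma 32 you mention, and handles the $h''$ term like $\Delta_2$ in Theorem~\ref{th:emp_soft}. Your observation that $\E[\epsilon x\sigma'(\langle w,x\rangle)]=0$ makes the $\Delta_{2,2}$-type fluctuation vanish identically is correct, so you actually obtain the sharper $\tilde O(d^{1/2}N^{-1/2})$; the paper's extra $d^{1/2}{J^\ast}^{-1/2}$ term is only an artifact of reusing the $\Delta_2$ template.
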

\begin{proof}
The statement follows by showing that with high probability,
\[
    \sup_{w,x}\left\|\frac{1}{J^\ast N}\sum_{j,n}\epsilon_nx_n\sigma'(\langle w_{i,j},x_n\rangle)h'(\langle w_{i,j},\tilde x\rangle)\right\|= \tilde O\left(d^{\frac{1}{2}}N^{-\frac{1}{2}}\right),
    \]
    and 
    \[
    \sup_{w,x}\left\|\frac{1}{J^\ast}\sum_{j}w_{i,j}h''(\langle w_{i,j},\tilde x\rangle)\left\langle\frac{1}{N}\sum_n \epsilon_nx_n\sigma'(\langle w_{i,j},x_n\rangle),\tilde x\right\rangle\right\|= \tilde O\left(d^{\frac{1}{2}}N^{-\frac{1}{2}} + d^{\frac{1}{2}}{J^\ast}^{-\frac{1}{2}}\right).
    \]
    The first equation is a direct consequence of Lemma 33 of~\citet{D2022neural} since $\|h'\|_\infty=1$. The second inequality can be shown with the same approach as the derivation of $\Delta_2$ in the proof of Lemma~\ref{th:emp_soft} by replacing $\hat f^\ast$ with $\epsilon_n$ which is bounded.  
\end{proof}
Finally, we can prove Theorem~\ref{ap_th:t1_distil}.
\begin{proof}[Proof of Theorem~\ref{ap_th:t1_distil}]
    We just need to substitute $G$ and $\epsilon$ in Corollary~\ref{cor:tilde_x_form} with the bounds we obtained in Theorem~\ref{th:emp_soft} and Lemma~\ref{lem:epsilon_soft}.
\end{proof}
Therefore, the first phase of DD captures the latent structure of the original problem and translates it into the input space.

\subsubsection{Concentration of Empirical Gradient (for softplus)}
In this part, we complement the previous proof by showing a version customized for the softplus activation function $h(z)=\frac{1}{\gamma_s}\log(1+\e^{\gamma_s  z})$. We show that the dependence of $\gamma_s $ can be absorbed into $J^\ast$, the parameter that appears as a feature of distillation. 

\begin{theorem}
Under Assumptions~\ref{as:target}, \ref{as:target_H}, \ref{as:init}, \ref{as:alg} and ~\ref{as:al_init}, if we use the softplus activation function $h(z)=\frac{1}{\gamma_s }\log(1+\e^{\gamma_s  z})$, then with high probability,
    \[G =c_d\langle\beta,\tilde x\rangle\beta+ \tilde O\left(dN^{-\frac{1}{2}}  + \gamma_s  d^{-2}+  \gamma_s  ^{\frac{1}{2}}d^{\frac{3}{4}}{J^{\ast}}^{-\frac{1}{2}}  + \gamma_s  dJ^{\ast-1} \right).\]
\end{theorem}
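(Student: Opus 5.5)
We follow the structure of the proof of Theorem~\ref{th:emp_soft}, but track the dependence on the softplus sharpness $\gamma_s$ explicitly. For $h(z)=\gamma_s^{-1}\log(1+\e^{\gamma_s z})$ we have $h'(z)=(1+\e^{-\gamma_s z})^{-1}\in(0,1)$ and $h''(z)=\gamma_s h'(z)(1-h'(z))\le \gamma_s/4$. Moreover, $h'$ is $O(\gamma_s)$-Lipschitz and $h''$ is $O(\gamma_s^2)$-Lipschitz. The constants $M_1,M_2$ of Lemma~\ref{lem:order_g} therefore become $M_1=O(1)$ and $M_2=O(\gamma_s)$. As before, we decompose $G=\hat G+(G-\hat G)$ and bound the two parts separately.

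\textbf{Population term.} We redo Corollary~\ref{cor:hat_g_pop} and Theorem~\ref{th:pop_grad} with these constants. The leading term $c_d\langle\beta,\tilde x\rangle\beta$ comes from $D_1,D_3,A_2,B_2,E_1,F_1$ as in Corollary~\ref{cor:single_lowerterm_order}. The remaining terms are of order $d^{-2}$ times $M_1$ or $M_2$, so they contribute $\tilde O(\gamma_s d^{-2})$. We must check that $c_d$ stays $\Theta(d^{-1})$ uniformly in $\gamma_s$. The relevant integrals are $\int h'(t)(1-t^2)^{(d-3)/2}t^{k}\,\d t$ and $\int h''(t)\cdots$ over $|t|\lesssim d^{-1/2}$. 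For these, $h'(t)\approx 1/2+\gamma_s t/4$ when $\gamma_s\lesssim\sqrt d$, and $h'$ is close to a step function otherwise. In both regimes the leading coefficient is a positive quantity bounded below, since $h'(t)-h'(-t)>0$. Using $\tilde O$ absorbs the polylogs.

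\textbf{Fluctuation terms.} We reuse the split into $\Delta_{1,1},\Delta_{1,2},\Delta_{2,1},\Delta_{2,2}$.
\begin{itemize}
\item $\Delta_{1,1}$: since $|h'|\le1$, Lemma 32 of \citet{D2022neural} gives $\tilde O(\sqrt{d/N})$ exactly as before.
\item $\Delta_{2,1}$: the previous bound multiplies by $\sup|h''|\le\gamma_s$. Instead we use $\frac1{J^\ast}\sum_j h''(\langle w_j,\tilde x\rangle)\le\|h'\|_{\mathrm{TV}}$-type control, namely $\E_w h''(\langle w,\tilde x\rangle)\lesssim\sqrt d$ since $\langle w,\tilde x\rangle$ has density $O(\sqrt d)$, plus its own concentration. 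This gives $\tilde O(d N^{-1/2})$, which is the first error term.
\item $\Delta_{1,2}$ and $\Delta_{2,2}$: we run the $\epsilon$-net argument with the new Lipschitz constants. $Z_j$ is $O(\gamma_s)$-Lipschitz in $\tilde x$ for $\Delta_{1,2}$ and $O(\gamma_s^2)$-Lipschitz for $\Delta_{2,2}$, so the net radius must be $\epsilon\sim\gamma_s^{-1}\sqrt{d/J^\ast}$; this only adds a log.
\end{itemize}

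\textbf{Main obstacle.} The hard part is the sub-Gaussian/Bernstein step for $\Delta_{2,2}$, where $Z_j=\langle u,w_j\rangle h''(\langle w_j,\tilde x\rangle)\langle\hat g_j,\tilde x\rangle$ has range up to $\gamma_s$. Hoeffding would give $\gamma_s\sqrt{d/J^\ast}$, which is too weak. We therefore use Bernstein's inequality with a variance bound
\[
\E_w[Z_j^2]\le \E_w\big[h''(\langle w,\tilde x\rangle)^2\,\langle u,w\rangle^2\big]\lesssim \gamma_s\sqrt d\cdot d^{-1}\cdot\text{(polylog)}+\ldots
\]
This uses $\int h''^2\lesssim\gamma_s$ against the $O(\sqrt d)$ density of $\langle w,\tilde x\rangle$, and it also exploits the factor $|\langle u,w\rangle|$ and $|\langle\hat g_j,\tilde x\rangle|$ on the relevant event. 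After union bounding over $e^{\tilde O(d)}$ net points, Bernstein yields $\sqrt{\sigma^2 d/J^\ast}+\gamma_s d/J^\ast$. With $\sigma^2\approx\gamma_s d^{1/2}$ this gives the two terms $\gamma_s^{1/2}d^{3/4}J^{\ast-1/2}$ and $\gamma_s dJ^{\ast-1}$ appearing in the statement.

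The delicate point is getting the variance exponent right: $d^{1/2}$ from the density times $\gamma_s$ from $\int h''^2$, after normalization. If the $\langle u,w\rangle$ factor cannot be exploited uniformly over $u$, we first try bounding the operator norm of $\frac1{J^\ast}\sum_j w_jw_j^\top h''(\cdot)^2$ by matrix Bernstein.
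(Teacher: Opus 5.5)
Your proposal follows the paper's proof essentially step for step. It uses the same split into $\Delta_{1,1},\Delta_{1,2},\Delta_{2,1},\Delta_{2,2}$ and handles $\Delta_{2,1}$ the same way: the bound $\E_w[h''(\langle w,\tilde x\rangle)]\le C_d\int h''\lesssim\sqrt d$ plus Bernstein and a net, which is the paper's Lemma~\ref{lem:sech_deviation}. For $\Delta_{2,2}$ it applies Bernstein with range $O(\gamma_s)$ and variance $\E_w[h''(\langle w,\tilde x\rangle)^2]\le C_d\gamma_s/6\asymp\gamma_s\sqrt d$. That is exactly the $\sigma^2$ you end up using, so dropping $\langle u,w\rangle^2\le 1$ is enough and no matrix Bernstein is needed.

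Your explicit $M_2=O(\gamma_s)$ bookkeeping in the population term is what actually justifies the $\gamma_s d^{-2}$ term; the paper only cites Theorem~\ref{th:pop_grad}. However, your side claim that $c_d=\Theta(d^{-1})$ uniformly in $\gamma_s$ is false: for $\gamma_s\gg\sqrt d$ one gets $\frac1d\E_z[h''(z)]\asymp d^{-1/2}$. The claim is harmless, since the statement leaves $c_d$ unspecified.
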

\begin{proof}
    Since
    \[
    G = \hat G + G - \hat G \le \hat G + \sup_{\tilde x\in S^{d-1}}\|G-\hat G\|,
    \]
    and the first term is already evaluated in Theorem~\ref{th:pop_grad}, we will evaluate the second. From the form of $G$ and $\hat G$, we can divide it into the following two terms:
    \begin{align*}
    \Delta_1\vcentcolon = &\sup_{\tilde x}\left \|\frac{1}{J^\ast}\sum_{j}g_j h'(\langle w_{j},\tilde x\rangle) -\E_w\left[\E_x\left[\hat f^\ast(x)x\sigma'(\langle w,x\rangle)\right]h'(\langle w,\tilde x\rangle)\right] \right \|,\\
    \Delta_2 \vcentcolon = &\sup_{\tilde x}\left \| \frac{1}{J^\ast}\sum_{j}w_{j}h''(\langle w_{j},\tilde x\rangle)\left\langle g_j,\tilde x\right\rangle - \E_w\left[wh''(\langle w,\tilde x\rangle)\left \langle\E\left[\hat f^\ast(x)x\sigma'(\langle w,x\rangle)\right],\tilde x \right\rangle\right] \right\|,
    \end{align*}
    such that $\sup_{\tilde x}\| G -\hat  G\|\le \Delta_1+\Delta_2$, and $g_j = \frac{1}{N}\sum_n\hat f^\ast (x_n)\sigma'(\langle w_j, x_n\rangle)$.
    \par Let us first evaluate $\Delta_1$.
    \begin{align*}
        \Delta_1 = & \sup_{\tilde x\in S^{d-1}}\left\|\frac{1}{J^\ast N}\sum_{j,n} \hat f^\ast(x_n)x_n\sigma'(\langle w_{j},x_n\rangle) h'(\langle w_{j},\tilde x\rangle)- \E_{w,x}\left[\hat f^\ast(x)x\sigma'(\langle w,x\rangle)h'(\langle w,\tilde x\rangle)\right]\right\|\\
         =  & \sup_{\tilde x\in S^{d-1}}\left\|\frac{1}{J^\ast N}\sum_{j,n} \hat f^\ast(x_n)x_n\sigma'(\langle w_{j},x_n\rangle) h'(\langle w_{j},\tilde x\rangle)- \frac{1}{J^\ast}\sum_j\E_{x}\left[\hat f^\ast(x)x\sigma'(\langle w_{j},x\rangle)h'(\langle w_{j},\tilde x\rangle)\right]\right\|\\
         & + \sup_{\tilde x\in S^{d-1}}\left\|\frac{1}{J^\ast}\sum_j\E_{x}\left[\hat f^\ast(x)x\sigma'(\langle w_{j},x\rangle)h'(\langle w_{j},\tilde x\rangle)\right]- \E_{w,x}\left[\hat f^\ast(x)x\sigma'(\langle w,x\rangle)h'(\langle w,\tilde x\rangle)\right]\right\|.
    \end{align*}
    We defined the first term of the right hand side as $\Delta_{1,1}$ and the second as $\Delta_{1,2}$. As for $\Delta_{1,1}$,
    \begin{align*}
         \Delta_{1,1} = & \sup_{\tilde x\in S^{d-1}}\left\|\frac{1}{J^\ast}\sum_j\left\{\frac{1}{N}\sum_n \hat f^\ast(x_n)x_n\sigma'(\langle w_{j},x_n\rangle) h'(\langle w_{j},\tilde x\rangle)- \E_{x}\left[\hat f^\ast(x)x\sigma'(\langle w_{j},x\rangle)h'(\langle w_{j},\tilde x\rangle)\right]\right\}\right\|\\
         \le &  \sup_{\tilde x\in S^{d-1}}\left\|\sup_{w\in S^{d-1}} \left \|\frac{1}{N}\sum_n \hat f^\ast(x_n)x_n\sigma'(\langle w_{j},x_n\rangle) h'(\langle w_{j},\tilde x\rangle)- \E_{x}\left[\hat f^\ast(x)x\sigma'(\langle w_{j},x\rangle)h'(\langle w_{j},\tilde x\rangle)\right]\right\|\right\|\\
         = &  \sup_{\tilde x\in S^{d-1}}\left\|\sup_{w\in S^{d-1}} \left \|\left\{\frac{1}{N}\sum_n \hat f^\ast(x_n)x_n\sigma'(\langle w_{j},x_n\rangle) - \E_{x}\left[\hat f^\ast(x)x\sigma'(\langle w_{j},x\rangle)\right]\right\}h'(\langle w_{j},\tilde x\rangle)\right\|\right\|\\
         \le &  \sup_{\tilde x\in S^{d-1}}\left\|\sup_{w\in S^{d-1}} \left \|\frac{1}{N}\sum_n \hat f^\ast(x_n)x_n\sigma'(\langle w_{j},x_n\rangle) - \E_{x}\left[\hat f^\ast(x)x\sigma'(\langle w_{j},x\rangle)\right]\right\|\right\|,
    \end{align*}
    where for the last inequality, we used that the derivative of $h$ is bounded by 1.
    Now, from Lemma 32 of~\citet{D2022neural}, we know that with high probability, 
    \[
    \sup_{w\in S^{d-1}} \left \|\frac{1}{N}\sum_n \hat f^\ast(x_n)x_n\sigma'(\langle w_{j},x_n\rangle) - \E_{x}\left[\hat f^\ast(x)x\sigma'(\langle w_{j},x\rangle)\right]\right\| = \tilde O\left(\sqrt{\frac{d}{N}}\right).
    \]
    As a result, 
\begin{equation}
    \Delta_{1,1}= \tilde O\left(\sqrt{\frac{d}{N}}\right).\label{eq:delta11_soft}
\end{equation}
Let us now focus on
    \[
    \Delta_{1,2} = \sup_{\tilde x\in S^{d-1}}\left\|\frac{1}{J^\ast}\sum_j\E_{x}\left[\hat f^\ast(x)x\sigma'(\langle w_{j},x\rangle)h'(\langle w_{j},\tilde x\rangle)\right]- \E_{w,x}\left[\hat f^\ast(x)x\sigma'(\langle w,x\rangle)h'(\langle w,\tilde x\rangle)\right]\right\|.
    \]
    We will proceed analogously to the proof of Lemma 32 of~\citet{D2022neural}. We define $Y(\tilde x)$ as
    \[
    Y(\tilde x)\vcentcolon = \frac{1}{J^\ast}\sum_j\E_{x}\left[\hat f^\ast(x)x\sigma'(\langle w_{j},x\rangle)h'(\langle w_{j},\tilde x\rangle)\right] = \frac{1}{J^\ast}\sum_j\E_{x}\left[\hat f^\ast(x)x\sigma'(\langle w_{j},x\rangle)\right]h'(\langle w_{j},\tilde x\rangle).
    \]
    Moreover, we define an $\epsilon$-net $\mathcal{N}_\epsilon$ such that for all $\tilde x\in S^{d-1}$, there exists $\hat x\in \mathcal{N}_\epsilon$ such that $\|\tilde x-\hat x\|\le \epsilon$. By a standard argument, such net exists with size $|\mathcal N_\epsilon|\le \e^{Cd\log(1/\epsilon)}$, for some constant $C$.
    \par Here, we will set $\epsilon =\gamma_s ^{-1} \sqrt{\frac{d}{J^\ast}}$, and also denote $\mathcal{N}_{\frac{1}{4}}$, the minimal $\frac{1}{4}$-net of $S^{d-1}$ with $|\mathcal{N}_{\frac{1}{4}}|\le \e^{Cd}$.
    
    Then,
    \begin{align*}
        \Delta_{1,2} = &\sup_{\tilde x\in S^{d-1}} \left\|\frac{1}{J^\ast}\sum_j\E_{x}\left[\hat f^\ast(x)x\sigma'(\langle w_{j},x\rangle)h'(\langle w_{j},\tilde x\rangle)\right]- \E_{w,x}\left[\hat f^\ast(x)x\sigma'(\langle w,x\rangle)h'(\langle w,\tilde x\rangle)\right]\right\| \\
        = &  \sup_{\tilde x\in S^{d-1}}\left\|Y(\tilde x) - \E_w[Y(\tilde x)]\right \|\\
        = & \sup_{\tilde x\in S^{d-1}}\sup_{u\in S^{d-1}}\left \langle u,Y(\tilde x) - \E_w[Y(\tilde x)]\right\rangle\\
        \le & 2\sup_{\tilde x\in S^{d-1}} \sup_{u\in \mathcal N_{1/4}}\left \langle u,Y(\tilde x) - \E_w[Y(\tilde x)]\right\rangle.
    \end{align*}
    Since $h'$ is $O(\gamma_s )$-Lipschitz, for a fixed $u\in S^{d-1}$, $Y_w(\tilde x)\vcentcolon = \E_{x}\left[\hat f^\ast(x)\langle u, x\rangle\sigma'(\langle w,x\rangle)h'(\langle w,\tilde x\rangle)\right]$ is also $O(\gamma_s )$-Lipschitz. Indeed, 
    \begin{align*}
        \left| Y_w(\tilde x_1) - Y_w(\tilde x_2 )\right | = & |\E_{x}\left[\hat f^\ast(x)\langle u, x\rangle\sigma'(\langle w,x\rangle)\right]\left|h'(\langle w,\tilde x_1\rangle) - h'(\langle w,\tilde x_2\rangle)\right|\\
        \lesssim & \gamma_s \left |\E_{x}\left[\hat f^\ast(x)\langle u, x\rangle\sigma'(\langle w,x\rangle)\right]\right|\|\tilde x_1 -\tilde x_2\|\\
        \le & \gamma_s  \E_{x}\left[\hat f^\ast(x)^2\right]^{1/2} \E_{x}\left[\langle u, x\rangle^2 \mathbbm{1}_{\langle w,x\rangle>0}\right]^{1/2}\|\tilde x_1 -\tilde x_2\|,
    \end{align*}
    where for the last inequality, we used Cauchy-Shwarz inequality. As $\langle u, x\rangle\sim N(0,1)$, by symmetry,  $\E_{x}\left[\langle u, x\rangle^2 \mathbbm{1}_{\langle w,x\rangle>0}\right] =\frac{1}{2}\E_x[\langle u, x\rangle^2] = \frac{1}{2}$, which implies
    \begin{equation}\label{eq:bound_gradient_soft}
        \left |\E_{x}\left[\hat f^\ast(x)\langle u, x\rangle\sigma'(\langle w,x\rangle)\right]\right| \le \frac{1}{\sqrt 2},
    \end{equation}
    which proves the $O(\gamma_s )$-Lipschitzness of $Y_w(\tilde x)$. Consequently,
    $\langle u, Y(\tilde x)\rangle$ and $\langle u, \E_w[Y(\tilde x )]\rangle$ are also $O(\gamma_s )$-Lipschitz since Lipschitzness is preserved under mean and expectation, and  $\langle u,Y(\tilde x) - \E_w[Y(\tilde x )]\rangle$ as well since the difference of two $L$-Lipschitz functions is $2L$-Lipschitz. 
    \par Now, for all $\tilde x\in S^{d-1}$ and any $u$, there exists a $\hat x\in \mathcal N _\epsilon$ such that $\|\tilde x - \hat x\|\le \epsilon$. Combining with the Lipschitzness of  $\langle u,Y(\tilde x) - \E_w[Y(\tilde x )]\rangle$, we obtain
\begin{align*}
    |\langle u,Y(\tilde x) - \E_w[Y(\tilde x)]\rangle| = & |\langle u,Y(\hat x) - \E_w[Y(\hat x)]\rangle| + |\langle u,Y(\tilde x) - \E_w[Y(\tilde x)]\rangle - \langle u,Y(\hat x) - \E_w[Y(\hat x)]\rangle|\\
    \le & |\langle u,Y(\hat x) - \E_w[Y(\hat x)]\rangle| + O( \gamma_s  \epsilon)\\
    \le & \sup_{\hat x \in \mathcal N_\epsilon}\sup_{u\in \mathcal N_{1/4}}|\langle u,Y(\hat x) - \E_w[Y(\hat x)]\rangle| + O( \gamma_s  \epsilon).\\
\end{align*}
    Therefore, 
    \begin{align*}
        \sup_{\tilde x\in S^{d-1}} \sup_{u\in \mathcal N_{1/4}}\langle u, Y(\tilde x) - \E_w[Y(\tilde x)]\rangle \le \sup_{\tilde x \in \mathcal N_\epsilon}\sup_{u\in \mathcal N_{1/4}}|\langle u, Y(\hat x) - \E_w[Y(\hat x)]\rangle| + O( \gamma_s  \epsilon).
    \end{align*}
    Let us denote $Z_j(\hat x)\vcentcolon = \E_{x}\left[\hat f^\ast(x)\langle u,x\rangle\sigma'(\langle w_{j},x\rangle)h'(\langle w_{j},\hat x\rangle)\right]$. Since $\E_{x}\left[\hat f^\ast(x)\langle u,x\rangle\sigma'(\langle w_{j},x\rangle)\right]\le \frac{1}{2}$ as proved earlier and  $\sup_{|z|\le 1}h'(z)< \infty$, $Z_j$ is bounded and $O(1)$-sub gaussian.
    \par As a result, for each $u\in \mathcal N_\frac{1}{4}$,
    \[
    \frac{1}{J^\ast}\sum_jZ_j(\hat x) -\E[Z_j(\hat x)]  = \langle u,Y(\hat x) - \E_w[Y(\hat x)]\rangle \lesssim \sqrt{\frac{2z}{J^\ast}},
    \]
    with probability $1-2\e^{-z}$. By taking the union bound over $\mathcal N_\epsilon$ and $\mathcal N_{1/4}$, we obtain that with probability $1-2\e^{Cd\log(J^\ast/\epsilon)-z}$,
    \[
    2\sup_{\tilde x \in \mathcal N_\epsilon,u\in \mathcal{N}_\frac{1}{4}}\langle u,Y(\tilde x) - \E_w[Y(\tilde x)]\rangle\lesssim  \sqrt{\frac{2z}{J^\ast}}.
    \]
    We choose $z = \tilde \Theta(Cd\log(J^\ast/\epsilon))$ and $\epsilon = \gamma_s ^{-1}\sqrt{d/J^\ast}$ to obtain with high probability
\begin{equation}
    \Delta_{1,2} = \sup_{\tilde x\in S^{d-1}}\left\|Y(\tilde x) - \E_w[Y(\tilde x)]\right\| = \tilde O\left( \sqrt{\frac{d}{J^\ast}}\right).\label{eq:delta12_soft}
\end{equation}
    We combine the obtained two inequalities~\eqref{eq:delta11_soft} and~\eqref{eq:delta12_soft} to derive 
\begin{equation}
    \Delta_1 \le \Delta_{1,1} + \Delta_{1,2} = \tilde O\left(\sqrt{\frac{d}{N}}+\sqrt{\frac{d}{J^\ast}}\right).\label{eq:delta1_soft}
\end{equation}
We will now focus on the second term 
    \[
    \Delta_2 \vcentcolon = \sup_{\tilde x}\left \| \frac{1}{J^\ast}\sum_{j}w_{j}h''(\langle w_{j},\tilde x\rangle)\left\langle g_j,\tilde x\right\rangle - \E_w\left[wh''(\langle w,\tilde x\rangle)\left \langle\E\left[\hat f^\ast(x)x\sigma'(\langle w,x\rangle)\right],\tilde x \right\rangle\right] \right\|.
    \]
    We can similarly decompose this into two different terms
\begin{align*}
    \Delta_{2,1}\vcentcolon = & \sup_{\tilde x}\left \| \frac{1}{J^\ast}\sum_{j}w_{j}h''(\langle w_{j},\tilde x\rangle)\left\langle g_j,\tilde x\right\rangle -  \frac{1}{J^\ast}\sum_{j}w_{j}h''(\langle w_{j},\tilde x\rangle)\left \langle\E\left[\hat f^\ast(x)x\sigma'(\langle w_{j},x\rangle)\right],\tilde x \right\rangle \right\|,\\
    \Delta_{2,2}\vcentcolon = &\sup_{\tilde x}\left \| \frac{1}{J^\ast}\sum_{j}w_{j}h''(\langle w_{j},\tilde x\rangle)\left \langle\E\left[\hat f^\ast(x)x\sigma'(\langle w_{j},x\rangle)\right],\tilde x \right\rangle- \E_w\left[wh''(\langle w,\tilde x\rangle)\left \langle\E\left[\hat f^\ast(x)x\sigma'(\langle w,x\rangle)\right],\tilde x \right\rangle\right] \right\|.
\end{align*}
    In order to keep the notation simple, we define 
    \[
    \hat g_j\vcentcolon = \E_x\left[\hat f^\ast(x)x\sigma'(\langle w_{j},x\rangle)\right].
    \]
    Concerning the first term,
    \begin{align*}
        \Delta_{2,1} = & \sup_{\tilde x}\left \| \frac{1}{J^\ast}\sum_{j}w_{j}h''(\langle w_{j},\tilde x\rangle)\left\langle g_j,\tilde x\right\rangle -  \frac{1}{J^\ast}\sum_{j}w_{j}h''(\langle w_{j},\tilde x\rangle)\left \langle \hat g_j,\tilde x \right\rangle \right\|\\
        = & \sup_{\tilde x}\left \| \frac{1}{J^\ast}\sum_{j}w_{j}h''(\langle w_{j},\tilde x\rangle)\left\langle g_j- \hat g_j,\tilde x \right\rangle \right\|.
    \end{align*}
    Under the same conditions as the first half of the proof, with high probability, we already have 
\begin{align*}
     \Delta_{2,1} \le &  \sup_{\tilde x}\frac{1}{J^\ast}\sum_{j}\left \| h''(\langle w_{j},\tilde x\rangle)w_{j}\left\langle g_j- \hat g_j,\tilde x \right\rangle \right\|\\
     \le &  \sup_{\tilde x}\frac{1}{J^\ast}\sum_{j}| h''(\langle w_{j},\tilde x\rangle)|\|w_{j}\| \|g_j- \hat g_j\|\|\tilde x\|\\
     \lesssim & \sqrt{\frac{d}{N}}\sup_{\tilde x} \frac{1}{J^\ast}\sum_{j}h''(\langle w_{j},\tilde x\rangle),
\end{align*}
where we used Lemma 32 of~\citet{D2022neural}, $\|w_j\|=\|\tilde x\|=1$, and $h''\ge 0$ for the third inequality. Now by using Lemma~\ref{lem:sech_deviation}, we can show that
\begin{equation}
    \Delta_{2,1} = \tilde O\left(\sqrt{\frac{d^2}{N}}  +\sqrt{\frac{d}{N}}\left(\sqrt{\frac{\gamma_s  d^{3/2}}{J^\ast}}+\frac{\gamma_s  d}{J^\ast}\right)\right).\label{eq:delta21_soft}
\end{equation}
\par As for $\Delta_{2,2}$, 
\begin{align*}
   \Delta_{2,2}  = & \left\| \frac{1}{J^\ast}\sum_{j}w_{j} h''(\langle w_{j},\tilde x\rangle)\left \langle \hat g_j,\tilde x \right\rangle- \E_w\left[wh''(\langle w,\tilde x\rangle)\left \langle \hat g_j,\tilde x \right\rangle\right] \right\| \\
    = &  \sup_{u\in S^{d-1}} \left\langle u, \frac{1}{J^\ast}\sum_{j}w_{j}h''(\langle w_{j},\tilde x\rangle)\left \langle \hat g_j,\tilde x \right\rangle- \E_w\left[wh''(\langle w,\tilde x\rangle)\left \langle \hat g_j,\tilde x \right\rangle\right] \right\rangle \\
    = & \sup_{u\in S^{d-1}} \frac{1}{J^\ast}\sum_{j}\langle u,w_{j}\rangle h''(\langle w_{j},\tilde x\rangle)\left \langle \hat g_j,\tilde x \right\rangle- \E_w\left[\langle u,w_{j}\rangle h''(\langle w,\tilde x\rangle)\left \langle \hat g_j,\tilde x \right\rangle\right]\\
    \le  & 2\sup_{u\in \mathcal N_{1/4}} \frac{1}{J^\ast}\sum_{j}\langle u,w_{j}\rangle h''(\langle w_{j},\tilde x\rangle)\left \langle \hat g_j,\tilde x \right\rangle- \E_w\left[\langle u,w_{j}\rangle h''(\langle w,\tilde x\rangle)\left \langle \hat g_j,\tilde x \right\rangle\right].
\end{align*}
We define,
\[
Z_{j}(u,\tilde x)\vcentcolon = \langle u,w_{j}\rangle h''(\langle w_{j},\tilde x\rangle)\left \langle \hat g_j,\tilde x \right\rangle- \E_w\left[\langle u,w_{j}\rangle h''(\langle w,\tilde x\rangle)\left \langle \hat g_j,\tilde x \right\rangle\right].
\]
Then,
\begin{align*}
    |Z_{j}(u,\tilde x)|\le &\left |\langle u,w_{j}\rangle h''(\langle w_{j},\tilde x\rangle)\left \langle \hat g_j,\tilde x \right\rangle\right |+\left | \E_w\left[\langle u,w_{j}\rangle h''(\langle w,\tilde x\rangle)\left \langle \hat g_j,\tilde x \right\rangle\right]\right|\\
    \le &\left |\langle u,w_{j}\rangle\right|\left | h''(\langle w_{j},\tilde x\rangle)\right|\left |\left \langle \hat g_j,\tilde x \right\rangle\right |+\E_w\left[\left |\langle u,w_{j}\rangle\right|\left | h''(\langle w,\tilde x\rangle)\right|\left |\left \langle \hat g_j,\tilde x \right\rangle\right|\right]\\
    \le & 2\cdot 1\cdot \frac{\gamma_s }{4} \cdot \frac{1}{\sqrt{2}},
\end{align*}
where we used $|\langle u,\tilde x\rangle|\le 1$, $\|h''\|_\infty = \gamma_s /4$, and $\left \langle \hat g_j,\tilde x \right\rangle\le {1}/{\sqrt{2}}$ from Equation~\eqref{eq:bound_gradient_soft} for the last inequality. Similarly,
\begin{align*}
    \E\left[Z_{j}(u,\tilde x)^2\right] \lesssim & \E_w[h''(\langle w,\tilde x\rangle)^2] = m_\gamma,
\end{align*}
where $m_\gamma$ is defined in Lemma~\ref{lem:sech_deviation}. Therefore, by Bernstein inequality, 
    \[
    \mathrm P\left (\left|\frac{1}{J^\ast}\sum_j Z_{j}(u,\tilde x)\right|\ge C\left(\sqrt{\frac{m_\gamma z}{J^\ast}}+\frac{\gamma_s  z}{J^\ast}\right)\right)\le 2\e^{-z}.
    \]
Since, $h''(\langle w_{j},\tilde x\rangle)$ and $\left \langle \hat g_j,\tilde x \right\rangle$ are respectively bounded by $O(\gamma_s )$ and $O(1)$, and Lipschitz with respect to $\tilde x$ with constants $O(\gamma_s ^2)$ and $O(1)$, $ Z_{j}(u,\tilde x)$ is $O(\gamma_s ^2)$-Lipschitz with respect to $\tilde x$. Therefore, similarly to the previous argument, we obtain
\begin{align*}
     \sup_{\tilde x \in S^{d-1}}\sup_{u\in \mathcal N_{1/4}}|Z_{j}(u,\tilde x)| = & \sup_{\hat x \in \mathcal N_\epsilon}\sup_{u\in \mathcal N_{1/4}}|Z_{j}(u,\hat x)| + O( \gamma_s ^2 \epsilon),
\end{align*}
which by union bound over $\mathcal N_\epsilon$ and $\mathcal N_{1/4}$ leads to 
\begin{equation}
    \Delta_{2,2} =\tilde O\left(\sqrt{\frac{\gamma_s  d^{3/2}}{J^\ast}}+\frac{\gamma_s  d}{J^\ast}\right).\label{eq:delta22_soft}
\end{equation}
By combining the two inequalities~\eqref{eq:delta21_soft} and~\eqref{eq:delta22_soft}, we obtain
\begin{equation}
    \Delta_2 = \tilde O\left(\sqrt{\frac{d^2}{N}}  +\sqrt{\frac{d}{N}}\left(\sqrt{\frac{\gamma_s  d^{3/2}}{J^\ast}}+\frac{\gamma_s  d}{J^\ast}\right)+ \sqrt{\frac{\gamma_s  d^{3/2}}{J^\ast}}+\frac{\gamma_s  d}{J^\ast}\right).\label{eq:delta2_soft}
\end{equation}

Finally, combining the three bounds~\eqref{eq:delta1_soft} and ~\eqref{eq:delta2_soft} with Lemma~\ref{th:pop_grad}, we conclude the desired result.

\end{proof}
\begin{lemma}
    Under Assumptions~\ref{as:target}, \ref{as:target_H}, \ref{as:init}, \ref{as:alg}, ~\ref{as:al_init} and~\ref{as:surrogate}, With high probability, when the activation function is Softplus with parameter $\gamma_s $
    \[
    \epsilon = \tilde O\left(dN^{-\frac{1}{2}} + \gamma_s ^{\frac{1}{2}} d^{\frac{3}{2}}J^{\ast-\frac{1}{2}}  +\gamma_s  dJ^{\ast-1}\right).
    \]
    where $\epsilon$ is defined in Corollary~\ref{cor:tilde_x_form}. 
\end{lemma}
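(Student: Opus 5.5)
We bound the noise term $\epsilon$ of Corollary~\ref{cor:tilde_x_form} by splitting it into its two pieces, exactly as in the proof of Lemma~\ref{lem:epsilon_soft}, and then redoing the $\Delta_1$/$\Delta_2$ analysis of the softplus version of Theorem~\ref{th:emp_soft}. The only change is that $\hat f^\ast(x_n)$ is replaced by the bounded symmetric noise $\epsilon_n\in\{\pm\zeta\}$. We write
\[
\epsilon_{(1)} = \frac{1}{J^\ast}\sum_j\Big\{\frac1N\sum_n\epsilon_n x_n\sigma'(\langle w_j,x_n\rangle)\Big\}h'(\langle w_j,\tilde x\rangle),
\]
\[
\epsilon_{(2)} = \frac{1}{J^\ast}\sum_j w_j h''(\langle w_j,\tilde x\rangle)\Big\langle \frac1N\sum_n\epsilon_n x_n\sigma'(\langle w_j,x_n\rangle),\tilde x\Big\rangle .
\]
A key simplification is that $\epsilon_n$ is independent of $x_n$ with mean zero. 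Hence the population counterparts $\E_x[\epsilon x\sigma'(\langle w,x\rangle)]$ vanish, and there is no $\Delta_{1,2}$- or $\Delta_{2,2}$-type term around a nonzero mean.

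\textbf{Step 1: the first piece.} For softplus, $0\le h'\le 1$, so $\|\epsilon_{(1)}\|\le \sup_{w\in S^{d-1}}\|\frac1N\sum_n\epsilon_nx_n\sigma'(\langle w,x_n\rangle)\|$. Lemma 33 of \citet{D2022neural} bounds this by $\tilde O(\sqrt{d/N})$ with high probability, which is dominated by $dN^{-1/2}$.

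\textbf{Step 2: the second piece.} By Cauchy--Schwarz with $\|w_j\|=\|\tilde x\|=1$ and $h''\ge 0$, we have $\|\epsilon_{(2)}\|\le \tilde O(\sqrt{d/N})\cdot \sup_{\tilde x}\frac{1}{J^\ast}\sum_j h''(\langle w_j,\tilde x\rangle)$, using the same uniform bound from Step 1. This is the same structure as $\Delta_{2,1}$ in the softplus proof. We then invoke Lemma~\ref{lem:sech_deviation} to control $\sup_{\tilde x}\frac1{J^\ast}\sum_j h''(\langle w_j,\tilde x\rangle)$ by its mean plus a deviation:
\begin{itemize}
\item The mean $\E_w[h''(\langle w,\tilde x\rangle)]$ is $O(\sqrt d)$, since $\langle w,\tilde x\rangle$ has density of height $\sim\sqrt d$ near $0$ and $\int h''=1$.
\item The deviation is $\tilde O(\sqrt{\gamma_s d^{3/2}/J^\ast}+\gamma_s d/J^\ast)$, obtained from Bernstein's inequality with variance proxy $m_\gamma$, the $\|h''\|_\infty=\gamma_s/4$ bound, and an $\epsilon$-net with Lipschitz constant $O(\gamma_s^2)$.
\end{itemize}
Multiplying gives $\tilde O(dN^{-1/2}) + \sqrt{d/N}\,\tilde O(\sqrt{\gamma_s d^{3/2}/J^\ast}+\gamma_s d/J^\ast)$.

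If instead one also keeps a centred fluctuation term for $\epsilon_{(2)}$ (fixing the empirical noise gradient and concentrating over $w_j$, as in $\Delta_{2,2}$), the same Bernstein/net argument produces a term of order $\sqrt{\gamma_s}\,d^{3/2}{J^\ast}^{-1/2}+\gamma_s d{J^\ast}^{-1}$. Here the extra $\sqrt d$ relative to $\Delta_{2,2}$ comes from the union bound over the net of $u$ and $\tilde x$ interacting with a $\sqrt d$-sized sup-norm of $\langle g,\tilde x\rangle$, since the noise gradient is not normalized as in \eqref{eq:bound_gradient_soft}. Summing everything and absorbing cross terms, such as $\sqrt{d/N}\cdot\gamma_s d/J^\ast$, into the listed ones (these quantities are at most polynomially large, and the stated rates dominate) yields the claim.

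The main obstacle is Step 2: obtaining a \emph{uniform in $\tilde x$} control of $\frac1{J^\ast}\sum_j h''(\langle w_j,\tilde x\rangle)$ when $h''$ is sharply peaked with height $\gamma_s$. The $\epsilon$-net must be taken at scale $\gamma_s^{-2}\sqrt{d/J^\ast}$ so that the Lipschitz error does not dominate. Bernstein's inequality, rather than plain sub-Gaussian concentration, is needed to obtain the $\sqrt{m_\gamma}$ variance scaling. I would rely on Lemma~\ref{lem:sech_deviation} for exactly this and only check that its log factors fit into $\tilde O$.
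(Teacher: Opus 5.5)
Your proposal follows the paper's proof: split $\epsilon$ into its two pieces, bound the first by Lemma 33 of \citet{D2022neural} using $\|h'\|_\infty\le 1$, and bound the second by repeating the softplus $\Delta_2$ argument with $\hat f^\ast$ replaced by $\epsilon_n$. As you correctly observe, the $\Delta_{2,2}$-analogue vanishes because the noise has zero conditional mean, so only the $\Delta_{2,1}$-type bound via Lemma~\ref{lem:sech_deviation} remains. That bound is already at least as strong as the stated one once the cross terms are absorbed. The cross term $\sqrt{d/N}\,\gamma_s d/J^\ast$ needs $N\gtrsim d$ for this, which holds in the regime $N\ge\tilde\Omega(d^4)$ where the lemma is used and which the paper also tacitly assumes. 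Your appeal to quantities being ``polynomially large'' is not by itself a valid absorption argument; use this condition instead. Your final paragraph about an extra centred fluctuation term, and its $\sqrt d$ heuristic, is therefore unnecessary.
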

\begin{proof}
The statement follows by showing that
\[
    \sup_{w,x}\left\|\frac{1}{J^\ast N}\sum_{j,n}\epsilon_nx_n\sigma'(\langle w_{i,j},x_n\rangle)h'(\langle w_{i,j},\tilde x\rangle)\right\|= \tilde O\left(d^{\frac{1}{2}}N^{-\frac{1}{2}}\right),
    \]
    and 
    \[
    \sup_{w,x}\left\|\frac{1}{J^\ast }\sum_{j}w_{i,j}h''(\langle w_{i,j},\tilde x\rangle)\left\langle\frac{1}{N}\sum_n \epsilon_nx_n\sigma'(\langle w_{i,j},x_n\rangle),\tilde x\right\rangle\right\|= \tilde O\left(dN^{-\frac{1}{2}} + \gamma_s ^{\frac{1}{2}} d^{\frac{3}{2}}J^{\ast-\frac{1}{2}}  +\gamma_s  dJ^{\ast-1}\right).
    \]
    The first equation is a direct consequence of Lemma 33 of~\citet{D2022neural} since $\|h'\|_\infty=1$. The second inequality can be shown with the same approach as the derivation of $\Delta_2$ in the proof of Lemma~\ref{th:emp_soft} by replacing $\hat f^\ast$ to $\epsilon_n$ which is bounded.  
\end{proof}
Finally, we can prove Theorem~\ref{ap_th:t1_distil}.
\begin{proof}[Proof of Theorem~\ref{ap_th:t1_distil}]
    We just need to substitute $G$ and $\epsilon$ in Corollary~\ref{cor:tilde_x_form} with the bounds we obtained in Theorem~\ref{th:emp_soft} and Lemma~\ref{lem:epsilon_soft}.
\end{proof}
We will use the general bound in the remainder of the proof.

\subsubsection{Other Lemmas}
\begin{lemma}\label{lem:int_bound}
    For non-negative constants $p_1$ and $p_2$, bounded function $m$ such that $\sup_{|t|\le1}|m(t)| = M<\infty$,
    \[
    \left |\int_{-1}^1m(t)(1-t^2)^{p_1}t^{p_2}\d t \right|\le 2M \int_{0}^1 (1-t^2)^{p_1}t^{p_2} \d t.
    \]
\end{lemma}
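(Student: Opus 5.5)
The inequality is elementary, so I plan a direct estimate of the integrand followed by a symmetry argument on the interval $[-1,1]$.

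The first step moves the absolute value inside the integral. This gives
\[
\left|\int_{-1}^1 m(t)(1-t^2)^{p_1}t^{p_2}\,\d t\right| \le \int_{-1}^1 |m(t)|\,(1-t^2)^{p_1}|t|^{p_2}\,\d t .
\]
Two facts make this legitimate. The factor $(1-t^2)^{p_1}$ is non-negative on $[-1,1]$. We also have $|t^{p_2}| = |t|^{p_2}$; if $p_2$ is a non-integer, $t^{p_2}$ for negative $t$ should be read as $|t|^{p_2}$ times a unimodular factor, or simply as $|t|^{p_2}$, and either reading gives the same bound.

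The second step bounds $|m(t)|$ by $M = \sup_{|t|\le 1}|m(t)|$ throughout the interval. The remaining integrand $(1-t^2)^{p_1}|t|^{p_2}$ is an even function of $t$, so the integral over $[-1,1]$ equals twice the integral over $[0,1]$. This yields exactly $2M\int_0^1(1-t^2)^{p_1}t^{p_2}\,\d t$.

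Finiteness needs a brief check. Because $p_1,p_2\ge 0$, the integrand is bounded on the interval, so everything is finite; the right-hand side is $M\,\mathrm{Beta}\bigl(\tfrac{p_2+1}{2},p_1+1\bigr)$.

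None of these steps is a real obstacle. The only point needing care is the interpretation of $t^{p_2}$ for negative $t$ when $p_2$ is not an integer. In every use within the paper, $p_2$ is a non-negative integer, so $|t^{p_2}| = |t|^{p_2}$ holds directly.
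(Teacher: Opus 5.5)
Your proposal is correct and follows essentially the same argument as the paper: move the absolute value inside, bound $|m(t)|$ by $M$, and use evenness of $(1-t^2)^{p_1}|t|^{p_2}$ to reduce to twice the integral over $[0,1]$. Your extra remarks on finiteness, the Beta-function form, and reading $t^{p_2}$ for negative $t$ are accurate but not needed, since the paper only uses integer $p_2$.
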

\begin{proof}
This can be proved as follows:
    \begin{align*}
     \left |\int_{-1}^1 m(t)(1-t^2)^{p_1}t^{p_2}\d t \right|\le  &\int_{-1}^1 \left |m(t)(1-t^2)^{p_1}t^{p_2} \right|\d t\\
    \le  &\int_{-1}^1 \left |m(t)\right | (1-t^2)^{p_1}|t|^{p_2} \d t\\
    \le & M \int_{-1}^1 (1-t^2)^{p_1}|t|^{p_2} \d t\\
    = & 2M \int_{0}^1 (1-t^2)^{p_1}t^{p_2} \d t.
\end{align*}

\end{proof}

\begin{lemma}\label{lem:c_k_d}
    let $z\sim S^{d-2}$. Then,
    \[
    \E_z[z_1^{2k}] = \frac{\E_{\mu\sim N(0,1)}[\mu^{2k}]}{\E_{\nu\sim \chi(d-1)}[\nu^{2k}]}= \vcentcolon c_k(d)=\Theta(d^{-k}) .
    \]
\end{lemma}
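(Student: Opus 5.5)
We need to prove Lemma c_k_d: for $z\sim U(S^{d-2})$, $\E[z_1^{2k}] = \E_{\mu\sim N(0,1)}[\mu^{2k}]/\E_{\nu\sim\chi(d-1)}[\nu^{2k}] = \Theta(d^{-k})$.

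The plan is to use the standard polar decomposition of a Gaussian vector. Let $g\sim N(0,I_{d-1})$. Write $g = \|g\|\, z$, where $z = g/\|g\|$ is uniform on $S^{d-2}$ and $\|g\|\sim\chi(d-1)$. By rotation invariance of the Gaussian, $z$ and $\|g\|$ are independent. Then
\[
\E[g_1^{2k}] = \E[\|g\|^{2k} z_1^{2k}] = \E[\|g\|^{2k}]\,\E[z_1^{2k}].
\]
Since $g_1\sim N(0,1)$, we get $\E_\mu[\mu^{2k}] = \E_\nu[\nu^{2k}]\, c_k(d)$, which is the claimed identity after dividing; the denominator is positive.

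For the order, both moments have closed forms. The numerator is $(2k-1)!!$, a constant depending only on $k$. For the denominator, with $n=d-1$,
\[
\E_\nu[\nu^{2k}] = 2^k\frac{\Gamma(n/2+k)}{\Gamma(n/2)} = \prod_{i=0}^{k-1}(n+2i).
\]
This is a polynomial in $d$ of degree $k$ with leading coefficient $1$. For fixed $k$ it lies between $(d-1)^k$ and $(d-1+2k)^k$, hence is $\Theta(d^k)$. Therefore $c_k(d) = (2k-1)!!/\prod_{i<k}(d-1+2i) = \Theta(d^{-k})$, with constants depending only on $k$.

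Two points need care. The first is the independence of radius and direction; I would justify it by noting that the Gaussian density depends only on $\|g\|$, so conditionally on $\|g\|$ the direction is uniform. The second is that $k$ is treated as fixed, at most of order $p$. This holds in every use of the lemma in Lemma~\ref{lem:order_g}, so the hidden constants are harmless. Neither point is a real obstacle; the proof is essentially a two-line computation.
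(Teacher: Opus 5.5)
Your proof is correct and follows the same route as the paper: the polar decomposition $g=\|g\|\,z$ with independent $\|g\|\sim\chi(d-1)$ and $z\sim U(S^{d-2})$, followed by the observation that the Gaussian moment is $d$-independent while the $\chi(d-1)$ moment is $\Theta(d^k)$. Your explicit formula $\E_\nu[\nu^{2k}]=\prod_{i=0}^{k-1}(d-1+2i)$ replaces the paper's citation of Lemma 44 of \citet{D2022neural}. Your factorization $\E[g_1^{2k}]=\E[\|g\|^{2k}]\,\E[z_1^{2k}]$ via independence is also a cleaner justification of the ratio identity than the paper's pointwise relation $z_1^{2k}=\mu_1^{2k}/\nu^{2k}$.
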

\begin{proof}
    If $\mu\sim N(0,I_{d-1})$, we can equivalently write it as $\mu =\nu z$ with independent $\nu\sim\chi(d-1)$ and $z\sim S^{d-2}$. Thus, $\mu_1 = z_1\nu$ and $z_1^{2k} = \mu_1^{2k}/\nu^{2k}$. The numerator is independent of $d$, and the denominator is of order $\Theta (d^k)$ (Lemma 44 of \citet{D2022neural}).
\end{proof}
\begin{lemma}\label{lem:sech_deviation}
When $h(z) = \frac{1}{\gamma_s }\log(1+\e^{\gamma_s  z})$, with high probability,
    \[
    \sup_{\tilde x \in S^{d-1}}\frac{1}{J}\sum_jh''(\langle w_j,\tilde x\rangle) = \tilde O\left( d^{1/2}  +\left(\sqrt{\frac{\gamma_s  d^{3/2}}{J}}+\frac{\gamma_s  d}{J}\right)\right).
    \]
\end{lemma}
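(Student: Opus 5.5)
We prove Lemma~\ref{lem:sech_deviation} by splitting the empirical average into its population mean plus a uniform deviation term. Recall $h''(z)=\frac{\gamma_s}{4}\mathrm{sech}^2(\gamma_s z/2)\ge 0$, with $\|h''\|_\infty=\gamma_s/4$. We write
\[
\sup_{\tilde x}\frac{1}{J}\sum_j h''(\langle w_j,\tilde x\rangle)\le \sup_{\tilde x}\E_w[h''(\langle w,\tilde x\rangle)]+\sup_{\tilde x}\Bigl|\frac{1}{J}\sum_j h''(\langle w_j,\tilde x\rangle)-\E_w[h''(\langle w,\tilde x\rangle)]\Bigr|.
\]

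\textbf{Population term.} By rotation invariance, $\E_w[h''(\langle w,\tilde x\rangle)]$ does not depend on $\tilde x$. With $t=\langle w,\tilde x\rangle$, whose density is $f_d(t)$ from the proof of Lemma~\ref{lem:order_g}, we have $\sup_t f_d(t)\lesssim d^{1/2}$. Since $\int_{\mathbb R}h''=h'(\infty)-h'(-\infty)=1$,
\[
\E_w[h''(\langle w,\tilde x\rangle)]=\int_{-1}^1 h''(t)f_d(t)\,\d t\le \sup_t f_d(t)\lesssim d^{1/2}.
\]
This gives the $d^{1/2}$ term, uniformly in $\gamma_s$. The same computation, now using $\|h''\|_\infty=\gamma_s/4$, gives the second moment
\[
m_\gamma:=\E_w[h''(\langle w,\tilde x\rangle)^2]\le \frac{\gamma_s}{4}\E_w[h'']\lesssim \gamma_s d^{1/2}.
\]

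\textbf{Fixed $\tilde x$.} The summands are i.i.d., bounded by $\gamma_s/4$, with variance at most $m_\gamma$. Bernstein's inequality gives, with probability at least $1-2\e^{-z}$, a deviation of order $\sqrt{m_\gamma z/J}+\gamma_s z/J$.

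\textbf{Uniformity over $\tilde x$.} This step is the main obstacle, because the Lipschitz constant of $\tilde x\mapsto h''(\langle w,\tilde x\rangle)$ is $\|h'''\|_\infty\lesssim\gamma_s^2$, which blows up with $\gamma_s$. I would take an $\epsilon$-net $\mathcal N_\epsilon$ of $S^{d-1}$ with $\epsilon=\gamma_s^{-2}J^{-1}$, so that the discretization error is $O(\gamma_s^2\epsilon)=O(1/J)$ and is negligible. The net has size $\e^{Cd\log(\gamma_s J)}$, so we set $z=\tilde\Theta(d)$ (absorbing $\log\gamma_s$ into polylog factors, assuming $\gamma_s\le\mathrm{poly}(N,L,d)$) and take a union bound over the net. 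The deviation is then
\[
\sqrt{m_\gamma d/J}+\gamma_s d/J\lesssim \sqrt{\gamma_s d^{3/2}/J}+\gamma_s d/J
\]
with high probability. Adding the population term yields the claimed bound.

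If the Lipschitz approach were too lossy, I would fall back on a one-sided bound: dominate $h''$ pointwise by a slightly widened bump function so that nearby net points control the supremum from above. This costs only constant factors in $m_\gamma$.
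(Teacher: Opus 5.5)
Your proposal is correct and follows essentially the same route as the paper. You bound the rotation-invariant mean by $\sup_t f_d(t)\int h''\lesssim d^{1/2}$, apply Bernstein with variance proxy $m_\gamma\lesssim\gamma_s d^{1/2}$ and range $\gamma_s/4$, and pass to the supremum through an $\epsilon$-net using that $h''$ is $O(\gamma_s^2)$-Lipschitz. The differences are cosmetic: you bound $m_\gamma$ via $h''^2\le\|h''\|_\infty h''$ rather than the exact identity $\int_{\mathbb R}h''(t)^2\,\d t=\gamma_s/6$, and you choose the net scale $\epsilon=\gamma_s^{-2}J^{-1}$ instead of $\gamma_s^{-2}\sqrt{d/J}$. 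Your explicit requirement that $\log\gamma_s$ be polylogarithmic is something the paper also needs but uses only implicitly.
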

\begin{proof}
    Let $t=\langle w, \tilde x\rangle$. When $w\sim U(S^{d-1})$, $t$ follows the probability distribution
    \[
    p_d(t) = C_d(1-t^2)^{\frac{d-3}{2}},\quad C_d = \frac{\Gamma(d/2)}{\sqrt{\pi}\Gamma((d-1)/2)}.
    \]
    Note that $\|h''(t)\|_\infty = \gamma_s /4$. Moreover,
    \[
    \mu_\gamma \vcentcolon = \E_w[h''(\langle w,\tilde x\rangle)] = \int_{-1}^1 h''(t)p_d(t)\d t\le C_d\int_{-1}^1h''(t)\d t = C_d(h'(1)-h'(-1)) = C_d\tanh(\gamma_s /2)\le C_d,
    \]
    and
    \[
    m_\gamma \vcentcolon = \E_w[h''(\langle w,\tilde x\rangle)^2] = \int_{-1}^1 h''(t)^2p_d(t)\d t\le C_d\int_{\mathbb{R}}h''(t)^2\d t =C_d\frac{\gamma_s }{6},
    \]
    where we used Lemma~\ref{lem:sec} for the last equality. By Bernstein inequality, for any $z>0$ 
    \[
    \mathrm P\left (\left|\frac{1}{J}\sum_jh''(\langle w_j,\tilde x\rangle)-\mu_\gamma\right|\ge C\left(\sqrt{\frac{m_\gamma z}{J}}+\frac{\|h''(t)\|_\infty z}{J}\right)\right)\le 2\e^{-z}.
    \]
    Furthermore, $h''(\langle w_j,\tilde x\rangle)$ is $O(\gamma_s ^2)$-Lipschitz with respect to $\tilde x$, which implies its mean is also $O(\gamma_s ^2)$-Lipschitz.  Consider now the $\epsilon$-net $\mathcal N_\epsilon$ over $\tilde x$, then for all $\tilde x\in S^{d-1}$, by definition, there exists a $\hat x\in \mathcal N$ such that
    \begin{align*}
        \frac{1}{J}\sum_jh''(\langle w_j,\tilde x\rangle) \le &\frac{1}{J}\sum_jh''(\langle w_j,\hat x\rangle) +  \left|\frac{1}{J}\sum_jh''(\langle w_j,\tilde x\rangle) - \frac{1}{J}\sum_jh''(\langle w_j,\hat x\rangle)\right| \\
        \le & \sup_{\tilde x \in \mathcal N_\epsilon}\frac{1}{J}\sum_jh''(\langle w_j,\hat x\rangle) + O(\gamma_s ^2\epsilon).
    \end{align*}
    By union bound over $|\mathcal N_\epsilon|\le \e^{Cd\log(J/\epsilon)}$, with probability $1-2\e^{Cd\log(J/\epsilon)-z}$,
    \[
    \sup_{\tilde x \in \mathcal N_\epsilon}\frac{1}{J}\sum_jh''(\langle w_j,\hat x\rangle) \le \mu_\gamma  +C\left(\sqrt{\frac{m_\gamma z}{J}}+\frac{\|h''(t)\|_\infty z}{J}\right).
    \]
    By setting $z = Cd\log(J/\epsilon)+\tilde O(1)$, we obtain with high probability
    \[
    \sup_{\tilde x \in \mathcal N_\epsilon}\left|\frac{1}{J}\sum_jh''(\langle w_j,\hat x\rangle)\right| \lesssim d^{1/2}  +C\left(\sqrt{\frac{\gamma_s  d^{3/2}\log(J/\epsilon)}{J}}+\frac{\gamma_s  d\log(J/\epsilon)}{J}\right),
    \]
    where we also used $C_d\sim d^{1/2}$. Finally, $\epsilon = \gamma_s ^{-2}\sqrt{\frac{d}{J}}$ gives with high probability
    \[
    \sup_{\tilde x \in S^{d-1}}\frac{1}{J}\sum_jh''(\langle w_j,\tilde x\rangle) = \tilde O\left( d^{1/2}  +\left(\sqrt{\frac{\gamma_s  d^{3/2}}{J}}+\frac{\gamma_s  d}{J}\right)\right).
    \]
\end{proof}
\begin{lemma}\label{lem:sec}
When $h(z)=\frac{1}{\gamma_s }\log(1+\e^{\gamma_s  z})$,
    \[\int_{\mathbb{R}}h''(t)^2\d t = \frac{\gamma_s }{6}.\]
\end{lemma}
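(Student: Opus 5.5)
The plan is a direct computation: write $h''$ in terms of the logistic sigmoid and then change variables to the sigmoid value itself.

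First, differentiate $h(z)=\frac{1}{\gamma_s}\log(1+\e^{\gamma_s z})$. This gives $h'(z)=s(\gamma_s z)$, where $s(u)=1/(1+\e^{-u})$ is the logistic sigmoid. Using the standard identity $s'(u)=s(u)(1-s(u))$, differentiating once more gives
\[
h''(t)=\gamma_s\, s(\gamma_s t)\bigl(1-s(\gamma_s t)\bigr).
\]
This also recovers $\|h''\|_\infty=\gamma_s/4$, which is used in Lemma~\ref{lem:sech_deviation}.

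Next, write the integrand as $h''(t)^2=h''(t)\cdot\gamma_s\, s(\gamma_s t)(1-s(\gamma_s t))$. Substitute $v=s(\gamma_s t)$. The map $t\mapsto s(\gamma_s t)$ is a smooth increasing bijection from $\mathbb R$ onto $(0,1)$, and $\d v=\gamma_s s(\gamma_s t)(1-s(\gamma_s t))\,\d t=h''(t)\,\d t$. One factor of $h''$ is therefore absorbed into the measure, and
\[
\int_{\mathbb R}h''(t)^2\,\d t=\int_0^1\gamma_s\, v(1-v)\,\d v=\gamma_s\left(\tfrac12-\tfrac13\right)=\frac{\gamma_s}{6}.
\]

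No step is a real obstacle. The only point to justify is the improper integral over $\mathbb R$. This is harmless because the integrand is nonnegative and decays like $\gamma_s^2\e^{-2\gamma_s|t|}$ at both ends, so the integral is finite. The change of variables is valid because the substitution is a monotone $C^1$ bijection onto $(0,1)$ and both sides are integrals of nonnegative functions.
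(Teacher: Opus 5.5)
Your proposal is correct and is essentially the paper's argument. The paper rewrites $h''(t)=\frac{\gamma_s}{4}\mathrm{sech}^2(\gamma_s t/2)$, rescales, and substitutes $u=\tanh t$; after the rescaling this is just $u=2v-1$ with your $v=h'(t)$, so both proofs absorb one factor of $h''$ into the differential, and yours simply skips the $\mathrm{sech}$ rewrite.
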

\begin{proof}
    First, since $h'(t) = \frac{\e^{\gamma_s  t}}{1+ \e^{\gamma_s  t}}$
    \[
    h''(t) = \gamma_s   \frac{\e^{\gamma_s  t}}{(1+\e^{\gamma_s  t})^2} = \frac{\gamma_s }{4}\left( 2 \frac{\e^{\gamma_s  t/2}}{1+\e^{\gamma_s  t}}\right)^2 = \frac{\gamma_s }{4}\mathrm{sech}^2\left(\frac{\gamma_s  t}{2}\right).
    \]
    The integral becomes 
    \[
    \int_{\mathbb{R}}h''(t)^2\d t = \frac{\gamma_s ^2}{16}\int_{\mathbb{R}}\mathrm{sech}^4\left(\frac{\gamma_s  t}{2}\right)\d t = \frac{\gamma_s }{8}\int_{\mathbb{R}}\mathrm{sech}^4\left(t\right)\d t.
    \]
    By substitution $u = \tanh(t)$, $\d u = \mathrm{sech}^2(t)\d t$ and $\mathrm{sech}^2(t) = 1-\tanh(t)^2 = 1-u^2$, which leads to 
    \[
    \int_{\mathbb{R}}h''(t)^2\d t  = \frac{\gamma_s }{8}\int_{-1}^{1}1-u^2 \d u =\frac{\gamma_s }{6}.
    \]
\end{proof}

\subsection{$t=1$ Retraining}
Now, by retraining according to the teacher training with the distilled dataset $\mathcal{D}_1^S = \{(\tilde x^{(1)}, \tilde y^{(0)})\}$ from Theorem~\ref{ap_th:t1_distil},
$\theta^{(1)}\leftarrow \mathcal{A}_1^R(\theta^{(0)}, \mathcal{L}(\theta^{(0)},\mathcal D_1^S),\xi^R_1,\eta^R_1,\lambda^R_1)$
becomes as follows. We set $\eta^D_1=\tilde \Theta(\sqrt{d})$, $ \lambda^D_1 = (\eta^D_1)^{-1}$, and $ \lambda^R_1 = (\eta^R_1)^{-1}$
\begin{lemma}~\label{lem:W1}
 Under the Assumptions of Theorem~\ref{ap_th:t1_distil}, with the synthetic data $\mathcal D_1^S$ from the distillation phase at $t=1$, we obtain the following parameter at the end of this first step (i.e., after the retraining phase of $t=1$): 
   \[
   W^{(1)} = W^{(0)} - \eta^{R}_1\left\{\nabla_W \mathcal{L}^{S}(\theta^{(0)})+\lambda^{R}_1  W^{(0)} \right\} = - \eta^{R}_1\tilde y^{(0)}\tilde x^{(1)}\left(a\odot \sigma'(W^{{(0)}\top} \tilde x^{(1)})\right)^\top.
  \]
  Index-wise, this means
  \[
  w_i^{(1)} = a_i \sigma'(\langle w_i^{(0)},\tilde x^{(1)}\rangle) g(x),
  \]
  where $g(x) = -\eta_1^R\tilde y^{(0)}\tilde x^{(1)} = -\eta_1^R\eta_1^D(\tilde y^{(0)})^2\left(c_d\langle\beta,\tilde x^{(0)}\rangle\beta + \tilde O\left(d^{\frac{1}{2}}N^{-\frac{1}{2}}  + d^{-2}+ d^{\frac{1}{2}}{J^\ast}^{-\frac{1}{2}} \right)\right)$.
  Note that $a$ (the second layer of the neural network) remains unchanged.
\end{lemma}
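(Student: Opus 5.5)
The plan is to write out one step of $\mathcal A^{\mathrm{(I)}}$ on the single synthetic point $\mathcal D_1^S=\{(\tilde x^{(1)},\tilde y^{(0)})\}$, starting from the reference initialization $\theta^{(0)}=(a^{(0)},W^{(0)},0)$ fixed in Assumption~\ref{as:al_init}. The computation has four parts, taken in order.

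First, the regularization cancels the initial weights. Because $\lambda_1^R=(\eta_1^R)^{-1}$, the term $-\eta_1^R\lambda_1^R W^{(0)}$ equals $-W^{(0)}$. Hence
\[
W^{(1)}=-\eta_1^R\nabla_W\mathcal L(\theta^{(0)},\mathcal D_1^S),
\]
and only the gradient term needs to be computed.

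Second, I compute that gradient with Lemma~\ref{lem:grad_w} for a dataset of size one. The symmetric initialization ($a_i=-a_{L-i}$, $w_i=w_{L-i}$, $b=0$; Assumptions~\ref{as:init} and~\ref{as:al_init}) gives $f_{\theta^{(0)}}\equiv 0$. The residual is therefore $f_{\theta^{(0)}}(\tilde x^{(1)})-\tilde y^{(0)}=-\tilde y^{(0)}$, and
\[
\nabla_{w_i}\mathcal L=-\tilde y^{(0)}a_i\tilde x^{(1)}\sigma'(\langle w_i^{(0)},\tilde x^{(1)}\rangle).
\]
Retraining uses the genuine ReLU $\sigma$, since Assumption~\ref{as:surrogate} replaces it by $h$ only in the student training. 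Multiplying by $-\eta_1^R$ gives
\[
w_i^{(1)}=\eta_1^R\tilde y^{(0)}a_i\sigma'(\langle w_i^{(0)},\tilde x^{(1)}\rangle)\tilde x^{(1)}.
\]
Stacking the columns gives the rank-one matrix form $\tilde x^{(1)}(a\odot\sigma'(W^{(0)\top}\tilde x^{(1)}))^\top$.

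Third, I fix the sign convention. The lemma's $-\eta_1^R\tilde y^{(0)}$ convention should be matched by absorbing the overall sign into the definition of $g(x)$; the direction, and hence membership in $S^\ast$ up to lower-order terms, is unaffected. I would double-check this sign against the lemma as stated rather than assume it.

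Fourth, I obtain the explicit form of $g(x)$ by substituting $\tilde x^{(1)}$ from Theorem~\ref{ap_th:t1_distil}, namely
\[
\tilde x^{(1)}=-\eta_1^D\tilde y^{(0)}\Bigl(c_d\langle\beta,\tilde x^{(0)}\rangle\beta+\tilde O(\cdot)\Bigr),
\]
which produces the factor $\eta_1^R\eta_1^D(\tilde y^{(0)})^2$ and the stated error terms. The statement holds on the same high-probability event as that theorem.

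Finally, the claim that $a$ is unchanged needs no argument: $\mathcal A^{\mathrm{(I)}}$ updates only $W$ (Definition~\ref{def:damian_alg}).

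The only point requiring care is the third step: tracking signs and confirming that ReLU, not the surrogate, appears in $\sigma'$ here. Everything else is direct substitution.
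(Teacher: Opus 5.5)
Your proposal is correct and follows the same direct one-step computation the paper relies on: $\lambda_1^R=1/\eta_1^R$ cancels $W^{(0)}$, the symmetric initialization makes the residual $-\tilde y^{(0)}$, the ReLU $\sigma'$ (not $h'$) appears in retraining, and then Theorem~\ref{ap_th:t1_distil} is substituted. In your third step no sign needs to be ``absorbed'': your $+\eta_1^R\tilde y^{(0)}$ is the correct sign and the lemma's intermediate expression $-\eta_1^R\tilde y^{(0)}\tilde x^{(1)}$ is a typo, because setting $g=\eta_1^R\tilde y^{(0)}\tilde x^{(1)}$ and substituting Theorem~\ref{ap_th:t1_distil} gives exactly the lemma's final formula $-\eta_1^R\eta_1^D(\tilde y^{(0)})^2\bigl(c_d\langle\beta,\tilde x^{(0)}\rangle\beta+\cdots\bigr)$.
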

\begin{remark}
    Since $\sigma'(z) = \mathbbm{1}_{z>0}$, we may have in expectation $L/2$ weights $w_{j}^{(1)}$ that are just 0. In the following analysis, we will treat neural networks with an effective width $L^\ast$, denoting the number of non-zero weights after $t=1$. When $L=\tilde \Theta(L^\ast)$, this is satisfied with high probability.\footnote{In our final bound, we thus replace $L^\ast$ with $L$.}
\end{remark}
\subsection{$t=2$ Teacher Training}\label{subsec:teacher2_single}
\subsubsection{Problem Formulation and Result}
We will now focus on the second step of our algorithm. Note that by the progressive nature of Algorithm~\ref{al:2}, $W_1$ is now fixed (see Lemma ~\ref{lem:W1}) and the training phase will concentrate on the second layer, which means the distillation algorithm will also distill information of the second layer. After resetting the biases (see Assumption~\ref{as:al_init}), teacher training runs ridge regression on the last layer (see Assumption~\ref{as:alg}). As a result, 
\[
   I_j^{Tr} =\{ \theta_{j}^{{(2)}},\ G_{j}^{{(2)}}\}= \mathcal{A}^{Tr}_2(\theta_{j}^{(1)},\mathcal{L}^{Tr}(\theta_{j}^{(1)}),\eta^{Tr}_2,\lambda^{Tr}_2).
\]
becomes
\[
a^{(\tau)} = a^{(\tau-1)}  - \eta^{Tr}_2\left\{\frac{1}{N}K(K^\top a^{(\tau-1)} -y)+\lambda^{Tr}_2 a^{(\tau-1)} \right\},\ a^{(0)}  = a_{j}^{(0)},
\]
and we output the $\xi_2^{Tr}$-th step of the ridge regression with kernel $K$ where $(K)_{in} = \sigma(\langle w_i^{(1)}, x_n\rangle + b_i^{(0)}) $. Gradient information is 
\[
G^{(2)}_{j} = \{g_{\tau,j}^{Tr}\}_{\tau=0}^{\xi_2^{Tr}-1} = \left\{\frac{1}{N}K(K^\top a^{(\tau)} -y)+\lambda^{Tr}_2 a^{(\tau)} \right\}_{\tau=0}^{\xi_2^{Tr}-1}.
\]
Interestingly, one distilled data point is enough so that this $a^{(\xi_2^{Tr})}$ is a good solution for the problem in question. We fix $j=0$.
\begin{theorem}\label{ap_th:main_step2}
    Under the assumptions of Theorem~\ref{ap_th:t1_distil} and $\langle\beta,\tilde x^{(0)}\rangle$ is not too small with order $\tilde \Theta(d^{-1/2})$ , $N\ge \tilde \Omega(d^4)$  $J^\ast\ge \tilde \Omega (d^{4})$,   $\eta^D_1=\tilde \Theta(\sqrt{d})$, and  $\eta^R_1=\tilde \Theta(d)$ there exists $\lambda_2^{Tr}$ such that if $\eta_2^{Tr}$ is sufficiently small and $\xi_2^{Tr}=\tilde\Theta(\{\eta_2^{Tr}\lambda_2^{Tr}\}^{-1})$, then the final iterate of the teacher training at $t=2$ output a parameter $a^\ast=a^{(\xi_2^{Tr})}$ that satisfies with probability at least 0.99, 
\[
\E_{x,y}[|f_{(a^{(\xi_2^{Tr})}, W^{(1)}, b^{(0)})}(x)-y|]-\zeta\le \tilde O\left(\sqrt{\frac{d}{N}}+\frac{1}{\sqrt{L^\ast}}+\frac{1}{N^{1/4}}\right).
\]
\end{theorem}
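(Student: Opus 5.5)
The plan is to reduce the statement to the second-phase analysis of \citet{D2022neural} (their Theorem 1 for the single index case, a random-feature ridge regression on top of a learned first layer). The key difference is that here every neuron points along essentially the same learned direction. By Lemma~\ref{lem:W1} and Theorem~\ref{ap_th:t1_distil}, each nonzero weight has the form
\[
w_i^{(1)} = a_i\, u\, \bar\beta,
\]
up to a normalization. Here $\bar\beta = \beta + \delta$ and the scalar is
\[
u = \eta_1^R\eta_1^D(\tilde y^{(0)})^2 c_d\langle\beta,\tilde x^{(0)}\rangle .
\]
The relative error $\|\delta\|$ is the lower-order term divided by $c_d|\langle\beta,\tilde x^{(0)}\rangle| = \tilde\Theta(d^{-3/2})$. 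With $N,J^\ast\ge\tilde\Omega(d^4)$ this gives $\|\delta\| = \tilde O(d^{2}N^{-1/2} + d^{-1/2} + d^{2}{J^\ast}^{-1/2}) = \tilde O(d^{-1/2})$ or smaller. The choices $\eta_1^D=\tilde\Theta(\sqrt d)$, $\eta_1^R=\tilde\Theta(d)$ and $(\tilde y^{(0)})^2$ of order $d$ make $|u|=\tilde\Theta(1)$. Thus each active neuron computes $\sigma(\pm u\langle\bar\beta,x\rangle + b_i)$, a one-dimensional ReLU feature of $\langle\bar\beta,x\rangle$ with random Gaussian bias. The sign $a_i$ ensures that both orientations $\pm\bar\beta$ appear, each about $L^\ast/2$ times.

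Next I construct a good second layer by hand. Since $\sigma^\ast$ is a degree-$p$ polynomial, on $|z|\le\tilde O(1)$ it can be written as an integral over $b$ of ReLU ridges $\sigma(\pm u z + b)$ with a bounded density against the Gaussian bias law, plus an affine correction. This is the univariate random-feature approximation lemma used in \citet{D2022neural}, specialized to $r=1$. Sampling the $L^\ast$ biases gives $a^\circ$ with $\|a^\circ\|^2=\tilde O(1/L^\ast)$ and
\[
\E_x\bigl[(f_{(a^\circ,W^{(1)},b^{(0)})}(x)-\hat f^\ast(x))^2\bigr]^{1/2}\le \tilde O\bigl(1/\sqrt{L^\ast}\bigr) + \tilde O\bigl(\|\bar\beta/\|\bar\beta\|-\beta\|\bigr).
\]
The misalignment term is handled by Lipschitzness of $\sigma^\ast$ on the high-probability region, and is absorbed in the stated rate for the chosen scalings. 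The preprocessed target removes $\alpha$ and $\gamma$, and their errors are controlled by Lemma~\ref{lem:c0_c1_bounds}.

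Finally, I use the equivalence between ridge regression and norm-constrained empirical risk minimization. Since $\eta_2^{Tr}$ is small and $\xi_2^{Tr}=\tilde\Theta((\eta_2^{Tr}\lambda_2^{Tr})^{-1})$, gradient descent on the strongly convex ridge objective converges to its minimizer up to negligible error. Choosing $\lambda_2^{Tr}$ to match the norm bound $\|a^\circ\|$, the minimizer lies in a ball of radius $\tilde O(\|a^\circ\|)$ and has empirical loss at most that of $a^\circ$. A Rademacher bound for this norm-bounded linear class over the feature map $x\mapsto(\sigma(\langle w_i^{(1)},x\rangle+b_i))_i$ then converts empirical to population error. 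The feature norm is $\tilde O(\sqrt{L^\ast})$ times a one-dimensional feature, giving the $\sqrt{d/N}$ term (from $\alpha,\gamma$ and the concentration of features) and the $N^{-1/4}$ term (from the squared-loss to absolute-loss conversion). The label noise $\pm\zeta$ contributes the subtracted $\zeta$, and the probability 0.99 comes from Markov's inequality on the expected generalization gap.

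I expect the main obstacle to be the second step: the precise error propagation from the misalignment $\delta$ and the scale $u$ through the approximation. The features must remain well-conditioned, with $|u|$ neither tiny nor huge, and $\langle\beta,\tilde x^{(0)}\rangle=\tilde\Theta(d^{-1/2})$ must make $c_d\langle\beta,\tilde x^{(0)}\rangle$ dominate the lower-order terms by a polynomial factor. I would first try to bound $\|\bar\beta/\|\bar\beta\|-\beta\|$ directly from Theorem~\ref{ap_th:t1_distil} and feed it into a perturbation version of the random-feature approximation lemma.
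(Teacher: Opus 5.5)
Your plan follows the paper's route. You read off from Lemma~\ref{lem:W1} that every active neuron is $\pm u$ times one direction close to $\beta$. You then build a second layer with $\|a\|^2=\tilde O(1/L^\ast)$ by a univariate random-feature construction; the paper uses Lemmas 23--24 of \citet{N2025nonlinear} (Lemmas~\ref{lem:step2_continuous} and~\ref{lem:step2_discrete}), which directly tolerate a perturbed feature $P\langle\beta,x\rangle+c(x)$. Finally you pass through the ridge/norm-constrained equivalence and the generalization argument of \citet{D2022neural}.

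However, two of your quantitative steps fail as written. The first is the scale. With $\eta_1^R\eta_1^D=\tilde\Theta(d^{3/2})$ and $c_d\langle\beta,\tilde x^{(0)}\rangle=\tilde\Theta(d^{-3/2})$, the quantity $u=\eta_1^R\eta_1^D(\tilde y^{(0)})^2c_d\langle\beta,\tilde x^{(0)}\rangle$ is $\tilde\Theta(1)$ only when $\tilde y^{(0)}=\Theta(1)$. That is what the single-index statement assumes and what the paper uses. With $(\tilde y^{(0)})^2$ of order $d$ you get $u=\tilde\Theta(d)$. Then every hinge $-b_i/u$ lies within $\tilde O(1/d)$ of the origin in the $\langle\bar\beta,x\rangle$ coordinate. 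The features become piecewise affine with a kink only at $0$, and no combination of them approximates a $\sigma^\ast$ with nonzero degree-2 part, so your construction step would break. The $\chi(d)$ label scaling belongs to the multi-index theorem, where it is paired with $\eta_1^R=\tilde\Theta(r^{-1}\sqrt d)$.

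The second failing step is the misalignment. Under $N,J^\ast\ge\tilde\Omega(d^4)$, the ratio $\tilde O(d^2N^{-1/2}+d^{-1/2}+d^2{J^\ast}^{-1/2})$ is only polylogarithmically small (if the hidden logarithmic factors in $\tilde\Omega$ are large enough). It is not $\tilde O(d^{-1/2})$. Even the $d^{-1/2}$ piece is not removed by anything in the paper: the $\tilde O(d^{-2})$ remainder of Theorem~\ref{th:pop_grad} is controlled only in norm, and it has components along $\tilde x^{(0)}$ and $\beta_\perp$, i.e.\ off $\beta$. An approximation error of that size is not dominated by $\tilde O(\sqrt{d/N}+1/\sqrt{L^\ast}+N^{-1/4})$: its $N$-dependent terms are $\tilde O(d^{-1})$ here and its width term vanishes as $L^\ast$ grows. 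So ``absorbed in the stated rate'' is false.

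The paper does not absorb it either. It only checks $c(x)=o_d(P\log^{-2p+2}d)$, which gives an $O(L^{\ast-1/2})+o_d(1)$ approximation from Lemma~\ref{lem:step2_discrete}. It then writes $\mathcal L^{Tr}-\zeta^2=O(1/L^\ast+1/\sqrt N)$ without carrying the $o_d(1)$. To make your argument sound, you must either keep an explicit $o_d(1)$ (or $\tilde O(\|\delta\|)$) term in the final bound, or give a separate reason why the misalignment is harmless.

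Also, the ridge comparison needs the \emph{empirical} loss of $a^\circ$. Since $W^{(1)}$ depends on $\mathcal D^{Tr}$ through $\tilde x^{(1)}$, your population $L_2$ bound alone does not give it. At this point the paper uses the pointwise (high probability in $x$) form of Lemma~\ref{lem:step2_discrete} together with Lemma 26 of \citet{D2022neural}.
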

\subsubsection{Proof of Theorem~\ref{ap_th:main_step2}}
\begin{lemma}[Lemma 23 from~\citet{N2025nonlinear} restated]\label{lem:step2_continuous}
    Suppose there exists $g(x)$ such that $g(x)=P\langle \beta, x\rangle +c(x)$, where $P = \tilde\Theta(1)$ and $c(x) = o_d(P\log^{-2p+2} d)$. Then, there exists $\pi(a,b)$ such that
    \[
    \left|\E_{a\sim \mathrm{Unif}\{\pm1\},b\sim N(0,I)}[\pi(a,b)\sigma(v\cdot g(x)+b)]-f^\ast(x)\right|=o_d(1),
    \]
    and $\sup_{a,b}|\pi(a,b)|=\tilde O(1)$.
\end{lemma}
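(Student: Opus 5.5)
The plan is to reduce the statement to a one-dimensional approximation problem. By Assumption~\ref{as:target} with $r=1$, $f^\ast(x)=\sigma^\ast(\langle\beta,x\rangle)$ with $\sigma^\ast$ a polynomial of degree $p$. Set $z=\langle\beta,x\rangle$ and $u=Pz$, and define the target $q(u)\vcentcolon=\sigma^\ast(u/P)$. Since $P=\tilde\Theta(1)$, $q$ is again a polynomial of degree $p$ with $\tilde O(1)$ coefficients. I read $v$ as the sign variable $a$, so the features are $\sigma(a\,g(x)+b)$. First I would construct $\pi$ for the idealized features $\sigma(au+b)$, with $c\equiv 0$. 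Then I would transfer the result to $g(x)=u+c(x)$ by a perturbation argument.

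\textbf{Step 1: representation on a window.} Fix a truncation radius $R$. For $|u|\le R$ I would use the ReLU integral (Taylor-with-remainder) identity
\[
q(u)=q(-R)+q'(-R)(u+R)+\int_{-R}^{R}q''(t)\,\sigma(u-t)\,\d t .
\]
\begin{itemize}
\item The integral term comes from the $a=+1$ features with bias $b=-t$. Dividing by the Gaussian density $\phi$ of $b$ gives $\pi(1,b)=q''(-b)\mathbbm 1_{|b|\le R}/\phi(b)$, up to the factor $1/2$ from $a\sim\mathrm{Unif}\{\pm1\}$.
\item The affine part $q(-R)+q'(-R)(u+R)$ comes from the identity $\sigma(u+b)-\sigma(-u-b)=u+b$. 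Adding a bounded component to both $\pi(1,\cdot)$ and $\pi(-1,\cdot)$, for example proportional to $\mathbbm 1_{|b|\le 1}/\phi(b)$, produces $\alpha u+\beta_0$ for any prescribed $\alpha,\beta_0=\tilde O(1)$.
\end{itemize}
This makes $\E_{a,b}[\pi(a,b)\sigma(au+b)]=q(u)$ exactly for $|u|\le R$. The sup norm satisfies $\sup|\pi|\lesssim \mathrm{poly}(R)\,\e^{R^2/2}$, because $|q''|\lesssim R^{p-2}$ on the window.

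\textbf{Step 2: choosing $R$ and controlling the tail.} To keep $\sup|\pi|=\tilde O(1)$, I would take $R=\Theta(\sqrt{\log\log d})$, or any $R$ with $\e^{R^2/2}=\mathrm{polylog}(d)$.
\begin{itemize}
\item Outside the window, both $q(u)$ and the feature average grow at most polynomially in $|u|$, with $\tilde O(1)$ constants.
\item Since $u=Pz$ with $z\sim N(0,1)$, the event $|u|>R$ has probability $\e^{-\Omega(R^2)}$.
\item Hence the discrepancy is $o_d(1)$, either in $L^1$ or $L^2$ over $x$, or pointwise on the high-probability event $|\langle\beta,x\rangle|\le R/P$. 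I would state the lemma in that sense, as Lemma 23 of \citet{N2025nonlinear} does.
\end{itemize}
I expect this step to be the main obstacle. The exponential blow-up of $1/\phi(b)$ forces $R$ to be only barely growing, and the tail and sup-norm bookkeeping must be balanced carefully. If the pure-$\sigma$ representation gives too large a sup norm, my fallback is to smooth $q''$ or to approximate $q$ on the window by a piecewise-linear interpolant with $\mathrm{polylog}$ knots. Each knot is then a single bias realized by a bounded density bump, which keeps $\pi$ bounded at the cost of an $O(R^{p}/\#\text{knots})$ approximation error.

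\textbf{Step 3: absorbing $c(x)$.} Since $\sigma$ is $1$-Lipschitz,
\[
\bigl|\E_{a,b}[\pi(a,b)\{\sigma(a\,g(x)+b)-\sigma(au+b)\}]\bigr|\le \sup|\pi|\cdot|c(x)|.
\]
The polylogarithmic factors in $\sup|\pi|$ come from $q''$ and the $\e^{R^2/2}$ term, and are at most of order $\log^{2p-2}d$ with the appropriate choice of $R$. The hypothesis $c(x)=o_d(P\log^{-2p+2}d)$ is exactly calibrated so that this product is $o_d(1)$. Combining Steps 1--3 yields the claimed $o_d(1)$ approximation with $\sup_{a,b}|\pi(a,b)|=\tilde O(1)$.
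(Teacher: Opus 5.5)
Your Step~1 is sound, and it is the same mechanism the paper uses. The paper takes Lemma~23 of \citet{N2025nonlinear}, which is stated for biases $b\sim\mathrm{Unif}[-1,1]$, and passes to Gaussian biases by the reweighting $\pi\mapsto\mathbbm{1}_{|b|\le 1}\,\pi/(2\phi(b))$. The reweighting factor is bounded by the constant $\sqrt{2\pi\e}/2$.

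The gap is in Step~2. Because you let the bias window $[-R,R]$ grow, the factor $1/\phi(b)$ forces $R\asymp\sqrt{\log\log d}$. Your representation is then exact only on $\{|\langle\beta,x\rangle|\le R/P\}$.
\begin{itemize}
\item The complement of this event has probability $\e^{-\Omega(R^2/P^2)}$, not $\e^{-\Omega(R^2)}$. You silently took $P=O(1)$; if $P$ carries a growing polylog factor, the window covers a vanishing fraction of $x$.
\item Even for $P\asymp1$, the complement has probability $(\log d)^{-\Theta(1)}$. That is not a high-probability event in the paper's sense, which requires $1-\mathrm{poly}(N,L,d)\e^{-\iota}$.
\item So what you actually establish is an $L^2(x)$ statement, or a statement holding with probability $1-1/\mathrm{polylog}\,d$. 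The lemma, however, is used through Lemma~\ref{lem:step2_discrete} pointwise with high probability over $x$, for instance simultaneously at all $N\ge\tilde\Omega(d^4)$ training points when bounding $\mathcal L^{Tr}$.
\end{itemize}

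Enlarging $R$ cannot rescue your route. The feature average $F(u)=\E_{a,b}[\pi(a,b)\sigma(au+b)]$ satisfies $F''(u)=\tfrac12[\pi(1,-u)+\pi(-1,u)]\phi(u)$. Since $\sigma^\ast$ is not affine, the curvature of $\sigma^\ast(u/P)$ is bounded away from zero for large $|u|$. Matching it out to $|u|\approx P\sqrt{\iota}$ therefore needs $\sup|\pi|$ at least of order $1/\phi(P\sqrt{\iota})$, which is $\mathrm{poly}(d)$ when $P\asymp 1$. Your piecewise-linear fallback meets the same obstruction: a kink at $u=t$ still requires mass of $\pi\phi$ near $b=-t$, which costs $1/\phi(t)$.

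The missing idea is to put the scale in $P$ rather than in the bias window. The cited lemma, and hence the paper's one-line proof, works with biases in $[-1,1]$. There the feature average is affine outside $[-1,1]$. This forces $P=\tilde\Theta(1)$ to carry enough inverse-logarithmic factors that $g(x)\in[-1,1]$ with high probability. In that regime:
\begin{itemize}
\item A \emph{fixed} window $R=1$ suffices, and there is no tail to control.
\item Your own Step~1 yields $\pi$ supported on $|b|\le 1$ with $\sup|\pi|\lesssim\sup_{|t|\le1}|q''(t)|+|q(-1)|+|q'(-1)|\lesssim P^{-p}=\tilde O(1)$.
\item This is exactly the paper's argument, with the constant $\sqrt{2\pi\e}/2$ in place of your $\e^{R^2/2}$.
\end{itemize}

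In Step~3, note that the Lipschitz constant of $F$ is $\E_{a,b}|\pi(a,b)|$, not $\sup|\pi|$. The perturbation therefore costs $|c(x)|\,\E|\pi|$, and no exponential factor enters. Your claim that the hypothesis on $c(x)$ is ``exactly calibrated'' against $\sup|\pi|$ is unnecessary, and you never actually verified it.
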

\begin{proof}
The proof follows from Lemma 23 of~\citet{N2025nonlinear}, and the fact that we can redefine $\pi(a,b)$ defined for $b\sim [-1,1]$ to $\mathbbm{1}_{|b|\le 1}\frac{\sqrt{2\pi}}{2\e^{-x^2/2}}{\pi(a,b)}$ for $b\sim N(0,I)$. 
\end{proof}
\begin{lemma}[Lemma 24 from~\citet{N2025nonlinear} restated]\label{lem:step2_discrete}
    Under the assumptions of Lemma~\ref{lem:step2_continuous}, there exists $a^\ddagger\in \mathbb{R}^m$ such that
    \[
    \left|\sum_{i=1}^{L^\ast}a_i^\ddagger \sigma(v_i\cdot g(x)+b_j)-f^\ast(x)\right| = O\left(L^{\ast-\frac{1}{2}}\right)+o_d(1),
    \]
    with high probability over $x\sim N(0,I_d)$, where $\|a^\ddagger\|^2=\tilde O \left(1/L^{\ast}\right)$ with high probability.
\end{lemma}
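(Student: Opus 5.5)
We discretize the infinite-width construction of Lemma~\ref{lem:step2_continuous} by Monte Carlo sampling. The $L^\ast$ effective neurons after $t=1$ have the form $w_i^{(1)} = a_i\sigma'(\langle w_i^{(0)},\tilde x^{(1)}\rangle) g(x)$. We identify $v_i$ with the sign/scale attached to $g$ (here $v_i=a_i$ on the active set), and $b_i$ with the reinitialized biases $b_i^{(0)}\sim N(0,1)$. Conditionally on the active set $D$, the pairs $(v_i,b_i)$ are i.i.d. with law $\mathrm{Unif}\{\pm1\}\otimes N(0,1)$, because the signs $a_i$ are independent of $b^{(0)}$ and of which neurons are active. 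We then set $a_i^\ddagger \vcentcolon= \pi(v_i,b_i)/L^\ast$ for $i\in D$ and $a_i^\ddagger=0$ otherwise, where $\pi$ is the bounded function from Lemma~\ref{lem:step2_continuous}. The norm bound is immediate: $\|a^\ddagger\|^2 = \sum_{i\in D}\pi(v_i,b_i)^2/L^{\ast2}\le \sup|\pi|^2/L^\ast=\tilde O(1/L^\ast)$.

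\textbf{Approximation at a fixed $x$.} For fixed $x$, the quantity $\sum_i a_i^\ddagger\sigma(v_i g(x)+b_i)$ is an empirical mean of $L^\ast$ i.i.d. terms $\pi(v,b)\sigma(v g(x)+b)$. Its expectation is within $o_d(1)$ of $f^\ast(x)$ by Lemma~\ref{lem:step2_continuous}. Each term is bounded by $\tilde O(1)\cdot(|g(x)|+|b|)$. With high probability over $x\sim N(0,I_d)$, we have $|g(x)|\lesssim |P\langle\beta,x\rangle|+|c(x)|=\tilde O(1)$, and $|b_i|=\tilde O(1)$ after a union bound over $i$. Hence the summands are, after truncation, bounded by $\tilde O(1)$. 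Hoeffding's inequality then gives a deviation of $\tilde O(L^{\ast-1/2})$ with high probability over the draw of $(v_i,b_i)$. This yields the stated bound $O(L^{\ast-1/2})+o_d(1)$, the logs being absorbed into the convention for $\iota$.

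\textbf{Main obstacle.} The ordering of the randomness needs care: the statement is "with high probability over $x$" for a single fixed $a^\ddagger$. We would follow Lemma 24 of \citet{N2025nonlinear}. First we bound the second moment $\E_x\E_{(v,b)}[(\text{sum}-\E)^2]\lesssim \tilde O(1/L^\ast)$, using Gaussian moments of $g(x)$ so that no truncation is needed. Markov's inequality over the neuron randomness then gives $\E_x[(\sum_i a_i^\ddagger\sigma(\cdot)-f^\ast)^2]=\tilde O(1/L^\ast)+o_d(1)$ for a fixed realization of $(v_i,b_i)$. We pass to a high-probability statement in $x$ via Chebyshev's inequality, or via hypercontractivity of the degree-bounded part, losing only polylog factors.

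The other point to verify is that $g$ indeed meets the hypothesis of Lemma~\ref{lem:step2_continuous}. By Lemma~\ref{lem:W1}, $g(x)=\langle \eta_1^R\eta_1^D(\tilde y^{(0)})^2 (c_d\langle\beta,\tilde x^{(0)}\rangle\beta+\text{err}), x\rangle$ (interpreting $g$ as the projected preactivation). The choices $\eta_1^D\eta_1^R=\tilde\Theta(d^{3/2})$, $c_d=\tilde\Theta(d^{-1})$ and $\langle\beta,\tilde x^{(0)}\rangle=\tilde\Theta(d^{-1/2})$ give $P=\tilde\Theta(1)$. The error vector has norm $\tilde O(d^{3/2}(d^{1/2}N^{-1/2}+d^{-2}+d^{1/2}J^{\ast-1/2}))=\tilde O(d^{-1/2})$ under $N,J^\ast\ge\tilde\Omega(d^4)$. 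Its inner product with $x$ is therefore $\tilde O(d^{-1/2})$ with high probability, which is the required $o_d(P\log^{-2p+2}d)$.
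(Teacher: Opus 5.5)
Your construction $a_i^\ddagger=\pi(v_i,b_i)/L^\ast$ followed by a concentration bound is the standard Monte Carlo discretization. The paper gives no argument beyond citing Lemma~24 of \citet{N2025nonlinear}, and that cited lemma is presumably this same argument. However, two of your steps are wrong as written and need repair.

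First, the pairs $(v_i,b_i)_{i\in D}$ are not i.i.d. Under the symmetric initialization ($a_i=-a_{L-i}$, $w_i^{(0)}=w_{L-i}^{(0)}$), neuron $i$ is active exactly when its partner is, and the two carry opposite signs. So the active $v_i$ come in $\pm$ pairs rather than being independent uniform signs. Since $b^{(0)}$ is redrawn i.i.d., the fix is to group by pairs. Each pair contributes an independent bounded term $\pi(1,b)\sigma(g(x)+b)+\pi(-1,b')\sigma(-g(x)+b')$ with mean $2\E_{v,b}[\pi(v,b)\sigma(vg(x)+b)]$. Hoeffding over the $L^\ast/2$ pairs then gives the same rate.

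Second, your treatment of the quantifier order does not deliver what you claim. Start from $\E_x[F(x)^2]=\tilde O(1/L^\ast)+o_d(1)$. To reach failure probability $(NLd)^{-C}$, Chebyshev forces a threshold inflated by $(NLd)^{C/2}$, which is a polynomial loss, not a polylogarithmic one. Hypercontractivity is unavailable because $x\mapsto\sum_i a_i^\ddagger\sigma(v_ig(x)+b_i)$ is not a low-degree polynomial.

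Your fixed-$x$ Hoeffding argument already suffices once you add Fubini. Suppose the joint failure probability over $(x,\text{neurons})$ is $\delta$. By Markov, with probability at least $1-\sqrt\delta$ over the neurons, the failure probability over $x$ is at most $\sqrt\delta$. This is still polynomially small, hence high probability in the paper's sense.

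Alternatively, note that the sum depends on $x$ only through $z=g(x)$ and is $\sup|\pi|$-Lipschitz in $z$. A union bound over a polynomial-size net of $[-\tilde O(1),\tilde O(1)]$ then makes the bound hold simultaneously for all $x$ in the high-probability set where $|g(x)|=\tilde O(1)$.

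Finally, verifying the hypothesis on $g$ is not part of this lemma, since it is assumed. Your arithmetic there is also off: $\tilde\Theta(d^{3/2})\cdot d^{1/2}N^{-1/2}=\tilde\Theta(d^2N^{-1/2})$. Under $N\ge\tilde\Omega(d^4)$ this is only $o(\log^{-2p+2}d)$, and only when the hidden polylog is large enough; it is not $\tilde O(d^{-1/2})$.
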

\begin{proof}
    This also follows from~\citet{N2025nonlinear} Lemma 24.
\end{proof}
\begin{proof}[Proof of Theorem~\ref{ap_th:main_step2}]
    We will first evaluate conditions on $\eta_1^R$, $\eta_1^D$, $N$ and $J^\ast = LJ/2$ to satisfy conditions $P = \tilde \Theta(1)$ and $c(x) = o_d(P\log^{-2p+2} d)$ of Lemma~\ref{lem:step2_continuous}. From Lemma~\ref{lem:W1}, we know that $P = -\eta_1^R\eta_1^D (\tilde y^{(0)})^2c_d\langle\beta,\tilde x^{(0)}\rangle$ and $c(x) = -\eta_1^R\eta_1^D (\tilde y^{(0)})^2\langle \epsilon,x\rangle$ where $\epsilon = \tilde O\left(d^{\frac{1}{2}}N^{-\frac{1}{2}}  + d^{-2}+ d^{\frac{1}{2}}{J^\ast}^{-\frac{1}{2}} \right)$. If we set $\eta_1^D=\tilde \Theta(\sqrt{d})$ and $\eta_1^R = \tilde \Theta(d)$, the first condition is fulfilled since $\tilde y^{(0)}$ is constant and $c_d\langle\beta,\tilde x^{(0)}\rangle=\tilde \Theta(d^{-3/2})$ by assumption. Now, from Corollary~\ref{cor:hat_g_pop} and Lemma~\ref{lem:order_g}, $\langle\epsilon,x\rangle$ can be decomposed into the $\tilde O\left(d^{\frac{1}{2}}N^{-\frac{1}{2}}  + d^{-2}+ d^{\frac{1}{2}}{J^\ast}^{-\frac{1}{2}} \right)$ coefficients and the inner products $\langle \beta,x\rangle$,  $\langle \beta_\perp,x\rangle$, and $\langle \tilde x,x\rangle$ which are of $O(\log d)$ with high probability. As a result,
    \[
    c(x) = \tilde O\left(d^2N^{-\frac{1}{2}}  +  d^{-\frac{1}{2}}+ d^2{J^\ast}^{-\frac{1}{2}}   \right),
    \]
    it suffices that $N\ge \tilde \Omega(d^4)$ and $J^\ast\ge \tilde \Omega (d^{4})$. 
    \par From Lemma~\ref{lem:step2_discrete} and Lemma 26 from~\citet{D2022neural}, we know that with high probability there exists $a^\ddagger$ such that if $\theta = (a^\ddagger, W^{(1)}, b^{(0)}),$
    \[
    \mathcal L^{Tr} (\theta) -\zeta^2 = O\left(1/L^\ast+1/\sqrt N\right).
    \]
    As result, by the equivalence between norm constrained linear regression and ridge regression, there exists $\lambda>0$ such that if
    \[
    a^{(\infty)} = \min_a \mathcal L\left((a,W^{(1)}, b^{(1)})\right)+\frac{\lambda}{2}\|a\|^2,
    \]
    then 
    \[
     \mathcal L\left((a^{(\infty)},W^{(1)}, b^{(1)})\right) \le  \mathcal L\left((a^\ddagger,W^{(1)}, b^{(1)})\right),
    \]
    and \[
    \left\|a^{(\infty)}\right\|\le \|a^\ddagger\|.
    \]
    By the same procedure as Lemma 14 by~\citet{D2022neural}, we can conclude that 
    \[
    \E_{x,y}[|f_{a^{(T)}, W^{(1)}}(x)-y|]-\zeta\le \tilde O\left(\sqrt{\frac{d}{N}}+\frac{1}{\sqrt{L^\ast}}+\frac{1}{N^{1/4}}\right)
    \]
    
\end{proof}

\subsection{$t=2$ Student Training}
Here, the student training for the distillation dataset $\{(\hat x_m^{(0)}, \hat y_m^{(0)})\}_{m=1}^{M_2}$ will be a one-step regression.
\[ I_j^S =  \mathcal{A}_2^D(\theta_{j}^{(1)},\mathcal{L}^S(\theta_j^{(1)}),\xi^S_2,\eta^S_2,\lambda^S_2)\]
becomes
\[
\tilde a_j^{(1)} = a_j^{(0)} - \frac{\eta^S_2}{M_2}\tilde K(\tilde K^\top  a_j^{(0)}-\hat y^{(0)}) = \left(I-\frac{\eta^S_2}{M_2}\tilde K\tilde K^\top\right) a_j^{(0)}+\frac{\eta^S_2}{M_2}\tilde K\hat y^{(0)},
\]
where $(\tilde K)_{im} = \sigma(\langle w_i^{(1)},\hat x_m^{(0)}\rangle + b_i^{(0)})$ and $\hat y = (\hat y_1^{(0)} \cdots \hat y_{M_2}^{(0)})^\top$. Gradient information for the $j$-th initialization is 
\[
\{g_{0,j}^{S}\} = \left\{\frac{1}{M_2}\tilde K(\tilde K^\top  a_j^{(0)}-\hat y^{(0)})\right\}.
\]

\subsection{$t=2$ Distillation and Retraining}\label{subsec:distill2_single}
Now, based on the teacher and student trainings above, we analyze the behavior of the distillation and retraining phases. These two phases are considered at the same time here. The initial state of the second dataset $\mathcal D_2^S$ is denoted as $\{\hat x_m^{(0)},\hat y_m^{(0)}\}_{m=1}^{M_2}$.

\subsubsection{One-step gradient matching}
For the one-step gradient matching, $\hat y$ is updated as follows:
\begin{equation}
    \hat y^{(1)} = \hat y^{(0)} - \eta^D_2\frac{1}{J}\sum_{j=1}^J\nabla_{\hat y}\left(1-\left\langle  \sum_{\tau=0}^{\xi_2^{Tr}-1}g_{\tau,j}^{Tr},g_{0,j}^{S}\right\rangle\right),\label{eq:one_step_gm}
\end{equation}
where we set $\xi^D_2 =1$, and $\lambda^D_2=0$. We assume $\hat y_i^{(0)} =\frac{1}{J}\sum_j f_{\theta_j^{(1)}}(\hat x_i^{(0)})$. 
\par At the retraining, we consider the regression setting similarly to the teacher training with $a^{(0)}=0$ and use the resulting $\mathcal D_2^S=\{\hat x^{(0)}_m, \hat y^{(1)}_m\}$.
\[\theta_{j}^{(2)}\leftarrow \mathcal{A}_2^R(\theta_{j}^{(1)}, \mathcal{L}^S(\theta_j^{(1)}),\xi^R_2,\eta^R_2,\lambda^R_2)\]
becomes,
\[
\tilde a^{(\tau+1)} = \tilde a^{(\tau)} -\eta^R_2 \nabla_a \left(\frac{1}{M_2}\|\tilde K^\top\tilde a^{(\tau)}-\hat y^{(1)}\|^2\right),\quad \tilde a^{(0)} = a^{(0)},
\]
where $\lambda^R_2 = 0$ and we output $\tilde a^{(\xi_2^R)}$. Indeed, this $\tilde a^{(\xi_2^R)}$ can reconstruct the generalization ability of the teacher.
\begin{theorem}\label{ap_th:step2_gm}
     Based on the Assumptions of Theorem~\ref{ap_th:main_step2}, if $\{\hat x_m^{(0)}\}$ satisfies the regularity condition~\ref{cond:reg}, $\eta_2^D = M_2\eta_2^{Tr}$, then for a sufficiently small $\eta^{R}_2$ and $\xi_2^R=\tilde \Theta((\eta^R_2\sigma_{\mathrm{min}})^{-1})$, where $\sigma_{\mathrm{min}}>0$ is the smallest eigenvalue of $\tilde K^\top \tilde K$, the one-step gradient matching~\eqref{eq:one_step_gm} finds $M_2$ labels $\tilde y_m^{(1)}$ so that
    \[
    \E_{x,y}[|f_{(\tilde a^{(\xi_1^R)}, W^{(1)},b^{(0)})}(x)-y|]-\zeta\le \tilde O\left(\sqrt{\frac{d}{N}}+\frac{1}{\sqrt{L^\ast}}+\frac{1}{N^{1/4}}\right),
    \]
    The overall memory cost is $O(d+L)$. 
\end{theorem}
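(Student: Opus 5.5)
The plan is to compute the one-step gradient-matching update \eqref{eq:one_step_gm} in closed form. We show that the resulting label vector $\hat y^{(1)}$ carries the teacher's cumulative ridge-regression displacement into label space, and that gradient-descent retraining from $a^{(0)}=0$ recovers it. Write $\Sigma^{Tr}_j \vcentcolon= \sum_{\tau=0}^{\xi_2^{Tr}-1} g^{Tr}_{\tau,j}$. By the teacher update, $\eta_2^{Tr}\Sigma^{Tr}_j = a_j^{(0)} - a_j^{(\xi_2^{Tr})}$. The student gradient $g^S_{0,j} = \frac{1}{M_2}\tilde K(\tilde K^\top a_j^{(0)} - \hat y^{(0)})$ is affine in $\hat y$ with $\nabla_{\hat y} g^S_{0,j} = -\frac{1}{M_2}\tilde K^\top$. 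Hence
\[
\hat y^{(1)} = \hat y^{(0)} - \frac{\eta_2^D}{M_2 J}\sum_j \tilde K^\top \Sigma^{Tr}_j = \hat y^{(0)} - \frac{1}{J}\sum_j \tilde K^\top\bigl(a_j^{(0)} - a_j^{(\xi_2^{Tr})}\bigr),
\]
using $\eta_2^D = M_2\eta_2^{Tr}$. The choice $\hat y_i^{(0)} = \frac1J\sum_j f_{\theta_j^{(1)}}(\hat x_i^{(0)}) = \frac1J\sum_j(\tilde K^\top a_j^{(0)})_i$ cancels the initialization term exactly. We are left with $\hat y^{(1)} = \tilde K^\top \bar a$, where $\bar a \vcentcolon= \frac1J\sum_j a_j^{(\xi_2^{Tr})}$.

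Next we identify $\bar a$ with $a^\ast$. The teacher iteration is linear, $a^{(\tau)} = (I-\eta_2^{Tr}A)a^{(\tau-1)} + \frac{\eta_2^{Tr}}{N}Ky$ with $A=\frac1N KK^\top+\lambda_2^{Tr}I$, and all $j$ share $W^{(1)}, b^{(0)}$ and $K$ (Assumption~\ref{as:al_init}). So $a_j^{(\xi)} = (I-\eta_2^{Tr}A)^{\xi}a_j^{(0)} + \text{(common term)}$. With $\xi_2^{Tr} = \tilde\Theta((\eta_2^{Tr}\lambda_2^{Tr})^{-1})$ the contraction factor is $\mathrm{poly}$-small, so $\bar a$ differs from the ridge solution $a^{(\infty)}$, and from the $j=0$ output $a^\ast$ of Theorem~\ref{ap_th:main_step2}, by a negligible amount. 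This difference can be absorbed into the error bound, since the loss is Lipschitz in $a$ on the relevant bounded set.

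The retraining step runs gradient descent on $\frac1{M_2}\|\tilde K^\top \tilde a - \tilde K^\top\bar a\|^2$ from $\tilde a^{(0)}=0$. Its iterates stay in $\mathrm{col}(\tilde K)$ and converge linearly, at rate governed by $\sigma_{\min}$, to the projection $P_{\tilde K}\bar a$ of $\bar a$ onto $\mathrm{col}(\tilde K)$. After $\xi_2^R = \tilde\Theta((\eta_2^R\sigma_{\min})^{-1})$ steps with small $\eta_2^R$, the remaining error is polynomially small.

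The main obstacle is showing that $P_{\tilde K}\bar a$ gives the same predictor as $\bar a$. This is where the regularity condition~\ref{cond:reg} enters. For $i\notin D$ we have $w_i^{(1)}=0$, so neuron $i$ outputs the constant $\sigma(b_i^{(0)})$. All neurons with $b_i^{(0)}\le 0$ are identically zero, so their coordinates are irrelevant. All constant neurons with $b_i^{(0)}>0$ span a single direction, namely the vector $(\sigma(b_i^{(0)}))_i$ restricted to those $i$, which also equals the corresponding block of every column of $K$. Since both $\bar a$ and $\tilde K$'s column space are built from columns of kernels evaluated on the same feature map, the row space of $K$ is contained in the maximal attainable space, of dimension $|D|$ or $|D|+1$. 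Maximal rank of $\tilde K$ then forces $\mathrm{col}(K)\subseteq\mathrm{col}(\tilde K)$. Moreover, $\bar a - a_{\text{init part}}\in\mathrm{col}(K)$, and the initialization contribution decays as shown above. Thus $f_{(P_{\tilde K}\bar a,W^{(1)},b^{(0)})}$ matches $f_{(\bar a,W^{(1)},b^{(0)})}$ up to negligible error, and Theorem~\ref{ap_th:main_step2} yields the stated bound. The memory count is $M_2$ points, controlled by the regularity condition, plus $\mathcal D_1^S$ of size $O(d)$, giving $O(d+L)$.
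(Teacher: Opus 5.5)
Your proposal is correct and follows the paper's proof essentially step by step. The paper likewise derives the closed-form one-step GM label update and uses the $\hat y^{(0)}$ cancellation to get $\hat y^{(1)}=\tilde K^\top \bar a$. It then takes the min-norm limit of retraining from zero (the projection onto $\mathrm{col}(\tilde K)$) and uses $\mathrm{col}(K)\subseteq\mathrm{col}(\tilde K)$ from the regularity condition (the paper's Lemma~\ref{lem:a_in_colum_tildeK}), so the teacher's ridge solution is recovered. Your explicit split of $a_j^{(\xi_2^{Tr})}$ into a common $\mathrm{col}(K)$ component plus a geometrically decaying initialization term is cleaner than the paper's appeal to Lemma~\ref{lem:krr_span}, which presumes zero initialization. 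Also, ``row space of $K$'' in your last paragraph should read ``column space''.
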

\begin{proof}
    From the definition of $g_{t,j}^{Tr}$, with $T=\xi_2^{Tr}$,
    \[
     a^{(T)}_j = a^{(T-1)}_j-\eta_2^{Tr} g_{T-1,j}^{Tr} = a^{(0)}_j-\eta_2^{Tr} \sum_{\tau=0}^{T-1}g_{\tau,j}^{Tr},
    \]
    which implies
    \[
    \sum_{\tau=0}^{T-1}g_{\tau,j}^{Tr} = -(a^{(T)}-a^{(0)})/\eta_2^{Tr}.
    \]
    Consequently,
    \begin{align*}
        \hat y^{(1)} = & \hat y^{(0)} -\eta^D_2 \nabla_{\hat y}\left\{1-\frac{1}{J}\sum_{j=1}^J \langle -(a^{(T)}_j-a_j^{(0)})/\eta_2^{Tr},  g_t^{S}  \rangle\right\}\\
         = &\frac{1}{J}\sum_j\tilde K ^\top a_j^{(0)}+\frac{\eta^D_2}{\eta_2^{Tr} M_2}\tilde K^\top\frac{1}{J}\sum_{j=1}^J(a^{(T)}_j-a_j^{(0)})\\
         = & \frac{1}{J}\sum_{j=1}^J\tilde K^\top a^{(T)}_j.
\end{align*}
Now, given our distilled dataset $\{\tilde x_i,\hat y_i^{(1)}\}$ on retraining, $a$ is updated following the teacher training which follows
\[
\tilde a^{(t+1)} = \tilde a^{(t)} -\eta^R_2\nabla_a\left\{\|\tilde K^\top\tilde a^{(t)} -\hat y^{(1)}\|^2\right\},\quad \tilde a^{(0)}=0.
\]
By the implicit bias of gradient descent on linear regression,
\[
\tilde a^{(\infty)} = (\tilde K^\top)^\dagger \hat y^{(1)},
\]
where $(K^\top)^\dagger$ is the Moore-Penrose pseudoinverse. Since by Lemma~\ref{lem:krr_span}, $a_j^{(T)}$ is in $\mathrm{col}(K)$, Lemma~\ref{lem:a_in_colum_tildeK} implies that $a_j^{(T)}$ is in $\mathrm{col}(\tilde K)$ as well. Therefore, $\tilde a^{(\infty)} = (\tilde K^\top)^\dagger y^{(1)} = (\tilde K^\top)^\dagger \tilde K^\top  \frac{1}{J}\sum_{j=1}^Ja^{(T)}_j =\frac{1}{J}\sum_{j=1}^Ja^{(T)}_j$. Note that we can make $a^{(T)}_j$ arbitrarily close to $a^{(\infty)}_j = a^{(\infty)}$ within $\xi_2^{Tr}=\tilde \theta((\eta^{Tr}_2\lambda^{Tr}_2)^{-1})$ steps since each $a^{(T)}_j$ converges to the same limit and $a_j^{(0)}$ is bounded.
\par For a finite iteration, we can approximate $\tilde a^{(\infty)}$ by $\tilde a^{(\xi_2^R)}$ within an arbitrary accuracy with $\xi_2^R=\tilde \Theta((\eta^R_2\sigma_{\mathrm{min}})^{-1})$, where $\sigma_{\mathrm{min}}>0$ is the smallest eigenvalue of $\tilde K\tilde K^\top $. Therefore, we can reconstruct an $\tilde a^{(\xi_2^R)}$ that is dominated by the same error as Theorem~\ref{ap_th:main_step2}.
\end{proof}

\subsubsection{One-step performance matching}
For the one-step gradient matching, $\tilde y$ is updated as follows:

\[
\hat y^{(\tau+1)} = \hat y^{(\tau)} - \eta^D_2\nabla_{\tilde y}\left(  \frac{1}{N}\|K\tilde a^{(1)}-y\|^2+\frac{\lambda^D_2}{2}\| \hat y^{(\tau)} \|^2\right),\ (\tau = 0,\ldots, \xi^D_2-1).
\]
\par At the retraining, we consider the one-step regression setting following the performance matching incentive. 
\[\theta_{j}^{(2)}\leftarrow \mathcal{A}_2^R(\theta_{j}^{(1)}, \mathcal{L}^S(\theta_j^{(1)}),\xi^R_2,\eta^R_2,\lambda^R_2)\]
becomes,
\[
\tilde a^{(1)} =  a^{(0)} - \frac{\eta^R_2}{M_2}\tilde K(\tilde K^\top \tilde a^{(0)}-\hat y^{(\xi_2^D)}) = \left(I-\frac{\eta^R_2}{M_2}\tilde K\tilde K^\top\right) a^{(0)}+\frac{\eta^R_2}{M_2}\tilde K\hat y^{(\xi_2^D)},
\]
where $\lambda^R_2 = 0$ and we output $\tilde a^{(1)}$ ($\xi_2^R =1$). Indeed, this $\tilde a^{(1)}$ can reconstruct the generalization ability of the teacher. We fix $j=0$.

\begin{theorem}\label{ap_th:pm_relu}
   Based on the Assumptions of Theorem~\ref{ap_th:main_step2}, if $\{\hat x_m^{(0)}\}$ satisfies the regularity condition~\ref{cond:reg}, $\eta_2^S=\eta_2^R$, $\eta_2^D$ is sufficiently small and $\xi_2^D = \tilde \Theta((\eta_2^D\lambda_2^D)^{-1})$ then there exists a $\lambda_2^{D}$ such that one-step performance matching can find with high probability a distilled dataset at the second step of distillation with
    \[
    \E_{x,y}[|f_{(\tilde a^{(1)}, W^{(1)},b^{(0)})}(x)-y|]-\zeta\le \tilde O\left(\sqrt{\frac{d}{N}}+\frac{1}{\sqrt{L^\ast}}+\frac{1}{N^{1/4}}\right),
    \]
    where $\tilde a^{(1)}$ is the output of the retraining algorithm at $t=2$ with $\tilde a^{(0)}=0$. The overall memory usage is only $O(d+L)$.
\end{theorem}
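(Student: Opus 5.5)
The plan is to reduce the one-step performance matching problem to a ridge regression over the labels $\hat y$, and then show that its solution, once pushed through the one-step retraining map, reproduces a last layer as good as $a^\ast$ from Theorem~\ref{ap_th:main_step2}. With $\eta_2^S=\eta_2^R$ and $\tilde a^{(0)}=0$, the student/retraining output is linear in the labels:
\[
\tilde a^{(1)}(\hat y)=\frac{\eta_2^R}{M_2}\tilde K\hat y .
\]
Hence the PM objective
\[
\frac{1}{N}\bigl\|K^\top\tilde a^{(1)}(\hat y)-y\bigr\|^2+\frac{\lambda_2^D}{2}\|\hat y\|^2
\]
is a strongly convex quadratic in $\hat y$. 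It is a ridge regression with design $\frac{\eta_2^R}{M_2}K^\top\tilde K$. Gradient descent on $\hat y$ with sufficiently small $\eta_2^D$ therefore converges linearly at rate $1-\eta_2^D\lambda_2^D$. After $\xi_2^D=\tilde\Theta((\eta_2^D\lambda_2^D)^{-1})$ steps, $\hat y^{(\xi_2^D)}$ is within arbitrary polynomial accuracy of the unique minimizer $\hat y^\star(\lambda_2^D)$. By continuity of $\tilde a^{(1)}$ and of the loss, this error can be absorbed into the bound.

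The key step is to show that the reparametrized problem loses nothing relative to the teacher. The image $\{\frac{\eta_2^R}{M_2}\tilde K\hat y\}$ is exactly $\mathrm{col}(\tilde K)$. By the regularity condition~\ref{cond:reg} together with Lemmas~\ref{lem:krr_span} and~\ref{lem:a_in_colum_tildeK}, the relevant last layers, and in particular the comparator $a^\ddagger$ from Lemma~\ref{lem:step2_discrete} after projection onto $\mathrm{col}(K)$, lie in $\mathrm{col}(\tilde K)$. Projecting onto $\mathrm{col}(K)$ does not change $K^\top a$, so the training loss is preserved, and the norm can only decrease. So there is $\hat y^\ddagger=\frac{M_2}{\eta_2^R}\tilde K^\dagger a^\ddagger$ with $\tilde a^{(1)}(\hat y^\ddagger)=\Pi_{\mathrm{col}(K)}a^\ddagger$ and the same training loss $\mathcal L^{Tr}-\zeta^2=O(1/L^\ast+1/\sqrt N)$.

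Next I will invoke the equivalence between ridge and norm-constrained regression, as in the proof of Theorem~\ref{ap_th:main_step2}. It gives a $\lambda_2^D>0$ such that $\hat y^\star$ achieves training loss at most that of $\hat y^\ddagger$ and $\|\hat y^\star\|\le\|\hat y^\ddagger\|$. This controls
\[
\|\tilde a^{(1)}(\hat y^\star)\|\le \frac{\eta_2^R}{M_2}\|\tilde K\|\,\|\hat y^\ddagger\|\le \kappa(\tilde K)\|a^\ddagger\|,
\]
where $\kappa(\tilde K)$ is the condition number of $\tilde K$ restricted to its column space.

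I expect this norm transfer to be the main obstacle. The generalization argument of Lemma~14 of \citet{D2022neural} needs a bound on $\|\tilde a\|$ of order $\tilde O(1/\sqrt{L^\ast})$, and the ridge penalty acts on $\hat y$ rather than on $\tilde a$. I would first try to choose the label initialization and $\{\hat x_m^{(0)}\}$ under the regularity condition so that $\tilde K$ restricted to $\mathrm{col}(\tilde K)$ is well conditioned with polylogarithmic $\kappa$; this uses the construction of Appendix~\ref{subsec:discussion_construction}, where the memory budget is $\tilde\Theta(r^2d+L)$. If that fails, a cleaner alternative is to replace $\hat y$ by the reparametrized variable $u=\tilde K\hat y$. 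Since the gradient flow on $\hat y$ stays in $\mathrm{row}(\tilde K)$, the ridge solution then corresponds to a weighted ridge on $\tilde a$, and I would compare its norm to $a^\ddagger$'s up to $\kappa$.

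Given the norm bound, the uniform-convergence/Rademacher argument of Lemma~14 in \citet{D2022neural}, applied to the class $\{x\mapsto a^\top\sigma(W^{(1)\top}x+b^{(0)}):\|a\|\le \tilde O(1/\sqrt{L^\ast})\}$, turns the training loss bound into
\[
\E|f-y|-\zeta\le\tilde O\bigl(\sqrt{d/N}+1/\sqrt{L^\ast}+N^{-1/4}\bigr).
\]
For the memory accounting, $\mathcal D_1^S$ costs $O(d)$, since one point in $\mathbb R^d$ plus a label is stored. $\mathcal D_2^S$ under the regularity condition costs $O(L)$ for the labels plus the compactly encoded inputs. The total is $O(d+L)$.
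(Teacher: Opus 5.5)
Your skeleton is the paper's: the one-step map $\tilde a^{(1)}(\hat y)=\frac{\eta_2^R}{M_2}\tilde K\hat y$, a comparator in $\mathrm{col}(K)\subseteq\mathrm{col}(\tilde K)$ via Lemmas~\ref{lem:krr_span} and~\ref{lem:a_in_colum_tildeK}, the ridge/norm-constraint equivalence in $\hat y$, linear convergence of gradient descent on $\hat y$, and finally Lemma~14 of \citet{D2022neural}. The step you flag is where the proposal fails. That step turns the penalty on $\hat y$ into the bound on $\|\tilde a^{(1)}\|$ that the uniform-convergence argument needs, and your proposal does not close it.

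The regularity condition~\ref{cond:reg} is only a rank condition plus a memory budget. It says nothing about the smallest nonzero singular value $\sigma^{+}_{\min}(\tilde K)$. Both of your routes lose exactly $\kappa(\tilde K)$. The PM minimizer lies in $\mathrm{row}(\tilde K)$, so in the variable $u=\tilde a^{(1)}$ the penalty is proportional to $u^\top(\tilde K\tilde K^\top)^\dagger u$. That quantity lies between $\|u\|^2/\sigma_{\max}(\tilde K)^2$ and $\|u\|^2/\sigma^{+}_{\min}(\tilde K)^2$, so the reparametrization just reproduces $\|\tilde a^{(1)}\|\le\kappa(\tilde K)\|a^\ddagger\|$.

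Choosing the label initialization cannot help. For $\lambda_2^D>0$ the minimizer is unique, and $\kappa(\tilde K)$ depends only on $\{\hat x_m^{(0)}\}$, $W^{(1)}$ and $b^{(0)}$. Demanding well-conditioned $\{\hat x_m^{(0)}\}$ is an extra hypothesis, because the theorem allows any inputs satisfying the rank condition. The constructions of Appendix~\ref{subsec:discussion_construction} place points relative to the hinges of the ReLU features (e.g.\ $-b_i/(c\alpha_i)$ in Lemma~\ref{lem:rank-certify}). With Gaussian $b_i$ these hinges can be polynomially close in $L$, so nothing rules out $\kappa(\tilde K)=\mathrm{poly}(L)$.

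As written, you get the bound with the norm-dependent complexity term inflated by $\kappa(\tilde K)$, or the stated bound under an explicit conditioning assumption. Reaching the statement as written needs a new ingredient, for instance a generalization argument that does not go through $\|\tilde a^{(1)}\|$.

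The paper's own proof follows the same route, using the teacher's ridge solution $a^{(\infty)}$ as comparator instead of your projected $a^\ddagger$. It does not supply this step either. It only shows that the training loss at $\frac{\eta_2^S}{M_2}\tilde K\hat y^{(\infty)}$ is at most that at $a^{(\infty)}$. It then appeals to ``proceeding as in Theorem~\ref{ap_th:main_step2}'', where the norm control came from a penalty placed directly on $a$. So you have located the weak point correctly, but neither argument resolves it.

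A minor slip: $\tilde K\tilde K^\dagger a^\ddagger$ is the projection of $a^\ddagger$ onto $\mathrm{col}(\tilde K)$, not onto $\mathrm{col}(K)$. This is harmless, since $\mathrm{col}(K)\subseteq\mathrm{col}(\tilde K)$ gives $\mathrm{col}(\tilde K)^\perp\subseteq\ker K^\top$. Still, define $\hat y^\ddagger$ consistently.
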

\begin{proof}
    In the one-step performance matching, we run one gradient update and then minimize the training loss. That is,
\[
a^{(1)} = a^{(0)} - \frac{\eta_2^S}{M_2}\tilde K(\tilde K^\top a^{(0)}-\hat y) = \left(I-\frac{\eta_2^S}{M_2}\tilde K\tilde K^\top\right)a^{(0)}+\frac{\eta_2^S}{M_2}\tilde K\hat y = \frac{\eta_2^S}{M_2}\tilde K\hat y,
\]
where $(\tilde K)_{ij} = \sigma(\langle w_i^{(1)},\tilde x_j\rangle + b_i)$. Since for the reference $a^{(0)}$ is set to $0$, The training loss becomes
\[
\mathcal{L}((a,^{(1)},W^{(1)},b^{(0)}), \mathcal D^{Tr}) = \frac{1}{N}\|f_{(a^{(1)},W^{(1)}, b^{(0)})}(X) -y\|^2 = \frac{1}{N}\left\|K^\top \frac{\eta_2^S}{M_2}\tilde K \hat y -y\right\|^2, 
\]
where  $( K)_{ij} = \sigma(\langle w_i^{(1)}, x_j\rangle + b_i)$.
\par From the training step of $t=2$, we know there exists $a^{(\infty)}$ such that 
\[
\frac{1}{N}\|f_{(a^{(\infty)},W^{(1)},b^{(0)})}(X) -y\|^2 = \frac{1}{N}\|K^\top a^{(\infty)} -y\|^2 = \tilde  O\left(1/L^\ast+1/\sqrt{N}\right).
\]
Similarly to Theorem~\ref{ap_th:main_step2}, if we can assure there exists a $\hat y^\ast$ such that with high probability
\begin{align}
    \frac{\eta_2^S}{M_2}\tilde K\hat y^\ast = a^{(\infty)},\label{eq:step3_PM}
\end{align}
then there exists a $\lambda_2^D$ such that
\[
\hat y^{(\infty)} = \mathrm{argmin}_{\hat y} \mathcal{L}^{Tr}((a^{(1)},W^{(1)},b^{(0)}),\mathcal D^{Tr}) +\frac{\lambda_2^D}{2}\|\hat y\|^2 = \mathrm{argmin}_{\hat y} \mathcal{L}\left(\left(\frac{\eta_2^S}{M_2}\tilde K \hat y,W^{(1)},b^{(0)}\right),\mathcal D^{Tr}\right) +\frac{\lambda_2^D}{2}\|\hat y\|^2,
\]
which satisfies with $\mathcal L^{Tr}(\cdot)\vcentcolon= \mathcal L(\cdot,\mathcal D^{Tr})$
\[
\mathcal{L}^{Tr}\left(\left(\frac{\eta_2^S}{M_2}\tilde K \hat y^{(\infty)},W^{(1)}, b^{(0)}\right)\right)\le \mathcal{L}^{Tr}\left(\left(\frac{\eta_2^S}{M_2}\tilde K \hat y^{\ast},W^{(1)}b^{(0)}\right)\right) = \mathcal{L}^{Tr}((a^{(\infty)},W^{(1)},b^{(0)}))= \tilde  O\left(1/{L^\ast}+1/\sqrt{N}\right).
\]
 It now suffices to consider the condition for Equation~\eqref{eq:step3_PM} to be satisfied. However, since we want 
\[
a\in \mathrm{span}\{\sigma( W^{(1)} \hat x_1 + b^{(0)}), \ldots , \sigma( W^{(1)} \hat x_m + b^{(0)})\}, 
\]
this is satisfied by Lemma~\ref{lem:a_in_colum_tildeK} and our assumption since $a^{(\infty)}\in\mathrm{col}(K)$ by Lemma~\ref{lem:krr_span}.
\par As a result, the one-step PM can find a $ \hat y^{(\infty)}$ by solving its distillation loss
\[
\hat y^{(\infty)}= \mathrm{argmin}_{\hat y} \mathcal{L}^{Tr}((a^{(1)},W^{(1)},b^{(0)})) +\frac{\lambda_2^D}{2}\|\hat y\|^2. 
\]
that satisfies an error of $\tilde  O\left(1/L^\ast+1/\sqrt{N}\right)$. Since we can approximate $\hat y^{(\infty)}$ by $\hat y^{(\xi_2^D)}$ within an arbitrary accuracy, by similarly proceeding as Theorem~\ref{ap_th:main_step2}, by retraining $a$ on  $\{\hat x_m^{(0)},\hat y^{(\xi_2^D)}_m\}$ for one step, we obtain a $\tilde a^{(1)}$ such that 
\[
\E_{x,y}[|f_{(\tilde a^{(1)}, W^{(1)},b^{(0)})}(x)-y|]-\zeta\le \tilde O\left(\sqrt{\frac{d}{N}}+\frac{1}{\sqrt{L^\ast}}+\frac{1}{N^{1/4}}\right).
\]
\end{proof}
\subsubsection{Other Lemmas}
\begin{lemma}\label{lem:krr_span}
    Consider the following ridge regression of $a\in \mathbb{R}^m$ for a given $K\in \mathbb{R}^{m\times n}$, $b\in \mathbb{R}^{n}$ and $\lambda>0$:
    \[
    a^{(\infty)}=\mathrm{argmin}_{a} L(a) \vcentcolon= \mathrm{argmin}_{a} \|K^\top a-y\|^2+\frac{\lambda}{2}\|a\|^2.
    \]
    and the finite time iterate at time $t=0,1,\ldots$
    \[
    a^{(t+1)} = a^{(t)} - \eta \nabla L(a^{(t)}),\quad a^{(0)}=0. 
    \]
    Then, both $a^{(\infty)}$ and $a^{(t)}$ are in the column space of $K$.
\end{lemma}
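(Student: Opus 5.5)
Both claims follow from one structural fact. For any $a\in\mathbb R^m$, write $a = a_\parallel + a_\perp$ with $a_\parallel\in\mathrm{col}(K)$ and $a_\perp\in\mathrm{col}(K)^\perp = \ker(K^\top)$. Then $K^\top a = K^\top a_\parallel$, so
\[
L(a) = \|K^\top a_\parallel - y\|^2 + \tfrac{\lambda}{2}\|a_\parallel\|^2 + \tfrac{\lambda}{2}\|a_\perp\|^2 .
\]

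For the gradient descent iterates, we argue by induction on $t$. The gradient is
\[
\nabla L(a) = 2K(K^\top a - y) + \lambda a .
\]
The base case holds because $a^{(0)}=0\in\mathrm{col}(K)$. For the inductive step, suppose $a^{(t)}\in\mathrm{col}(K)$. The term $2K(K^\top a^{(t)}-y)$ lies in $\mathrm{col}(K)$ by construction, and $\lambda a^{(t)}$ lies there by hypothesis. Hence
\[
a^{(t+1)} = a^{(t)} - \eta\left(2K(K^\top a^{(t)}-y) + \lambda a^{(t)}\right)
\]
is a linear combination of vectors in the subspace $\mathrm{col}(K)$, and so belongs to it. This holds for any step size $\eta$.

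For the minimizer, the objective is $\lambda$-strongly convex because $\lambda>0$, so $a^{(\infty)}$ exists and is unique. From the decomposition above, setting $a_\perp=0$ strictly decreases $L$ unless $a_\perp$ is already zero, so the minimizer must have $a_\perp=0$. Equivalently, the first-order condition $2K(K^\top a^{(\infty)}-y)+\lambda a^{(\infty)}=0$ gives
\[
a^{(\infty)} = -\tfrac{2}{\lambda}K(K^\top a^{(\infty)}-y) \in \mathrm{col}(K).
\]
Either argument is enough on its own.

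The limit case can also be obtained from the iterates. Since $\mathrm{col}(K)$ is closed, if $a^{(t)}\to a^{(\infty)}$ (which holds for sufficiently small $\eta$ by strong convexity), then $a^{(\infty)}\in\mathrm{col}(K)$ follows directly from the induction. No step here is a real obstacle. The only points needing care are that the regularizer $\lambda a$ preserves the subspace and that the initialization is zero.

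Note that the lemma assumes $a^{(0)}=0$, whereas the teacher training is initialized at $a_j^{(0)}\neq 0$. In that case the component of $a^{(t)}$ in $\ker(K^\top)$ evolves as
\[
a_\perp^{(t+1)} = (1-\eta\lambda)\,a_\perp^{(t)},
\]
so it decays geometrically. The limit $a^{(\infty)}$ therefore still lies in $\mathrm{col}(K)$, which is how the lemma is used in the surrounding proofs.
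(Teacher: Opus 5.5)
Your proof is correct and follows the paper's argument: an orthogonal splitting $a = a_\parallel + a_\perp$ with $a_\perp \in \ker(K^\top)$ shows the minimizer has no perpendicular component, and induction from $a^{(0)}=0$ handles the iterates. Your first-order-condition shortcut is an equally valid alternative for the minimizer. Your closing remark is also accurate: with the nonzero initializations $a_j^{(0)}$ used in the teacher training, $a_\perp^{(t)} = (1-\eta\lambda)^t a_\perp^{(0)}$, so the iterates lie in $\mathrm{col}(K)$ only in the limit. The downstream argument therefore has to rely on $a_j^{(T)}$ approximating $a^{(\infty)}$, not on the lemma applied directly to $a_j^{(T)}$.
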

\begin{proof}
    Let us define $k_j\ (j=1,\ldots,n)$ as the columns of $K$ and $V\vcentcolon = \mathrm{span}\{k_1,\ldots,k_n\}$. If we define the orthogonal complement of $V$ as $V^\perp$, then $\forall a\in\mathbb{R}^m $,
    \[
    a = v+u,\quad v\in V, u\in V^\perp.
    \]
    Therefore, 
    \[
    \|K^\top a-y\|^2+\frac{\lambda}{2}\|a\|^2 = \|K^\top (v+u)-y\|^2+\frac{\lambda}{2}\|v+u\|^2 = \|K^\top v-y\|^2+\frac{\lambda}{2}\|v\|^2+\frac{\lambda}{2}\|u\|^2,
    \]
    since $\langle u,v \rangle = 0\ \forall v\in V, u\in V^\perp$. As a result, the minimization of the above objective function requires $u=0$, which implies $a^{(\infty)}\in\mathrm{col}(K)$.
    \par Furthermore,
    \[
    a^{(t+1)} = (1-\eta\lambda)a^{(t)}-\eta K(K^\top a^{(t)}- y) 
    \]
    directly shows that if $a^{(t)}\in\mathrm{col}(K)$, then $a^{(t+1)}\in\mathrm{col}(K)$ since the second term is already in the column space of $K$. Now, as we set $a^{(0)}=0$, we obtain the desired result by mathematical induction.
\end{proof}
\begin{corollary}
    If $K$ is a kernel with $(K)_{ij} = \sigma(\langle w_i^{(1)}, x_j\rangle + b_i)$, then for some $c\in\mathbb{R}^n$
    \[
    a^{(\infty)}_i= \sum_{j=1}^nc_j\sigma(\langle w_i^{(1)}, x_j\rangle + b_i).
    \]
\end{corollary}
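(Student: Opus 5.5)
The statement to prove is the corollary following Lemma~\ref{lem:krr_span}: the kernel specialization of that lemma. The plan is a direct unpacking of ``$a^{(\infty)}\in\mathrm{col}(K)$'' in coordinates.

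\textbf{Step 1 (invoke Lemma~\ref{lem:krr_span}).} With $K\in\mathbb{R}^{m\times n}$ defined entrywise by $(K)_{ij}=\sigma(\langle w_i^{(1)},x_j\rangle+b_i)$, Lemma~\ref{lem:krr_span} applies verbatim: it uses only the linear-algebraic structure of the ridge objective, not the form of the entries. Hence $a^{(\infty)}\in\mathrm{col}(K)=\mathrm{span}\{k_1,\ldots,k_n\}$, where $k_j\in\mathbb{R}^m$ is the $j$-th column of $K$.

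\textbf{Step 2 (write the span membership as a combination).} Membership in the span means there is $c\in\mathbb{R}^n$ with $a^{(\infty)}=Kc=\sum_{j=1}^n c_j k_j$.

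\textbf{Step 3 (read off coordinates).} The $i$-th entry of $k_j$ is $(K)_{ij}=\sigma(\langle w_i^{(1)},x_j\rangle+b_i)$. Therefore
\[
a^{(\infty)}_i=\sum_{j=1}^n c_j\,\sigma(\langle w_i^{(1)},x_j\rangle+b_i),
\]
which is the claim.

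The same argument gives the identical representation for every finite iterate $a^{(t)}$ started at $0$, again by Lemma~\ref{lem:krr_span}. An explicit choice of $c$ comes from the first-order condition of the (suitably scaled) ridge problem: $a^{(\infty)}=K(K^\top K+\lambda' I)^{-1}y$, i.e.\ $c=(K^\top K+\lambda' I)^{-1}y$. This is optional, since only existence of $c$ is claimed.

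\textbf{Potential obstacle.} The only point needing care is the orientation convention. The rows of $K$ index neurons $i$ and the columns index data points $j$, so $a\in\mathbb{R}^m$ lives in the column space, consistent with the objective $\|K^\top a-y\|^2$. This must be matched correctly so that the coefficients $c_j$ attach to data points, not neurons.
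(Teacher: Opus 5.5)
Your proposal is correct and is the argument the paper intends: the corollary is Lemma~\ref{lem:krr_span}'s conclusion $a^{(\infty)}\in\mathrm{col}(K)$ written out coordinatewise as $a^{(\infty)}=Kc$, which the paper leaves without a separate proof. Your optional closed form also checks out for the stated objective, with $c=(K^\top K+\tfrac{\lambda}{2}I)^{-1}y$.
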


\subsection{Construction of Initializations of $\mathcal D_2^S$}\label{subsec:discussion_construction}
The sufficient condition for one-step GM and one-step PM to perfectly distill the information of the second layer as described in the above proofs is the following.
\begin{assumption}[Regularity Condition]\label{cond:reg}
    The second distilled dataset $\mathcal D_2^S$ is initialized as $\{(\hat x_m^{(0)},\hat y_m^{(0)})\}_{m=1}^{M_2}$ so that the kernel of $f_{\theta^{(1)}}$ after $t=1$, $(\tilde K)_{im} = \sigma(\langle w_i^{(1)},\hat x_m^{(0)}\rangle + b_i^{(0)})$ has the maximum attainable rank, and its memory cost does not exceed $\tilde \Theta(r^2d+L)$. When $ D \vcentcolon = \{i\mid w_i^{(1)}\ne 0\} $, the maximum attainable rank is $|D|+1$ if there exists an $i_0\in[L]\setminus D$ such that $b_{i_0}>0$, and $|D|$ otherwise.\footnote{We may accept a memory cost up to $\tilde \Theta(r^2d+\mathrm{poly}(r)L)$. Since $d\gg r$ in our theory, this will still lead to smaller storage cost than the whole network.}
\end{assumption}

\begin{lemma}~\label{lem:a_in_colum_tildeK}
    Consider the two kernels $K\in\mathbb{R}^{L\times N}$ and $\tilde K\in\mathbb{R}^{L\times M_2}$ ($L\ge 2$) where $(K)_{in} = \sigma(\langle w_i^{(1)}, x_n\rangle + b_i)$,  $(\tilde K)_{im} = \sigma(\langle w_i^{(1)}, \hat x_m\rangle + b_i)$, and $w_i^{(1)} = ca_i\frac{1}{M_1}\sum_m\sigma'(\langle w_i^{(0)},\tilde x_m^{(1)}\rangle)\tilde x_m^{(1)}$ ($c$ is a constant) with $a_i\sim\{-1,1\}$ and $w_i^{(0)}\sim U(S^{d-1})$. If the regularity condition~\ref{cond:reg} is satisfied and $a^\ast\in\mathrm{col} (K)$, then $a^\ast \in \mathrm{col}(\tilde K)$.
\end{lemma}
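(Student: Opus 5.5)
The plan is to show $\mathrm{col}(K)\subseteq\mathrm{col}(\tilde K)$ by pinning down the largest subspace that the columns of any such kernel can span, and then checking that $\tilde K$ already spans all of it. A column of $K$ (or of $\tilde K$) is the vector $k(x)=(\sigma(\langle w_i^{(1)},x\rangle+b_i))_{i\in[L]}$ for some input $x$.

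Split the index set into $D=\{i\mid w_i^{(1)}\neq 0\}$ and $[L]\setminus D$. For $i\notin D$ the entry equals $\sigma(b_i)$, which does not depend on $x$. Hence every column lies in
\[
V:=\{v\in\mathbb R^L \mid v_i=\sigma(b_i)\,t \text{ for all } i\notin D,\ \text{for some } t\in\mathbb R\},
\]
since the coordinates outside $D$ of any column equal the fixed vector $u:=(\sigma(b_i))_{i\notin D}$. If $u\neq 0$, which happens exactly when some $i_0\notin D$ has $b_{i_0}>0$, then $\dim V=|D|+1$. Otherwise those coordinates vanish and $\dim V=|D|$. These are the same two values as the maximum attainable rank in the regularity condition~\ref{cond:reg}. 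The column space of any kernel of this form is a subspace of $V$, so $\mathrm{rank}\,\tilde K\le\dim V$, and this bound is what makes the condition's notion of maximum rank consistent.

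The key step is to conclude from the two facts above. Condition~\ref{cond:reg} gives $\mathrm{rank}\,\tilde K=\dim V$. Since $\mathrm{col}(\tilde K)\subseteq V$ and the dimensions agree, $\mathrm{col}(\tilde K)=V$. We also have $\mathrm{col}(K)\subseteq V$, so $a^\ast\in\mathrm{col}(K)\subseteq V=\mathrm{col}(\tilde K)$, which is the claim. The particular form of $w_i^{(1)}$, with the $a_i$ and $\sigma'$ factors, matters only through the set $D$. That form is also what makes the sizes of $D$ and of $V$ the natural rank targets, and it guarantees that the relevant biases are generic, so that $\dim V$ is attainable.

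The main obstacle is making sure the upper bound $\dim V$ is a genuine constraint rather than an artifact, so that "maximum attainable rank" really coincides with $\dim V$. The concern is that some hidden linear relation among the rows $i\in D$ could force every kernel into a strictly smaller subspace. If that happened, the condition would still be stated in terms of $|D|$ or $|D|+1$ while the true span was smaller. For the inclusion argument this does no harm, because the condition asserts that the rank reaches $\dim V$, and once that holds the equality $\mathrm{col}(\tilde K)=V$ follows from dimension counting alone. So the proof would take the condition at face value. I would add a remark that, with $b\sim N(0,I_L)$, distinct neurons in $D$ have almost surely distinct activation thresholds along their directions, so $\dim V$ is attainable with finitely many $\hat x_m$ and the condition is not vacuous.
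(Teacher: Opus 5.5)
Your proof is correct and follows essentially the same route as the paper. The paper writes $a^\ast=\tilde K\hat y$ row by row, observes that rows outside $D$ are $\sigma(b_i)$ times a constant row (which is consistent with $a^\ast\in\mathrm{col}(K)$), and reduces to full row rank of the rows in $D$ plus one constant row; that is the same content as your statement that $\mathrm{rank}\,\tilde K=\dim V$ forces $\mathrm{col}(\tilde K)=V\supseteq\mathrm{col}(K)$, and your column-space phrasing makes automatic the step, left implicit in the paper, that matching the single row $i_0$ forces $\sum_m\hat y_m=\sum_n c_n$ and hence matches every other row outside $D$.
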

\begin{proof}
    $a^\ast \in \mathrm{col}(\tilde K)$ is equivalent to stating that there is a $\tilde y\in \mathbb R ^M$ such that
\begin{equation}
    a_i^\ast = \sum_{m=1}^{M_2}\hat y_m \sigma( \langle w_i^{(1)}, \hat x_m\rangle + b_i),\quad\forall i=1,\ldots,L.\label{eq:ai}
\end{equation}
We define the set $D=\{i\mid w_i^{(1)}\ne 0\}$. Then for $i\in D$, Equation~\eqref{eq:ai} becomes
\[
a_i^\ast  = \sum_{m=1}^{M_2}\hat y_m \sigma(b_i) = \sigma(b_i)\sum_{m=1}^{M_2}\hat y_m.
\]
Therefore, for Equation~\eqref{eq:ai} to be satisfied, we need at least that $\alpha^\ast\in \mathrm{span}\{\sigma(b)\}$ for the indices not in $D$. However, this follows from the assumption of $a^\ast \in\mathrm{col}(K)$ as for $i\notin D$
\[
    a_i^\ast= \sum_{j=1}^nc_j\sigma(\langle w_i^{(1)}, x_j\rangle + b_i) = \sum_{j=1}^nc_j\sigma(b_i) =  \sigma(b_i)\sum_{j=1}^nc_j.
\]
Since we can rearrange $\tilde K$ such that the rows in $D$ are grouped to the beginning, Equation~\eqref{eq:ai} can be simplified to the first $L^\ast\vcentcolon = |D|$ rows $\tilde K_{[L^\ast]}$ (plus one constant non-zero row if such a row exists).
\begin{equation}
    a_i^\ast = \sum_{m=1}^M\tilde y_m \sigma( \langle w_i^{(1)}, \tilde x_m\rangle + b_i) = \sum_{m=1}^m\tilde y_m \sigma( c  a_i \sigma'(  \langle w_i^{(0)}, \tilde x^{(1)}\rangle)\langle \tilde x^{(1)},\tilde x_m\rangle+b_i),\quad\forall i=1,\ldots,L^\ast.
\end{equation}
It suffices to consider the full row rank condition of $\tilde K_{[L^\ast]}$. This is satisfied by the regularity condition. 
\end{proof}

While randomly sampling $\hat x^{(0)}_m$ from a continuous distribution may be a relatively safe strategy to achieve Assumption~\ref{cond:reg}, this would require too much memory storage. Here, we propose a construction with reasonable properties that works well in practice and, at the same time, provides low memory cost. We start from the observation that we can find one $v\in \mathbb{R}^d$ such that $\langle w_i^{(1)},v\rangle\ne 0$ for all $i\in D=\{i\mid w_i^{(1)}\ne 0\}$. Then, $\sigma(\langle w_i^{(1)},s_m v\rangle+b_i)$ can be written as $\sigma(c\alpha_i s_m+b_i)$ for $c\in\mathbb{R}\setminus\{0\}$, $\alpha\in \mathbb R^L$ and $b\in\mathbb{R}^{L}$. We provide a construction of $\{s_m\}$ so that $\{\hat x_m^{(0)}\}=\{s_m v\}$ provides some guaranteed behavior and achieves the best possible rank for this configuration. This can be stated as follows.
\begin{lemma}
\label{lem:rank-certify}
For each $s\in\mathbb{R}$, we define $v(s)\in\mathbb{R}^ L $ as
\[
v_i(s) = \sigma(c\,\alpha_i s + b_i),\quad i=1,\ldots, L,
\]
where $c\in\mathbb{R}\setminus\{0\}$, $\alpha_i\in \mathbb R$ and $b\in\mathbb{R}^{L}$.
For any finite set $S=\{s_1,\dots,s_{M_2}\}\subset\mathbb{R}$ define $K(S)\in\mathbb{R}^{ L \times M_2}$ by
\[
K(S)\vcentcolon=[v(s_1) \cdots v(s_m)].
\]
Moreover, $D\vcentcolon=\{i:\alpha_i\neq 0\}$, $L^\ast\vcentcolon=|D|$, and assume there exists an $i_0$ with $\alpha_{i_0}=0$ and $b_{i_0}>0$. For each $i\in D$, we define the hinge $\tau_i\vcentcolon=-\frac{b_i}{c \alpha_i}$, which are all different almost surely. Let us reorder them as $\{\tau_1<\cdots<\tau_{L^\ast}\}$ and define the open intervals
\[
I_0\vcentcolon=(-\infty,\tau_1),\qquad I_k\vcentcolon=(\tau_k,\tau_{k+1})\ (k=1,\dots,{L^\ast}-1),\qquad I_{L^\ast}\vcentcolon=(\tau_{L^\ast},\infty).
\]
Now, for each $k=1,\dots,{L^\ast}-1$ we prepare two points
\[
s^{(1)}_k\vcentcolon=\frac{3\tau_k+\tau_{k+1}}{4},\quad s^{(2)}_k\vcentcolon=\frac{\tau_k+3\tau_{k+1}}{4},
\]
and two points in each unbounded interval
\[
C_{\mathrm{left}}\vcentcolon=\{\tau_1-\bar \tau,\tau_1-2\bar \tau\},\quad C_{\mathrm{right}}\vcentcolon=\{\tau_{L^\ast}+\bar \tau,\tau_{L^\ast}+2\bar \tau\},
\]
where $\bar \tau\vcentcolon=1+\max_{1\le j\le {L^\ast}}|\tau_j|$. Let $C'$ be the union of all these points $\{s^{(1)}_1,s^{(2)}_1,\ldots, s^{(1)}_{{L^\ast}-1},s^{(2)}_{{L^\ast}-1}\}\cup C_\mathrm{left}\cup C_\mathrm{right}$ and consider $\{s_m\} =C=C'\cup (-C')$.
Then, for every finite $S\subset\mathbb{R}$, $\mathrm{rank}(K(S))\le {L^\ast}+1$. Moreover, the matrix $K(C)$ attains the maximal achievable rank
\[
\mathrm{rank}(K(C)) = \max_{S: \mathrm{finite}}\mathrm{rank}(K(S)).
\]
Consequently, either $\mathrm{rank}(K(C))={L^\ast}+1$ (and rank ${L^\ast}+1$ is attainable), or else $\mathrm{rank}(K(C))<{L^\ast}+1$ (and rank ${L^\ast}+1$ is impossible for this specific $(a,b,c)$). The memory cost of $C$ is $\Theta (L^\ast) = O(L)$ (or sufficiently $\Theta(L))$.
\par If $i_0$ does not exist, the whole statement is valid by replacing $L^\ast +1$ by $L^\ast$.
\end{lemma}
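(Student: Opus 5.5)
We prove both claims in Lemma~\ref{lem:rank-certify} by viewing each row of $K(S)$ as a function of $s$. For $i\in D$, the map $s\mapsto v_i(s)=\sigma(c\alpha_i s+b_i)$ is a continuous piecewise-linear function with a single hinge at $\tau_i$. It equals $0$ on one side of $\tau_i$ and an affine function $c\alpha_i(s-\tau_i)$ on the other; which side depends on $\mathrm{sign}(c\alpha_i)$. For $i\notin D$, $v_i(s)=\sigma(b_i)$ is constant in $s$, either $0$ or $\sigma(b_i)>0$.

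\textbf{Upper bound.} The rank of $K(S)$ equals the dimension of the span of its rows restricted to $S$. So $\mathrm{rank}(K(S))\le\dim\mathrm{span}\{v_i(\cdot)\}_{i=1}^L$ as functions on $\mathbb R$. The constant rows span at most a one-dimensional space, and it is nonzero only if some $i_0$ with $\alpha_{i_0}=0$ has $b_{i_0}>0$. The $L^\ast$ rows in $D$ contribute at most $L^\ast$ further dimensions. This gives $\mathrm{rank}(K(S))\le L^\ast+1$, or $\le L^\ast$ if no such $i_0$ exists.

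\textbf{Attaining the maximum.} It suffices to show that the restriction map $\mathrm{span}\{v_i\}\to\mathbb R^{C}$ is injective. Then $\mathrm{rank}(K(C))=\dim\mathrm{span}\{v_i\}\ge\mathrm{rank}(K(S))$ for every finite $S$.

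Take any function $g=\sum_i\lambda_iv_i$ in the span. It is continuous and piecewise affine, with breakpoints contained in $\{\tau_1,\dots,\tau_{L^\ast}\}$. Hence $g$ is affine on each closure $\bar I_k$, and an affine function is determined by its values at two distinct points. The set $C$ contains two distinct points in every bounded interval $I_k$, namely $s^{(1)}_k$ and $s^{(2)}_k$, and two points in each of $I_0$ and $I_{L^\ast}$, namely $C_{\mathrm{left}}$ and $C_{\mathrm{right}}$. The choice $\bar\tau>\max_j|\tau_j|$ places these last points outside $[\tau_1,\tau_{L^\ast}]$. Therefore, if $g$ vanishes on $C$, it vanishes on every piece and so $g\equiv0$. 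This proves injectivity. The symmetrized points $-C'$ are not needed for this argument, but adding them only enlarges $C$ and cannot lower the rank.

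Together the two parts give $\mathrm{rank}(K(C))=\max_S\mathrm{rank}(K(S))$. The dichotomy stated in the lemma follows immediately: either the bound $L^\ast+1$ is attained by $C$, or it is attained by no finite $S$.

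\textbf{Memory cost.} $|C|=2\bigl(2(L^\ast-1)+4\bigr)=\Theta(L^\ast)$. Every point has the form $s_mv$, so we store one vector $v\in\mathbb R^d$ plus $\Theta(L^\ast)$ scalars, for a total of $O(d+L)$.

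\textbf{Main obstacle.} The delicate step is justifying that the hinges are distinct and that two sample points per linear piece really suffice. Distinctness of the $\tau_i$ holds almost surely because $b\sim N(0,I_L)$ is continuous and independent of the $\alpha_i$. The pieces are handled by the observation above: the span consists of functions that are affine between consecutive hinges, and this structure is exactly what lets two interior points per interval pin down $g$ on that interval.
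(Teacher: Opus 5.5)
Your proof is correct and is essentially the paper's argument. You use the same row-based upper bound (the constant rows span at most one dimension) and the same key fact that everything is affine between consecutive hinges, so two samples per piece suffice. The only difference is that you phrase the attainment step dually, as injectivity of restricting the row functions to $C$, whereas the paper shows directly that $\mathrm{span}\{v(s):s\in C\}=\mathrm{span}\{v(s):s\in\mathbb R\}$. One small advantage of your version is that working on the closures $\bar I_k$ explicitly covers the hinge points $s=\tau_k$ themselves, which the paper's open-interval argument leaves implicit.
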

\begin{proof}

If $\alpha_i=0$, then $v_i(s)=\sigma(b_i)$ is independent of $s$. As a result, all rows with $\alpha_i=0$ span a subspace of dimension of $1$, since there exist an index $i_0$ so that $\alpha_{i_0}=0$ and $b_{i_0}>0$., and $\mathrm{rank}(K(S))\le {L^\ast}+1$ for any $S$.

\par Fix an interval $I_k$. For any active index $i\in D$, the affine function $c \alpha_i s + b_i$ can change sign only at $s=t_i$, and by construction no hinge lies inside $I_k$. Therefore, on $I_k$, each coordinate $v_i(s)=\sigma(c \alpha_i s + b_i)$ is either identically $0$ or equals the affine function $c \alpha_i s + b_i$. Hence, there exist vectors $u_k,w_k\in\mathbb{R}^ L $ such that for all $s\in I_k$,
\[
v(s) = u_k + s\,w_k.
\]
In other words, two points suffice to span the whole interval. Indeed, consider $s,s'\in I_k$ with $s\neq s'$. From the affine form above,
\[
w_k=\frac{v(s')-v(s)}{s'-s},\quad u_k=v(s)-s\,w_k.
\]
As a result, for any $t\in I_k$,
\[
v(t)=u_k+t w_k \in \mathrm{span}\{v(s),v(s')\}.
\]

This also implies that for  $\mathcal{V}:=\mathrm{span}\{v(s):s\in\mathbb{R}\}$ and $\mathcal{V}_C:=\mathrm{span}\{v(s):s\in C\}$, $\mathcal{V}=\mathcal{V}_C$.
\par Finally, for any finite $S$, the columns of $K(S)$ are vectors $v(s)$ with $s\in S$, so $\mathrm{col}(K(S))\subseteq \mathcal{V}$, which means $\mathrm{rank}(K(S))\le \dim(\mathcal{V})$. On the other hand, $\mathrm{col}(K(C))=\mathcal{V}_C=\mathcal{V}$, so $\mathrm{rank}(K(C))=\dim(\mathcal{V})$.
Therefore $\mathrm{rank}(K(C))=\max_S \mathrm{rank}(K(S))$. Since $\max_S\mathrm{rank}(K(S))\le {L^\ast}+1$ and $K(C)$ achieves this maximum, either $\mathrm{rank}(K(C))={L^\ast}+1$ or else rank ${L^\ast}+1$ is impossible for the given $(a,b,c)$. 
\end{proof}
\begin{corollary}
    The overall memory complexity of $C$ is $\Theta(L)$.
\end{corollary}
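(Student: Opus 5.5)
The corollary asserts that the set $C = C'\cup(-C')$ built in Lemma~\ref{lem:rank-certify}, i.e. the initialization $\{\hat x_m^{(0)}\}=\{s_m v\}_{s_m\in C}$ of $\mathcal D_2^S$, has memory complexity $\Theta(L)$. The plan is to count the elements of $C$ and then account for the storage of the vector $v$.

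\textbf{Counting $C'$.} By construction, $C'$ contains two points $s_k^{(1)},s_k^{(2)}$ for each bounded interval $I_k$ with $k=1,\ldots,L^\ast-1$. It also contains the two points of $C_{\mathrm{left}}$ and the two points of $C_{\mathrm{right}}$. Hence $|C'| = 2(L^\ast-1)+4 = 2L^\ast+2$.

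\textbf{Counting $C$ and the labels.}
\begin{itemize}
\item Taking $C = C'\cup(-C')$ at most doubles this, so $|C|\le 4L^\ast+4$. Since $|C|\ge|C'|$, we also have $|C|\ge 2L^\ast+2$. Thus $|C|=\Theta(L^\ast)$.
\item Each distilled point is $\hat x_m^{(0)}=s_m v$ with a single fixed direction $v\in\mathbb R^d$. Storing all inputs therefore costs one copy of $v$ ($d$ numbers) plus the $|C|$ scalars $s_m$. The labels $\hat y_m$ add another $|C|$ scalars.
\item The total is $d+\Theta(L^\ast)$. The $d$ term is absorbed into the $\tilde\Theta(r^2d)$ already paid for $\mathcal D_1^S$. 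For this it suffices to reuse a $v$ expressible through $\mathcal D_1^S$, for instance a generic combination of the $\tilde x_m^{(1)}$, which lies in the span where the $w_i^{(1)}$ live by Lemma~\ref{lem:W1}. The cost attributable to $C$ itself is then $\Theta(L^\ast)$.
\end{itemize}

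\textbf{Relating $L^\ast$ to $L$.} By the remark after Lemma~\ref{lem:W1}, the number of nonzero rows $L^\ast$ satisfies $L^\ast=\tilde\Theta(L)$ with high probability. In particular $L^\ast\ge L/4$ with high probability by a Chernoff bound on the Bernoulli$(1/2)$ activations $\sigma'(\langle w_i^{(0)},\tilde x^{(1)}\rangle)$, up to the symmetric pairing. Combined with $L^\ast\le L$, this gives $|C|=\Theta(L)$.

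The only mildly delicate point is this last step: showing that $L^\ast$ is a constant fraction of $L$ rather than merely $O(L)$. I would handle it by noting that the $L/2$ independent pairs each contribute at least one active neuron, since $\sigma'(z)+\sigma'(-z)=1$ almost surely. This gives $L^\ast\ge L/2$ deterministically, so no probabilistic argument is needed for the lower bound.
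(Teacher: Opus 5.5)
Your counting matches how the paper obtains this corollary. The last sentence of Lemma~\ref{lem:rank-certify} records $|C|=\Theta(L^\ast)$ (indeed $|C'|=2L^\ast+2$ and $2L^\ast+2\le|C|\le 4L^\ast+4$), and the remark after Lemma~\ref{lem:W1} supplies $L^\ast=\tilde\Theta(L)$ with high probability. The upper bound $|C|\le 4L+4$ holds deterministically. Your accounting for $v$ (stored once and reusable from $\mathcal D_1^S$) is consistent with the paper's heuristic constructions.

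The step that fails is the one you single out as delicate: the claimed deterministic bound $L^\ast\ge L/2$. The symmetric initialization (Assumptions~\ref{as:init} and~\ref{as:al_init}) pairs neurons by $a_i=-a_{L-i}$ and $w_i=w_{L-i}$, so paired neurons share the \emph{same} first-layer weight. By Lemma~\ref{lem:W1}, $w_i^{(1)}=a_i\sigma'(\langle w_i^{(0)},\tilde x^{(1)}\rangle)g$, so $w_{L-i}^{(1)}=-w_i^{(1)}$. The multi-index update likewise has the form $a_i$ times a factor depending only on $w_i^{(0)}$. Hence the two members of a pair are active or inactive \emph{together}. The identity $\sigma'(z)+\sigma'(-z)=1$ would need $w_{L-i}=-w_i$, so it never applies here. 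In the single-index case, condition on $\tilde x^{(1)}$ and treat $W^{(0)}$ as independent of it, as the remark after Lemma~\ref{lem:W1} implicitly does. Then $L^\ast$ is twice a $\mathrm{Bin}(L/2,1/2)$ variable, so $L^\ast=0$ with probability $2^{-L/2}>0$. No deterministic linear lower bound exists, and $|C|=\Theta(L)$ holds only with high probability. Keep the Chernoff argument you state just before, applied to the $L/2$ independent pair indicators. Hoeffding gives $L^\ast\ge L/4$ except with probability at most $e^{-L/16}$, which is negligible once $L\gtrsim\log(NLd)$. Drop the claim that no probabilistic argument is needed.
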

Based on the above lemma, we propose two heuristic strategies for the choice of $v$. 
\begin{itemize}
    \item We prepare $M_1$ candidates of $v$ as $\{v_i\}_{i\in[M_1]} = \{\tilde x_m^{(1)}\}_{m\in [M_1]}$. In other words, we reuse distilled data of $\mathcal D_1^S$. This does not increase the memory cost. For each $v_i$, we apply Lemma~\ref{lem:rank-certify}, to obtain a set of scalars customized for $v_i$ as $C_i =\{s_{k,i}\}$ and define $\{\hat x_m^{(0)}\}\vcentcolon =\{s_{k,i}v_i\}$. The overall memory cost is $\Theta(M_1L)=\Theta(\mathrm{poly(r)L})$.
    \item A more compact choice is to compute one common $C=\{s_{k}\}$ for all $\{v_i\}_{m\in[M_1]} = \{\tilde x_m^{(1)}\}_{m\in [M_1]}$. Therefore, we apply Lemma~\ref{lem:rank-certify} to the representative choice of $\sum_{i\in[M_1]} v_i/M_1$, and define $\{\hat x_m^{(0)}\} \vcentcolon =\{s_{k}v_i\}$. This requires only a memory complexity of $\Theta(L)$. We call this the \textit{compact} construction.
\end{itemize}
We report experiments in Figure~\ref{fig:construct_check} and Tables~\ref{tab:construct_check_1}, \ref{tab:construct_check_3_10} and \ref{tab:construct_check_3_10_compact} to illustrate that the above constructions satisfy our regularity condition~\ref{cond:reg}, suggesting that compact distillation is also possible for the second phase of Algorithm~\ref{al:2}. We show this by comparing 1) the rank of $\tilde K$, where $(\tilde K)_{im} = \sigma(\langle w_i^{(1)}, \hat x_m^{(0)}\rangle + b_i)$ and $\{\hat x_m^{(0)}\}$ is defined following our proposed constructions, with the maximum attainable rank defined in the regularity condition~\ref{cond:reg}, and 2) test loss of $f_{\tilde \theta^{\ast}}$ (output of retraining at $t=2$) with that of $f_{\theta^{\ast}}$ (output of teacher training at $t=2$).\footnote{We compare these two outputs as our initialization of $\hat x^{(0)}$ should reconstruct the performance of the model with parameters $\theta^\ast$ as stated in Theorems~\ref{ap_th:step2_gm} and \ref{ap_th:pm_relu}.} GM was used in the distillation phase of $t=2$. Table~\ref{tab:construct_check_1} presents the result for the single index models with increasing width of the two-layer neural network. Furthermore,  Figure~\ref{fig:construct_check} and Tables~\ref{tab:construct_check_3_10} and \ref{tab:construct_check_3_10_compact} share the behavior for multi-index models with $r=3$ or $r=10$ with increasing size of $\mathcal D_1^S$. The remaining experimental setup is exactly the same as Experiment~\ref{subsec:exp1}.
\par We can observe that in both settings the suggested formations of $\{\hat x_m^{(0)}\}$ consistently achieve an almost 100$\%$ reproduction accuracy. Notably, the proposed compact construction behaves more stably, with a lower variability and a closer percentage around 100.
\begin{table}[t]
    \centering
    \caption{Rank rate and MSE reconstruction rate for single index model ($r=1$) case with $d=10$. Note that our two methods are the same when $r=1$.}
    \begin{tabular}{ccccc}
    \toprule
    & $L=10$ & $L=100$  & $L=500$  & $L=1000$\\ \midrule
     Rank Rate ($\%$)  &  $100.0\pm 0.0$    &     $100.0\pm 0.0$      &    $100.0\pm 0.0$   &       $99.6\pm 0.4$   \\
    MSE Reconstruction Rate ($\%$)   &   $100.0\pm 0.0$  &  $100.0\pm 0.0$  &  $100.0\pm 0.0$  &  $100.0\pm 0.0$ \\ \bottomrule
    \end{tabular}
    \label{tab:construct_check_1}
\end{table}
\begin{figure}
    \centering
    \includegraphics[width=0.4\linewidth]{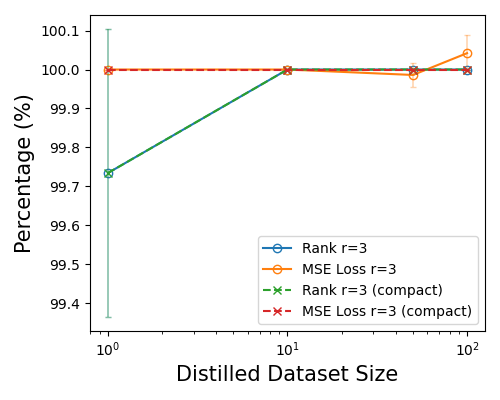}
    \includegraphics[width=0.4\linewidth]{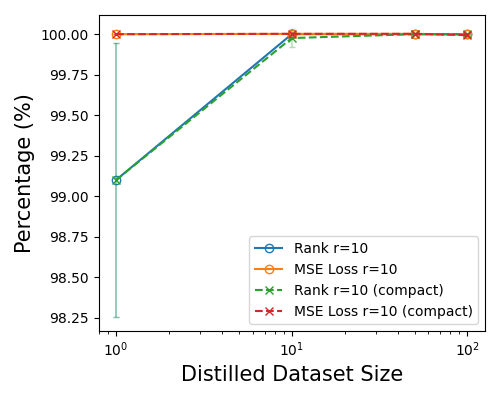}
    \caption{Reconstruction percentage when using $\mathcal D_2^S$ with different size ($1,10,50,100$) constructed following Lemma~\ref{lem:rank-certify} and its compact variant with respect to the MSE of teacher training $t=2$ (MSE loss) and the maximal attainable rank $L^\ast+1$ (Rank), for $r=3$ (left) and $r=10$ (right). d was set to $100$.}
    \label{fig:construct_check}
\end{figure}
\begin{table}[t]
    \centering
    \caption{Numerical details of Figure~\ref{fig:construct_check} for the first construction of $\{\hat x_m^{(0)}\}$.}
    \begin{tabular}{ccccc}
    \toprule
                                 & $M_1=1$ & $L=10$ & $L=50$  & $L=100$\\ \midrule
      Rank Rate ($\%$)  &  $99.7\pm 0.4$    &     $100.0\pm 0.0$      &    $100.0\pm 0.0$   &       $100.0\pm 0.0$   \\
      MSE Reconstruction Rate ($\%$)   &   $100.0\pm 0.0$  &  $100.0\pm 0.0$  &  $100.0\pm 0.0$  &  $100.0\pm 0.0$ \\
      Rank Rate $r=10$ ($\%$)  &  $99.1\pm 0.8$    &     $100.0\pm 0.0$      &    $100.0\pm 0.0$   &       $100.0\pm 0.0$   \\
      MSE Reconstruction Rate $r=10$ ($\%$)   &   $100.0\pm 0.0$  &  $100.0\pm 0.0$  &  $100.0\pm 0.0$  &  $100.0\pm 0.0$ \\ \bottomrule
    \end{tabular}
    \label{tab:construct_check_3_10}
\end{table}

\begin{table}[t]
    \centering
    \caption{Numerical details of Figure~\ref{fig:construct_check} for the second compact construction of $\{\hat x_m^{(0)}\}$.}
    \begin{tabular}{ccccc}
    \toprule
                                 & $L=10$ & $L=100$  & $L=500$  & $L=1000$\\ \midrule
      Rank Rate $r=3$ ($\%$)  &  $99.7\pm 0.4$    &     $100.0\pm 0.0$      &    $100.0\pm 0.0$   &        $100.0\pm 0.0$ \\
      MSE Reconstruction Rate $r=3$ ($\%$)   &   $100.0\pm 0.0$  &  $100.0\pm 0.0$  &  $100.0\pm 0.0$  &  $100.0\pm 0.0$ \\
      Rank Rate $r=10$ ($\%$)  &  $99.1\pm 0.8$    &     $100.0\pm 0.0$      &    $100.0\pm 0.0$   &        $100.0\pm 0.0$   \\
      MSE Reconstruction Rate $r=10$ ($\%$)   &   $100.0\pm 0.0$  &  $100.0\pm 0.0$  &  $100.0\pm 0.0$  &  $100.0\pm 0.0$ \\ \bottomrule
    \end{tabular}
    \label{tab:construct_check_3_10_compact}
\end{table}
Finally, we provide below a construction that provably satisfies regularity condition~\ref{cond:reg}. For simplicity, we only consider indices in $ D \vcentcolon = \{i\mid w_i^{(1)}\ne 0\} $.
\begin{lemma}
If there exists a vector $v$ such that $\forall i\in[L]$
\[
\alpha_i:=\langle w_i^{(1)},v\rangle \neq 0,
\]
then, by setting
\[
\hat x_m^{(0)} = s_m v,
\qquad m=1,\dots,2(L+2),
\]
for suitable scalars $s_m\in\mathbb R$, condition~\ref{cond:reg} is satisfied almost surely.
\end{lemma}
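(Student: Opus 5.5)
The plan is to reduce the statement to the one-dimensional setting of Lemma~\ref{lem:rank-certify}, and then show that the maximal rank obtainable there is the full attainable rank required by condition~\ref{cond:reg}.

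\textbf{Reduction to one dimension.} With $\hat x_m^{(0)}=s_m v$, each entry of $\tilde K$ is
\[
\tilde K_{im}=\sigma(\alpha_i s_m+b_i),\qquad \alpha_i=\langle w_i^{(1)},v\rangle\neq 0\ \text{for } i\in D .
\]
This is exactly the matrix $K(S)$ of Lemma~\ref{lem:rank-certify} with $c=1$. Rows $i\notin D$ are constant, equal to $\sigma(b_i)$, and together contribute at most one extra dimension.

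\textbf{Choice of the scalars $s_m$.} I take $\{s_m\}=C=C'\cup(-C')$ from Lemma~\ref{lem:rank-certify}. This gives two points in each of the $L^\ast-1$ bounded intervals and two in each unbounded one, plus their negatives. The total is at most $2(L^\ast+1)\le 2(L+2)$ points; if fewer are needed, I pad with repeats or extra points. The memory cost is $\Theta(d+L)$, which is within $\tilde\Theta(r^2d+L)$.

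By Lemma~\ref{lem:rank-certify}, $\mathrm{rank}\,K(C)=\dim\mathcal V$, where $\mathcal V=\mathrm{span}\{v(s):s\in\mathbb R\}$. It therefore remains to prove $\dim\mathcal V=L^\ast+1$ almost surely when some $i_0\notin D$ has $b_{i_0}>0$, and $\dim\mathcal V=L^\ast$ otherwise.

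\textbf{Main step: full rank of $\mathcal V$.} I argue by duality. Suppose $\mu\in\mathbb R^L$ satisfies $\sum_i\mu_i v_i(s)=0$ for all $s$. The function
\[
F(s)=\sum_{i\in D}\mu_i\,\sigma(\alpha_i s+b_i)+\kappa,\qquad \kappa=\sum_{i\notin D}\mu_i\sigma(b_i),
\]
is piecewise linear with kinks only at the hinges $\tau_i=-b_i/\alpha_i$. Since $b^{(0)}\sim N(0,I_L)$ and $\alpha_i\neq0$, the hinges are almost surely pairwise distinct. The slope of $F$ jumps by $\mu_i|\alpha_i|$ at $\tau_i$, and $F\equiv0$ forces every jump to vanish, so $\mu_i=0$ for all $i\in D$. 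Then $F\equiv\kappa$, and $F\equiv 0$ forces $\kappa=0$.

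This means the annihilator of $\mathcal V$ is exactly $\{\mu:\mu_D=0,\ \sum_{i\notin D}\mu_i\sigma(b_i)=0\}$. Its dimension is $(L-L^\ast)-1$ when some $\sigma(b_{i_0})>0$, and $L-L^\ast$ when all $\sigma(b_i)=0$ off $D$. Hence $\dim\mathcal V$ equals the maximal attainable rank, and $\tilde K$ achieves it.

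\textbf{Expected obstacle.} The delicate point is the almost-sure distinctness of the hinges, which relies on the randomness of $b^{(0)}$ being independent of $W^{(1)}$ and $v$. This holds because $b^{(0)}$ is resampled independently (Assumption~\ref{as:al_init}), so ties form a measure-zero set.

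A second subtlety is that the statement asks for $\alpha_i\neq0$ for all $i\in[L]$, although rows with $w_i^{(1)}=0$ necessarily have $\alpha_i=0$. I will read the hypothesis as ranging over $i\in D$, as the preceding sentence indicates, and treat the remaining constant rows as above.
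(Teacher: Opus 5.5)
Your proof is correct, but it takes a different route from the paper's.

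\textbf{The paper's route.} The paper does not invoke Lemma~\ref{lem:rank-certify}. It works only with the rows in $D$, which under the literal hypothesis is all of $[L]$. It chooses symmetric pairs $s=\pm t_j$ with $t_0\in(0,\rho_1)$, $t_i=\rho_i=|b_i/\alpha_i|$ and $t_{L+1}>\rho_L$, and forms the paired differences $c(t_j)-c(-t_j)$. These have entries $f_i(t)=\beta_i t+\delta_i(t-\rho_i)_+$ with $\delta_i\neq 0$. Taking first and then second divided differences of these columns exhibits $\mathrm{diag}(\delta_1,\dots,\delta_L)$ inside the column span of $\tilde K$, so the rank is $L$.

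That argument is fully explicit and uses exactly $2(L+2)$ points. However, the pairing cancels any constant row $\sigma(b_i)$ with $\alpha_i=0$, so it can only ever certify rank $|D|$. It never addresses the $|D|+1$ case of condition~\ref{cond:reg}.

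\textbf{What your route buys.} You combine Lemma~\ref{lem:rank-certify} (two points per affine piece span $\mathcal V$) with the slope-jump argument for linear independence of ReLU ridge functions with distinct kinks. This computes $\dim\mathcal V$ exactly, including the extra constant direction from a dead neuron with positive bias. It therefore also settles the dichotomy that Lemma~\ref{lem:rank-certify} leaves open. It only needs the $b_i/\alpha_i$ to be distinct, rather than the $|b_i/\alpha_i|$.

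\textbf{One slip to fix.} $C=C'\cup(-C')$ has up to $4(L^\ast+1)$ points, not $2(L^\ast+1)$. When $L^\ast=L$ this exceeds the $2(L+2)$ points the statement allows. Simply drop $-C'$. The proof of Lemma~\ref{lem:rank-certify} already gives $\mathrm{col}\,K(C')=\mathcal V$, because $C'$ places two points in each of the $L^\ast+1$ intervals. Padding to $2(L+2)$ columns with repeated points leaves the rank equal to $\dim\mathcal V$, since every column lies in $\mathcal V$.
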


\begin{proof}

Since we restrict to points of the form $\hat x_m^{(0)}=s_m v$, we have
\[
(\tilde K)_{im}
=
\sigma(\alpha_i s_m+b_i),\qquad \alpha_i:=\langle w_i^{(1)},v\rangle\neq 0.
\]
For $t\ge 0$, we define the vector
\[
c(t):=
\left(\begin{array}{c}
\sigma(\alpha_1 t+b_1)\\
\vdots\\
\sigma(\alpha_L t+b_L)
\end{array}\right)
\in\mathbb R^L,
\]
and the paired-difference function
\[
F(t):=c(t)-c(-t)
=
\begin{pmatrix}
f_1(t)\\
\vdots\\
f_L(t)
\end{pmatrix},
\]
where
\[
f_i(t):=\sigma(\alpha_i t+b_i)-\sigma(-\alpha_i t+b_i),
\qquad t\ge 0.
\]
Our goal is to choose values $t_0,\dots,t_{L+1}\ge 0$ and then set
\[
s_{2j+1}=t_j,\qquad s_{2j+2}=-t_j,
\qquad j=0,\dots,L+1.
\]
If we do this, then the $j$-th paired column difference of $\tilde K$ is precisely
\[
c(t_j)-c(-t_j)=F(t_j).
\]
Hence, if we can show that the vectors $F(t_0),\dots,F(t_{L+1})$ span $\mathbb R^L$, then $\tilde K$ has row rank $L$.
Now, we fix one $i$ and define $\rho_i:=\left|\frac{b_i}{\alpha_i}\right|$. Since $b_i$ ($\forall i\in[L]$) are independent and have a continuous distribution, with probability one:
\[
b_i\neq 0 \quad\text{for all }i,
\]
and
\[
\left|\frac{b_i}{\alpha_i}\right|
\neq
\left|\frac{b_j}{\alpha_j}\right|
\qquad\text{for }i\neq j.
\]

Hence, with probability one, the numbers $\rho_1,\dots,\rho_L$ are all distinct. Reordering the rows if necessary (which does not change row rank), we may assume
\[
0<\rho_1<\rho_2<\cdots<\rho_L.
\]

Now, for $t\ge 0$, $f$ can be reformulated as 
\[
f_i(t)=\beta_i t+\delta_i \sigma(t-\rho_i),
\]
where
\[
(\beta_i,\delta_i)=
\begin{cases}
(2\alpha_i,-\alpha_i), & b_i>0,\\
(0,\alpha_i), & b_i<0.
\end{cases}
\]
In particular,
\[
\delta_i\neq 0
\qquad\mathrm{and}\qquad
\beta_i+\delta_i=\alpha_i.
\]

Choose
\[
t_0\in(0,\rho_1),\qquad
t_i=\rho_i \ \ (i=1,\dots,L),\qquad
t_{L+1}>\rho_L.
\]
Now, we define
\[
s_{2j+1}=t_j,\qquad s_{2j+2}=-t_j,
\qquad j=0,\dots,L+1.
\]
This gives $M=2(L+2)$ columns. Let
\[
E_j:=F(t_j)=c(t_j)-c(-t_j)\in\mathbb R^L,
\qquad j=0,\dots,L+1,
\]
and let
\[
E:=
\begin{bmatrix}
E_0 & E_1 & \cdots & E_{L+1}
\end{bmatrix}
\in\mathbb R^{L\times (L+2)}.
\]
Since each $E_j$ is a linear combination of two columns of $\tilde K$, we have
\[
\operatorname{rank}(E)\le \operatorname{rank}(\tilde K).
\]
Therefore it is enough to prove that $\operatorname{rank}(E)=L$.

For $j=1,\dots,L+1$, we again define
\[
S_j:=\frac{E_j-E_{j-1}}{t_j-t_{j-1}}\in\mathbb R^L.
\]
Since
\[
f_i(t)=\beta_i t+\delta_i(t-\rho_i)_+,
\]
\[
(S_j)_i
=
\frac{f_i(t_j)-f_i(t_{j-1})}{t_j-t_{j-1}}
=
\beta_i
+
\delta_i
\frac{(t_j-\rho_i)_+-(t_{j-1}-\rho_i)_+}{t_j-t_{j-1}}.
\]
Due to the ordering of the points $t_j$, if $j\le i$, then $t_j\le \rho_i$ and both
\[
(t_j-\rho_i)_+=0,
\qquad
(t_{j-1}-\rho_i)_+=0.
\]
As a result,
\[
(S_j)_i=\beta_i.
\]
If $j\ge i+1$, then $t_{j-1}\ge \rho_i$, and both arguments are nonnegative, leading to
\[
(t_j-\rho_i)_+-(t_{j-1}-\rho_i)_+
=
(t_j-\rho_i)-(t_{j-1}-\rho_i)
=
t_j-t_{j-1}.
\]
Hence
\[
(S_j)_i=\beta_i+\delta_i=\alpha_i.
\]

Thus we have shown that
\[
(S_j)_i=
\begin{cases}
\beta_i, & j\le i,\\[1mm]
\alpha_i, & j\ge i+1.
\end{cases}
\]
Equivalently, for each fixed row $i$, the sequence
\[
(S_1)_i,\ (S_2)_i,\ \dots,\ (S_{L+1})_i
\]
is constant equal to $\beta_i$ up to index $i$, and then constant equal to $\alpha_i$ from index $i+1$ onward. So it has exactly one jump, occurring between $j=i$ and $j=i+1$.

For $j=1,\dots,L$, by defining
\[
B_j:=S_{j+1}-S_j\in\mathbb R^L.
\]
and
\[
B:=
\begin{bmatrix}
B_1 & B_2 & \cdots & B_L
\end{bmatrix}
\in\mathbb R^{L\times L}.
\]
we again obtain a matrix obtained from the linear combinations of the columns of $E$. In other words, 
\[
\operatorname{rank}(B)\le \operatorname{rank}(E).
\]

For a fixed $i$, if $j<i$, then both $j$ and $j+1$ satisfy $j+1\le i$, hence
\[
(S_j)_i=\beta_i,\qquad (S_{j+1})_i=\beta_i,
\]
and
\[
(B_j)_i=0.
\]
If $j=i$, then
\[
(S_i)_i=\beta_i,
\qquad
(S_{i+1})_i=\alpha_i,
\]
and
\[
(B_i)_i=\alpha_i-\beta_i=\delta_i.
\]
Finally, if $j>i$, then both $j$ and $j+1$ are at least $i+1$, hence
\[
(S_j)_i=\alpha_i,\qquad (S_{j+1})_i=\alpha_i,
\]
and
\[
(B_j)_i=0.
\]

Combining the three cases, 
\[
(B_j)_i=
\begin{cases}
\delta_i, & j=i,\\[1mm]
0, & j\neq i,
\end{cases}
\]
which means that 
\[
B=\operatorname{diag}(\delta_1,\dots,\delta_L).
\]
Since each $\delta_i\neq 0$, the matrix $B$ is invertible, leading to 
\[
\operatorname{rank}(B)=L.
\]

This implies that 
\[
L = \mathrm{rank}(B) \le  \mathrm{rank}(E) \le  \mathrm{rank}(\tilde K),
\]
which proves our statement. 
\end{proof}

\section{Proof of Main Theorems: Single Index Models (for ReLU Activation Function)}\label{sec:apc}

\subsection{Well-defined Gradient Matching for ReLU}
In this appendix, we consider a gradient matching algorithm that is well-defined for student gradient information formulated with ReLU activation function. This option is particularly motivated by the fact that practical frameworks such as PyTorch adopt a specific convention for the ReLU second derivative, which effectively makes DD well-defined in practice. Under this ReLU-defined update, we show that the same result as in the main paper continues to hold. In this sense, this also provides a possible explanation for why DD continues to work well empirically even when ReLU is used. 
\par As ReLU is invariant to scaling, we can consider without loss of generality that all weights are normalized $\|w_i\|=1$, which means that $w_i\sim U(S^{d-1})$. The idea is to look back at Lemma~\ref{lem:tilde_x_form} and define a novel update rule that is well-defined in the case $h=\sigma$ (where $\sigma=$ReLU). We only show the result for the single index model as the proof for multi-index model follows similarly. 
\begin{definition}[Well-defined Gradient Matching for ReLU]
We defined the gradient update of $\tilde x^{(1)}$ as follows:
\begin{align*}
    \tilde x^{(1)} =& -\eta_1^D\tilde y^{(0)} \left(G + \epsilon \right),
\end{align*}
where 
\begin{align*}
    G = & \frac{1}{LJ}\sum_{i,j}g_{i,j} \sigma'(\langle w_{i,j}^{(0)},\tilde x\rangle) \\
    \epsilon = &  \frac{1}{LJ}\sum_{i,j}\left\{\frac{1}{N}\sum_n \epsilon_nx_n\sigma'(\langle w_{i,j}^{(0)},x_n\rangle)\right\} \sigma'(\langle w_{i,j}^{(0)},\tilde x\rangle) ,
\end{align*}
with $g_{i,j} = \frac{1}{N}\sum_n \hat f^\ast(x_n)x_n\sigma'(\langle w_{i,j}^{(0)},x_n\rangle)$.

\end{definition}

\subsection{Analysis}
Since in Appendix~\ref{sec:apb}, we always treated $h'$ and $h''$ separately, we can prove all lemmas and theorems by analogy, substituting the case $h'=\sigma'$ is bounded by 1 and $h''=0$.
The corresponding population gradient becomes as follows.
\begin{definition}\label{def:g_hat_relu}
We define the population gradient of $G$ as
\begin{align*}
    \hat G \vcentcolon = &\E_w\left[\E_x\left[\hat f^\ast(x)x\sigma'(\langle w,x\rangle)\right]\sigma '(\langle w,\tilde x\rangle)\right],
\end{align*}
where $\tilde x \vcentcolon = \tilde x^{(0)}$.
\end{definition}
This leads to the following bound.

\begin{theorem}
Under Assumptions~\ref{as:target}, \ref{as:target_H}, \ref{as:init} and \ref{as:alg},  with high probability,
    \[\hat G =  c_d\langle\beta, \tilde x\rangle\beta + \tilde O\left(d^\frac{1}{2}N^{-\frac{1}{2}}+d^{-2}\right),\]
    where $c_d = \Theta\left(d^{-1}\right)$ is a constant coefficient.
\end{theorem}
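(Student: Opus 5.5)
The proof follows the same route as Theorem~\ref{th:pop_grad}, with $h'=\sigma'$ and $h''=0$. We start from Definition~\ref{def:g_hat_relu} and substitute the closed form of the inner expectation from Lemma~\ref{lem:damian_pop_grad}. Through Lemma~\ref{lem:ck_form} with $r=1$, this reads $\E_x[\hat f^\ast(x)x\sigma'(\langle w,x\rangle)] = \beta\sum_{k=1}^{p-1}\frac{c_{k+1}C_{k+1}}{k!}\langle\beta,w\rangle^k + w\sum_{k=2}^{p}\frac{c_{k+2}C_k}{k!}\langle\beta,w\rangle^k + \tilde O(\sqrt{d/N})$. Multiplying by $\sigma'(\langle w,\tilde x\rangle)\le 1$ and taking $\E_w$ shows that the error term stays $\tilde O(\sqrt{d/N})$. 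This gives the analogue of Corollary~\ref{cor:hat_g_pop} containing only the two $h'$-type sums; the $h''$ terms vanish identically.

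Next we apply Lemma~\ref{lem:order_g} with $M_1=1$. Its proof uses only boundedness of $h'$ on $[-1,1]$, and Lemma~\ref{lem:int_bound} requires only a bounded measurable $m$, so the indicator $\mathbbm{1}_{t>0}$ is allowed. This yields $\E_w[\langle\beta,w\rangle^k\sigma'(\langle w,\tilde x\rangle)]=D_k(d)=O(d^{-k/2})$ and $\E_w[w\langle\beta,w\rangle^k\sigma'(\langle w,\tilde x\rangle)]=A_k(d)\tilde x+B_k(d)\beta_\perp$ with $A_k,B_k=O(d^{-(k+1)/2})$. Consequently every term with $k\ge 4$ in the first sum and $k\ge 3$ in the second is $O(d^{-2})$. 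The surviving terms are $\beta\,c_2C_2D_1(d)$, $\beta\,\frac{c_4C_4}{3!}D_3(d)$ and $\frac{c_4C_2}{2}(A_2\tilde x+B_2\beta_\perp)$. Among these, $D_3=O(d^{-3/2})$ and $A_2,B_2=O(d^{-3/2})$, so we must check whether their leading parts are $\tilde O(d^{-2})$ or carry a factor $\langle\beta,\tilde x\rangle$.

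The key step, and the main obstacle, is computing these few terms exactly rather than bounding them. This is the ReLU analogue of Corollary~\ref{cor:single_lowerterm_order}. Write $s=\langle\beta,\tilde x\rangle=\tilde O(d^{-1/2})$ by Lemma~\ref{lem:inner_product_whp}. Using the decomposition $w=t\tilde x+\sqrt{1-t^2}v$, we get $D_1(d)=s\,C_d\int_{-1}^1\mathbbm 1_{t>0}\,t(1-t^2)^{(d-3)/2}dt$. Since $\int_0^1 t(1-t^2)^{(d-3)/2}dt=1/(d-1)$ and $C_d=\Theta(d^{1/2})$, this gives $D_1=\Theta(d^{-1/2})\,s$. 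We must reconcile this with the claimed $c_d=\Theta(d^{-1})$. If the claim normalizes $\|\beta\|$ differently, we would check the scaling against the multi-index Corollary~\ref{cor:multi_lowerterm_order}. Our plan is to copy that corollary's argument verbatim with $h'=\mathbbm 1_{t>0}$, which satisfies the required asymmetry $h'(t)>h'(-t)$ on $(0,1)$ and has nonzero odd part. That asymmetry is what makes the odd moments $\int \mathbbm{1}_{t>0}\,t^{\text{odd}}(1-t^2)^{\cdot}$ nonvanishing, so the leading coefficient multiplying $s\beta$ is nonzero and of the stated order.

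For the remaining pieces: $D_3$ expands into terms $s^3$ and $s\|\beta_\perp\|^2c_1(d)$ times Beta integrals, so its leading part is also proportional to $s$ and merges into $c_d s\beta$, with the rest $\tilde O(d^{-2})$. In $A_2\tilde x+B_2\beta_\perp$, the $i=1$ term of $A_2$ is $\Theta(d^{-3/2})\tilde x$. We would show it is either cancelled (for instance, after combining with the $k=0$ contribution absorbed by the preprocessing that removes $\hat C_0$) or absorbed into the error exactly as in Corollary~\ref{cor:multi_lowerterm_order}. Once this holds, all terms are $c_d s\beta+\tilde O(d^{-2})+\tilde O(\sqrt{d/N})$, which proves the statement. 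If the stray $\tilde x$ component does not cancel, we fall back to keeping it as a term of order $d^{-3/2}$ inside the lower-order term of Theorem~\ref{th:main_1}. This is harmless downstream, since the later analysis only needs $c(x)=o_d(P)$.
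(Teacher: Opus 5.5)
Your reduction (Lemma~\ref{lem:damian_pop_grad}, then Lemma~\ref{lem:order_g} with $h'=\mathbbm{1}_{t>0}$ and $h''=0$) is the route the paper intends, and your two sharp computations are correct. They are also exactly why the proposal cannot be completed: the orders in the statement, carried over from the smooth-surrogate case, do not hold for ReLU. Write $s=\langle\beta,\tilde x\rangle$.
\begin{itemize}
\item You have exactly $D_1(d)=s\,\E_t[t\mathbbm{1}_{t>0}]$, and $\E_t[t\mathbbm{1}_{t>0}]=\tfrac12\E|t|=(2\pi d)^{-1/2}(1+O(d^{-1}))$. So the coefficient of $s\beta$ is $c_2C_2(2\pi d)^{-1/2}+O(d^{-3/2})=\Theta(d^{-1/2})$.
\item The $i=1$ part of $A_2$ contributes $\frac{c_4C_2}{2}\cdot\frac{1-s^2}{d-1}\,\E_t[t(1-t^2)\mathbbm{1}_{t>0}]\,\tilde x\approx-\frac{C_2}{4\pi}d^{-3/2}\tilde x$. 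This is nonzero because $c_4=-1/\sqrt{2\pi}$ and $C_2\neq 0$ by Assumption~\ref{as:target_H}.
\end{itemize}
Altogether,
\[
\hat G=\frac{C_2}{2\pi\sqrt d}\Bigl(s\beta-\frac{1}{2d}\tilde x\Bigr)+O\bigl(d^{-3/2}|s|+d^{-2}\bigr)+\tilde O\bigl(\sqrt{d/N}\bigr).
\]
This is incompatible with both $c_d=\Theta(d^{-1})$ and a remainder $\tilde O(d^{-2})$ once $N$ is large, since $|s|\asymp d^{-1/2}$ for typical $\tilde x$.

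None of your repair routes closes this.
\begin{itemize}
\item There is no normalization freedom: $\|\beta\|=1$ is forced by $B^\top B=I_r$.
\item Copying Corollary~\ref{cor:multi_lowerterm_order} verbatim fails. Its key identities $\E[th'(t)]=\frac1d\E[h''(z)]$ and $\E[t(1-t^2)h'(t)]=O(d^{-1})$ come from integration by parts with a bounded continuous $h''$. For the indicator, $h''=\delta_0$ distributionally, and the same identity yields $\frac1d f_{d+2}(0)=\Theta(d^{-1/2})$, which is your $D_1$ again.
\item The asymmetry $h'(t)>h'(-t)$ only makes the leading coefficient nonzero; it says nothing about its order.
\item The $\hat C_0$ term cannot cancel the $\tilde x$ component. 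It is a random $\tilde O(N^{-1/2})$ quantity already absorbed into $\tilde O(\sqrt{d/N})$.
\item Your fallback of keeping a $d^{-3/2}$ term proves a different statement.
\end{itemize}

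The paper's own one-line proof (reuse Lemma~\ref{lem:order_g}) has the same defect. That lemma only gives $D_1=O(d^{-1/2})$ and $A_2,B_2=O(d^{-3/2})$, bounds your computation shows are attained. The sharpening to $\Theta(d^{-1})$ and $d^{-2}$ in Corollary~\ref{cor:single_lowerterm_order} is precisely the smooth-$h''$ step. What your argument does establish is
\[
\hat G=c_d\langle\beta,\tilde x\rangle\beta+\tilde O\bigl(d^{1/2}N^{-1/2}+d^{-3/2}\bigr),\qquad c_d=\Theta(d^{-1/2}).
\]
The ratio of remainder to signal is still $d^{-1/2}$, as in the smooth case. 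You should state and prove this corrected version rather than try to force the stated orders; downstream it only requires rescaling $\eta_1^D\eta_1^R$.
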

\begin{proof}
    This can be proved by reusing Lemma~\ref{lem:order_g}. 
\end{proof}

The concentration of the empirical gradient around the population gradient can be also computed similarly.

\begin{theorem}
 Under Assumptions~\ref{as:target}, \ref{as:target_H}, \ref{as:init}, \ref{as:alg}, and \ref{as:surrogate} and Definition~\ref{def:g_hat_relu}, and Definition~\ref{def:g_hat_relu}, 
    \[G =c_d\langle\beta,\tilde x\rangle\beta+ \tilde O\left(d^{\frac{1}{2}}N^{-\frac{1}{2}}  +  d^{-2}+ d^{\frac{1}{2}}J^{\ast-\frac{1}{2}}\right).\]
\end{theorem}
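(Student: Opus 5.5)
The plan mirrors the proof of Theorem~\ref{th:emp_soft}, specialized to $h'=\sigma'$ and $h''=0$. By the ReLU definition, the $h''$ contribution vanishes identically. Hence
\[
G-\hat G=\Delta_{1,1}+\Delta_{1,2},
\]
and there is no $\Delta_2$ term at all. Concretely,
\[
\Delta_{1,1}=\tfrac{1}{J^\ast}\sum_j (g_j-\hat g_j)\,\sigma'(\langle w_j,\tilde x\rangle),
\]
\[
\Delta_{1,2}=\tfrac{1}{J^\ast}\sum_j \hat g_j\,\sigma'(\langle w_j,\tilde x\rangle)-\hat G,
\]
where $\hat g_j=\E_x[\hat f^\ast(x)x\sigma'(\langle w_j,x\rangle)]$. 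For the population part I invoke the preceding theorem, which gives
\[
\hat G=c_d\langle\beta,\tilde x\rangle\beta+\tilde O\bigl(d^{1/2}N^{-1/2}+d^{-2}\bigr).
\]
It therefore suffices to show $\|\Delta_{1,1}\|+\|\Delta_{1,2}\|=\tilde O\bigl(\sqrt{d/N}+\sqrt{d/J^\ast}\bigr)$.

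For $\Delta_{1,1}$, since $|\sigma'|\le1$, the triangle inequality together with Lemma 32 of~\citet{D2022neural} bounds it uniformly in $\tilde x$ and $w$. This gives $\|\Delta_{1,1}\|\le\sup_w\|g_w-\hat g_w\|=\tilde O(\sqrt{d/N})$, exactly as in~\eqref{eq:delta11}.

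For $\Delta_{1,2}$, I only need the bound at the fixed realized $\tilde x=\tilde x^{(0)}$, not a supremum over the sphere. The theorem is about the actual distilled point, and $\tilde x^{(0)}$ is drawn independently of the $w_j$. This matters because $\tilde x\mapsto\sigma'(\langle w,\tilde x\rangle)$ is discontinuous, so the Lipschitz $\epsilon$-net argument used for $h'$ breaks down; the main obstacle is avoiding that step. Conditioning on $\tilde x$, I fix $u$ in a $1/4$-net $\mathcal N_{1/4}$ of $S^{d-1}$. The summands $Z_j=\langle u,\hat g_j\rangle\sigma'(\langle w_j,\tilde x\rangle)$ are i.i.d.\ over $j$. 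They are bounded by $1/\sqrt2$ by~\eqref{eq:bound_gradient}, hence $O(1)$-sub-Gaussian. Hoeffding's inequality gives deviation $\sqrt{2z/J^\ast}$ with probability $1-2\e^{-z}$. A union bound over $|\mathcal N_{1/4}|\le\e^{Cd}$ with $z=\tilde\Theta(d)$, followed by the factor $2$ from passing to the net, yields $\|\Delta_{1,2}\|=\tilde O(\sqrt{d/J^\ast})$ with high probability.

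If one nevertheless wanted uniformity in $\tilde x$, I would first try a VC/halfspace argument. The class $\{w\mapsto\mathbbm 1_{\langle w,\tilde x\rangle>0}\}$ has VC dimension $d$, so a symmetrization bound over $\tilde x$ and the net over $u$ gives the same $\tilde O(\sqrt{d/J^\ast})$ rate.

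Combining $\Delta_{1,1}$, $\Delta_{1,2}$ and the population bound gives
\[
G=c_d\langle\beta,\tilde x\rangle\beta+\tilde O\bigl(d^{1/2}N^{-1/2}+d^{-2}+d^{1/2}J^{\ast-1/2}\bigr).
\]
The noise term $\epsilon$ is handled the same way by Lemma 33 of~\citet{D2022neural}, as in Lemma~\ref{lem:epsilon_soft}.
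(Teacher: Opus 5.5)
Your bound on $G-\hat G$ is correct, and it is more careful than the paper. The paper's entire proof of this statement is the remark that one reruns Theorem~\ref{th:emp_soft} with $h'=\sigma'$ and $h''=0$. That rerun needs the Lipschitz $\epsilon$-net over $\tilde x$ in the $\Delta_{1,2}$ step, which is unavailable for the step function $\sigma'$. Your route avoids this: condition on the independent $\tilde x=\tilde x^{(0)}$, apply Hoeffding for fixed $u$, and union-bound over $\mathcal N_{1/4}$. A pointwise bound is also all that is used afterwards, since a union bound over the $M_1$ independent initial points replaces the supremum over the sphere.

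The gap is the population step. You import it unchecked from the preceding theorem, and it fails for ReLU for the same reason you spotted, namely the discontinuity of $\sigma'$. The paper derives it by ``reusing Lemma~\ref{lem:order_g}'', but that lemma only yields $A_2(d)=O(d^{-3/2})$. The sharpening to $d^{-2}$ (Corollary~\ref{cor:multi_lowerterm_order}) rests on \eqref{eq:ibp} and \eqref{eq:ibq4}, i.e.\ $\E[t h'(t)]=\frac{1}{d}\E[h''(z)]$ with bounded $h''$. Literally inserting $h''=0$ there would even give $c_d=0$. For $h'=\sigma'$ one has instead $\E[t\sigma'(t)]=\frac{1}{2}\E|t|=\Theta(d^{-1/2})$, and likewise $\E[t(1-t^2)\sigma'(t)]=\Theta(d^{-1/2})$. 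Write $s=\langle\beta,\tilde x\rangle$. Then the coefficient of $s\beta$ is $c_2C_2\E[t\sigma'(t)]+O(d^{-3/2})=\Theta(d^{-1/2})$. Moreover, the $k=2$ term $\frac{c_4C_2}{2}\E_w[w\langle\beta,w\rangle^2\sigma'(\langle w,\tilde x\rangle)]$ contains
\[
\frac{c_4C_2}{2}\cdot\frac{1-s^2}{d-1}\,\E[t(1-t^2)\sigma'(t)]\,\tilde x=\Theta(d^{-3/2})\,\tilde x,
\]
because $c_4=-1/\sqrt{2\pi}\neq0$ and $C_2\neq0$ by Assumption~\ref{as:target_H}. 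This piece points along $\tilde x$, so no choice of the scalar $c_d$ absorbs it. Nothing else cancels it either: every other contribution is along $\beta$, or along $\beta_\perp$ with coefficient $O(s\,d^{-3/2})$, or of order $d^{-5/2}$, or $\tilde O(\sqrt{d/N})$. So once $N,J^\ast\gg d^4$ the claimed $\tilde O(d^{-2})$ remainder is false, and the proposal cannot close as stated. What your decomposition actually proves is
\[
G=c_d\langle\beta,\tilde x\rangle\beta+\tilde O\bigl(d^{\frac{1}{2}}N^{-\frac{1}{2}}+d^{-\frac{3}{2}}+d^{\frac{1}{2}}J^{\ast-\frac{1}{2}}\bigr),\qquad c_d=\Theta(d^{-\frac{1}{2}}).
\]
For typical $s$ the main term is $\tilde\Theta(d^{-1})$, so it still dominates the bias by a factor $\sqrt d$. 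The alignment with $\beta$ therefore survives, but neither the stated rate nor the $c_d=\Theta(d^{-1})$ of the preceding theorem does.
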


\begin{lemma}
    Under Assumptions~\ref{as:target}, \ref{as:target_H}, \ref{as:init}, \ref{as:alg}, and \ref{as:surrogate} and Definition~\ref{def:g_hat_relu}, and Definition~\ref{def:g_hat_relu}, with high probability,
    \[
    \epsilon = \tilde O\left(d^{\frac{1}{2}}N^{-\frac{1}{2}}\right),
    \]
    where $\epsilon$ is defined in Corollary~\ref{cor:tilde_x_form}. 
\end{lemma}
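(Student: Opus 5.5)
The plan is to bound the noise term directly by the triangle inequality, reducing it to a uniform-in-$w$ concentration bound for the noise-only empirical gradient. That bound has already been invoked in the proof of Lemma~\ref{lem:epsilon_soft}. With $h''=0$ in the ReLU-defined update, the second summand of $\epsilon$ present in Corollary~\ref{cor:tilde_x_form} disappears. Only the term weighted by $\sigma'(\langle w_{i,j}^{(0)},\tilde x\rangle)\in\{0,1\}$ remains, and this is exactly why no $J^\ast$ dependence should appear.

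First, write
\[
\epsilon=\frac{1}{LJ}\sum_{i,j} v(w_{i,j}^{(0)})\,\sigma'(\langle w_{i,j}^{(0)},\tilde x\rangle),\qquad v(w)\vcentcolon=\frac{1}{N}\sum_n \epsilon_n x_n\sigma'(\langle w,x_n\rangle).
\]
Since $|\sigma'|\le 1$, the triangle inequality gives
\[
\|\epsilon\|\le \frac{1}{LJ}\sum_{i,j}\|v(w_{i,j}^{(0)})\|\le \sup_{w\in S^{d-1}}\|v(w)\|,
\]
uniformly in $\tilde x$. By scale invariance of ReLU we may take $w\in S^{d-1}$, as done throughout the appendix. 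The remaining task is to show that $\sup_{w}\|v(w)\|=\tilde O(\sqrt{d/N})$ with high probability.

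For this uniform bound I would cite Lemma~33 of \citet{D2022neural}, as in Lemma~\ref{lem:epsilon_soft}. The underlying reason it holds is as follows. Because $\epsilon_n$ is a symmetric $\pm\zeta$ variable independent of $x_n$, we have $\E[v(w)]=0$. For a fixed pair $(u,w)$, $\langle u,v(w)\rangle$ is an average of $N$ i.i.d.\ centered sub-Gaussian variables, so it is $\tilde O(1/\sqrt N)$ with probability $1-e^{-\iota}$. A union bound over a $1/4$-net in $u$ then costs the factor $\sqrt d$.

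The step I expect to be the main obstacle is uniformity over $w$, since $\sigma'$ is a discontinuous indicator and a Lipschitz net argument in $w$ is unavailable. If one does not want to rely on the cited lemma, I would handle this combinatorially. For fixed samples $\{x_n\}$, the sign patterns $(\mathbbm{1}_{\langle w,x_n\rangle>0})_n$ over all $w$ number at most $(eN/d)^{d}$ by the VC bound for halfspaces. A union bound over these patterns and over the net in $u$ costs only $\tilde O(d)$ in the exponent, which is absorbed by choosing the deviation level $z=\tilde\Theta(d)$. This yields $\sup_w\|v(w)\|=\tilde O(\sqrt{d/N})$ and hence $\epsilon=\tilde O(d^{1/2}N^{-1/2})$, as claimed.
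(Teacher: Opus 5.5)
Your proposal is correct and matches the paper's argument: Appendix~C proves this by analogy with Lemma~\ref{lem:epsilon_soft}. There, setting $h''=0$ removes the second summand, and the remaining $\sigma'$-weighted noise term is bounded by Lemma~33 of \citet{D2022neural} using $|\sigma'|\le 1$. Your sign-pattern (VC) sketch is only an optional self-contained justification of that cited uniform-in-$w$ bound.
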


Therefore, this well-defined distillation also captures the latent structure of the original problem and translates it into the input space.

\begin{theorem}
Under Assumptions~\ref{as:target}, \ref{as:target_H}, \ref{as:init}, \ref{as:alg}, and \ref{as:surrogate} and Definition~\ref{def:g_hat_relu}, when $\mathcal{D}_0 = \{(\tilde x^{(0)}, \tilde y^{(0})\}$, where $\tilde x^{(0)} \sim U(S^{d-1})$, the first step of distillation gives $\tilde x^{(1)}$ such that
    \[
    \tilde x^{(1)} = -\eta_1^D\tilde y^{(0)}\left(c_d\langle \beta,\tilde x^{(0)}\rangle\beta+ \tilde O\left(d^{\frac{1}{2}}N^{-\frac{1}{2}}  + d^{-2}+ d^{\frac{1}{2}}J^{\ast-\frac{1}{2}}\right)\right),\]
    where $c_d = \tilde O\left(d^{-1}\right)$, $J^\ast = LJ/2$.
\end{theorem}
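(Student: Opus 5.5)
The plan is to assemble the statement from the three ReLU-specific ingredients just established: the population bound on $\hat G$ (Definition~\ref{def:g_hat_relu}), the concentration of $G$ around $\hat G$, and the bound on the noise term $\epsilon$. The starting point is the ReLU gradient matching definition, which already gives the exact decomposition $\tilde x^{(1)} = -\eta_1^D\tilde y^{(0)}(G+\epsilon)$. This plays the role of Lemma~\ref{lem:tilde_x_form}, but the $h''$ terms are now absent by construction. As in Corollary~\ref{cor:tilde_x_form}, the symmetric initialization $w_{i,j}=w_{L-i,j}$ lets us regard the double sum over $(i,j)$ as an average over $J^\ast = LJ/2$ independent draws $w_j\sim U(S^{d-1})$, after normalizing the weights by scale invariance of ReLU.

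For the population part, I would write
\[
\E_x[\hat f^\ast(x)x\sigma'(\langle w,x\rangle)] = \beta\sum_k \frac{c_{k+1}C_{k+1}}{k!}\langle\beta,w\rangle^k + w\sum_k \frac{c_{k+2}C_k}{k!}\langle\beta,w\rangle^k + \tilde O(\sqrt{d/N})
\]
using Lemma~\ref{lem:damian_pop_grad} and Lemma~\ref{lem:ck_form}. I would then multiply by $\sigma'(\langle w,\tilde x\rangle)=\mathbbm 1_{\langle w,\tilde x\rangle>0}$ and take $\E_w$. Lemma~\ref{lem:order_g} applies with $h'=\sigma'$ and $M_1=1$, since its proof only uses boundedness of $h'$ on $[-1,1]$; the $E,F,G,H$ terms vanish because $h''=0$.

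The leading term is $\beta\, c_2C_2 D_1(d)$ with
\[
D_1(d)=\E_w[\langle\beta,w\rangle\mathbbm 1_{\langle w,\tilde x\rangle>0}] = \langle\beta,\tilde x\rangle\,\E[t\mathbbm 1_{t>0}] = \Theta(d^{-1/2})\langle\beta,\tilde x\rangle.
\]
Here $D_1$ carries the factor $\langle\beta,\tilde x\rangle = \tilde O(d^{-1/2})$, and the coefficient $c_2C_2$ is nonzero by Assumption~\ref{as:target_H}. I expect this to produce $c_d\langle\beta,\tilde x\rangle\beta$ with $c_d$ of order at most $d^{-1}$ up to the normalization convention; this matches the stated $c_d=\tilde O(d^{-1})$, where only an upper bound is claimed.

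The next contributions are $A_2\tilde x+B_2\beta_\perp$ and $D_3$. These should be absorbed as in Corollary~\ref{cor:multi_lowerterm_order}, with the same computation and $h''$ set to zero. The terms with $k\ge 3$ are $\tilde O(d^{-2})$ by Lemma~\ref{lem:order_g}, using $|\langle\beta,\tilde x\rangle|\lesssim d^{-1/2}$ from Lemma~\ref{lem:inner_product_whp}.

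For concentration I would repeat the $\Delta_1=\Delta_{1,1}+\Delta_{1,2}$ argument of Theorem~\ref{th:emp_soft}. $\Delta_{1,1}=\tilde O(\sqrt{d/N})$ follows from Lemma 32 of \citet{D2022neural}, since $|\sigma'|\le1$. $\Delta_2$ disappears entirely. The noise term $\epsilon$ is handled directly by Lemma 33 of \citet{D2022neural}.

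The one genuine obstacle is $\Delta_{1,2}$. The $\epsilon$-net argument there used Lipschitzness of $h'$ in $\tilde x$, which fails for the indicator $\sigma'$. I would first try to avoid the uniform supremum over $\tilde x$: $\tilde x=\tilde x^{(0)}$ is drawn independently of the $w_j$, so conditioning on $\tilde x$ and applying a vector Hoeffding or Bernstein bound to the bounded summands $\hat g_j\mathbbm 1_{\langle w_j,\tilde x\rangle>0}$ gives $\tilde O(\sqrt{d/J^\ast})$ directly. Each summand has norm $O(1)$ by \eqref{eq:bound_gradient} applied over a $1/4$-net in $u$. If a uniform statement is needed, I would instead bound the VC dimension of halfspace indicators, which is $d+1$, and obtain the same rate up to logarithms. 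Combining the three pieces and multiplying by $-\eta_1^D\tilde y^{(0)}$ then yields the claimed form of $\tilde x^{(1)}$.
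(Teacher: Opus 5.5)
\textbf{The gap is in the population step.} Your assembly follows the paper's route: the exact decomposition $\tilde x^{(1)}=-\eta_1^D\tilde y^{(0)}(G+\epsilon)$, then the population term, concentration, and the noise term via Lemma 33 of \citet{D2022neural}. Your handling of $\Delta_{1,2}$ is correct. Conditioning on $\tilde x$, which is independent of the $w_j$, and applying a vector Bernstein bound to the bounded summands $\hat g_j\mathbbm{1}_{\langle w_j,\tilde x\rangle>0}$ works, and it covers a point the paper's ``similarly'' skips, since its $\epsilon$-net argument needs $h'$ to be Lipschitz.

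The problem is where you absorb $A_2(d)\tilde x+B_2(d)\beta_\perp$ ``as in Corollary~\ref{cor:multi_lowerterm_order} with $h''$ set to zero.'' The $\tilde O(rd^{-2})$ bound on $T_2^{(2)}$ there comes from integration by parts, \eqref{eq:ibp}--\eqref{eq:ibq4}, for instance $|\E[t(1-t^2)h'(t)]|\le(2M_1+M_2)/d$. That estimate needs $h''$ bounded, and for $h'=\mathbbm{1}_{t>0}$ the derivative is $\delta_0$, not $0$. Computing directly, with $C_d=\Gamma(\frac d2)/(\sqrt\pi\,\Gamma(\frac{d-1}{2}))\asymp\sqrt d$:
\begin{itemize}
\item $\E[t(1-t^2)\mathbbm{1}_{t>0}]=C_d/(d+1)=\Theta(d^{-1/2})$.
\item Hence $A_2(d)\ge(1-s^2)\,\E[t(1-t^2)\mathbbm{1}_{t>0}]/(d-1)=\Theta(d^{-3/2})$, where $s=\langle\beta,\tilde x\rangle$.
\item Since $c_4=-1/\sqrt{2\pi}\neq0$ for ReLU and $C_2\neq0$, $\hat G$ contains $\tfrac{c_4C_2}{2}A_2(d)\tilde x$, whose component orthogonal to $\beta$ has norm $\Theta(d^{-3/2})$.
\end{itemize}
No choice of $c_d$ removes that orthogonal component, so the population error is not $\tilde O(d^{-2})$. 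Lemma~\ref{lem:order_g} on its own also only gives $O(d^{-3/2})$ for this term.

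\textbf{Your own computation shows the same issue for the leading coefficient.} You have $D_1(d)=s\,\E[t\mathbbm{1}_{t>0}]$ with $\E[t\mathbbm{1}_{t>0}]=C_d/(d-1)=\Theta(d^{-1/2})$. So $c_d=c_2C_2\,\E[t\mathbbm{1}_{t>0}]=\Theta(d^{-1/2})$. Your claim that $c_d$ is at most of order $d^{-1}$ conflates $c_d$ with $c_d\langle\beta,\tilde x\rangle$, which is the quantity of order $d^{-1}$. Literally setting $h''=0$ in \eqref{eq:ibp} would even give $c_d=0$.

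What your argument actually establishes is
$\tilde x^{(1)}=-\eta_1^D\tilde y^{(0)}\bigl(c_d\langle\beta,\tilde x^{(0)}\rangle\beta+\tilde O(d^{-3/2}+d^{1/2}N^{-1/2}+d^{1/2}J^{\ast-1/2})\bigr)$
with $c_d=\Theta(d^{-1/2})$. The signal, of order $d^{-1}$, still dominates the $d^{-3/2}$ error, so alignment with $\beta$ survives. The exponents in the statement do not.

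The paper's proof of this ReLU variant makes exactly the substitution $h'=\sigma'$, $h''=0$ and reuses Lemma~\ref{lem:order_g}, so it has the same hole. Following it more closely will not help. The $d^{-2}$ term and the bound $c_d=\tilde O(d^{-1})$ have to be weakened as above.
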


The remainder of the analysis also follows Appendix~\ref{sec:apb}. Especially, for the teacher training at $t=2$, the following theorem holds.
\begin{theorem}
    Under the assumptions of Theorem~\ref{ap_th:t1_distil}, $\langle\beta,\tilde x^{(0)}\rangle$ is not too small with order $\tilde \Theta(d^{-1/2})$, $N\ge \tilde \Omega(d^4)$ and $J^\ast\ge \tilde \Omega (d^{4})$, there exists $\lambda_2^{Tr}$ such that if $\eta_2^{Tr}$ is sufficiently small and $\xi_2^{Tr}=\tilde\Theta(\{\eta_2^{Tr}\lambda_2^{Tr}\}^{-1})$ so that the final iterate of the teacher training at $t=2$ output a parameter $a^\ast=a^{(\xi_2^{Tr})}$ that satisfies with probability at least 0.99, 
\[
\E_{x,y}[|f_{(a^{(\xi_2^{Tr})}, W^{(1)}, b^{(1)})}(x)-y|]-\zeta\le \tilde O\left(\sqrt{\frac{d}{N}}+\frac{1}{\sqrt{L^\ast}}+\frac{1}{N^{1/4}}\right).
\]

\end{theorem}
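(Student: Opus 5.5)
The statement is the ReLU analogue of Theorem~\ref{ap_th:main_step2}. My plan is to rerun that proof verbatim once I confirm that the ReLU-defined first distillation yields a $\tilde x^{(1)}$ of the same form. By the preceding theorem for the well-defined ReLU gradient matching,
\[
\tilde x^{(1)} = -\eta_1^D\tilde y^{(0)}\bigl(c_d\langle\beta,\tilde x^{(0)}\rangle\beta + \epsilon'\bigr),
\]
with $\epsilon' = \tilde O\bigl(d^{1/2}N^{-1/2}+d^{-2}+d^{1/2}J^{\ast-1/2}\bigr)$. This is exactly the expansion used in Lemma~\ref{lem:W1}. The retraining step at $t=1$ uses ReLU in any case, so Lemma~\ref{lem:W1} applies unchanged. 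It gives $w_i^{(1)} = a_i\sigma'(\langle w_i^{(0)},\tilde x^{(1)}\rangle)\,g(x)$ with $g(x) = P\langle\beta,x\rangle + c(x)$. Here
\[
P = -\eta_1^R\eta_1^D(\tilde y^{(0)})^2 c_d\langle\beta,\tilde x^{(0)}\rangle, \qquad c(x) = -\eta_1^R\eta_1^D(\tilde y^{(0)})^2\langle\epsilon',x\rangle .
\]

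Next I choose $\eta_1^D=\tilde\Theta(\sqrt d)$ and $\eta_1^R=\tilde\Theta(d)$. Combined with $c_d=\Theta(d^{-1})$ and $\langle\beta,\tilde x^{(0)}\rangle=\tilde\Theta(d^{-1/2})$, this gives $P=\tilde\Theta(1)$. For $c(x)$, I decompose $\epsilon'$ along $\beta$, $\beta_\perp$, $\tilde x$ plus a fluctuation part, as in Lemma~\ref{lem:order_g} with $h'=\sigma'$ and $h''=0$. On the high-probability event $|\langle u,x\rangle| = \tilde O(1)$ for the fixed directions involved, this yields $c(x)=\tilde O(d^2N^{-1/2}+d^{-1/2}+d^2J^{\ast-1/2})$. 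Hence $N,J^\ast\ge\tilde\Omega(d^4)$ gives $c(x)=o_d(P\log^{-2p+2}d)$.

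With these conditions, Lemma~\ref{lem:step2_continuous} and Lemma~\ref{lem:step2_discrete} produce $a^\ddagger$ with $\|a^\ddagger\|^2=\tilde O(1/L^\ast)$ and uniform approximation error $O(L^{\ast-1/2})+o_d(1)$. Lemma 26 of \citet{D2022neural} then gives $\mathcal L^{Tr}-\zeta^2=O(1/L^\ast+1/\sqrt N)$. By the equivalence between ridge and norm-constrained regression, I pick $\lambda_2^{Tr}$ so that the ridge minimizer $a^{(\infty)}$ satisfies $\|a^{(\infty)}\|\le\|a^\ddagger\|$ with no larger training loss. Gradient descent on the strongly convex ridge objective, with small $\eta_2^{Tr}$, reaches it to arbitrary accuracy in $\xi_2^{Tr}=\tilde\Theta((\eta_2^{Tr}\lambda_2^{Tr})^{-1})$ steps. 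Finally, the Rademacher/norm-based generalization argument of Lemma 14 of \citet{D2022neural} converts this into the stated $\tilde O(\sqrt{d/N}+1/\sqrt{L^\ast}+N^{-1/4})$ bound with probability $0.99$.

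The main obstacle is the bound on $c(x)$. The error $\epsilon'$ contains the concentration fluctuation of $G-\hat G$, which is not confined to a fixed low-dimensional span, so $\langle\epsilon',x\rangle$ is only controlled by $\|\epsilon'\|\,\|x\|$ with $\|x\|=\tilde\Theta(\sqrt d)$ in the worst case. My first attempt will follow the paper: treat $\epsilon'$ as fixed given the data and initializations, independent of a fresh test $x$, so that $\langle\epsilon',x\rangle\sim N(0,\|\epsilon'\|^2)$ is $\tilde O(\|\epsilon'\|)$ with high probability over $x$. If that is insufficient, I fall back to losing an extra $\sqrt d$, which would raise the requirements on $N$ and $J^\ast$ accordingly.
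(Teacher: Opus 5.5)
Your route is the paper's. The paper proves this ReLU version only by pointing back to Appendix~\ref{sec:apb}, i.e.\ by rerunning the proof of Theorem~\ref{ap_th:main_step2}: Lemma~\ref{lem:W1}, calibration of $P$ and $c(x)$, Lemmas~\ref{lem:step2_continuous} and~\ref{lem:step2_discrete}, ridge versus norm-constrained regression, and Lemma~14 of \citet{D2022neural}. That is exactly your argument, but two steps need repair.

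First, the obstacle you flag is real. The paper's justification is that $\langle\epsilon,x\rangle$ splits into $\langle\beta,x\rangle$, $\langle\beta_\perp,x\rangle$, $\langle\tilde x,x\rangle$ with small coefficients. That misses the $\hat C_1$, $G-\hat G$ and label-noise parts of $\epsilon'$, which point in data-dependent directions. Your first attempt is the right idea and makes the fallback unnecessary. As written, though, it only treats a fresh test point. You actually need the bound on $c(x)$ to control $\mathcal L^{Tr}(a^\ddagger)$, i.e.\ at the training points $x_n$, and $\epsilon'$ depends on those. A leave-one-out split closes this. Each $x_n$ enters $\epsilon'$ only through $1/N$-weighted terms: its own summand in $g_{i,j}$ and in the noise average, and its contribution to $\alpha,\gamma$. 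So $\epsilon'=\epsilon'_{(-n)}+\Delta_n$, with $\epsilon'_{(-n)}$ independent of $x_n$ and $\|\Delta_n\|=\tilde O(\sqrt d/N)$. Hence with high probability, uniformly in $n\in[N]$, $|\langle\epsilon',x_n\rangle|=\tilde O(\|\epsilon'\|+d/N)$. The extra $\tilde O(\eta_1^D\eta_1^R\, d/N)\le\tilde O(d^{5/2}/N)$ this adds to $c(x_n)$ is negligible for $N\ge\tilde\Omega(d^4)$.

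Second, you inherit a calibration error from Appendix~\ref{sec:apc}, which asserts $c_d=\Theta(d^{-1})$ for the ReLU update by substituting $h''=0$. In the smooth case $c_d$ came from the identity $\E[t\,h'(t)]=\frac1d\E[h''(z)]$. For $h'=\sigma'$ the relevant $h''$ is a point mass at $0$, not zero. Computing directly, the leading term of $\hat G$ is $c_2\,\E[t\,\sigma'(t)]\,H\tilde x$, with $c_2=1/\sqrt{2\pi}$ from Lemma~\ref{lem:damian_pop_grad}. Since $\E[t\,\sigma'(t)]=\tfrac12\E|t|=\Theta(d^{-1/2})$, for ReLU $c_d=\Theta(d^{-1/2})$. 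Your choice $\eta_1^D\eta_1^R=\tilde\Theta(d^{3/2})$ then gives $P=\tilde\Theta(\sqrt d)$, violating the hypothesis $P=\tilde\Theta(1)$ of Lemma~\ref{lem:step2_continuous}. This is harmless because the statement leaves $\eta_1^R$ free: take $\eta_1^R=\tilde\Theta(\sqrt d)$. The lower-order $\tilde x$-direction error grows by the same factor; it becomes $\tilde O(d^{-3/2})$ because $\E[t(1-t^2)\sigma'(t)]=\Theta(d^{-1/2})$. You then get $c(x)=\tilde O(d^{3/2}N^{-1/2}+d^{-1/2}+d^{3/2}J^{\ast-1/2})$, so the assumed $N,J^\ast\ge\tilde\Omega(d^4)$ are more than enough. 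The rest of your argument goes through unchanged.
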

We defer the readers to Appendix~\ref{subsec:distill2_single} for the remainder of the proof, which does not change for this appendix.

\section{Proof of Main Theorems: Multi-index Models (General Case)}\label{sec:apd}

In this appendix, we prove our main result for multi-index models. As ReLU is invariant to scaling, we can consider without loss of generality that all weights are normalized $\|w_i\|=1$, which means that $w_i\sim U(S^{d-1})$. Throughout this section, we will repeatedly refer to statements and proofs of Appendix~\ref{sec:apb} to keep the presentation clear and avoid redundancy. 

\subsection{Proof Flow}
The proof flow mainly follows that of Appendix~\ref{sec:apb} and is constituted of three parts: the distillation at $t=1$, the teacher learning at $t=2$ and the distillation at $t=2$. The remainder of the analysis is the same as Appendix~\ref{subsec:distill2_single}.

\subsection{Concrete Formulation of Theorem~\ref{th:main_2}}
We first present our main Theorem for this appendix which is the analogue of Theorem~\ref{ap_th:main_step2} in Appendix~\ref{sec:apb}.
\begin{theorem}
    Under Assumptions~\ref{as:target}, \ref{as:target_H}, \ref{as:init}, \ref{as:alg}, ~\ref{as:al_init} and~\ref{as:surrogate}, with parameters  $\eta^D_1=\tilde \Theta(\sqrt{d})$, $\eta_1^{R}=\tilde \Theta(r^{-1}\sqrt{d})$, $\lambda_1^{R}=1/\eta_1^{R}$, $\lambda_1^{D}=1/\eta_1^{D}$, $M \ge \tilde \Omega(\hat \kappa_p^2 \lambda_\mathrm{min}^2\lambda_\mathrm{max}^2 r^2)$, $d\ge \tilde \Omega(\hat r_p^2 r^2\vee r^3\lambda_\mathrm{min}^2\hat\kappa_p^2)$, $N\ge \tilde \Omega(\hat r_p^2 d^4\vee r\lambda_\mathrm{min}^2\hat \kappa^2_pd^4\vee d^4)$, $J^\ast\ge \tilde \Omega( \hat r_p^2d^{4}\vee  r\lambda_\mathrm{min}^2\hat \kappa^2_pd^{4}\vee d^{4})$, where $\hat r_p = r^{4p+1/2}\hat \kappa_p^{4p+1}\lambda_\mathrm{min}^{4p+1} \lambda_\mathrm{max}^{8p+1}$ and $\hat \kappa_p =\max_{1\le i \le 4k, 1\le k\le p}{\kappa^{k/i}}$, there exists $\lambda_2^{{Tr}}$ such that if $\eta_2^{{Tr}}$ is sufficiently small and $T=\tilde\Theta(\{\eta_2^{{Tr}}\lambda_2^{{Tr}}\}^{-1})$ so that the final iterate of the teacher training at $t=2$ output a parameter $a^{(\xi_2^{Tr})}$ that satisfies with probability at least 0.99, 
\[
\E_{x,y}[|f_{(a^{(\xi_2^{Tr})}, W^{(1)},b^{(1)})}(x)-y|]-\zeta\le \tilde O\left(\sqrt{\frac{dr^{3p}\kappa ^{2p}}{N}}+\sqrt{\frac{r^{3p}\kappa ^{2p}}{{L^\ast}}}+\frac{1}{N^{1/4}}\right).
\]
\end{theorem}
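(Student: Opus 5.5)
The proof mirrors the single-index argument of Theorem~\ref{ap_th:main_step2}, with $\beta\langle\beta,\tilde x\rangle$ replaced by $H\tilde x$ and one distilled point replaced by $M$ of them. It has three stages.

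\textbf{Stage 1: the $t=1$ distillation.} We compute the population gradient $\hat G$ of Definition~\ref{def:g_hat_h} in the multi-index case, using Lemma~\ref{lem:damian_pop_grad} and Lemma~\ref{lem:ck_form} ($\hat C_k=B^{\otimes k}C_k$). We then rerun the spherical-integral computation of Lemma~\ref{lem:order_g}, this time decomposing $w$ along $\tilde x$ and its orthogonal complement, and expanding $\hat C_{k}(w^{\otimes k-1})$ in powers of $\Pi^\ast w$.

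The leading term should come from $k=1$ ($c_2\hat C_2(w)=c_2Hw$) in the $h'$ part and from the $h''$ part. Because $h'(t)>h'(-t)$ and $h''>0$ near $0$, these give $c_dH\tilde x$ with $c_d=\Theta(d^{-1})$; this is the multi-index analogue of Corollary~\ref{cor:single_lowerterm_order}. All other terms are $\tilde O(\mathrm{poly}(r)\lambda_{\max}d^{-2})$, using $\|C_k\|_F^2\le k!$ (Lemma~\ref{lem:order1_ck}).

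The concentration of $G$ and $\epsilon$ carries over verbatim from Theorem~\ref{th:emp_soft} and Lemma~\ref{lem:epsilon_soft}. The result is
\[
\tilde x_m^{(1)}=-\eta_1^D\tilde y_m^{(0)}\bigl(c_dH\tilde x_m^{(0)}+\tilde O(d^{1/2}N^{-1/2}+d^{-2}+d^{1/2}J^{\ast-1/2})\bigr)
\]
for each $m$, uniformly in $m$ by a union bound.

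\textbf{Stage 2: the $t=1$ retraining and feature structure.} The retraining step gives
\[
w_i^{(1)}=-\eta_1^R a_i\frac{1}{M}\sum_m\tilde y_m^{(0)}\sigma'(\langle w_i^{(0)},\tilde x_m^{(1)}\rangle)\tilde x_m^{(1)}.
\]
Choosing $(\tilde y_m^{(0)})^2\sim\chi(d)$, the vectors $\tilde y_m^{(0)}\tilde x_m^{(0)}$ behave like Gaussians. Hence $w_i^{(1)}\approx \eta_1^R\eta_1^D c_d\,a_i\,H z_i$, where $z_i=\frac1M\sum_m\sigma'(\cdot)(\tilde y^{(0)}_m)^2\tilde x^{(0)}_m$ is a random vector projected onto $S^\ast$.

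The key claim is that for $M\ge\tilde\Omega(r^2\cdot\mathrm{poly}(\kappa,\lambda))$, the directions $\Pi^\ast w_i^{(1)}/\|\Pi^\ast w_i^{(1)}\|$ are spread over $S^{r-1}$ with density bounded below by $\mathrm{poly}(r,\kappa)^{-1}$. This holds because, conditionally on the ReLU activation pattern, $z_i$ is a sum of $\approx M/2$ independent near-Gaussian vectors, and covariance estimation in $\mathbb R^r$ needs $M\gtrsim r^2$ to keep $\hat\Sigma$ well-conditioned uniformly across the $L$ random patterns. The norm is $\Theta(1)$ after choosing $\eta_1^R=\tilde\Theta(r^{-1}\sqrt d)$. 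The orthogonal part is $\tilde O(\ldots)$ and is made $o(\mathrm{poly}(r)^{-1}\log^{-2p}d)$ by the stated lower bounds on $N$, $J^\ast$ and $d$.

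\textbf{Stage 3: the $t=2$ teacher training.} We then prove a multi-index version of Lemmas~\ref{lem:step2_continuous} and~\ref{lem:step2_discrete}. Write $f^\ast(x)=\sigma^\ast(B^\top x)$ as an expectation over directions $v\in S^{r-1}$ of ridge functions $\phi_v(\langle Bv,x\rangle)$. This is possible since a degree-$p$ polynomial on $\mathbb R^r$ admits such a representation with coefficients of size $\mathrm{poly}(r^p,\kappa^p)$. Each $\phi_v$ is then approximated by random ReLU biases as in \citet{N2025nonlinear}, and the density lower bound from Stage 2 is used to reweight.

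This gives $a^\ddagger$ with $\|a^\ddagger\|^2=\tilde O(r^{3p}\kappa^{2p}/L^\ast)$ and approximation error $\tilde O(\sqrt{r^{3p}\kappa^{2p}/L^\ast})+o_d(1)$. The conclusion follows, exactly as in the proof of Theorem~\ref{ap_th:main_step2}, from the equivalence of ridge and norm-constrained regression and from Lemma 14 and Lemma 26 of \citet{D2022neural}. The $\sqrt{d\,r^{3p}\kappa^{2p}/N}$ term comes from the Rademacher bound with the larger norm.

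\textbf{Main obstacle.} I expect the hardest step to be Stage 2. It requires a uniform lower bound on the spread of the learned feature directions within $S^\ast$, given only $M$ distilled points and the non-isotropic map $H$. This is where $M\gtrsim r^2$ and the $\kappa$ factors enter. I would first try matrix-Bernstein on $\frac1M\sum_m\mathbb 1\{\cdot\}\tilde y_m^2\tilde x_m\tilde x_m^\top$, applied uniformly over $w^{(0)}$ via an $\epsilon$-net.
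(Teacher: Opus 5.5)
Your Stage 1 agrees with the paper: the leading contributions $T_1^{(1)}$ and $T_3^{(1)}$, evaluated by integration by parts, give $c_dH\tilde x$. The gap is in Stage 2, and Stage 3 inherits it.

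The claim in Stage 2 that the directions have a density bounded below is false. Condition on the realized distilled set, which is shared by all $L$ neurons and is the only realization the approximation argument may use. Then $w_i^{(1)}$ depends on $w_i^{(0)}$ only through $a_i$ and the activation pattern $(\mathbbm{1}\{\langle w_i^{(0)},\tilde x_m^{(1)}\rangle>0\})_{m\le M}$. So $w_i^{(1)}$ takes at most $2^{M+1}$ values, and the law of $\Pi^\ast w_i^{(1)}/\|\Pi^\ast w_i^{(1)}\|$ is atomic. It has no density on $S^{r-1}$, however large $M$ is. Your heuristic that, given the pattern, $z_i$ is a sum of independent near-Gaussians mixes two sources of randomness: once the $\tilde x_m$ are fixed, only the pattern varies across neurons. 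A density exists only for the population map $g(w)=\alpha_dH^2w/\|Hw\|$, and closeness of $g_M$ to $g$ cannot transfer a density lower bound. The reweighting step of Stage 3 is therefore unavailable, and matrix Bernstein on $\frac1M\sum_m\mathbbm{1}\{\cdot\}\tilde y_m^2\tilde x_m\tilde x_m^\top$ controls the wrong object.

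What you need, and what the paper proves, is a moment non-degeneracy statement, which atomic laws can satisfy. This is the analogue of Lemma 21 of \citet{D2022neural}: $\mathrm{Mat}(\E_w[(\Pi^\ast g_M(w))^{\otimes 2k}])\succeq(\alpha_d^{-2}r\kappa^2)^{-k}\Pi_{\mathrm{Sym}^k(S^\ast)}$ for all $k\le p$. The paper proves it in three steps.
\begin{enumerate}
\item For the population map $g$, reduce to $u\sim U(S^{r-1})$. This gives $\E_w[\langle T,g(w)^{\otimes k}\rangle^2]\gtrsim\alpha_d^{2k}r^{-k}\kappa^{-2k}$, together with a companion upper bound on $\E_w[\|T(g(w)^{\otimes k-i})\|_F^2]$.
\item Bound $\E_w[\|\Pi^\ast(g_M-g)(w)\|^j]^{1/j}$ by a uniform concentration term $\lambda_{\mathrm{max}}\eta_1^Dc_d\sqrt{(r+\log(1/\delta))/M}$, plus activation-flip and perturbation terms coming from $\tilde\epsilon_m$.
\item Transfer the lower bound from $g$ to $g_M$ by a binomial expansion and Young's inequality.
\end{enumerate}
This is where $M\gtrsim r^2$ actually enters: the $\sqrt{r/M}$ deviation must fall below $\alpha_d/(\sqrt r\,\lambda_{\mathrm{min}}\hat\kappa_p)$. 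It does not come from covariance estimation.

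With this moment bound in hand, the dual certificate $z_T(w)$ (Corollary 22 of \citet{D2022neural}) and the random-bias construction of their Lemma 23 replace your ridge-decomposition-plus-reweighting step. The remaining steps (ridge versus norm-constrained regression, and their Lemmas 14 and 26) then go through as you describe.
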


\subsection{$t=1$ Distillation}\label{subsec:distillation1_multi_index}
Based on our results for single index models, we can directly start from the formulation of population gradient which we remind below. The main difference lies in the number of $M_1$ required, but this only comes into play in Section~\ref{subsec:teacher2_multi}. In this section, we thus only consider the behavior of one $\tilde x^{(0)}$.
\begin{definition}
We define the population gradient of $G$ as
\begin{align*}
    \hat G \vcentcolon = &\E_w\left[\E_x\left[\hat f^\ast(x)x\sigma'(\langle w,x\rangle)\right]h '(\langle w,\tilde x\rangle)\right]+\E_w\left[w h''(\langle w,\tilde x\rangle)\left \langle\E_x\left[\hat f^\ast(x)x\sigma'(\langle w,x\rangle)\right],\tilde x \right\rangle\right],\\
\end{align*}
where $\tilde x \vcentcolon = \tilde x^{(0)}_m$.
\end{definition}
By Lemma~\ref{lem:damian_pop_grad}, this can be reformulated as follows:

\begin{corollary}\label{cor:hat_g_multi}
Under Assumption~\ref{as:target}, with high probability,
\begin{align*}
    \hat G = & \sum_{k=1}^{p-1} \frac{c_{k+1}}{k!}\E_w\left[ \hat C_{k+1}(w^{\otimes k})h'(\langle w,\tilde x \rangle)\right] + \sum_{k=2}^p \frac{c_{k+2}}{k!}\E_w\left[ w\hat C_{k}(w^{\otimes k})h'(\langle w,\tilde x \rangle)\right]\\
     & + \sum_{k=1}^{p-1} \frac{c_{k+1}}{k!}\E_w\left[wh''(\langle w,\tilde x\rangle)\left\langle \hat C_{k+1}(w^{\otimes k}),\tilde x\right\rangle\right] + \sum_{k=2}^p \frac{c_{k+2}}{k!}\E_w\left[ wh''(\langle w,\tilde x\rangle)\hat C_{k}(w^{\otimes k})\langle w,\tilde x \rangle\right]\\
     & + \tilde O\left(\sqrt{\frac{d}{N}}\right).
\end{align*}
\end{corollary}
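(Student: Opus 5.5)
The statement is Corollary~\ref{cor:hat_g_multi}, which rewrites $\hat G$ for multi-index models. The plan is to substitute the closed form of Lemma~\ref{lem:damian_pop_grad} into the definition of $\hat G$ and then use linearity of expectation. By Lemma~\ref{lem:damian_pop_grad}, with high probability, for every $w\in S^{d-1}$,
\[
\E_x\left[\hat f^\ast(x)x\sigma'(\langle w,x\rangle)\right] = \sum_{k=1}^{p-1}\frac{c_{k+1}\hat C_{k+1}(w^{\otimes k})}{k!} + w\sum_{k=2}^{p}\frac{c_{k+2}\hat C_{k}(w^{\otimes k})}{k!} + R(w).
\]
The remainder is $R(w)=\frac{\hat C_1}{2}+\frac{w\hat C_0}{\sqrt{2\pi}}$. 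By Lemma~\ref{lem:c0_c1_bounds}, $\|R(w)\|\le \|\hat C_1\|/2+|\hat C_0|=\tilde O(\sqrt{d/N})$, and this bound is uniform in $w$ because $\|w\|=1$ and the high-probability event depends only on the training sample, not on $w$.

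Substitute this expression into the two terms of $\hat G$.

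\emph{First term.} Multiplying by $h'(\langle w,\tilde x\rangle)$ and taking $\E_w$ term by term gives the first two sums of the corollary.

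\emph{Second term.} Pair the expression with $\tilde x$, multiply by $w h''(\langle w,\tilde x\rangle)$, and take $\E_w$. The first sum becomes $\E_w[w h''\langle \hat C_{k+1}(w^{\otimes k}),\tilde x\rangle]$. In the second sum, $\hat C_k(w^{\otimes k})$ is a scalar, so pairing $w\,\hat C_k(w^{\otimes k})$ with $\tilde x$ gives $\hat C_k(w^{\otimes k})\langle w,\tilde x\rangle$. These match the third and fourth sums of the corollary exactly.

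All interchanges of expectation and finite sums are harmless: the sums have at most $p$ terms, $h',h''$ are bounded on $[-1,1]$, and by Lemma~\ref{lem:order1_ck} each $\hat C_k(w^{\otimes k})$ is bounded for $k\ge 2$ since $\|\hat C_k\|_F^2\le k!$.

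The only point needing care is the remainder. Its contribution to $\hat G$ is
\[
\E_w[R(w)h'(\langle w,\tilde x\rangle)]+\E_w[w\,h''(\langle w,\tilde x\rangle)\langle R(w),\tilde x\rangle].
\]
The norm of this is at most $(\sup|h'|+\sup|h''|)\sup_w\|R(w)\|=\tilde O(\sqrt{d/N})$, using $\|w\|=\|\tilde x\|=1$ and Jensen's inequality. Because the high-probability event from Lemma~\ref{lem:c0_c1_bounds} is fixed before averaging over $w$, the whole statement holds on that single event, which gives the corollary.
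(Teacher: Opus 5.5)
Your proposal is correct and follows the paper's own route: the corollary is obtained by substituting the closed form from Lemma~\ref{lem:damian_pop_grad} into the definition of $\hat G$ and then using linearity of expectation. You also make explicit what the paper leaves implicit. You write the remainder as $\hat C_1/2 + w\hat C_0/\sqrt{2\pi}$; the paper's proof of that lemma drops the hats. You note that its $\tilde O(\sqrt{d/N})$ bound is uniform in $w$ and survives multiplication by the bounded $h'$ and $h''$.
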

    
Let us now compute each expectation. We define each term as follows:
\begin{align*}
    T_1^{(k)} &\vcentcolon= \mathbb{E}_w\left[\hat C_{k+1}(w^{\otimes k}) h'(\langle w,\tilde x \rangle)\right],\\
    T_2^{(k)} &\vcentcolon= \mathbb{E}_w\left[w \hat C_k(w^{\otimes k})h'(\langle w,\tilde x \rangle)\right],\\
    T_3^{(k)} &\vcentcolon= \mathbb{E}_w\left[w h''(\langle w,\tilde x \rangle)\langle \hat C_{k+1}(w^{\otimes k}),\tilde x\rangle\right],\\
    T_4^{(k)} &\vcentcolon= \mathbb{E}_w\left[wh''(\langle w,\tilde x \rangle) \hat C_k(w^{\otimes k})\, \langle w,\tilde x \rangle\right].
\end{align*}
\begin{lemma}\label{lem:multi_terms}
Under Assumptions~\ref{as:target},
    \begin{align*}
        T_1^{(k)} &=  B\sum_{l=0}^{\lfloor k/2\rfloor}\binom{k}{2l}c_l(d) C_{k+1}(S_{k,l}) \mathcal I_{k-2l,l}[h'],\\
        T_2^{(k)} &= \tilde x \sum_{l=0}^{\lfloor k/2\rfloor}\binom{k}{2l}c_l(d) C_{k}(S_{k,l})  \mathcal I_{k-2l+1,l}[h'] +B_\perp\sum_{l=0}^{\lfloor (k-1)/2\rfloor} \binom{k}{2l+1}{C_l}c_{l+1}(d)C_k(\bar S_{k,l})\mathcal I_{k-2l-1,l+1}[h'],\\
        T_3^{(k)} &= \tilde x \sum_{l=0}^{\lfloor k/2\rfloor}\binom{k}{2l}c_l(d) \tilde C_{k+1}(S_{k,l}) \mathcal I_{k-2l+1,l}[h''] +  B_\perp\sum_{l=0}^{\lfloor (k-1)/2\rfloor} \binom{k}{2l+1}{C_l}c_{l+1}(d)\tilde C_{k+1}(\bar S_{k,l})\mathcal I _{k-2l-1,l+1}[h''] ,\\
        T_4^{(k)} &= \tilde x \sum_{l=0}^{\lfloor k/2\rfloor}\binom{k}{2l}c_l(d) C_{k}(S_{k,l})\mathcal I_{k-2l+2,l}[h''] +B_\perp\sum_{l=0}^{\lfloor (k-1)/2\rfloor} \binom{k}{2l+1}{C_l}c_{l+1}(d)C_k(\bar S_{k,l})\mathcal I_{k-2l,l+1}[h''] ,
     \end{align*}
     where $B_\perp = PB$, $P= I_d-\tilde x \tilde x^\top$, $\mathcal I_{p_1,p_2}[i] = \int_{-1}^1 t^{p_1}(1-t^2)^{p_2} i(t)f_d(t)\d t$,  $f_d(t) = \frac{\Gamma(\frac{d}{2})}{\sqrt{\pi}\Gamma(\frac{d-1}{2})}(1-t^2)^\frac{d-3}{2}$, $S_{k,l} \vcentcolon = \mathrm{Sym}((B^\top \tilde x)^{\otimes k-2l}\otimes \Sigma^{\otimes l})$ with $\hat v \sim N(0,\Sigma)$, $\bar S_{k,l}\vcentcolon = \mathrm{Sym}((B^\top \tilde x)^{\otimes k-2l-1}\otimes \Sigma^{\otimes l })$, $\tilde C_{k+1} = C_{k+1}(b)$ and $\Sigma =B^\top (I - \tilde x\tilde x^\top)B$.
\end{lemma}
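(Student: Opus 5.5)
The approach mirrors the proof of Lemma~\ref{lem:order_g}. I would decompose $w\sim U(S^{d-1})$ as $w=t\tilde x+\sqrt{1-t^2}\,v$, where $t=\langle w,\tilde x\rangle$ has density $f_d(t)$ and $v\sim U(S^{d-1}\cap\tilde x^\perp)$ is independent of $t$. Since $\hat C_k=B^{\otimes k}C_k$ for $k\ge2$ (Lemma~\ref{lem:ck_form}), we have
\[
\hat C_k(w^{\otimes k})=B\,C_k\big((B^\top w)^{\otimes (k-1)}\big)\quad\text{and}\quad \langle\hat C_{k+1}(w^{\otimes k}),\tilde x\rangle=C_{k+1}\big((B^\top w)^{\otimes k}\otimes B^\top\tilde x\big)=:\tilde C_{k+1}\big((B^\top w)^{\otimes k}\big),
\]
with the scalar case handled analogously. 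Every term therefore reduces to moments of the $r$-dimensional vector
\[
B^\top w=t\,B^\top\tilde x+\sqrt{1-t^2}\,B^\top v .
\]

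\textbf{Step 1 (binomial expansion).} I would write
\[
(B^\top w)^{\otimes k}=\sum_{i}\binom{k}{i}\,t^{k-i}(1-t^2)^{i/2}\,\mathrm{Sym}\big((B^\top\tilde x)^{\otimes k-i}\otimes (B^\top v)^{\otimes i}\big),
\]
which is valid inside $C_k(\cdot)$ by Lemma~\ref{lem:chain_b}, since $C_k$ is symmetric.

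\textbf{Step 2 (moments of $B^\top v$).} I would compute $\E_v[(B^\top v)^{\otimes i}]$ and $\E_v[v\otimes(B^\top v)^{\otimes i}]$. Odd moments vanish by $v\mapsto -v$. For even $i=2l$, rotational invariance in $\tilde x^\perp$ gives $v\overset{d}{=}Pg/\|Pg\|$ with $g$ Gaussian, so the moment factorizes as $c_l(d)$ (Lemma~\ref{lem:c_k_d}) times a Gaussian moment tensor of $B^\top Pg\sim N(0,\Sigma)$, namely $\mathrm{Sym}(\Sigma^{\otimes l})$ up to the combinatorial constant absorbed in $C_l$. For the vector moments, $\E_v[v\,(B^\top v)^{\otimes 2l+1}]$ is obtained via Stein/Wick pairing of $v$ with one factor, producing $P B$ contracted with the remaining factors; this yields the $B_\perp(\cdot)$ term with $\bar S_{k,l}$ and coefficient $c_{l+1}(d)$.

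\textbf{Step 3 (assembly).} Multiplying by the scalar functions $h'(t)$ or $h''(t)$, together with the extra factors $t$ (from the $\tilde x$ component of $w$) or $\sqrt{1-t^2}$ (from the $v$ component), and integrating against $f_d$, gives the integrals $\mathcal I_{p_1,p_2}$ with the stated indices. I would then collect the $\tilde x$ and $B_\perp$ directions separately for $T_2,T_3,T_4$, and collect only the $B$ direction for $T_1$.

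\textbf{Main obstacle.} The step I expect to be hardest is Step 2 for the vector-valued moments: obtaining the exact Wick combinatorics that justify writing the result as $C_l\,c_{l+1}(d)\,C_k(\bar S_{k,l})$ contracted with $B_\perp$. I would first verify it for small $l$ by direct computation. After that, I would check the power bookkeeping for $t$ and $(1-t^2)$, that is, the half-integer powers cancelling into the integer index $l+1$, against the single-index formulas of Lemma~\ref{lem:order_g} specialized to $r=1$.
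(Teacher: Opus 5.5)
Your proposal is correct and follows essentially the same route as the paper: the decomposition $w=t\tilde x+\sqrt{1-t^2}\,v$, reduction to $B^\top w$, the symmetric binomial expansion, the Gaussian representation $v=Pg/\|Pg\|$ for the even moments, Wick pairing for $\E_v[v\otimes(B^\top v)^{\otimes 2l+1}]$, and reuse of the $T_2$ computation for $T_3$ and $T_4$. The even moments give exactly $c_l(d)\,\mathrm{Sym}(\Sigma^{\otimes l})$, so no constant needs absorbing into $C_l$ there (which matters, since $C_l$ does not appear in the $T_1$ or $\tilde x$-sums); also fix the index slip in your displayed identity, which should read $\hat C_{k+1}(w^{\otimes k})=B\,C_{k+1}\big((B^\top w)^{\otimes k}\big)$, with $\hat C_k(w^{\otimes k})=C_k\big((B^\top w)^{\otimes k}\big)$ as the scalar case.
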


\begin{proof}
    In this proof, we simplify the notation by treating $T_i^{(k)}$ as $T_i$ ($i=1,2,3,4$).
    \par Similarly to the single index case, we use a change of variable. Let $t=\langle w,\tilde x \rangle$, then we can write $w=t\tilde x + \sqrt{1-t^2}v$ where $v\sim U(\{v\mid v\in S^{d-1}, \langle v, \tilde x\rangle =0\})\cong U(S^{d-2})$. Since $\|\tilde x\|^2 = 1$, the distribution of $t$ is can be shown to be 
    \[
    f_d(t) = \frac{\Gamma(\frac{d}{2})}{\sqrt{\pi}\Gamma(\frac{d-1}{2})}(1-t^2)^\frac{d-3}{2}.
    \]
    Moreover, 
    \begin{align*}
        \langle\beta,w\rangle = & st + \sqrt{1-t^2}\langle\beta_\perp,v\rangle,
    \end{align*}
    where $s= \langle\beta,\tilde x\rangle$, and $\beta_\perp = \beta -\langle\beta,\tilde x\rangle\tilde x$. We also define $a\vcentcolon = B^\top w = bt+\sqrt{1-t^2}\eta$, where $b \vcentcolon= B^\top \tilde x$ and $\eta\vcentcolon = B^\top v$, $P\vcentcolon= I_d-\tilde x \tilde x^\top$, $B_\perp \vcentcolon= PB$, $\Sigma \vcentcolon= B^\top PB = I_r - bb^\top$. By Lemma~\ref{lem:ck_form}, since $\hat C_k = B^{\otimes k }C_k$, $\hat C_k(w^{\otimes k}) =  C_k(a^{\otimes k})$ and $\hat C_{k+1}(w^{\otimes k}) =  B C_{k+1}(a^{\otimes k})$.
    Now, let us first consider $T_1 = \mathbb{E}_{t,v}\left[B C_{k+1}(a^{\otimes k}) h'(t)\right]$. Since $C_{k+1}$ is a symmetric tensor, 
    \begin{align*}
        C_{k+1}(a^{\otimes k}) = &  C_{k+1}(\mathrm{Sym}(a^{\otimes k}))\\
        = & C_{k+1}\left (\sum_{l=0}^k \left(\begin{array}{c}
             k  \\
             l 
        \end{array}\right) t ^{k-l}({1-t^2})^{l/2}\mathrm{Sym}(b^{\otimes k-l}\otimes\eta^{\otimes l })\right).
    \end{align*}
    Therefore, by taking the expectation over $v$ and by symmetry,
        \begin{align*}
        \E_v[C_{k+1}(a^{\otimes k})\mid t] = & C_{k+1}\left (\sum_{l=0}^{\lfloor k/2\rfloor} \left(\begin{array}{c}
             k  \\
             2l 
        \end{array}\right) t ^{k-2l}({1-t^2})^{l}\mathrm{Sym}(b^{\otimes k-2l}\otimes\E_v[\eta^{\otimes 2l }])\right).
    \end{align*}
    $\E_v[\eta^{\otimes 2l }]$ can be computed by using Lemma 45 from~\citet{D2022neural} as
    \[
    \E_v[\eta^{\otimes 2l }] = \E_v[(B^\top v)^{\otimes 2l }] = (B^\top)^{\otimes 2l}\E_v[v^{\otimes 2l }] = c_l(d) (B^\top)^{\otimes 2l}\E_{g}[ (P g)^{\otimes 2l }] = c_l(d)\E_{\hat v}[\hat v^{\otimes 2l }]= \vcentcolon c_l(d) S_l,
    \]
    where we used that fact that $v = P g/\nu $ with $\nu\sim \chi(d)$ and $g\sim N(0,I_d)$ and defined $\hat v \sim N(0,\Sigma)$. 
    To summarize, we obtain,
    \[
    T_1 = B\sum_{l=0}^{\lfloor k/2\rfloor}\binom{k}{2l}c_l(d) C_{k+1}(S_{k,l}) \int_{-1}^1 t^{k-2l}(1-t^2)^l h'(t)f_d(t)\d t,
    \]
    where $S_{k,l} \vcentcolon = \mathrm{Sym}(b^{\otimes k-2l}\otimes S_l)$.
    \par Next, by proceeding similarly, $T_2 = \mathbb{E}_{t,v}\left[(t\tilde x +\sqrt{1-t^2}v) C_k(a^{\otimes k})h'(t)\right]$. We can divide this into two terms where
    \begin{align*}
        T_{2,1} \vcentcolon = & \mathbb{E}_{t,v}\left[tC_k(a^{\otimes k})h'(t)\right]\tilde x ,\\
        T_{2,2} \vcentcolon =  & \mathbb{E}_{t,v}\left[\sqrt{1-t^2}v C_k(a^{\otimes k})h'(t)\right].
    \end{align*}
    By analogy from $T_1$, we immediately obtain
    \[
    T_{2,1} = \tilde x \sum_{l=0}^{\lfloor k/2\rfloor}\binom{k}{2l}c_l(d) C_{k}(S_{k,l}) \int_{-1}^1 t^{k-2l+1}(1-t^2)^l h'(t)f_d(t)\d t.
    \]
    We now focus on $T_{2,2}$. 
    \begin{align*}
        \mathbb{E}_{v}\left[vC_k(a^{\otimes k})\right] = \sum_{l=0}^{\lfloor (k-1)/2\rfloor} \left(\begin{array}{c}
             k  \\
             2l+1 
        \end{array}\right) t ^{k-2l-1}({1-t^2})^{(2l+1)/2}\mathbb{E}_{v}\left[vC_k\left(\mathrm{Sym}(b^{\otimes k-2l-1}\otimes\eta^{\otimes 2l+1 })\right)\right].
    \end{align*}
    We define the tensor $V_l \vcentcolon =C_k\left(\mathrm{Sym}(b^{\otimes k-2l-1})\right) $. Since $\E_v[vV_l(\eta^{\otimes 2l+1})] =\E_v[v\otimes \eta^{\otimes 2l+1}]\left (V_l\right) $, we obtain
    \begin{align*}
        \mathbb{E}_{v}\left[vC_k(a^{\otimes k})\right] = \sum_{l=0}^{\lfloor (k-1)/2\rfloor} \left(\begin{array}{c}
             k  \\
             2l+1 
        \end{array}\right) t ^{k-2l-1}({1-t^2})^{(2l+1)/2}\E_v[v\otimes \eta^{\otimes 2l+1}]\left (V_l\right).
    \end{align*}
    Let $a\in[d]$ and $i_1,\dots,i_{2l +1}\in[r]$. Put $j_0=a$ and $j_t=b_t$ for $t\ge 1$. Using $\eta_i=\sum_{b=1}^d B_{b i}v_b$,
    \[
    \mathbb{E}[v_a\eta_{i_1}\cdots \eta_{i_{2l +1}}]
    =
    \sum_{b_1,\dots,b_{2l +1}}
    \Big(\prod_{t=1}^{2l +1}B_{b_t i_t}\Big)\,
    \mathbb{E}[v_a v_{b_1}\cdots v_{b_{2l +1}}]
    =
    c_{l+1}(d)\sum_{\pi\in\mathcal{P}_{2l +2}}
    \sum_{b_1,\dots,b_{2l +1}}
    \Big(\prod_{t=1}^{2l +1}B_{b_t i_t}\Big)
    \prod_{(p,q)\in\pi}P_{j_p j_q},
    \]
    where $\mathcal P_{2l+1}$ is the set of all permutations for $2l+1$ elements and we used Lemma 36 from~\citet{D2022neural}. For one fixed $\pi$, index $0$ is paired with exactly one $s\in\{1,\dots,2l +1\}$. The factor involving $b_s$ is $B_{b_s i_s}P_{a b_s}$, whose sum over $b_s$ equals $(PB)_{a i_s}$.
    Each remaining pair $(p,q)$ contributes
    \[
    \sum_{b_p,b_q}(B)_{b_p i_p}P_{b_p b_q}(B)_{b_q i_q}
    =
    (B^\top P B)_{i_p i_q}
    =
    \Sigma_{i_p i_q}.
    \]
    Since the sum over $\pi$ is equivalent to choosing the partner $s$ of index $0$ (there are $2l +1$ choices) and a pairing $\pi'$ of the remaining $2l $ indices, we can conclude
    \[
    \E_v[v\otimes \eta^{\otimes 2l+1}] = C_lc_{l+1}(d)\mathrm{Sym}(PB\otimes \Sigma^{\otimes l }),
    \]
    where $C_l$ is a constant that only depends on $l$.
    Putting back to our equation, we have 
    \begin{align*}
        \mathbb{E}_{v}\left[vC_k(a^{\otimes k})\right] =& \sum_{l=0}^{\lfloor (k-1)/2\rfloor} \binom{k}{2l+1}t ^{k-2l-1}({1-t^2})^{(2l+1)/2}\E_v[v\otimes \eta^{\otimes 2l+1}]\left (V_l\right)\\
        =&\sum_{l=0}^{\lfloor (k-1)/2\rfloor} \binom{k}{2l+1}{C_l}c_{l+1}(d) t ^{k-2l-1}({1-t^2})^{(2l+1)/2}\mathrm{Sym}(PB\otimes \Sigma^{\otimes l })\left (V_l\right)\\
        =&\sum_{l=0}^{\lfloor (k-1)/2\rfloor} \binom{k}{2l+1}{C_l}c_{l+1}(d) t ^{k-2l-1}({1-t^2})^{(2l+1)/2}PB(C_k(\mathrm{Sym}(b^{\otimes k-2l-1}\otimes \Sigma^{\otimes l })))\\
        =&\sum_{l=0}^{\lfloor (k-1)/2\rfloor} \binom{k}{2l+1}{C_l}c_{l+1}(d) t ^{k-2l-1}({1-t^2})^{(2l+1)/2}PB(C_k(\bar S_{k,l})),
    \end{align*}
    where we defined $\bar S_{k,l}\vcentcolon = \mathrm{Sym}(b^{\otimes k-2l-1}\otimes \Sigma^{\otimes l })$. Combining with the definition of $T_{2.1}$, we obtain
    \[
    T_{2,2} = PU\sum_{l=0}^{\lfloor (k-1)/2\rfloor} \binom{k}{2l+1}{C_l}c_{l+1}(d)C_k(\bar S_{k,l})\int_{-1}^1t ^{k-2l-1}({1-t^2})^{l+1}h'(t)f_d(t)\d t.
    \]
    Next, we consider $T_3$. Actually, since $\langle \hat C_{k+1}(w^{\otimes k}),\tilde x \rangle = \langle C_{k+1}(a^{\otimes k}),b \rangle =   \tilde C_{k+1}(a^{\otimes k})$, where $\tilde C_{k+1} = C_{k+1}(b)$, we can directly reuse the result of $T_2$, leading to 
    \begin{align*}
        T_3 = & \tilde x \sum_{l=0}^{\lfloor k/2\rfloor}\binom{k}{2l}c_l(d) \tilde C_{k+1}(S_{k,l}) \int_{-1}^1 t^{k-2l+1}(1-t^2)^l h''(t)f_d(t)\d t \\
        &+  PB\sum_{l=0}^{\lfloor (k-1)/2\rfloor} \binom{k}{2l+1}{C_l}c_{l+1}(d)\tilde C_{k+1}(\bar S_{k,l})\int_{-1}^1t ^{k-2l-1}({1-t^2})^{l+1}h''(t)f_d(t)\d t\\
        = & \tilde x \sum_{l=0}^{\lfloor k/2\rfloor}\binom{k}{2l}c_l(d) \langle C_{k+1}(S_{k,l}),B^\top\tilde x\rangle \int_{-1}^1 t^{k-2l+1}(1-t^2)^l h''(t)f_d(t)\d t \\
        &+  PB\sum_{l=0}^{\lfloor (k-1)/2\rfloor} \binom{k}{2l+1}{C_l}c_{l+1}(d)\langle C_{k+1}(\bar S_{k,l}),B^\top\tilde x\rangle \int_{-1}^1t ^{k-2l-1}({1-t^2})^{l+1}h''(t)f_d(t)\d t.
    \end{align*}
    Finally, $T_4 = \mathbb{E}_{t,v}\left[wh''(t) t  C_k(a^{\otimes k})t\right]$, this is again $T_2$ with an additional factor of $t$. As result, by analogy,
    \begin{align*}
        T_4 = &  \tilde x \sum_{l=0}^{\lfloor k/2\rfloor}\binom{k}{2l}c_l(d) C_{k}(S_{k,l}) \int_{-1}^1 t^{k-2l+2}(1-t^2)^l h''(t)f_d(t)\d t\\
        + &  PB\sum_{l=0}^{\lfloor (k-1)/2\rfloor} \binom{k}{2l+1}{C_l}c_{l+1}(d)C_k(\bar S_{k,l})\int_{-1}^1t ^{k-2l}({1-t^2})^{l+1}h''(t)f_d(t)\d t.
    \end{align*}

\end{proof}

Let us now estimate each term. 
\begin{lemma}~\label{lem:rough_bound_multi}
    When $\sup_{|z|\le 1}|h'(z)|\le M_1$, $\sup_{|z|\le 1}|h''(z)|\le M_2$, $B^\top B=I_r$ and $\|\tilde x\|=1$, then for $k\ge 0$, $T_1^{(k)} = \tilde O(d^{-k/2}r^{\lfloor k/2\rfloor/2})$, $T_2^{(k)} = \tilde O(d^{-(k+1)/2}r^{\lfloor k/2\rfloor/2})$, $T_3^{(k)} = \tilde O(d^{-(k+1)/2}r^{\lfloor k/2\rfloor/2})$ and $T_4^{(k)} = \tilde O(d^{-(k+2)/2}r^{\lfloor k/2\rfloor/2})$.
\end{lemma}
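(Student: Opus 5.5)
The plan is to bound each term of Lemma~\ref{lem:multi_terms} directly, one summand at a time, just as for $A_k,B_k,D_k,\dots$ in Lemma~\ref{lem:order_g}. Every summand in $T_i^{(k)}$ is a product of four factors: a combinatorial constant ($\binom{k}{2l}$ or $\binom{k}{2l+1}C_l$), a moment factor $c_l(d)$ or $c_{l+1}(d)$, a tensor contraction such as $C_{k+1}(S_{k,l})$, and a one-dimensional integral $\mathcal I_{p_1,p_2}[h']$ or $\mathcal I_{p_1,p_2}[h'']$. A vector prefactor $B$, $\tilde x$ or $B_\perp=PB$ multiplies the sum. The prefactor has operator norm at most $1$, since $B^\top B=I_r$ and $\|P\|\le 1$. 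The constants depend only on $k\le p$ and are absorbed into $\tilde O$.

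The integrals I would handle exactly as in the single-index case. By Lemma~\ref{lem:int_bound}, with $M=M_1$ or $M_2$,
\[
|\mathcal I_{p_1,p_2}[h^{(j)}]|\lesssim \frac{\Gamma(\frac d2)}{\Gamma(\frac{d-1}2)}\,\mathrm{Beta}\!\left(\tfrac{p_1+1}{2},\tfrac{d-3}{2}+p_2+1\right)\lesssim d^{1/2}d^{-(p_1+1)/2}=d^{-p_1/2}.
\]
Lemma~\ref{lem:c_k_d} gives $c_l(d)=\Theta(d^{-l})$. Now consider $T_1$, where $p_1=k-2l$ and the moment factor is $c_l(d)$. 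The product of these two is $d^{-l}d^{-(k-2l)/2}=d^{-k/2}$, independent of $l$. The same bookkeeping for the remaining terms gives:
\begin{itemize}
\item $T_2$: the $\tilde x$-part has $p_1=k-2l+1$, giving $d^{-(k+1)/2}$. The $B_\perp$-part has $c_{l+1}(d)$ and $p_1=k-2l-1$, giving $d^{-(l+1)}d^{-(k-2l-1)/2}=d^{-(k+1)/2}$.
\item $T_3$: identical to $T_2$, with $h''$ in place of $h'$.
\item $T_4$: each $p_1$ is one larger than in $T_2$, giving $d^{-(k+2)/2}$.
\end{itemize}

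It remains to control the tensor contractions uniformly in $\tilde x$. We have $\|b\|=\|B^\top\tilde x\|\le 1$, and $\Sigma=I_r-bb^\top$ satisfies $0\preceq\Sigma\preceq I_r$, so $\|\Sigma\|_F\le\sqrt r$. Since symmetrization does not increase the Frobenius norm,
\[
\|S_{k,l}\|_F\le\|b\|^{k-2l}\|\Sigma\|_F^{l}\le r^{l/2},
\]
and likewise $\|\bar S_{k,l}\|_F\le r^{l/2}$. By Cauchy--Schwarz, for any tensor contraction $\|C_k(S)\|\le\|C_k\|_F\|S\|_F$, and Lemma~\ref{lem:order1_ck} gives $\|C_k\|_F\le\sqrt{k!}=O(1)$. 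For $T_3$ we additionally use $\|\tilde C_{k+1}\|_F\le\|C_{k+1}\|_F\|b\|\le\|C_{k+1}\|_F$. Hence every contraction is $O(r^{l/2})$. Because $l\le\lfloor k/2\rfloor$, this is at most $r^{\lfloor k/2\rfloor/2}$ for every summand. Summing the $O(k)$ summands and combining with the $d$-exponents above gives the four claimed bounds.

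I expect the main obstacle to be the $r$-dependence, not the $d$-scaling. The crude bound $\|\Sigma\|_F\le\sqrt r$ and the Cauchy--Schwarz step on the contraction have to be applied consistently in the $B_\perp$ terms. There the pairing constant $C_l$ and the extra $c_{l+1}(d)$ appear, and the contraction sits against $PB$ rather than $B$. The point to check is that this $B_\perp$ bookkeeping does not generate an extra factor of $r$ (for example, from $\|PB\|_F=O(\sqrt r)$ instead of the operator norm). I would avoid this by measuring the output vectors in Euclidean norm. This way only the operator norm $\|PB\|\le 1$ enters, and the only $r$-dependence comes from $\|\Sigma^{\otimes l}\|_F$.
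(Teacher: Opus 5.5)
Your proposal is correct and is essentially the paper's proof. Both bound each summand of Lemma~\ref{lem:multi_terms} using $c_l(d)=\Theta(d^{-l})$, $|\mathcal I_{p_1,p_2}|\lesssim d^{-p_1/2}$, $\|C_k\|_F=O(1)$ and $\|\mathrm{Sym}(b^{\otimes(k-2l)}\otimes\Sigma^{\otimes l})\|_F\le\|\Sigma\|_F^l\le r^{l/2}$, so the $d$-exponent becomes independent of $l$. The paper writes this out only for $T_1^{(k)}$ and says ``likewise'' for the rest; your exponent bookkeeping for the $\tilde x$- and $B_\perp$-parts of $T_2,T_3,T_4$, using only $\|PB\|\le 1$, fills in that step.
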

\begin{proof}
    As a reminder, from the proof of Lemma~\ref{lem:order_g}, we know that for a smooth bounded function $i$ defined on $[-1,1]$, we have
    \begin{align*}
        \left|\mathcal I_{p_1,p_2}[i]\right| \sim d^{-p_1/2}.
    \end{align*}
    Therefore,
    \begin{align*}
        \|T_1\| \le  & \left \| B\sum_{l=0}^{\lfloor k/2\rfloor}\binom{k}{2l}c_l(d) C_{k+1}(S_{k,l}) \mathcal I_{k-2l,l}[h']\right \|\\
        \le &  \sum_{l=0}^{\lfloor k/2\rfloor}\binom{k}{2l}c_l(d) \|C_{k+1}(S_{k,l})\| |\mathcal I_{k-2l,l}[h']|\\
        \le &  \sum_{l=0}^{\lfloor k/2\rfloor}\binom{k}{2l}c_l(d) \|C_{k+1}\|_F\|\mathrm{Sym}((B^\top \tilde x)^{\otimes k-2l}\otimes \Sigma^{\otimes l})\| |\mathcal I_{k-2l,l}[h']|\\
        \le &  \sum_{l=0}^{\lfloor k/2\rfloor}\binom{k}{2l}c_l(d) \|C_{k+1}\|_F\|\Sigma\|_F^l |\mathcal I_{k-2l,l}[h']|\\
        \lesssim & \sum_{l=0}^{\lfloor k/2\rfloor} d^{-l} r^{l/2}d^{-(k-2l)/2}\\
        \lesssim & d^{-k/2}r^{\lfloor k/2\rfloor/2},        
    \end{align*}
    where we used $\|C_{k+1}\|_F =O(1)$ from Lemma~\ref{lem:order1_ck}, $\|\tilde x\|=1$ and $\|\Sigma\|_F\le\sqrt{r}$. 
Likewise, we can follow the same procedure to conclude, $T_2 = \tilde O(d^{-(k+1)/2}r^{\lfloor k/2\rfloor/2})$, $T_3 = \tilde O(d^{-(k+1)/2}r^{\lfloor k/2\rfloor/2})$ and $T_4 = \tilde O(d^{-(k+2)/2}r^{\lfloor k/2\rfloor/2})$.
\end{proof}

In the next lemma, we provide tighter bounds for a few lower order terms.
\begin{lemma}
    If $h'$ and $h''$ are continuous and $h''(t)>0$ in the interval $[-1,1]$, then with high probability, $T_1^{(1)} = \frac{1}{d}\E_{z}[h''(z)]H\tilde x$, $T_3^{(1)}= \frac{1}{d-1}\E_t[(1-t^2)h''(t)]H\tilde x+\tilde O(rd^{-2})$, $T_1^{(3)}=\tilde O(rd^{-2})$ and $T_2^{(2)}=\tilde O(rd^{-2})$, where the probability density function of the random variable $t$ is $f_d(t) = \frac{\Gamma(\frac{d}{2})}{\sqrt{\pi}\Gamma(\frac{d-1}{2})}(1-t^2)^\frac{d-3}{2}$, and that of $z$ is $f_{d+2}(t)$.
\end{lemma}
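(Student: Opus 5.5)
We start from the explicit formulas of Lemma~\ref{lem:multi_terms} for the four terms and read off the leading $l$-contributions; higher contributions are controlled with the crude estimates in the proof of Lemma~\ref{lem:rough_bound_multi}.

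\textbf{The term $T_1^{(1)}$.} For $k=1$ only $l=0$ appears, so
\[
T_1^{(1)} = B\,C_2(B^\top\tilde x)\,\mathcal I_{1,0}[h'].
\]
Since $BC_2B^\top = \hat C_2 = \E_x[\nabla^2 f^\ast] = H$ by Lemma~\ref{lem:ck_form}, this equals $H\tilde x\,\mathcal I_{1,0}[h']$. For the scalar factor, integrate by parts using
\[
t(1-t^2)^{\frac{d-3}{2}} = -\frac{1}{d-1}\frac{\d}{\d t}(1-t^2)^{\frac{d-1}{2}}.
\]
The boundary terms vanish, which gives
\[
\mathcal I_{1,0}[h'] = \frac{1}{d-1}\,\frac{\Gamma(\frac d2)}{\sqrt\pi\,\Gamma(\frac{d-1}{2})}\int_{-1}^1 h''(t)(1-t^2)^{\frac{d-1}{2}}\,\d t.
\]
Renormalising to the density $f_{d+2}$ uses the Gamma identity
\[
\frac{\Gamma(\frac d2)/\Gamma(\frac{d-1}{2})}{\Gamma(\frac{d+2}{2})/\Gamma(\frac{d+1}{2})} = \frac{d-1}{d}.
\]
The result is exactly $\frac1d\E_z[h''(z)]$ with $z\sim f_{d+2}$, which matches the stated exact identity.

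\textbf{The term $T_3^{(1)}$.} This is the same computation with $\tilde C_2 = C_2(b)$, where $b=B^\top\tilde x$.
\begin{itemize}
\item The $l=0$ $\tilde x$-part equals $\langle b, C_2 b\rangle\,\mathcal I_{2,0}[h'']\,\tilde x$. It is $O(\|b\|^2 d^{-1})$, and with high probability $\|b\|^2=\tilde O(r/d)$ by Lemma~\ref{lem:inner_product_whp} and a union bound over the $r$ columns. Hence it is $\tilde O(rd^{-2})$.
\item The $l=0$ $B_\perp$-part is $c_1(d)\,PBC_2 b\,\mathcal I_{0,1}[h'']$, with $c_1(d)=\frac{1}{d-1}$ and $\mathcal I_{0,1}[h'']=\E_t[(1-t^2)h''(t)]$. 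Writing $PBC_2b = H\tilde x - \tilde x\langle\tilde x, H\tilde x\rangle$, the correction $\langle \tilde x,H\tilde x\rangle = \tilde O(r/d)$ contributes $\tilde O(rd^{-2})$.
\end{itemize}
This gives the claimed expression up to $\tilde O(rd^{-2})$.

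\textbf{The terms $T_1^{(3)}$ and $T_2^{(2)}$.}
\begin{itemize}
\item For $T_1^{(3)}$, the $l=1$ term carries $c_1(d)\mathcal I_{1,1}[h']$, which is $O(d^{-1}\cdot d^{-1/2})$ before accounting for $b$. More precisely, $C_4(S_{3,1})$ contains one factor $b$, so its norm is bounded by $\|b\|\,\|\Sigma\|_F = \tilde O(\sqrt{r/d}\cdot\sqrt r)$; the $\mathcal I$ factor is really $O(d^{-1/2})$. The total is $\tilde O(rd^{-2})$. The $l=0$ term has $\|b\|^3\,d^{-3/2}$, which is smaller.
\item For $T_2^{(2)}$, the $\tilde x$-part has $l\in\{0,1\}$ with $\mathcal I_{3,0}=O(d^{-3/2})$ times $\|b\|^2$, and $c_1\mathcal I_{1,1}=O(d^{-3/2})$ times $\mathrm{tr}$-type contractions $C_2(\Sigma)=O(\sqrt r)$. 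The second product needs care: it is only $O(\sqrt r d^{-3/2})$ unless one uses that $\mathcal I_{1,1}[h']$ is itself $O(d^{-1/2}\cdot d^{-1/2})$ by the same integration by parts as above, since an odd power of $t$ forces a derivative.
\item The $B_\perp$-part of $T_2^{(2)}$ has $c_1(d)\mathcal I_{1,1}[h']\,C_2(b)$, which is similarly $\tilde O(rd^{-2})$.
\end{itemize}

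\textbf{Main obstacle.} The key technical tool is the integration-by-parts bound $\mathcal I_{p_1,p_2}[i]=O(d^{-\lceil p_1/2\rceil})$ for odd $p_1$ and smooth $i$, which improves the generic $d^{-p_1/2}$. I expect the main difficulty to be tracking these parity gains together with the $\|b\|\lesssim\sqrt{r/d}$ factors so that every remainder lands at $\tilde O(rd^{-2})$. The positivity assumption $h''>0$ is only needed to ensure the leading coefficients are nonzero.
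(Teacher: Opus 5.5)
Your proposal is correct and follows essentially the same argument as the paper. The shared steps are: the integration-by-parts identity $\E_t[th'(t)]=\frac1d\E_z[h''(z)]$ for $T_1^{(1)}$; the bound $\|B^\top\tilde x\|\lesssim\sqrt{r/d}$, which pushes the $\tilde x^\top H\tilde x$ corrections in $T_3^{(1)}$ and $T_2^{(2)}$ down to $\tilde O(rd^{-2})$; the extra factor of $b$ in the rough bound for $T_1^{(3)}$; and a second integration by parts giving $\E_t[t(1-t^2)h'(t)]=O(d^{-1})$, which controls the trace term $\mathrm{Tr}(PH)=O(\sqrt r)$ in $T_2^{(2)}$.

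The only difference is cosmetic. You read the terms off the general formulas of Lemma~\ref{lem:multi_terms}. The paper instead expands $\E_w[ww^\top h''(t)]H\tilde x$ and $\E_w[ww^\top Hw\,h'(t)]$ directly, using $w=t\tilde x+\sqrt{1-t^2}v$ and $\E[vv^\top]=P/(d-1)$.
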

\begin{proof}
    The assumption on $h'$ and $h''$ implies that $h'(t)\le M_1$ and $m_2\le h'(t)\le M_2$ for all $t\in [-1,1]$.
    \par We first prove an important equality that we will use throughout the proof. Since 
    \[
    \frac{\d }{\d t}(1-t^2)^{\frac{d-1}{2}} = -(d-1)t(1-t^2)^\frac{d-3}{2},
    \]
    the integration by part leads to
    \begin{align}
        \E[th'(t)] &= \int_{-1}^1th'(t)f_d(t)\d t\nonumber\\
         & = \left[-\frac{1}{d-1}h'(t)(1-t^2)^{\frac{d-1}{2}}\right]_{-1}^1+\int_{-1}^1\frac{1}{d-1}\frac{\Gamma(\frac{d}{2})}{\sqrt{\pi}\Gamma(\frac{d-1}{2})}h''(t)(1-t^2)^\frac{d-1}{2}\d t \nonumber\\
         & = \frac{1}{d-1}\frac{\Gamma(\frac{d}{2})}{\sqrt{\pi}\Gamma(\frac{d-1}{2})} \frac{\sqrt{\pi}\Gamma(\frac{d+1}{2})}{\Gamma(\frac{d+2}{2})} \int_{-1}^1\frac{\Gamma(\frac{d+2}{2})}{\sqrt{\pi}\Gamma(\frac{d+1}{2})}h''(z)(1-z^2)^\frac{d-1}{2}\d z \nonumber\\
         & = \frac{1}{d-1}\frac{\Gamma(\frac{d}{2})}{\sqrt{\pi}\Gamma(\frac{d-1}{2})} \frac{\sqrt{\pi}\frac{d-1}{2}\Gamma(\frac{d-1}{2})}{\frac{d}{2}\Gamma(\frac{d}{2})} \E[h''(z)] \nonumber\\
         & = \frac{1}{d}\E[h''(z)].\label{eq:ibp}
    \end{align}
    From this equation, we can derive the following relations.
    \begin{align}
       & \E[t^2] = \frac{1}{d},\label{eq:ibq2}\\
       & |\E[t^3h'(t)]| \le \frac{2M_1+M_2}{d},\label{eq:ibq3}\\
       & |\E[t(1-t^2)h'(t)]|\le  \frac{2M_1+M_2}{d},\label{eq:ibq4}
    \end{align}
    where we used equation~\eqref{eq:ibp} once for each relation.
    \par Moreover, 
    \begin{equation}
        \E_t[(1-t^2)] = \int_{-1}^1c_d(1-t^2)^{\frac{d-1}{2}}\d t =\frac{c_d}{c_{d+2}} =\Theta(1)\label{eq:ibp5}, 
    \end{equation} 
    and following Corollary 46~\citep{D2022neural}, with probability $1-2\e^{-\iota}$ 
    \begin{equation}
        \|B^\top\tilde x\|\lesssim \sqrt{\frac{r\iota}{d}}.\label{eq:Bx_bound}
    \end{equation}
    Based on the parity of $k$, $T_3^{(1)}$ introduces an additional $\sqrt{\frac{r\iota}{d}}$  factors, leading to the bound of the statement. 

    \par Let us now move on to proving our main statements. The evaluation of $T_1^{(1)}$ is straightforward as 
    \[
    T_1^{(1)}= \E[\hat C_2(w^{\otimes 1})h'(\langle w, \tilde x\rangle)] = H\E[wh'(\langle w, \tilde x\rangle)] = H\tilde x \E[th'(t)] = \frac{1}{d}\E[h''(t)]H\tilde x,  
    \]
    where we used the same change of variable and reasoning as Lemma~\ref{lem:multi_terms} for the third inequality, and equation~\ref{eq:ibp} for the last equality.
    \par Again with the same change of variable, for $T_3^{(1)}$, we obtain
    \begin{align*}
        T_3^{(1)} & = \E_w[wh''(\langle w,\tilde x\rangle) \langle \hat C_2(w),\tilde x\rangle]\\
        & = \E_w[wh''(\langle w,\tilde x\rangle)\langle H w,\tilde x \rangle]\\
        & = \E_w[ww^\top h''(\langle w,\tilde x\rangle)]H\tilde x\\
        & = \E_{v,t}[(t\tilde x+\sqrt{1-t^2}v)(t\tilde x+\sqrt{1-t^2}v)^\top h''(t)]H\tilde x\\
        & =  \E_{v,t}[t^2\tilde x\tilde x^\top h''(t)+(1-t^2)vv^\top h''(t)]H\tilde x\\
        & = \E_t[t^2h''(t)](\tilde x^\top H\tilde x)\tilde x + \E_t[(1-t^2)h''(t)]\E_v[vv^\top]H\tilde x\\
        & = \E_t[t^2h''(t)](\tilde x^\top H\tilde x)\tilde x + \E_t[(1-t^2)h''(t)] \frac{P}{d-1}H\tilde x\\
        & = \E_t[t^2h''(t)](\tilde x^\top H\tilde x)\tilde x +\frac{1}{d-1} \E_t[(1-t^2)h''(t)]H\tilde x -\frac{1}{d-1}\E_t[(1-t^2)h''(t)](\tilde x^\top H\tilde x)\tilde x,
    \end{align*}
    where we used that $(d-1)\E[vv^\top] = P\vcentcolon=I-\tilde x\tilde x^\top$ for the seventh equality. Since $|\tilde x^\top H \tilde x |=\tilde O(rd^{-1})$ by Lemma~\ref{lem:ck_form} and equation~\eqref{eq:Bx_bound}, the third term is $\tilde O(rd^{-2})$. The first term is also $O(rd^{-2})$ as $|\E_t[t^2h''(t)]|\le M_2\E[t^2] = M_2/d$ where we used equation~\eqref{eq:ibq2}. The second term is the dominant term as $H\tilde x$ is $\tilde O(d^{-1/2})$ and $\E_t[(1-t^2)h''(t)] = \Theta(1)$ since $m_2\E[(1-t^2)]\le \E_t[(1-t^2)h''(t)]  \le M_2$ and equation~\ref{eq:ibp5} holds. Consequently, 
    \[
    T_3^{(1)} = \frac{1}{d-1} \E_t[(1-t^2)h''(t)]H\tilde x + \tilde O(rd^{-2}).
    \]
    The bound for $T_1^{(3)}$ follows immediately by the result of Lemma~\ref{lem:rough_bound_multi}, as $T_1^{(3)}$ includes a coefficient $b$ that we omitted in the computation of the upper bound in the proof of Lemma~\ref{lem:multi_terms}, which adds an additional $\sqrt{r/d}$ coefficient to the bounds, leading to $\tilde O(rd^{-2})$.
    \par Finally, as for $T_2^{(2)}$, we carefully develop its expression as follows:
    \begin{align*}
        T_2^{(2)} &  = \E_w[w\hat C_2(w^{\otimes 2})h'(\langle w,\tilde x\rangle)]\\
        & = \E_w[ww^\top Hwh'(\langle w,\tilde x\rangle)]\\
        & = \E_{t,v}[(t\tilde x+\sqrt{1-t^2}v)(t\tilde x+\sqrt{1-t^2}v)^\top H(t\tilde x+\sqrt{1-t^2}v)h'(t)]\\
        & = \E_{t,v}\left[\left\{t^3(\tilde x^\top H\tilde x)\tilde x +\frac{\mathrm{Tr}(PH)}{d-1}t(1-t^2)\tilde x+2t(1-t^2)\frac{PH}{d-1}\tilde x \right\}h'(t)\right]\\
        & = \E_{t,v}[t^3h'(t)](\tilde x^\top H\tilde x)\tilde x +\frac{\mathrm{Tr}(PH)}{d-1}\E[t(1-t^2)h'(t)]\tilde x+2\E[t(1-t^2)h'(t)]\frac{PH}{d-1}\tilde x.
    \end{align*}
    Here, the first and third terms are $\tilde O(rd^{-2})$ by equations~\ref{eq:ibq3}, \ref{eq:ibq4} and \ref{eq:Bx_bound}. Concerning the second term, we will just need to prove that $\mathrm{Tr}(PH)$ is independent of $d$ as the remainder is $O(d^{-2})$ following equation~\ref{eq:ibq4}. However, 
    \begin{align*}
        |\mathrm{Tr}(PH)| &\le |\mathrm{Tr}(H)| + |\mathrm{Tr}(\tilde x\tilde x^\top H)|\\
        &\le |\mathrm{Tr}(H)| + |\mathrm{Tr}(\tilde x^\top H\tilde x)|\\
        &\lesssim |\mathrm{Tr}(H)|\\
        &=|\mathrm{Tr}(BC_2B^\top)|\\
        &=|\mathrm{Tr}(C_2B^\top B)|\\
        &=|\mathrm{Tr}(C_2)|\\
        & \le \sqrt{r}\|C_2\|_F \lesssim \sqrt{r},
    \end{align*}
    where we used Lemma~\ref{lem:ck_form} in the first equality and Lemma~\ref{lem:order1_ck} for the last inequality.
    \par Therefore, we have proved the desired statements.
\end{proof}
\begin{corollary}~\label{cor:multi_lowerterm_order}
    Notably, we obtain
    \[
    T_1^{(1)}+T_3^{(1)} + T_1^{(3)} + T_2^{(2)} = c_dH\tilde x+\tilde O(rd^{-2}),
    \]
    where $c_d=\Theta(d^{-1})$.
\end{corollary}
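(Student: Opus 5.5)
The plan is to add up the four estimates from the preceding lemma and check that the two leading terms combine with a positive coefficient of order $d^{-1}$. That lemma gives $T_1^{(1)}=\frac{1}{d}\E_z[h''(z)]H\tilde x$ with $z\sim f_{d+2}$, and $T_3^{(1)}=\frac{1}{d-1}\E_t[(1-t^2)h''(t)]H\tilde x+\tilde O(rd^{-2})$ with $t\sim f_d$. It also gives $T_1^{(3)}=\tilde O(rd^{-2})$ and $T_2^{(2)}=\tilde O(rd^{-2})$. Summing these, every remainder is $\tilde O(rd^{-2})$, and the union of the finitely many high-probability events is again a high-probability event. We are therefore left with
\[
c_d\vcentcolon=\frac{1}{d}\E_{z}[h''(z)]+\frac{1}{d-1}\E_t[(1-t^2)h''(t)],
\]
and it only remains to show $c_d=\Theta(d^{-1})$.

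For the upper bound, $0<h''\le M_2$ on $[-1,1]$ by continuity, and $0\le 1-t^2\le 1$. Hence both expectations are at most $M_2$, which gives $c_d\le M_2\left(\frac1d+\frac1{d-1}\right)=O(d^{-1})$.

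For the lower bound, Assumption~\ref{as:surrogate} says $h''>0$ on the compact set $[-1,1]$, so $h''\ge m_2>0$ there. This gives $\E_z[h''(z)]\ge m_2$. By \eqref{eq:ibp5}, $\E_t[(1-t^2)h''(t)]\ge m_2\E_t[1-t^2]=\Theta(1)$; equivalently, \eqref{eq:ibq2} gives $\E[t^2]=1/d$, so $\E_t[1-t^2]=1-1/d\ge 1/2$. Both terms are positive, so there is no cancellation and $c_d\ge m_2/d$. This lower bound is the only substantive point: positivity of $h''$ is exactly what prevents the two leading contributions from cancelling, and the constant $c_d$ depends on $h$ and $d$ but not on $\tilde x$.

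Finally, the remainder should be stated relative to the main term. Since $\|H\tilde x\|=\tilde\Theta(\sqrt{r/d})$ with high probability (using $\lambda_{\min}$ and \eqref{eq:Bx_bound}), the main term has norm $\tilde\Theta(d^{-3/2})$, while the remainder is $\tilde O(rd^{-2})$. This is what is needed downstream; the corollary itself only asserts the additive form $c_dH\tilde x+\tilde O(rd^{-2})$.
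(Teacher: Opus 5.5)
Your argument is correct and matches the paper's: you add the four estimates from the preceding lemma, set $c_d=\frac1d\E_z[h''(z)]+\frac1{d-1}\E_t[(1-t^2)h''(t)]$, and use $0<m_2\le h''\le M_2$ on $[-1,1]$ to get $c_d=\Theta(d^{-1})$ with no cancellation. One caveat on your closing remark, which the corollary does not need: \eqref{eq:Bx_bound} only gives the upper bound $\|H\tilde x\|\lesssim\sqrt{r\iota/d}$, and a matching lower bound is not a high-probability event for small $r$ (for example, $\langle\beta,\tilde x\rangle$ can be small when $r=1$, which is why Theorem~\ref{th:main_2_bis} assumes it separately).
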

\begin{proof}
    This follows as $E_z[h''(z)]=\Theta(1)$ and $\E_t[(1-t^2)h''(t)] = \Theta(1)$, which implies 
    \[
     T_1^{(1)}+T_3^{(1)} = \left(\frac{1}{d}E_z[h''(z)]+\frac{1}{d-1}\E_t[(1-t^2)h''(t)]\right)H\tilde x  +O(rd^{-2}).
    \]
    Therefore, by defining $c_d\vcentcolon =\frac{1}{d}E_z[h''(z)]+\frac{1}{d-1}\E_t[(1-t^2)h''(t)] = \Theta(\frac{1}{d})>0$, we obtain the desired result.
\end{proof}

This leads to the following result.

\begin{theorem}~\label{th:pop_grad_multi}
 \ref{as:target_H}, \ref{as:init}, \ref{as:alg}, ~\ref{as:al_init} and~\ref{as:surrogate},
    when $\tilde x^{(0)}\sim U(S^{d-1})$, with high probability,
    \[\hat G =  c_dH\tilde x + \tilde O\left(d^\frac{1}{2}N^{-\frac{1}{2}}+rd^{-2}\right),\]
    where $c_d = \Theta\left(d^{-1}\right)$ is a constant coefficient.
\end{theorem}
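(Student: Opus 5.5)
The plan is to mirror the proof of Theorem~\ref{th:pop_grad} for the single index case, starting from the expansion of Corollary~\ref{cor:hat_g_multi}. That corollary writes $\hat G$ as
\[
\hat G=\sum_{k=1}^{p-1}\frac{c_{k+1}}{k!}\bigl(T_1^{(k)}+T_3^{(k)}\bigr)+\sum_{k=2}^{p}\frac{c_{k+2}}{k!}\bigl(T_2^{(k)}+T_4^{(k)}\bigr)+\tilde O\bigl(\sqrt{d/N}\bigr),
\]
where the $\tilde O(\sqrt{d/N})$ term comes from $\hat C_0,\hat C_1$ through Lemma~\ref{lem:c0_c1_bounds} and Lemma~\ref{lem:damian_pop_grad}. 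It holds with high probability over the training data, and I keep that event throughout. The work then reduces to sorting the finitely many terms ($p$ is a constant) into a leading group and a remainder.

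\emph{Leading terms.} These are $k=1$ in the first sum, which gives $T_1^{(1)}$ and $T_3^{(1)}$ with coefficient $c_2$. I would also carry along $T_1^{(3)}$ (coefficient $c_4/3!$) and $T_2^{(2)}$ (coefficient $c_4/2!$), since the crude bounds of Lemma~\ref{lem:rough_bound_multi} put these at order $d^{-3/2}$, which is not yet $rd^{-2}$. For these four terms I apply the refined lemma immediately preceding Corollary~\ref{cor:multi_lowerterm_order}, which uses $h''>0$ on $[-1,1]$ (Assumption~\ref{as:surrogate}) and the high-probability bound $\|B^\top\tilde x\|\lesssim\sqrt{r\iota/d}$ for $\tilde x\sim U(S^{d-1})$. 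Corollary~\ref{cor:multi_lowerterm_order} then gives $c_dH\tilde x+\tilde O(rd^{-2})$ with $c_d=\Theta(d^{-1})$. The constant $c_2$ (the ReLU Hermite coefficient) and the $c_4$ factors can be absorbed by redefining $c_d$ up to constants; since $c_2=1/2>0$, the sign and order are preserved.

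\emph{Remainder.} All other terms are bounded with Lemma~\ref{lem:rough_bound_multi}:
\begin{itemize}
\item $T_1^{(k)}$ for $k\ge4$ is $\tilde O(d^{-2}r^{\lfloor k/2\rfloor/2})$.
\item $T_2^{(k)}$ for $k\ge3$ and $T_3^{(k)}$ for $k\ge3$ are $\tilde O(d^{-2}r^{\cdot})$.
\item $T_3^{(2)}$ and $T_1^{(2)}$ need separate treatment (see below).
\item $T_4^{(k)}$ for $k\ge2$ is $\tilde O(d^{-2})$.
\end{itemize}
The delicate cases are the terms whose crude order is $d^{-3/2}$, namely $T_1^{(2)}$, $T_3^{(2)}$ and $T_2^{(3)}$ if any. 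Here I would exploit parity. In Lemma~\ref{lem:multi_terms}, odd powers $t^{k-2l}$ make the integral $\mathcal I$ pick up an extra factor. Through integration by parts (equation~\eqref{eq:ibp}), $\mathcal I_{\mathrm{odd},\cdot}$ gains a $d^{-1/2}$, while the $\mathrm{Sym}((B^\top\tilde x)^{\otimes k-2l}\otimes\cdots)$ factor gains $\sqrt{r/d}$ per power of $b=B^\top\tilde x$. This is exactly the argument already used for $T_1^{(3)}$ in that lemma. Each such term therefore becomes $\tilde O(\mathrm{poly}(r)d^{-2})$, and with the convention that $r$-factors are tracked only to the extent stated, this is $\tilde O(rd^{-2})$.

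The main obstacle is this parity bookkeeping for the borderline $d^{-3/2}$ terms. Lemma~\ref{lem:rough_bound_multi} alone is insufficient for them, so I must verify term by term that every contribution at order $d^{-3/2}$ carries either a factor of $b$ or an odd moment of $t$ against $h'$ or $h''$. The high-probability event for $\|B^\top\tilde x\|$ supplies the first kind of gain, and the integration-by-parts identity supplies the second. Summing the leading part, the remainder, and the $\tilde O(\sqrt{d/N})$ error, and taking a union bound over the two high-probability events, yields $\hat G=c_dH\tilde x+\tilde O(d^{1/2}N^{-1/2}+rd^{-2})$.
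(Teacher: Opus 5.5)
Your skeleton matches the paper's. You expand via Corollary~\ref{cor:hat_g_multi}, treat $T_1^{(1)},T_3^{(1)},T_1^{(3)},T_2^{(2)}$ with the refined lemma and Corollary~\ref{cor:multi_lowerterm_order}, and bound everything else with Lemma~\ref{lem:rough_bound_multi}. The gap is in how you dispose of the remaining ``delicate'' terms.

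\textbf{The missing idea.} For ReLU the Hermite coefficients satisfy $c_3=c_5=\cdots=0$ (see the expansion in Appendix~\ref{sec:apa}). Hence every term whose prefactor is $c_{k+1}$ with $k\ge 2$ even, or $c_{k+2}$ with $k$ odd, vanishes identically. This removes $T_1^{(2)}$ and $T_3^{(2)}$ (prefactor $c_3/2!$), as well as $T_2^{(3)}$, $T_4^{(3)}$, and so on. It is why $T_1^{(2)}$ does not appear at all in the paper's regrouping. Once these are gone, the only terms whose crude bound exceeds $d^{-2}$ are exactly the four refined ones, and the proof closes.

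\textbf{Why your parity argument cannot substitute for this.} By Lemma~\ref{lem:rough_bound_multi}, $T_1^{(2)}$ is crudely of order $d^{-1}$, not $d^{-3/2}$. By Lemma~\ref{lem:multi_terms}, its $l=1$ piece is $c_1(d)\,B\,C_3(\Sigma)\,\mathcal I_{0,1}[h']$. Here $c_1(d)=1/(d-1)$ and $C_3(\Sigma)=C_3(I_r)-C_3(b\otimes b)$. The factor $\mathcal I_{0,1}[h']=\E_t[(1-t^2)h'(t)]$ is of order one for typical surrogates such as softplus.
\begin{itemize}
\item This piece involves only the even moment $t^0$, so integration by parts gives no gain.
\item It contains no factor of $b=B^\top\tilde x$, so the bound $\|b\|\lesssim\sqrt{r\iota/d}$ gives no gain either.
\item It is therefore generically of size $\|C_3(I_r)\|/d$, which exceeds the main term $\|c_dH\tilde x\|\approx\sqrt r\,d^{-3/2}$.
\item Its single-index analogue is $\beta\,c_3C_3D_2(d)/2!$, with $D_2(d)$ of order $d^{-1}$.
\end{itemize}
So your claim that every borderline contribution carries a factor of $b$ or an odd moment of $t$ is false. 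For a teacher activation with $c_3\neq 0$, the bound you are aiming for would actually fail. Your argument would work for $T_3^{(2)}$, which contains $\tilde C_3=C_3(b)$, but that term is zero anyway.

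A minor slip: from that expansion $c_2=1/\sqrt{2\pi}$, not $1/2$. Only its positivity matters.
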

\begin{proof}
From Lemma~\ref{cor:hat_g_multi} and Corollary~\ref{cor:multi_lowerterm_order},
    \begin{align*}
    \hat G = & \sum_{k=1}^{p-1} \frac{c_{k+1}}{k!}\E_w\left[ \hat C_{k+1}(w^{\otimes k})h'(\langle w,\tilde x \rangle)\right] + \sum_{k=2}^p \frac{c_{k+2}}{k!}\E_w\left[ w\hat C_{k}(w^{\otimes k})h'(\langle w,\tilde x \rangle)\right]\\
         & + \sum_{k=1}^{p-1} \frac{c_{k+1}}{k!}\E_w\left[wh''(\langle w,\tilde x\rangle)\left\langle \hat C_{k+1}(w^{\otimes k}),\tilde x\right\rangle\right] + \sum_{k=2}^p \frac{c_{k+2}}{k!}\E_w\left[ wh''(\langle w,\tilde x\rangle)\hat C_{k}(w^{\otimes k})\langle w,\tilde x \rangle\right]\\
     & + \tilde O\left(\sqrt{\frac{d}{N}}\right)\\
    = & \sum_{k=1}^{p-1} \frac{c_{k+1}}{k!}T_1^{(k)}+ \sum_{k=2}^p \frac{c_{k+2}}{k!}T_2^{(k)}+ \sum_{k=1}^{p-1} \frac{c_{k+1}}{k!}T_3^{(k)} + \sum_{k=2}^p \frac{c_{k+2}}{k!}T_4^{(k)}+ \tilde O\left(\sqrt{\frac{d}{N}}\right)\\
    = & \frac{c_2}{1!}T_1^{(1)} + \frac{c_4}{3!}T_1^{(3)} + \frac{c_4}{2!}T_2^{(2)} + \frac{c_2}{1!}T_3^{(1)}\\
    &+ \sum_{k=4}^{p-1} \frac{c_{k+1}}{k!}T_1^{(k)}+ \sum_{k=3}^p \frac{c_{k+2}}{k!}T_2^{(k)}+ \sum_{k=2}^{p-1} \frac{c_{k+1}}{k!}T_3^{(k)} + \sum_{k=2}^p \frac{c_{k+2}}{k!}T_4^{(k)}+ \tilde O\left(\sqrt{\frac{d}{N}}\right)\\
    = & \frac{c_2}{1!}T_1^{(1)} + \frac{c_2}{1!}T_3^{(1)} + \frac{c_4}{3!}T_1^{(3)} + \frac{c_4}{2!}T_2^{(2)}  + \tilde O (rd^{-2}+d^{1/2}N^{-1/2})\\
    = & c_dH\tilde x +   \tilde O (rd^{-2}+d^{1/2}N^{-1/2}). 
\end{align*}
\end{proof}
The remainder of this section (the proof of the divergence between the population and empirical gradients) follows exactly that of the single index model. In other words, we obtain the following theorem.
\begin{theorem}~\label{th:multi_step1}
Under Assumptions~\ref{as:init} and~\ref{as:target}, \ref{as:target_H}, \ref{as:init}, \ref{as:alg} and~\ref{as:surrogate}, then for $k\ge 0$,  when $\mathcal{D}_0^S = \{(\tilde x_m^{(0)}, \tilde y_m^{(0})\}_{m\in[M_1]}$, where $\tilde x_m^{(0)} \sim U(S^{d-1})$, the first step of distillation gives $\tilde x_m^{(1)}$ such that
    \[
    \tilde x_m^{(1)} = -\eta_1^D\tilde y_m^{(0)}\left(c_dH\tilde x_m^{(0)}+ \tilde O\left(  rd^{-2} + d^{\frac{1}{2}}N^{-\frac{1}{2}} +d^{\frac{1}{2}}{J^\ast}^{-\frac{1}{2}}  \right)\right),\]
    where $c_d = \tilde \Theta\left(d^{-1}\right)$, $J^\ast = LJ/2$.
\end{theorem}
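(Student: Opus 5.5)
The plan mirrors the single-index proof of Theorem~\ref{ap_th:t1_distil}, applied to each $\tilde x_m^{(0)}$ separately, since the one-step GM update of Definition~\ref{def:gm} acts on each synthetic point independently. With $\lambda_1^D=(\eta_1^D)^{-1}$ the $\mathcal D^S_0$ term cancels. Lemma~\ref{lem:tilde_x_form} and Corollary~\ref{cor:tilde_x_form} then give, verbatim for each $m$,
\[
\tilde x_m^{(1)}=-\eta_1^D\tilde y_m^{(0)}(G_m+\epsilon_m),
\]
where $G_m,\epsilon_m$ are the empirical quantities with $\tilde x=\tilde x_m^{(0)}$. That computation used only the symmetric initialization (so $f_{\theta_j^{(0)}}\equiv 0$), $a_i^2=1$, and the student's $h$. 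None of this depends on $r$. The task therefore reduces to bounding $G_m$ and $\epsilon_m$.

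\textbf{Population term.} I would invoke Theorem~\ref{th:pop_grad_multi}, which gives $\hat G_m=c_dH\tilde x_m^{(0)}+\tilde O(d^{1/2}N^{-1/2}+rd^{-2})$ for each $m$ with high probability. Its ingredients are as follows. Corollary~\ref{cor:hat_g_multi} expands $\hat G$ into the terms $T_1^{(k)},\dots,T_4^{(k)}$. Lemma~\ref{lem:rough_bound_multi} shows that every term other than $T_1^{(1)},T_3^{(1)},T_1^{(3)},T_2^{(2)}$ is $\tilde O(rd^{-2})$. Corollary~\ref{cor:multi_lowerterm_order} then extracts $c_dH\tilde x$ using the integration-by-parts identity $\E[th'(t)]=\E[h''(z)]/d$ together with $h''>0$, which guarantees $c_d=\Theta(d^{-1})$. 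Each of these high-probability events, for instance $\|B^\top\tilde x_m^{(0)}\|\lesssim\sqrt{r\iota/d}$, holds for a single $m$. A union bound over $M_1=\mathrm{poly}$ many points keeps them all high probability.

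\textbf{Concentration.} Here I would reuse Theorem~\ref{th:emp_soft} and Lemma~\ref{lem:epsilon_soft} unchanged. The decomposition $\Delta_{1,1},\Delta_{1,2},\Delta_{2,1},\Delta_{2,2}$ relies only on the following: Lemma~32/33 of \citet{D2022neural} for the $N$-dependence; boundedness and Lipschitzness of $h',h''$; the bound $|\langle u,\E_x[\hat f^\ast(x)x\sigma'(\langle w,x\rangle)]\rangle|\le 1/\sqrt2$; and an $\epsilon$-net argument over $\tilde x\in S^{d-1}$ with sub-Gaussian averaging over the $J^\ast$ weights. None of these uses $r=1$. Because the bounds are uniform in $\tilde x$ (they take a supremum over $S^{d-1}$), they automatically hold simultaneously for all $M_1$ points. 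This yields $G_m-\hat G_m,\ \epsilon_m=\tilde O(d^{1/2}N^{-1/2}+d^{1/2}{J^\ast}^{-1/2})$. Summing the population and concentration bounds gives the stated expression.

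\textbf{Main obstacle.} The hard step is the lower-order analysis in Corollary~\ref{cor:multi_lowerterm_order}. Its difficulty is verifying that the leftover pieces, such as $\E[t^2h''](\tilde x^\top H\tilde x)\tilde x$ and $\mathrm{Tr}(PH)\E[t(1-t^2)h']\tilde x/(d-1)$, are genuinely $\tilde O(rd^{-2})$ rather than merely $O(d^{-1})$. This requires $|\mathrm{Tr}(C_2)|\lesssim\sqrt r$ together with the extra $\sqrt{r/d}$ factor from $B^\top\tilde x$. Since this corollary is stated earlier in the paper, I would take it as given and only check that its $r$-dependence propagates into the final remainder as $rd^{-2}$.
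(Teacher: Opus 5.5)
Your proposal is correct and follows the paper's own route. You use the per-point reduction of Lemma~\ref{lem:tilde_x_form}/Corollary~\ref{cor:tilde_x_form}, the population expansion of Theorem~\ref{th:pop_grad_multi} (with $c_dH\tilde x$ extracted via Corollary~\ref{cor:multi_lowerterm_order}), and the concentration bounds of Theorem~\ref{th:emp_soft} and Lemma~\ref{lem:epsilon_soft}, which the paper likewise reuses unchanged from the single-index case; since those bounds are uniform over $\tilde x\in S^{d-1}$, only the $\|B^\top\tilde x_m^{(0)}\|$ events need a union bound over the $M_1$ points, and the $1/M_1$ factor from averaging the student loss is harmless and absorbed into $\eta_1^D$.
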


We will also need the following corollary later.

\begin{corollary}\label{cor:error_bound_multi}
    $\tilde x^{(1)}_m$ can be decomposed into $-\eta_1^D\tilde y_m^{(0)}c_dH\tilde x_m$ and an error term $\eta_1^D\tilde y\epsilon$ depending on $\tilde x$. Consider $M_1$ samples $\tilde x_m^{(0)}\sim U(S^{d-1})$, then with probability at least $1-\delta$, ${\mathrm{max}}_m \|\epsilon(\tilde x_m^{(0)})\|=\tilde O(rd^{-2}+d^{\frac{1}{2}}N^{-\frac{1}{2}} +d^{\frac{1}{2}}{J^\ast}^{-\frac{1}{2}})$. This is a high probability event in our definition.
\end{corollary}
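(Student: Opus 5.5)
The plan is to read $\epsilon(\tilde x)$ as the difference between the rescaled distilled point and its leading term,
\[
\epsilon(\tilde x) \vcentcolon= -\frac{\tilde x^{(1)}}{\eta_1^D\tilde y^{(0)}} - c_dH\tilde x = (G-\hat G)(\tilde x) + (\hat G - c_dH\tilde x) + \epsilon^{\mathrm{noise}}(\tilde x),
\]
where $G$ and the label-noise term $\epsilon^{\mathrm{noise}}$ are as in Corollary~\ref{cor:tilde_x_form}. Each of the three pieces is then bounded uniformly in $\tilde x$, or at least uniformly over the $M_1$ random initializations. One point helps here: in that corollary, $\tilde x_m^{(1)}$ depends on $\tilde x_m^{(0)}$ only through $\tilde x_m^{(0)}$ itself, because the weights $w_j$ and the training data are shared across $m$.

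First, for $(G-\hat G)$ and the noise term, I reuse the bounds from Theorem~\ref{th:emp_soft} and Lemma~\ref{lem:epsilon_soft}. Those proofs already give a bound on $\sup_{\tilde x\in S^{d-1}}$, through the $\epsilon$-net arguments for $\Delta_1$ and $\Delta_2$ and Lemma~32/33 of \citet{D2022neural}. Their proofs carry over unchanged to the multi-index case, since they only used $\|\hat f^\ast\|_{L^2}\le 1$ and the boundedness of $h',h''$. So, with high probability and simultaneously for all $m$, this part is $\tilde O(d^{1/2}N^{-1/2}+d^{1/2}{J^\ast}^{-1/2})$ at no extra cost.

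Second, for $\hat G - c_dH\tilde x$, Theorem~\ref{th:pop_grad_multi} gives $\tilde O(d^{1/2}N^{-1/2}+rd^{-2})$, but it is stated for a single random $\tilde x$. The randomness enters only through the event of equation~\eqref{eq:Bx_bound}, $\|B^\top\tilde x\|\lesssim\sqrt{r\iota/d}$, which holds with probability $1-2\e^{-\iota}$, together with the deterministic bounds of Lemma~\ref{lem:rough_bound_multi} and Lemma~\ref{lem:ck_form}. I apply this event to each $\tilde x_m^{(0)}$ and take a union bound over $m\in[M_1]$, giving failure probability $2M_1\e^{-\iota}$. Choosing $\iota=\log(2M_1/\delta)$, which is polylogarithmic because $M_1=\mathrm{poly}(r)$, keeps the bound at $\tilde O(rd^{-2})$ with probability at least $1-\delta$. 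This is also a high-probability event in the paper's sense, since $M_1\le\mathrm{poly}(N,L,d)$.

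Combining the three pieces by the triangle inequality and one more union bound with the data/weight events gives the claimed $\max_m\|\epsilon(\tilde x_m^{(0)})\|$. The main obstacle is the $\tilde x$-dependent population remainder. The $\tilde O(rd^{-2})$ terms in Lemma~\ref{lem:multi_terms} and the subsequent bounds rely on $\|B^\top\tilde x\|$ being small, so they do \emph{not} hold uniformly over the sphere. The argument must therefore stay with the finite union bound over the $M_1$ samples instead of a supremum over $\tilde x$. I need to check that every use of parity or the $b$-factor bound in Lemma~\ref{lem:rough_bound_multi} and in the lower-order-term lemma routes through the single event~\eqref{eq:Bx_bound}.
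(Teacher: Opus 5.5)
Your proposal is correct and follows the route the paper intends. The paper gives no separate proof and treats the corollary as Theorem~\ref{th:multi_step1}, whose empirical-deviation and label-noise parts are already uniform over $\tilde x\in S^{d-1}$, plus a union bound over the $M_1$ samples of the only $\tilde x$-dependent event, $\|B^\top\tilde x_m^{(0)}\|\lesssim\sqrt{r\iota/d}$ from \eqref{eq:Bx_bound}, with $\iota=\log(2M_1/\delta)$; this is why the factor $\sqrt{\log(2M_1/\delta)}$ appears on the $rd^{-3/2}$ term in the later bound on $R_2$, and your separation of the uniform and non-uniform pieces, together with your remark that the $rd^{-2}$ remainder cannot hold uniformly over the sphere, simply makes that implicit argument explicit.
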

\subsection{$t=2$ Teacher Training}\label{subsec:teacher2_multi}
Let us now consider the teacher training at $t=2$. The updated weights, based on $\mathcal D_1^S$ can be written as follows:
\[
w_i^{(1)} = -\eta_1^{R} a_i\frac{1}{M_1}\sum_m\tilde y_m^{(0)}\tilde x_m^{(1)}\sigma'( \langle w_i^{(0)}, \tilde x_m^{(1)}\rangle),
\]
where $\tilde x_m^{(1)} =\eta^D_1\tilde y^{(0)}c_d H\tilde x^{(0)}+\eta_1^D\tilde y^{(0)}\epsilon_m $. We drop the indices such as $(0)$ and $(1)$ for simplicity sake and suppose that there are $L^\ast$ neurons that are not $0$. The goal of this subsection is to provide a similar proof flow as \citet{D2022neural} to analyze the behavior of DD and show that the resulting distilled data provide high generalization performance at retraining. Moreover, by setting $(\tilde y{^{(0)}})^2\sim \chi(d)$, we can absorb the randomness of $\tilde y^{(0)}$ into $\tilde x_m^{(1)} $. To summarize, the gradient of the teacher training using $M_1$ distilled data points can be defined with the property as follows:
\begin{lemma}
    Under the assumptions of Theorem~\ref{th:multi_step1}, the gradient of step $t=2$ of the teacher training is
    \[
    g_M(w)\vcentcolon= \frac{\eta^D_1}{M}\sum_m\tilde z_m\sigma'( \langle w, \tilde z_m\rangle),
    \]
    where $\tilde z_m =c_d H\tilde x_m+\tilde \epsilon_m $, where $\tilde x_m\sim N(0,I_d)$, and $\tilde \epsilon_m = \tilde O(rd^{-\frac{3}{2}}+dN^{-\frac{1}{2}} + d{J^\ast}^{-\frac{1}{2}})$, with high probability.
\end{lemma}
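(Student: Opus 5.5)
The lemma is essentially a bookkeeping step, so I would start directly from the retraining formula. By Lemma~\ref{lem:W1}, applied with $M_1$ points (retraining with regularization $\lambda_1^R=1/\eta_1^R$ cancels the initial weight), we have
\[
w_i^{(1)}=-\eta_1^R a_i\frac{1}{M_1}\sum_m \tilde y_m^{(0)}\tilde x_m^{(1)}\sigma'(\langle w_i^{(0)},\tilde x_m^{(1)}\rangle).
\]
The second layer is frozen and the biases are fixed during $t=2$, so the relevant object is the direction $w_i^{(1)}$ seen as a function of the random initialization $w=w_i^{(0)}$; up to the deterministic scalar $-\eta_1^R a_i$, this is the ``gradient'' $g_M(w)$ of the statement.

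Next I would substitute Theorem~\ref{th:multi_step1}:
\[
\tilde x_m^{(1)}=-\eta_1^D\tilde y_m^{(0)}\bigl(c_dH\tilde x_m^{(0)}+\epsilon_m\bigr),
\]
with $\|\epsilon_m\|=\tilde O(rd^{-2}+d^{1/2}N^{-1/2}+d^{1/2}{J^\ast}^{-1/2})$ uniformly in $m$ by Corollary~\ref{cor:error_bound_multi}. Since $\sigma'$ is invariant under positive scaling, $\sigma'(\langle w,\tilde x_m^{(1)}\rangle)=\sigma'(\langle w,\pm(c_dH\tilde x_m^{(0)}+\epsilon_m)\rangle)$. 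The sign $-\mathrm{sign}(\tilde y_m^{(0)})$ can be fixed, for example by taking $\tilde y_m^{(0)}<0$, or absorbed into $a_i$. The product $\tilde y_m^{(0)}\tilde x_m^{(1)}$ then equals $-\eta_1^D(\tilde y_m^{(0)})^2(c_dH\tilde x_m^{(0)}+\epsilon_m)$.

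The key step is absorbing the label randomness. With $|\tilde y_m^{(0)}|^2$ chosen as in the statement of Theorem~\ref{th:main_2} (i.e. $|\tilde y_m^{(0)}|\sim\chi(d)$, so the product $|\tilde y_m^{(0)}|\tilde x_m^{(0)}$ of an independent $\chi(d)$ radius and a uniform direction is distributed as $N(0,I_d)$), set $\tilde x_m:=|\tilde y_m^{(0)}|\,\tilde x_m^{(0)}\sim N(0,I_d)$. Then
\[
(\tilde y_m^{(0)})^2\bigl(c_dH\tilde x_m^{(0)}+\epsilon_m\bigr)=|\tilde y_m^{(0)}|\bigl(c_dH\tilde x_m+|\tilde y_m^{(0)}|\epsilon_m\bigr).
\]
The remaining scalar factor $|\tilde y_m^{(0)}|=\tilde\Theta(\sqrt d)$ holds with high probability by Lemma~\ref{lem:norm_gaus}. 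It is positive, so it does not affect $\sigma'$, and it can be absorbed, together with $\eta_1^R$ and $c$, into the step-size normalization.

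Define $\tilde\epsilon_m:=|\tilde y_m^{(0)}|\epsilon_m$. Its norm is $\tilde O(\sqrt d)\cdot\tilde O(rd^{-2}+d^{1/2}N^{-1/2}+d^{1/2}{J^\ast}^{-1/2})=\tilde O(rd^{-3/2}+dN^{-1/2}+d{J^\ast}^{-1/2})$, which is exactly the claimed order. A union bound over $m\in[M_1]$ (polynomially many events) keeps everything with high probability.

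The main obstacle is the handling of the $\tilde y$-dependent weights. Strictly, the $|\tilde y_m^{(0)}|$ factor differs across $m$, so it cannot be pulled out of the average as a common constant. I would therefore either keep it inside $\tilde z_m$ as a bounded random rescaling $\tilde\Theta(1)$ relative to $\sqrt d$, which is harmless for the later Damian-style analysis, or redefine $\tilde z_m$ to include it, since the subsequent arguments only use the Gaussianity of the leading term $c_dH\tilde x_m$ and a uniform bound on $\tilde\epsilon_m$. A second subtlety is independence: $\epsilon_m$ depends on $\tilde x_m^{(0)}$. I would address this by relying only on the uniform supremum bound over $\tilde x\in S^{d-1}$ established in Theorem~\ref{th:emp_soft}.
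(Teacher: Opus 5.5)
Your strategy is the paper's: use $\sigma'(cz)=\sigma'(z)$ for $c>0$, write $\tilde y_m^{(0)}\tilde x_m^{(1)}$ as $\eta_1^D(\tilde y_m^{(0)})^2(c_dH\tilde x_m^{(0)}+\epsilon_m)$ up to sign, and turn the label into the radius of a Gaussian via the ``$\chi(d)$ radius times uniform direction'' identity. The gap is that you misread the label assumption. Theorem~\ref{th:main_2} takes $(\tilde y_m^{(0)})^2\sim\chi(d)$, not $|\tilde y_m^{(0)}|\sim\chi(d)$.

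With the correct assumption, set $\tilde x_m:=(\tilde y_m^{(0)})^2\tilde x_m^{(0)}\sim N(0,I_d)$ and $\tilde\epsilon_m:=(\tilde y_m^{(0)})^2\epsilon_m$. Then $(\tilde y_m^{(0)})^2(c_dH\tilde x_m^{(0)}+\epsilon_m)=c_dH\tilde x_m+\tilde\epsilon_m$ exactly. The whole square is absorbed, so every summand carries the same weight $\eta_1^D/M_1$. Since $(\tilde y_m^{(0)})^2=\tilde\Theta(\sqrt d)$ by Lemma~\ref{lem:norm_gaus}, you get $\|\tilde\epsilon_m\|=\tilde O(rd^{-3/2}+dN^{-1/2}+d{J^\ast}^{-1/2})$. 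The ``main obstacle'' in your last paragraph is an artifact of the misreading and never arises. The dependence of $\epsilon_m$ on $\tilde x_m^{(0)}$ is also irrelevant: the lemma only needs a bound on $\max_m\|\tilde\epsilon_m\|$, which Corollary~\ref{cor:error_bound_multi} supplies.

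Under your reading, the argument does not reach the stated form. You end with $\frac{\eta_1^D}{M_1}\sum_m|\tilde y_m^{(0)}|\,(c_dH\tilde x_m+\tilde\epsilon_m)\,\sigma'(\langle w,c_dH\tilde x_m+\tilde\epsilon_m\rangle)$, whose weights depend on $m$. Neither fix you offer is justified as written:
\begin{itemize}
\item Calling $|\tilde y_m^{(0)}|/\sqrt d$ a generic $\tilde\Theta(1)$ rescaling simply changes the claimed form.
\item Folding it into $\tilde z_m$ makes the leading term $c_d(|\tilde y_m^{(0)}|/\sqrt d)H\tilde x_m$. That is no longer $c_dH$ applied to a standard Gaussian, and this Gaussianity is exactly what the next lemma uses to compute $g(w)$.
\end{itemize}
To rescue your version you would need the sharper fact that $|\tilde y_m^{(0)}|/\sqrt d=1+\tilde O(d^{-1/2})$ with high probability, since $\chi(d)-\sqrt d$ is $O(1)$-sub-Gaussian. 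That pushes $(|\tilde y_m^{(0)}|/\sqrt d-1)c_dH\tilde x_m=\tilde O(\lambda_{\max}\sqrt r\,d^{-3/2})$ into the error term. The cost is a global factor $\sqrt d$ that must then be absorbed into $\eta_1^R$. Reading the assumption as the paper states it avoids all of this.
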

\begin{proof}
    Note that 
    \begin{align*}
        \frac{1}{M_1}\sum_m\tilde y_m^{(0)}\tilde x_m^{(1)}\sigma'( \langle w_i^{(0)}, \tilde x_m^{(1)}\rangle) = & \frac{1}{M_1}\sum_m\tilde y_m^{(0)}\left(\eta^D_1\tilde y^{(0)}c_d H\tilde x^{(0)}+\eta_1^D\tilde y^{(0)}\epsilon_m\right)\sigma'( \langle w^{(0)},\eta^D_1\tilde y^{(0)}c_d H\tilde x^{(0)}+\eta_1^D\tilde y^{(0)}\epsilon_m\rangle)\\
        = & \frac{\eta_1^D}{M_1}\sum_m\left(c_d H(\tilde y{^{(0)}})^2\tilde x^{(0)}+(\tilde y{^{(0)}})^2\epsilon_m\right)\sigma'( \langle w^{(0)},c_d H(\tilde y{^{(0)}})^2\tilde x^{(0)}+(\tilde y{^{(0)}})^2\epsilon_m\rangle),
    \end{align*}
    where we used that $\sigma'(t^2z)=\sigma'(z)$ for all $t>0$. Now since $(\tilde y{^{(0)}})^2\tilde x^{(0)}\sim N(0,I_d)$, and with high probability, $(\tilde y{^{(0)}})^2=\tilde \Theta(\sqrt{d})$ from Lemma~\ref{lem:norm_gaus}, we obtain the desired formulation by combining with~\ref{cor:error_bound_multi}.
\end{proof}
We now consider population gradient of $g_M$ with no perturbation $\tilde \epsilon_m$.
\begin{lemma}
    Under $\tilde x_m\sim N(0,I_d)$,
    \[
    g(w)\vcentcolon =  \E_{\tilde x_1,\ldots,\tilde x_M}\left [\frac{\eta^D_1}{M_1}\sum_mc_d H\tilde x_m\sigma'( \langle w, c_d H\tilde x_m\rangle)\right] = \alpha_d\frac{H^2w}{\|Hw\|},
    \]
    where $\alpha_d = \frac{\eta^D_1c_d}{\sqrt{2\pi}}$.
\end{lemma}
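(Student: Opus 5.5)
The plan is to reduce the statement to a one-dimensional Gaussian computation. Write $u\vcentcolon= Hw$. Since $\sigma'(z)=\mathbbm{1}_{z>0}$ is invariant under positive rescaling and $c_d>0$, we have $\sigma'(\langle w, c_dH\tilde x_m\rangle)=\mathbbm{1}_{\langle Hw,\tilde x_m\rangle>0}$, because $H$ is symmetric. The $M_1$ summands are i.i.d., so the average over $m$ has the same expectation as a single term. Linearity therefore gives
\[
g(w)=\eta_1^D c_d\, H\,\E_{x\sim N(0,I_d)}\left[x\,\mathbbm{1}_{\langle u,x\rangle>0}\right].
\]
Everything then rests on the classical identity $\E_x[x\mathbbm{1}_{\langle u,x\rangle>0}]=\frac{1}{\sqrt{2\pi}}\frac{u}{\|u\|}$, valid for $u\neq 0$.

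To prove the identity, decompose $x=\langle \hat u,x\rangle \hat u + P_{\hat u}^\perp x$ with $\hat u=u/\|u\|$. The projection $P_{\hat u}^\perp x$ is Gaussian and independent of $\langle\hat u,x\rangle$, and it has mean zero, so it contributes nothing to the expectation. The remaining term is $\hat u\,\E_{z\sim N(0,1)}[z\mathbbm{1}_{z>0}]=\hat u\int_0^\infty z\frac{e^{-z^2/2}}{\sqrt{2\pi}}\d z=\hat u/\sqrt{2\pi}$. Substituting back yields
\[
g(w)=\frac{\eta_1^Dc_d}{\sqrt{2\pi}}\,\frac{H\,Hw}{\|Hw\|}=\alpha_d\frac{H^2w}{\|Hw\|}.
\]

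The only delicate point is the degenerate case $Hw=0$. Then the indicator is identically $0$ (using the convention $\sigma'(0)=0$), so $g(w)=0$ and the formula is undefined. We handle this by noting that, by Assumption~\ref{as:target_H}, $\ker H=S^{\ast\perp}$ is a proper subspace, so $Hw\neq 0$ for almost every $w\sim U(S^{d-1})$, and the statement is understood for such $w$. We also need to justify $c_d>0$, which holds by Corollary~\ref{cor:multi_lowerterm_order}; otherwise the indicator would flip sign. Apart from these two points the computation is routine, so no serious obstacle is expected.
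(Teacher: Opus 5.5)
Your proof is correct and takes the same route as the paper: reduce to a single sample using the i.i.d. structure and $H^\top=H$, then use that $\E_x[x\,\sigma'(\langle Hw,x\rangle)]$ is aligned with $Hw$ with coefficient $\E_{Z\sim N(0,1)}[Z\mathbbm{1}_{Z>0}]=1/\sqrt{2\pi}$. Your orthogonal decomposition is an explicit version of the paper's rotational-symmetry step. Your remarks on $c_d>0$ (needed so the indicator does not flip sign) and on excluding $Hw=0$ spell out two points the paper leaves implicit.
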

\begin{proof}
    Since $\tilde x_1,\ldots,\tilde x_M$ are all i.i.d. and $H^\top = H$, we can consider one sample. Consequently,
    \[
    g(w) = \eta_1^Dc_dH \E_{\tilde x_1} \left[\tilde x_1\sigma'( \langle H w, \tilde x_1\rangle)\right].
    \]
    By rotational symmetry, the expectation has to align with $Hw$, which means the expectation can be simplified as 
    \[
    g(w) = \eta_1^Dc_dH \E_{Z} \left[ Z\sigma'( Z)\right]\frac{Hw}{\|Hw\|},
    \]
    with $Z\sim N(0,1)$. Since the expectation is $\frac{1}{\sqrt{2\pi}}$, we obtain the desired result.
\end{proof}
With this $g(w)$, the following two inequalities hold. These constitute the ingredients to prove a lemma similar to Lemma 21~\citep{D2022neural}.
\begin{lemma}\label{lem:lemma21_prep_multi}
    Under the assumptions of Theorem~\ref{th:multi_step1}, for all $\|T\|_F=1$, $k\ge 1$ and $i\in [k]$,
    \begin{align*}
        \E_w[\langle T, g(w)^{\otimes k}\rangle^2] \gtrsim &  \alpha_d^{2k}r^{-k}\kappa^{-2k},\\
        \E_w[\|T(g(w)^{\otimes k-i})\|_F^2]\lesssim & \alpha_d^{-2i}r^i\kappa^{2k}\lambda_{\mathrm{min}}^{2i}\E_w[\langle T, g(w)^{\otimes k}\rangle^2],
    \end{align*}
    where $\kappa =\lambda_\mathrm{max}/\lambda_\mathrm{min}$.
\end{lemma}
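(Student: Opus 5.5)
The plan is to reduce everything to a computation on the $r$-dimensional principal subspace $S^\ast$. By the preceding lemma, $g(w)=\alpha_d H^2w/\|Hw\|$. Since $H=\Pi^\ast H\Pi^\ast$, we have $Hw=H\Pi^\ast w$. Because $w$ is rotationally invariant, the direction $v\vcentcolon=\Pi^\ast w/\|\Pi^\ast w\|$ is uniform on the unit sphere of $S^\ast$, which we identify with $S^{r-1}$ via $B$. The norm $\|\Pi^\ast w\|$ cancels in the ratio, so
\[
g(w)=\alpha_d\frac{H^2v}{\|Hv\|}.
\]
On $S^\ast$ we have $\lambda_\mathrm{min}\le\|Hv\|\le\lambda_\mathrm{max}$. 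Since $g(w)\in S^\ast$, only the component of $T$ in $(S^\ast)^{\otimes k}$ contributes. I will therefore work with $T$ supported on $(S^\ast)^{\otimes k}$ and, by Lemma~\ref{lem:chain_b}, symmetric. This is the setting in which the analogous statement (Lemma 21) of \citet{D2022neural} is used. For general $T$ the first inequality must be read with $T$ replaced by its projection, since it fails trivially for $T\perp (S^\ast)^{\otimes k}$.

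For the lower bound I write
\[
\langle T,g(w)^{\otimes k}\rangle=\alpha_d^k\|Hv\|^{-k}\langle (H^2)^{\otimes k}T,v^{\otimes k}\rangle .
\]
I then bound $\|Hv\|^{-2k}\ge\lambda_\mathrm{max}^{-2k}$. For $A\vcentcolon=(H^2)^{\otimes k}T$ we have $\|A\|_F\ge\lambda_\mathrm{min}^{2k}\|T\|_F$, because $H^2$ is invertible on $S^\ast$ with smallest eigenvalue $\lambda_\mathrm{min}^2$. The remaining step is the anticoncentration bound $\E_v[\langle A,v^{\otimes k}\rangle^2]\gtrsim r^{-k}\|A\|_F^2$ for symmetric $A$ and $v\sim U(S^{r-1})$. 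I will prove it by writing $v=z/\|z\|$ with $z\sim N(0,I_r)$ independent of $\|z\|$, as in Lemma~\ref{lem:c_k_d}. This gives $\E_v[\langle A,v^{\otimes k}\rangle^2]=\E[\langle A,z^{\otimes k}\rangle^2]/\E\|z\|^{2k}$. The numerator is at least $k!\|A\|_F^2$ by the Hermite expansion of the degree-$k$ form $\langle A,z^{\otimes k}\rangle$, whose top Hermite coefficient is $A$ itself. The denominator is $\Theta(r^k)$ by Lemma 44 of \citet{D2022neural}. Collecting the factors gives $\alpha_d^{2k}r^{-k}\lambda_\mathrm{min}^{4k}\lambda_\mathrm{max}^{-2k}$, which matches the stated $\alpha_d^{2k}r^{-k}\kappa^{-2k}$ once the normalization $\lambda_\mathrm{min}\asymp 1$ of $H$ is absorbed into the constants. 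I will track these powers explicitly so that the $\hat\kappa_p$ and $\lambda$ factors in the main theorem come out correctly.

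For the upper bound I bound $\|T(g(w)^{\otimes k-i})\|_F^2\le\alpha_d^{2(k-i)}\lambda_\mathrm{min}^{-2(k-i)}\|T((H^2v)^{\otimes k-i})\|_F^2$. Contracting instead $T'\vcentcolon=(H^2)^{\otimes k-i}T$ against $v^{\otimes k-i}$, I use isotropy of $v$, namely $\E[v^{\otimes 2m}]=c\,r^{-m}\mathrm{Sym}(I^{\otimes m})$ by Lemma 45 of \citet{D2022neural}. This yields $\E_v\|T'(v^{\otimes k-i})\|_F^2\lesssim r^{-(k-i)}\|T'\|_F^2\le r^{-(k-i)}\lambda_\mathrm{max}^{4(k-i)}\|T\|_F^2$. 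Dividing by the lower bound from the previous paragraph, the $r$-powers combine to $r^{-(k-i)}/r^{-k}=r^i$ and the $\alpha_d$-powers to $\alpha_d^{-2i}$. The remaining eigenvalue factors assemble into $\kappa^{2k}\lambda_\mathrm{min}^{2i}$ up to the same normalization convention.

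The main obstacle is the sphere anticoncentration step: getting the lower bound $\gtrsim r^{-k}\|A\|_F^2$ uniformly over all symmetric $A$, rather than only for rank-one $A$. A second obstacle is keeping the $\lambda_\mathrm{min},\lambda_\mathrm{max}$ bookkeeping consistent between the two inequalities, since the ratio in the second inequality must close up to exactly the stated $\kappa^{2k}\lambda_\mathrm{min}^{2i}$.
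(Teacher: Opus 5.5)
Your argument for the first inequality is sound and is essentially the paper's. You reduce to $u\sim U(S^{r-1})$ via $H=D\Lambda D^\top$, use $\|\Lambda u\|^{-2k}\ge\lambda_{\mathrm{max}}^{-2k}$, and apply sphere anticoncentration. Your Gaussian-lift proof of $\E_v\langle A,v^{\otimes k}\rangle^2\gtrsim r^{-k}\|A\|_F^2$ is correct, so the obstacle you flagged is already resolved. Both caveats are also fair: $T$ must be supported on $S^\ast$, and the honest eigenvalue factor is $\lambda_{\mathrm{min}}^{4k}\lambda_{\mathrm{max}}^{-2k}$ (the paper's $\lambda_{\mathrm{min}}^{2k}$ comes from an unsquared $\|\hat T\|_F$).

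The genuine gap is in the second inequality. The step $\E_v\|T'(v^{\otimes k-i})\|_F^2\lesssim r^{-(k-i)}\|T'\|_F^2$ is false once $k-i\ge 2$. Contracting $T'\otimes T'$ against $\E[v^{\otimes 2m}]\propto r^{-m}\mathrm{Sym}(I^{\otimes m})$ also produces self-pairings, i.e.\ partial traces of $T'$, and $\|\mathrm{tr}_{12}T'\|_F$ can be as large as $\sqrt r\,\|T'\|_F$. Concretely, take $k=3$, $i=1$, $\Lambda=I_r$ (so $T'=T$) and $T=\mathrm{Sym}(I_r\otimes e_1)$. Then $T(v,v)=\tfrac13(e_1+2v_1v)$ while $\|T\|_F^2=(r+2)/3$, so $\E_v\|T(v,v)\|_F^2\asymp r^{-1}\|T\|_F^2$, not $r^{-2}\|T\|_F^2$.

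More fundamentally, the decoupled strategy cannot reach the statement. The sharp uniform upper bound is $r^{-\lceil (k-i)/2\rceil}$, attained by trace-like tensors. Your lower bound $r^{-k}$ is attained by different tensors, such as $\mathrm{Sym}(e_1\otimes\cdots\otimes e_k)$. Dividing a uniform upper bound by a uniform lower bound therefore gives at best $r^{\,i+\lfloor (k-i)/2\rfloor}$ instead of $r^i$. Your own bookkeeping has the same problem: with $\lambda_{\mathrm{min}}\asymp1$ it gives $\kappa^{6k-4i}$ rather than $\kappa^{2k}$, so the eigenvalue factors do not ``assemble'' as claimed.

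The missing idea is a coupled comparison for one and the same tensor, which is what the paper invokes (following Lemma 21 of \citet{D2022neural}). Let $\hat T$ be $T$ with $\Lambda^2D^\top$ applied to every slot, including the $i$ free ones. The argument then has three steps.

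\begin{enumerate}
\item Up to $\lambda_{\mathrm{min}}$ factors from the free slots, $\E_u\|T(g(u)^{\otimes k-i})\|_F^2\le(\alpha_d/\lambda_{\mathrm{min}})^{2(k-i)}\E_u\|\hat T(u^{\otimes k-i})\|_F^2$.
\item One shows $\E_u\|\hat T(u^{\otimes k-i})\|_F^2\lesssim r^i\,\E_u\langle\hat T,u^{\otimes k}\rangle^2$.
\item Finally $\langle\hat T,u^{\otimes k}\rangle^2\le(\lambda_{\mathrm{max}}/\alpha_d)^{2k}\langle T,g(u)^{\otimes k}\rangle^2$ returns to the right-hand side.
\end{enumerate}

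The eigenvalues thus enter only through the normalization $\|\Lambda u\|$, which is why only $\kappa^{2k}$ appears.

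The coupled inequality in step 2 follows from your own Gaussian lift. For $p(z)=\langle\hat T,z^{\otimes k}\rangle$ with $z\sim N(0,I_r)$ one has $\nabla^ip(z)=\frac{k!}{(k-i)!}\hat T(z^{\otimes k-i})$. The Hermite expansion $p=\sum_l\langle C_l,\mathrm{He}_l\rangle/l!$ then gives $\E\|\nabla^ip\|_F^2=\sum_{l\ge i}\|C_l\|_F^2/(l-i)!\le k^i\,\E p^2$, a dimension-free comparison. Writing $z=\|z\|v$ and dividing the two sides by $\E\|z\|^{2(k-i)}$ and $\E\|z\|^{2k}$ respectively turns this into the factor $\E\|z\|^{2k}/\E\|z\|^{2(k-i)}=O(r^i)$ on the sphere.
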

\begin{proof}
    Before starting computing the expectation, we make the following observation. Let us consider the decomposition $H=D\Lambda D^\top$, where $\Lambda\in\mathbb R^{r\times r}$, $D\in\mathbb R ^{d\times r}$ and $D^\top D=I_r$. Such a decomposition exists by Assumption~\ref{as:target_H}. In other words,
    \[
    g(w) = \alpha_d D\frac{\Lambda^2 D^\top w}{\|D\Lambda D^\top w\|} = \alpha_d D\frac{\Lambda^2 D^\top w}{\|\Lambda D^\top w\|}.
    \]
    Now, by rotational symmetry and the fact that $g(w)$ is invariant to scaling of $w$, all expectations over $w$ that only include $g(w)$ are equivalent to the following $g(u)$ with $u\sim S^{r-1}$,\footnote{We can first substitute $w$ with $g\sim N(0,I_d)$ by the scale invariance of $g$, then use the rotation invariance of the Gaussian distribution to change $D^\top g$ to $\tilde u\sim N(0,I_r)$, and finally by the scale invariance of $g$ again, replace $\tilde u $ by $u\sim S^{r-1}$.}
    \[
    g(u) = \alpha_d D\frac{\Lambda^2 u}{\|\Lambda u\|}.
    \]
    Taking into account this, we obtain
    \begin{align*}
        \E_w[\langle T, g(w)^{\otimes k}\rangle^2] = & \E_u[\langle T, g(u)^{\otimes k}\rangle^2] = \alpha_d^{2k}\E_u\left[\langle T,(D \Lambda^2 u)^{\otimes k}\rangle^2/\|\Lambda u\|^{2k}\right].
    \end{align*}
    Since $\|\Lambda u\|\le \lambda_{\mathrm{max}}$, 
    \[
    \E_w[\langle T, g(w)^{\otimes k}\rangle^2]\ge \alpha_d^{2k}\lambda_{\mathrm{max}}^{-2k}\E_u\left[\langle \hat T, u^{\otimes k}\rangle^2\right],
    \]
    where $\hat T$ is defined by $\hat T (u_1,\ldots,u_k) = T(D\Lambda^2 u_1,\ldots ,D\Lambda^2u_k)$. As a result, by the same procedure as the proof of Lemma 21 of \citet{D2022neural},
    \[
     \E_w[\langle T, g(w)^{\otimes k}\rangle^2]\gtrsim \alpha_d^{2k}\lambda_{\mathrm{max}}^{-2k}r^{-k}\|\hat T\|_F\ge\alpha_d^{2k}\lambda_{\mathrm{max}}^{-2k}r^{-k}\lambda_{\mathrm{min}}^{2k}\| T\|_F.
    \]
    We now proof the second inequality of the statement.
    \begin{align*}
        \E_w[\|T(g(w)^{\otimes k-i})\|_F^2] = &\left(\frac{\alpha_d}{\lambda_{\mathrm{min}}}\right)^{2(k-i)}\E_u[\|T((D\Lambda^2u)^{\otimes k-i})\|_F^2]\\
        = & \left(\frac{\alpha_d}{\lambda_{\mathrm{min}}}\right)^{2(k-i)}\E_u[\|\hat T(u^{\otimes k-i})\|_F^2]\\
        \lesssim & \left(\frac{\alpha_d}{\lambda_{\mathrm{min}}}\right)^{2(k-i)}r^i\E_u[\langle\hat T,u^{\otimes k}\rangle^2]\\
        = & \left(\frac{\alpha_d}{\lambda_{\mathrm{min}}}\right)^{2(k-i)}r^i\E_u\left [\langle T,(\alpha_d D\Lambda^2u/\|\Lambda u\|)^{\otimes k}\rangle^2\cdot \left(\frac{\|\Lambda u\|}{\alpha_d}\right)^{2k}\right ]\\
        = & \alpha_d^{-2i} r^i \left(\frac{\lambda_{\mathrm{max}}}{\lambda_{\mathrm{min}}}\right)^{2k}\lambda_{\mathrm{min}}^{2i}\E_u\left [\langle T,g(u)^{\otimes k}\rangle^2\right ].
    \end{align*}
\end{proof}
\begin{definition}
    For $w\in S^{d-1}$, we define
    \[
    r(w)\vcentcolon = g_M(w) - g(w).
    \]
\end{definition}
This error can be bounded as follows.

\begin{lemma}
    With probability at least $1-\delta$, we have for $j\le 4p$, if $d\gtrsim \frac{144\alpha^2 r^2}{\lambda_\mathrm{min}^2}$, $N\gtrsim \frac{144\alpha^2d^4}{\lambda^2_\mathrm{min}}$ and $J^\ast\gtrsim \frac{144\alpha^2d^{4}}{\lambda^2_\mathrm{min}}$,
    \[
    \E_w[\|\Pi^\ast r(w)\|^j]^{1/j} \lesssim \lambda_{\mathrm{max}} \eta_1^Dc_d\sqrt{\frac{r+\log(1/\delta)}{M_1}} + \frac{ \lambda_{\mathrm{min}} \eta_1^Dc_d}{4\alpha}+\lambda_{\mathrm{max}} \eta_1^Dc_d\left(\sqrt{r}+\sqrt{\frac{\log{1/\delta}}{M_1}}\right)\left(\frac{4\alpha \epsilon}{\lambda_{\mathrm{min}}c_d}\right)^{1/j} +\eta_1^D\epsilon,
    \]
    where $\epsilon = \tilde O(rd^{-\frac{3}{2}}+dN^{-\frac{1}{2}} + d{J^\ast}^{-\frac{1}{2}}) $.
\end{lemma}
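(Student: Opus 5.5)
I would decompose $r(w) = g_M(w) - g(w)$ into three pieces and bound each in $L^j(w)$ separately, mirroring the proof of the analogous lemma of \citet{D2022neural} (their Lemma 20/22 on the empirical-vs-population gradient), with the distilled points $\tilde z_m = c_d H\tilde x_m + \tilde\epsilon_m$ playing the role of training samples. Concretely, write
\[
g_M(w) - g(w) = \underbrace{\tfrac{\eta_1^D}{M_1}\textstyle\sum_m c_dH\tilde x_m\sigma'(\langle w,c_dH\tilde x_m\rangle) - g(w)}_{r_1(w)} + \underbrace{\tfrac{\eta_1^D}{M_1}\textstyle\sum_m c_dH\tilde x_m\big(\sigma'(\langle w,\tilde z_m\rangle)-\sigma'(\langle w,c_dH\tilde x_m\rangle)\big)}_{r_2(w)} + \underbrace{\tfrac{\eta_1^D}{M_1}\textstyle\sum_m \tilde\epsilon_m\sigma'(\langle w,\tilde z_m\rangle)}_{r_3(w)},
\]
and then project with $\Pi^\ast$ (harmless since $H=\Pi^\ast H$, so $r_1,r_2$ already lie in $S^\ast$).

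For $r_3$, the bound $\|\Pi^\ast r_3(w)\|\le \eta_1^D\max_m\|\tilde\epsilon_m\|$ is trivial from $|\sigma'|\le1$, giving the last term $\eta_1^D\epsilon$ via Corollary~\ref{cor:error_bound_multi}. For $r_1$, I would note that $\Pi^\ast r_1(w)$ lives in the $r$-dimensional space $S^\ast$, so writing $H\tilde x_m = D\Lambda u_m$ with $u_m\sim N(0,I_r)$, it is $\eta_1^Dc_dD\Lambda$ times a mean of $M_1$ i.i.d.\ centered vectors $u_m\mathbbm 1\{\langle\Lambda D^\top w,u_m\rangle>0\}-\E[\cdot]$. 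A uniform-in-$w$ concentration over halfspaces in $\mathbb R^r$ (covering the direction $\Lambda D^\top w/\|\Lambda D^\top w\|\in S^{r-1}$, or simply the $\|\cdot\|_{op}$ of the empirical covariance-type sum) yields $\lambda_{\max}\eta_1^Dc_d\sqrt{(r+\log(1/\delta))/M_1}$ uniformly, hence in every $L^j$ norm.

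The main obstacle is $r_2$, the sign-flip term: $\sigma'(\langle w,\tilde z_m\rangle)\ne\sigma'(\langle w,c_dH\tilde x_m\rangle)$ only when $|\langle w,c_dH\tilde x_m\rangle|\le|\langle w,\tilde\epsilon_m\rangle|\le\epsilon$. I would bound $\|r_2(w)\|\le \eta_1^Dc_d\lambda_{\max}\cdot\frac1{M_1}\sum_m\|\Lambda^{-1}\!\ldots\|\,\mathbbm1_{A_m(w)}$ and split the event by a threshold: either $\|Hw\|$ is small, i.e.\ $w$ nearly orthogonal to $S^\ast$, or the $\tilde x_m$-projection is small. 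Taking $\E_w$ first and using that $\langle w,H\tilde x_m\rangle/\|H\tilde x_m\|$ has density $\lesssim\sqrt d$ on the sphere while $\|H\tilde x_m\|\gtrsim\lambda_{\min}\sqrt r$, the flip probability is $\lesssim \epsilon/(\lambda_{\min}c_d)$ up to the normalization that the conditions $d,N,J^\ast\gtrsim 144\alpha^2(\cdot)/\lambda_{\min}^2$ are designed to absorb; Jensen/Hölder over $m$ with $\frac1{M_1}\sum\|H\tilde x_m\|\lesssim\lambda_{\max}(\sqrt r+\sqrt{\log(1/\delta)/M_1})$ then gives the $(4\alpha\epsilon/(\lambda_{\min}c_d))^{1/j}$ term. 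The residual contribution from the threshold region where the flip cannot be controlled probabilistically is bounded crudely and, under the stated lower bounds on $d,N,J^\ast$, is at most $\lambda_{\min}\eta_1^Dc_d/(4\alpha)$, which is the remaining middle term. Summing the three bounds by Minkowski in $L^j(w)$ and a union bound over the events of Corollary~\ref{cor:error_bound_multi} and the $r_1$ concentration completes the proof.
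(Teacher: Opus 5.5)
Your split $r=r_1+r_2+r_3$ and your treatment of $r_1$ (uniform concentration after reducing to $\mathbb R^r$) and $r_3$ (the trivial bound from $|\sigma'|\le 1$) match the paper. The gap is in $r_2$, the only nontrivial term.

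\textbf{The flip probability loses a factor $\sqrt d$.} Write $s_m=c_dH\tilde x_m$. You contain the sign flip in $\{|\langle w,s_m\rangle|\le\epsilon\}$ and use that $\langle w,s_m/\|s_m\|\rangle$ has density $\lesssim\sqrt d$. This gives $P_w(\Delta_m\neq 0)\lesssim\sqrt d\,\epsilon/\|s_m\|$, a factor $\sqrt d$ worse than the statement allows. The loss comes from bounding $|\langle w,\tilde\epsilon_m\rangle|$ by $\|\tilde\epsilon_m\|$, although it is typically of size $\|\tilde\epsilon_m\|/\sqrt d$. No hypothesis on $d,N,J^\ast$ can absorb this, because it changes the right-hand side itself, and it would propagate into stronger requirements on $N$ and $J^\ast$ in Corollary~\ref{cor:sufficient_delta}. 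The missing idea is the exact halfspace identity the paper uses. Since $\Delta_m\in\{-1,0,1\}$, one has $\E_w[|\Delta_m|^j]^{1/j}=P_w(\Delta_m\neq 0)^{1/j}$. For isotropic $w$, this probability equals $\theta_m/\pi$, where $\theta_m$ is the angle between $s_m$ and $s_m+\tilde\epsilon_m$. Moreover $\theta_m\lesssim\|\tilde\epsilon_m\|/\|s_m\|$ whenever $\|\tilde\epsilon_m\|\le\|s_m\|/2$.

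\textbf{The threshold must be on the samples, and the middle term is misattributed.} You cannot use $\|H\tilde x_m\|\gtrsim\lambda_{\min}\sqrt r$ simultaneously for all $m$. For $r=1$, $\|H\tilde x_m\|$ is $\lambda$ times a standard Gaussian modulus, so among $M_1$ samples some are arbitrarily small. The paper therefore sets $\tau=\lambda_{\min}c_d/(4\alpha)$ and splits the indices by the size of $\|s_m\|$, not by a property of $w$:
\begin{itemize}
\item On $\{m:\|s_m\|<\tau\}$, Minkowski plus $|\Delta_m|\le 1$ bounds the contribution by $\eta_1^D\tau=\lambda_{\min}\eta_1^Dc_d/(4\alpha)$. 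This is the middle term, and it has nothing to do with the lower bounds on $d,N,J^\ast$.
\item On $\{m:\|s_m\|\ge\tau\}$, the angle bound gives the factor $(4\alpha\epsilon/(\lambda_{\min}c_d))^{1/j}$. Combined with $\frac{1}{M_1}\sum_m\|H\tilde x_m\|\lesssim\lambda_{\max}(\sqrt r+\sqrt{\log(1/\delta)/M_1})$, this yields the third term.
\end{itemize}
The hypotheses on $d,N,J^\ast$ are used only to guarantee $\max_m\|\tilde\epsilon_m\|\le\tau/2$ via Corollary~\ref{cor:error_bound_multi}, so that the angle linearization holds on the good set. Your alternative of thresholding on ``$\|Hw\|$ small'' is a condition on $w$ and does not produce a term of the form $\eta_1^Dc_d\lambda_{\min}/(4\alpha)$.
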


\begin{proof}
    Since
    \[
    r(w) =  \frac{\eta^D_1}{M_1}\sum_m\tilde z_m\sigma'( \langle w, \tilde z_m\rangle) - \eta_1^D \E_{\tilde x_1} \left[c_d H\tilde x_1\sigma'( \langle H w, c_d\tilde x_1\rangle)\right],
    \]
    where $\tilde z_m =c_d H\tilde x_m+\tilde \epsilon_m $, $r$ can be divided into the following three terms.
    \begin{align*}
        R_1 = & \frac{\eta^D_1}{M_1}\sum_mc_d H\tilde x_m\sigma'( \langle w, c_d H\tilde x_m\rangle) - \eta_1^D \E_{\tilde x_1} \left[c_d H\tilde x_1\sigma'( \langle H w, c_d\tilde x_1\rangle)\right]\\
        R_2 = & \frac{\eta^D_1}{M_1}\sum_mc_d H\tilde x_m\sigma'( \langle w, c_d H\tilde x_m+\tilde \epsilon_m\rangle)  - \frac{\eta^D_1}{M_1}\sum_mc_d H\tilde x_m\sigma'( \langle w, c_d H\tilde x_m\rangle) \\
        R_3 = & \frac{\eta^D_1}{M_1}\sum_m\tilde \epsilon_m\sigma'( \langle w, c_d H\tilde x_m+\tilde \epsilon_m\rangle).
    \end{align*}
    Consequently, 
    \[
     \E_w[\|\Pi^\ast r(w)\|^j]^{1/j} \le  \E_w[\|\Pi^\ast R_1(w)\|^j]^{1/j} +  \E_w[\|\Pi^\ast R_2(w)\|^j]^{1/j} +  \E_w[\|\Pi^\ast R_3(w)\|^j]^{1/j}.
    \]
    We start by bounding  $\E_w[\|\Pi^\ast R_1(w)\|^j]^{1/j}$. Since $\E_w[\|\Pi^\ast R_1(w)\|^j]^{1/j}\le \sup_{w\in S^{d-1}} \|\Pi^\ast R_1(w)\|$, we will bound the right hand side.
    \[
     \sup_{w\in S^{d-1}} \|\Pi^\ast R_1(w)\| =  \sup_{w\in S^{d-1}}\sup_{z\in S^{d-1}} \langle z, \Pi^\ast R_1(w)\rangle.   
    \]
    We can thus proceed similarly to Theorem~\ref{th:emp_soft}, by observing that  
    \[
    \langle z, H\tilde x_m\rangle\sigma'( \langle w, H\tilde x_m\rangle) = \langle D^\top z, \Lambda D^\top \tilde x_m\rangle\sigma'( \langle D^\top w, \Lambda D^\top \tilde x_m\rangle), 
    \]
    and using rotational symmetry, all vectors are projected to a $r$-dimensional subspace. In other words, 
    \[
    \sup_{w\in S^{d-1}}\sup_{z\in S^{d-1}} \langle z, \Pi^\ast R_1(w)\rangle = \sup_{w\in S^{r-1}}\sup_{z\in S^{r-1}}  \langle z, \Lambda b_m\rangle\sigma'( \langle  w, \Lambda b_m\rangle),
    \]
    with samples $b_m = D^\top \tilde x_m^{(0)}\in \mathbb{R}^r$. This is a $\lambda_{\mathrm{max}}$-sub-gaussian random variable in $\mathbb{R}^r$. This thus yields with probability at least $1-\delta$ over data $\{x_m\}$ 
    \[
     \sup_{w\in S^{d-1}} \|\Pi^\ast R_1(w)\|\lesssim \lambda_{\mathrm{max}}\eta_1^Rc_d \sqrt{\frac{r+\log(1/\delta)}{M_1}}.
    \]
    Let us now focus on $\E_w[\|\Pi^\ast R_2(w)\|^j]^{1/j}$. Define $\Delta_m \vcentcolon = \sigma'( \langle w, c_d H\tilde x_m+\tilde \epsilon_m\rangle)  - \sigma'( \langle w, c_d H\tilde x_m\rangle)$. Then, 
    \[
    \E_w[\|\Pi^\ast R_2(w)\|^j]^{1/j} = \E_w\left[\left\|\frac{\eta_1^Dc_d}{M_!} \sum_mH\tilde x_m\Delta_m\right\|^j \right ]^{1/j} \le \frac{\eta_1^Dc_d}{M_1} \sum_m \|H\tilde x_m\|\E_w[\|\Delta_m\|^j]^{1/j}.
    \]
    Since $\Delta_m$ only takes values in $\{-1,0,1\}$, $\E_w[\|\Delta_m\|^j]^{1/j} = \mathrm P(\Delta_m\ne 0)^{1/j}$. Denote $\theta_m$ the angle between $c_d H\tilde x_m+\tilde \epsilon_m$ and $c_d H\tilde x_m$, and $s_m \vcentcolon=c_d H\tilde x_m $, then $\mathrm P(\Delta_m\ne 0)^{1/j} = (\theta/\pi)^{1/j}$, since $w$ is isotropic. If $\|\tilde \epsilon_m\|\le \frac{1}{2}\|s_m\|$, then $\|s_m+\tilde \epsilon_m\|\ge \|s_m\|-\|\tilde\epsilon_m\|\ge \frac{1}{2}\|s_m\|$ and 
    \[
    \theta_m \le \frac{\pi}{2} \sin(\theta_m) = \frac{\|(I-\mathrm{Proj}_z)(s_m+\epsilon_m)\|}{\|s_m+\tilde\epsilon_m\|} \le \frac{\|\tilde\epsilon_m\|}{\|s_m+\tilde\epsilon_m\|} \le 2\frac{\|\tilde\epsilon_m\|}{\|s_m\|}.
    \]
    Now, let us define a truncation $\tau \vcentcolon = \frac{\lambda_{\mathrm{min}}c_d}{4\alpha}$ ($\alpha>0$), $\mathcal B \vcentcolon = \{m\mid \|s_m\|\le \tau\} $ and $\mathcal G \vcentcolon = \{m\mid \|s_m\|\ge\tau\}$. Particularly, for $m\in\mathcal G$, $\|\tilde\epsilon_m\|/\|s_m\| \le \|\tilde\epsilon_m\|/\tau = 4\alpha\|\tilde\epsilon_m\|/(\lambda_\mathrm{min}c_d)$. Under the condition, $\max_m\|\tilde\epsilon_m\|\le \tau/2$, we have 
    \begin{align*}
   \E_w[\|\Pi^\ast R_2(w)\|^j]^{1/j} = & \frac{\eta_1^D}{M_1} \sum_m \|s_m\| \mathrm P(\Delta_m\ne 0)^{1/j}\\
   = & \frac{\eta_1^D}{M_1} \sum_{m\in\mathcal B} \|s_m\| \mathrm P(\Delta_m\ne 0)^{1/j}  +\frac{\eta_1^D}{M_1} \sum_{m\in\mathcal G} \|s_m\| \mathrm P(\Delta_m\ne 0)^{1/j}\\
   \lesssim  & \frac{\eta_1^D}{M_1} \sum_{m\in\mathcal B} \|s_m\| +\frac{\eta_1^D}{M_1} \sum_{m\in\mathcal G} \|s_m\|(\|\tilde\epsilon_m\|/\|s_m\|)^{1/j}\\
   \lesssim  & {\eta_1^D} \tau +\frac{\eta_1^D}{M_1} \sum_{m\in\mathcal G} \|s_m\|\left(\frac{4\alpha \max_m\|\tilde\epsilon_m\|}{\lambda_\mathrm{min}c_d}\right)^{1/j}.
    \end{align*}
    For $\|H\tilde x_m\|$, we have $\mathrm P(\|H\tilde x_m\|/\lambda_\mathrm{max}\ge \sqrt{r}+t)\le \e^{-t^2/2}$. As a result, with probability $1-\delta'$, 
    \[
    \frac{1}{M_1}\sum_m\|H\tilde x_m\|\lesssim \left(\sqrt{r}+\sqrt{\frac{\log(1/\delta')}{M_1}}\right)\lambda_\mathrm{max}.
    \]
    Combining these result with Corollary~\ref{cor:error_bound_multi}, we obtain with probability $1-\delta$,
    \begin{align*}
    \E_w[\|\Pi^\ast& R_2(w)\|^j]^{1/j} \lesssim \frac{{\eta_1^D}c_d \lambda_{\mathrm{min}}} {4\alpha}\\
    & + \eta_1^Dc_d\lambda_\mathrm{max}^2\left(\sqrt{r}+\sqrt{\frac{\log(2/\delta')}{M_1}}\right)\left(\frac{4\alpha  (rd^{-\frac{3}{2}}\sqrt{\log (2M_1/\delta)}+dN^{-\frac{1}{2}} + d{J^\ast}^{-\frac{1}{2}})}{\lambda_\mathrm{min}c_d}\right)^{1/j}.
\end{align*}
    $\max_m\|\epsilon_m\|\le \tau/2$ leads to  $d\ge \frac{144\alpha^2 r^2\log (2M/\delta)}{\lambda_\mathrm{min}^2}$, $N\ge \frac{144\alpha^2d^4}{\lambda^2_\mathrm{min}}$ and $J^\ast\ge \frac{144\alpha^2d^{4}}{\lambda^2_\mathrm{min}}$.  
    \par Finally, $\E_w[\|\Pi^\ast R_3(w)\|^j]^{1/j}$ is bounded by $\eta_1^D \cdot \tilde O(rd^{-\frac{3}{2}}+dN^{-\frac{1}{2}} + d{J^\ast}^{-\frac{1}{2}}) $ since $\sigma'\le 1$ and $\max \|\tilde \epsilon_m\| =  \tilde O(rd^{-\frac{3}{2}}+dN^{-\frac{1}{2}} + d{J^\ast}^{-\frac{1}{2}}) $.

\end{proof}
\begin{corollary}\label{cor:sufficient_delta}
    With probability $1-\e^{-\iota}$, if $M_1\gtrsim \left(\frac{\lambda\mathrm{max}\ \eta_1^Dc_d\sqrt{\iota}}{\delta}\right)^2r$, $\alpha = \frac{\eta_1^Dc_d\lambda_\mathrm{min}}{\delta}$ and 
    \[\epsilon\lesssim \left(\eta_1^Dc_d\lambda_\mathrm{max}^2(\sqrt{r}+\sqrt{\iota/M_1})/\delta\right)^{-j}c_d\lambda_\mathrm{min}/\alpha\wedge\delta/\eta^D_1,\]
    then
    \[
    \E_w[\|\Pi^\ast r(w)\|^j]^{1/j} \lesssim \delta.
    \]
    
\end{corollary}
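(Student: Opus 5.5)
The idea is to plug the three-term bound of the preceding lemma into the target inequality, then choose $M_1$, $\alpha$ and the size of $\epsilon$ so that each term is at most a constant multiple of $\delta$. Throughout I take the lemma's failure probability to be $\e^{-\iota}$, so that $\log(1/\delta')=\iota$ wherever a confidence level appears.

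\emph{First term.} This is $\lambda_{\mathrm{max}}\eta_1^Dc_d\sqrt{(r+\iota)/M_1}$. Requiring it to be at most $\delta$ gives $M_1\gtrsim(\lambda_{\mathrm{max}}\eta_1^Dc_d/\delta)^2(r+\iota)$. Since $\iota\ge 1$, this is implied by the stated condition $M_1\gtrsim(\lambda_{\mathrm{max}}\eta_1^Dc_d\sqrt{\iota}/\delta)^2 r$.

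\emph{Second term.} This is the truncation term $\lambda_{\mathrm{min}}\eta_1^Dc_d/(4\alpha)$. With the prescribed choice $\alpha=\eta_1^Dc_d\lambda_{\mathrm{min}}/\delta$, it equals exactly $\delta/4$.

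\emph{Third term.} This is $\lambda_{\mathrm{max}}\eta_1^Dc_d(\sqrt r+\sqrt{\iota/M_1})(4\alpha\epsilon/(\lambda_{\mathrm{min}}c_d))^{1/j}$. I isolate $\epsilon$: the term is at most $\delta$ precisely when
\[
\frac{4\alpha\epsilon}{\lambda_{\mathrm{min}}c_d}\le\left(\frac{\delta}{\lambda_{\mathrm{max}}\eta_1^Dc_d(\sqrt r+\sqrt{\iota/M_1})}\right)^{j},
\]
that is, when $\epsilon\lesssim\big(\eta_1^Dc_d\lambda_{\mathrm{max}}(\sqrt r+\sqrt{\iota/M_1})/\delta\big)^{-j}c_d\lambda_{\mathrm{min}}/\alpha$. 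The hypothesis carries $\lambda_{\mathrm{max}}^2$ in place of $\lambda_{\mathrm{max}}$. When $\lambda_{\mathrm{max}}\gtrsim 1$ this is the more restrictive condition and so suffices; otherwise I absorb the extra factor into the constant. This is the same extra factor that appears in the proof of the lemma when $\|H\tilde x_m\|$ is bounded.

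\emph{Fourth term.} This is $\eta_1^D\epsilon$, and it is at most $\delta$ as soon as $\epsilon\lesssim\delta/\eta_1^D$, which is the second part of the assumed minimum.

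\emph{Validity of the lemma.} The lemma itself needs $\max_m\|\tilde\epsilon_m\|\le\tau/2$ with $\tau=\lambda_{\mathrm{min}}c_d/(4\alpha)=\delta/(4\eta_1^D)$. This is again a bound of the form $\epsilon\lesssim\delta/\eta_1^D$, so it is covered by the $\delta/\eta_1^D$ part of the hypothesis.

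Adding the four contributions gives $\E_w[\|\Pi^\ast r(w)\|^j]^{1/j}\lesssim\delta$ on the event of the lemma. A union bound over that event and the Gaussian norm concentration for $\frac{1}{M_1}\sum_m\|H\tilde x_m\|$ has probability $1-O(\e^{-\iota})$, which is a high probability event after redefining $\iota$.

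The only delicate point is that $\alpha$ appears in both the second and third terms. Making the second term small requires $\alpha$ large, and that tightens the constraint on $\epsilon$ in the third. Fixing $\alpha$ first and then solving for $\epsilon$, as done above, avoids any circularity. The remaining work is tracking the $\lambda_{\mathrm{max}}$ powers consistently.
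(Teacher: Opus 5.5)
Your proposal is correct and is essentially the paper's own argument: the paper gives no separate proof, and the corollary follows by taking the lemma's failure probability to be $\e^{-\iota}$, substituting $\alpha$, $M_1$ and the bound on $\epsilon$ into the four terms of the preceding lemma, and checking the truncation requirement $\max_m\|\tilde\epsilon_m\|\le\tau/2$, exactly as you do.

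The one soft spot is your remark that for $\lambda_{\mathrm{max}}<1$ you would "absorb the extra factor into the constant". In that regime the hypothesis only makes the third term of order $\delta/\lambda_{\mathrm{max}}$, and that factor cannot be absorbed into a constant. The paper, however, works under $\lambda_{\mathrm{max}}\lambda_{\mathrm{min}}\ge 1$, which forces $\lambda_{\mathrm{max}}\ge 1$, so only your first case is needed.
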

We can now state the analogue of Lemma 21~\citep{D2022neural}.
\begin{lemma}[Analogue of Lemma 21~\citep{D2022neural}]
    For any $k\le p$, if  $\eta_1^D=\Theta(\sqrt{d})$, $M \ge \tilde \Omega(\hat \kappa_p^2 \lambda_\mathrm{min}^2\lambda_\mathrm{max}^2 r^2)$, $d\ge \tilde \Omega(\hat r_p^2 r^2\vee r^3\lambda_\mathrm{min}^2\hat\kappa_p^2)$, $N\ge \tilde \Omega(\hat r_p^2 d^4\vee r\lambda_\mathrm{min}^2\hat \kappa^2_pd^4)$, $J^\ast\ge \tilde \Omega( \hat r_p^2d^{4}\vee  r\lambda_\mathrm{min}^2\hat \kappa^2_pd^{4})$, where $\hat r_p = r^{4p+1/2}\hat \kappa_p^{4p+1}\lambda_\mathrm{min}^{4p+1} \lambda_\mathrm{max}^{8p+1}$, $\hat \kappa_p =\max_{1\le i \le 4k, 1\le k\le p}{\kappa^{k/i}}$ and $\kappa =\lambda_\mathrm{max}/\lambda_\mathrm{min}$ then, with high probability
    \[
    \mathrm{Mat}\left(\E_w\left[(\Pi^\ast g_M(w))^{\otimes 2k}\right]\right)\succeq (\alpha_d^{-2}r\kappa^2)^{-k}\Pi_{\mathrm{Sym}^k(S^\ast)},
    \]
    where $\Pi_{\mathrm{Sym}^k(S^\ast)}$ is the orthogonal projection onto symmetric $k$ tensors restricted to $S^\ast$, and $\alpha_d^{-2} = \tilde \Theta((\eta_1^Dc_d)^{-2}) =\tilde \Theta(d) $.
\end{lemma}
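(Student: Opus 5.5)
We follow the proof of Lemma 21 of \citet{D2022neural}, with $g_M$ written as the population map $g$ plus the perturbation $r(w)=g_M(w)-g(w)$. The matrix inequality is equivalent to a lower bound on the quadratic form: for every symmetric $k$-tensor $T$ supported on $S^\ast$ with $\|T\|_F=1$,
\[
\E_w\big[\langle T,(\Pi^\ast g_M(w))^{\otimes k}\rangle^2\big]\gtrsim (\alpha_d^{-2}r\kappa^2)^{-k}.
\]
Since $g(w)=\alpha_d H^2w/\|Hw\|$ already lies in $S^\ast$, we have $\Pi^\ast g_M=g+\Pi^\ast r$. We expand
\[
(g+\Pi^\ast r)^{\otimes k}=g^{\otimes k}+\sum_{i=1}^{k}\binom{k}{i}\,\mathrm{Sym}\big(g^{\otimes k-i}\otimes(\Pi^\ast r)^{\otimes i}\big).
\]
The first inequality of Lemma~\ref{lem:lemma21_prep_multi} gives the main term: $\E_w[\langle T,g^{\otimes k}\rangle^2]\gtrsim\alpha_d^{2k}r^{-k}\kappa^{-2k}$. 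This is exactly the target rate, and the remaining work is to make the cross terms a small fraction of it.

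For the $i$-th cross term we use Cauchy--Schwarz/H\"older:
\[
\E_w\big[\langle T(g^{\otimes k-i}),(\Pi^\ast r)^{\otimes i}\rangle^2\big]\le \E_w\big[\|T(g^{\otimes k-i})\|_F^4\big]^{1/2}\,\E_w\big[\|\Pi^\ast r\|^{4i}\big]^{1/2}.
\]
The first factor is handled by hypercontractivity-free means. Since $g$ is bounded by $\alpha_d\lambda_{\max}$ and factors through $u\sim U(S^{r-1})$, the fourth moment is comparable to the square of the second moment, up to $\mathrm{poly}(r)$ factors. The second inequality of Lemma~\ref{lem:lemma21_prep_multi} then bounds that second moment by $\alpha_d^{-2i}r^i\kappa^{2k}\lambda_{\min}^{2i}\E_w[\langle T,g^{\otimes k}\rangle^2]$.

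The second factor is $\E_w[\|\Pi^\ast r\|^{4i}]^{1/2}\lesssim\delta^{2i}$, with $j=4i\le 4p$, by Corollary~\ref{cor:sufficient_delta}. Each cross term is therefore at most $C_{k,i}\,(\delta\,\alpha_d^{-1}\sqrt r\,\kappa^{k/i}\lambda_{\min})^{2i}\,\mathrm{poly}(r)\,\E_w[\langle T,g^{\otimes k}\rangle^2]$. We choose $\delta$ so that every such factor is a small constant. This requires $\delta\approx\alpha_d\big(\sqrt r\,\hat\kappa_p\lambda_{\min}\,\mathrm{poly}(r)\big)^{-1}$, which is the origin of $\hat\kappa_p=\max\kappa^{k/i}$ in the statement. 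With this $\delta$, $\E[(a+b)^2]\ge\tfrac12\E[a^2]-\E[b^2]$ gives the claimed bound.

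It remains to translate the choice of $\delta$ into the sample conditions of Corollary~\ref{cor:sufficient_delta}. With $\eta_1^D=\Theta(\sqrt d)$ and $c_d=\Theta(d^{-1})$ we have $\alpha_d\asymp\eta_1^Dc_d\asymp d^{-1/2}$, and $\alpha=\eta_1^Dc_d\lambda_{\min}/\delta$ becomes a $\mathrm{poly}(r,\kappa,\lambda)$ quantity. The condition $M_1\gtrsim(\lambda_{\max}\eta_1^Dc_d\sqrt\iota/\delta)^2r$ then becomes $M\ge\tilde\Omega(\hat\kappa_p^2\lambda_{\min}^2\lambda_{\max}^2r^2)$. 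The requirement $\epsilon\lesssim(\cdots)^{-j}$ with $j=4p$, where $\epsilon=\tilde O(rd^{-3/2}+dN^{-1/2}+dJ^{\ast-1/2})$ after rescaling, produces the factor $\hat r_p$ and the conditions on $d$, $N$ and $J^\ast$. The high-probability statement follows from a union bound over $k\le p$ and over the events of Corollary~\ref{cor:error_bound_multi}.

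I expect the main obstacle to be bounding the cross terms uniformly over all unit $T$. Corollary~\ref{cor:sufficient_delta} gives only moment bounds on $\|\Pi^\ast r\|$, with no directional information, so the H\"older split must lose only $\mathrm{poly}(r)$ factors. If the fourth-moment comparison for $T(g^{\otimes k-i})$ proves too lossy, I would instead bound $|\langle T(g^{\otimes k-i}),(\Pi^\ast r)^{\otimes i}\rangle|\le\|T(g^{\otimes k-i})\|_F\|\Pi^\ast r\|^i$ and use $\|g\|\le\alpha_d\lambda_{\max}$ deterministically. This costs extra $\kappa$ powers, which appear to be absorbed into $\hat r_p$.
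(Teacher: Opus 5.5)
Your skeleton matches the paper's proof. You reduce to $\E_w[\langle T,(\Pi^\ast g_M(w))^{\otimes k}\rangle^2]$ for unit $T\in\mathrm{Sym}^k(S^\ast)$ and write $\Pi^\ast g_M=g+\Pi^\ast r$. You then apply Young's inequality, bound the main term by the first inequality of Lemma~\ref{lem:lemma21_prep_multi}, and bound the cross terms by Cauchy--Schwarz plus the second inequality. Finally you invoke Corollary~\ref{cor:sufficient_delta} with $\delta=\tfrac14\alpha_d/(\sqrt r\lambda_{\min}\hat\kappa_p)$.

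\textbf{The gap.} It lies in the step you flag yourself, and it changes the statement's hypotheses. Corollary~\ref{cor:sufficient_delta} needs $M_1\gtrsim(\lambda_{\max}\eta_1^Dc_d/\delta)^2 r$. So any $r$-dependent loss in the comparison $\E_w[\|T(g^{\otimes k-i})\|_F^4]^{1/2}\lesssim\E_w[\|T(g^{\otimes k-i})\|_F^2]$ enters $\delta$ and reappears squared in the condition on $M$.
\begin{itemize}
\item Your assertion is therefore inconsistent: after putting $\mathrm{poly}(r)$ into $\delta$, the condition cannot ``become'' $M\ge\tilde\Omega(\hat\kappa_p^2\lambda_{\min}^2\lambda_{\max}^2r^2)$. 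That $r^2$ rate is exactly what the paper needs for $M_1=\tilde\Theta(r^2)$.
\item The boundedness argument you sketch does incur such a loss. For $X=\|T(g^{\otimes k-i})\|_F$ it only gives $\E[X^4]^{1/2}\le\sup X\cdot\E[X^2]^{1/2}$. Lower-bounding $\E X^2$ through $|\langle T,g^{\otimes k}\rangle|\le X\|g\|^i$ and the first inequality makes $\sup X^2/\E X^2$ of order $r^k\kappa^{2k}\lambda_{\max}^{2k}$. This pushes the requirement to roughly $M\gtrsim r^{2+k/2}$.
\item Your fallback is worse, and its accounting is off. Using $\sup_w\|T(g^{\otimes k-i})\|_F^2\le(\alpha_d\lambda_{\max})^{2(k-i)}$ against a main term of order $\alpha_d^{2k}r^{-k}\kappa^{-2k}$, the $i=1$ term forces $\delta/\alpha_d\ll r^{-k/2}$, hence $M\gtrsim r^{k+1}$. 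These are powers of $r$, not just of $\kappa$, and the condition on $M$ contains no $\hat r_p$ to absorb them.
\end{itemize}

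\textbf{The missing idea.} You need hypercontractivity, applied to a polynomial that dominates $X^2$ pointwise.
\begin{itemize}
\item After the reduction to $u\sim U(S^{r-1})$ used in the proof of Lemma~\ref{lem:lemma21_prep_multi}, one has $X^2\le(\alpha_d/\lambda_{\min})^{2(k-i)}P(u)$. Here $P(u)=\|\hat T(u^{\otimes k-i})\|_F^2$ is a homogeneous polynomial of degree $2(k-i)$.
\item Hypercontractivity gives $\E_u[P^2]^{1/2}\le C_k\E_u[P]$ with $C_k$ independent of $r$. You can use it on the sphere directly, or lift $u=z/\|z\|$ with $z\sim N(0,I_r)$ and use homogeneity plus independence of $\|z\|$ and $z/\|z\|$.
\item The rest of that proof then applies: $\E_u[P]\lesssim r^i\E_u[\langle\hat T,u^{\otimes k}\rangle^2]$, followed by conversion back to $g$. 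This yields $\E_w[X^4]^{1/2}\lesssim\alpha_d^{-2i}r^i\kappa^{2k}\lambda_{\min}^{2i}\E_w[\langle T,g^{\otimes k}\rangle^2]$. That is the fourth-moment version of the second inequality, with no loss beyond a $k$-dependent constant.
\end{itemize}
The paper uses this implicitly when it plugs the second inequality straight into the Cauchy--Schwarz bound. It follows \citet{D2022neural} there, where $g$ is linear in $w$ and Gaussian hypercontractivity applies directly. With this step, your $\mathrm{poly}(r)$ factor disappears, and $\delta=\tfrac14\alpha_d/(\sqrt r\lambda_{\min}\hat\kappa_p)$ gives the stated conditions on $M$, $d$, $N$ and $J^\ast$.
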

\begin{proof}
    Following \citet{D2022neural}, it suffices to prove for all symmetric $k$ tensor $T$ with $\|T\|_F^2=1$ that
    \[
    \E_w[\langle T, (\Pi^\ast g_M(w))^{\otimes k}\rangle^2]\gtrsim (\alpha_d^{-2}r\kappa^2)^{-k}.
    \]
    Since $g_M(w) = g(w) + r(w)$, the binomial theorem leads to
    \[
    \E_w[\langle T, (\Pi^\ast g_M(w))^{\otimes k}\rangle^2]\ge \frac{1}{2}\E_w[\langle T, (g(w))^{\otimes k}\rangle^2] -\E_w[\delta(w)^2],
    \]
    where we used Young's inequality, $\Pi^\ast g(w) = g(w)$ and $\delta\lesssim \sum_{i=1}^k\|T(g(w)^{\otimes k-i})\|_F\|\Pi^\ast r(w)\|^i$.
    \par From Lemma~\ref{lem:lemma21_prep_multi}, 
    \[
    \E_w[\delta(w)^2]\lesssim \E_w[\langle T, (g(w))^{\otimes k}\rangle^2] \sum_{i=1}^k(\alpha_d^{-2}r\lambda_\mathrm{min}^2\kappa^{2k/i}\E_w[\|\Pi^\ast r(w)\|^{4i}]^{1/2i})^i.
    \]
    Therefore, if 
    \[
    \E_w[\|\Pi^\ast r(w)\|^{j}]^{1/j}\le  \frac{1}{4}\alpha_d/(\sqrt{r}\lambda_\mathrm{min}\hat \kappa_p)\le \frac{1}{4}\alpha_d/(\sqrt{r}\lambda_\mathrm{min}\kappa^{k/i}),\quad \forall j\le 4k,
    \]
    where $\hat \kappa_p =\max_{1\le i \le 4k, 1\le k\le p}{\kappa^{k/i}}$, we obtain our result. This is satisfied by substituting $\delta =\frac{1}{4}\alpha_d/(\sqrt{r}\lambda_\mathrm{min}\hat \kappa_p)$ in Corollary~\ref{cor:sufficient_delta}. This notably implies
    \[
    M_1 \gtrsim \hat \kappa_p^2 \lambda_\mathrm{min}^2\lambda_\mathrm{max}^2 r^2,
    \]
    and 
    \[
    \epsilon \lesssim \frac{1}{\hat r_p{d}} \wedge \frac{c_d}{\sqrt{r}\lambda_\mathrm{min}\hat \kappa _p},
    \]
    where $\hat r_p \vcentcolon = r^{4p+1/2}\hat \kappa_p^{4p+1}\lambda_\mathrm{min}^{4p+1} \lambda_\mathrm{max}^{8p+1}$.\footnote{Here, we assumed $\lambda_\mathrm{max}\lambda_\mathrm{min}\ge 1$.} The last inequality is satisfied when 
    \[
    d\gtrsim \hat r_p^2 r^2\iota,\quad N\gtrsim \hat r_p^2 d^4,\quad J^\ast\gtrsim \hat r_p^2d^{4}.
    \]
    As a result, 
    \[
    \E_w[\delta(w)^2]\lesssim \frac{1}{4} \E_w[\langle T, (g(w))^{\otimes k}\rangle^2] ,
    \]
    and this leads to 
    \[
     \mathrm{Mat}\left(\E_w\left[(\Pi^\ast g_M(w))^{\otimes 2k}\right]\right)\succeq (\alpha_d^{-2} r\kappa^2)^{-k}\Pi_{\mathrm{Sym}^k(S^\ast)}.
    \]
    Since $\alpha_d =\tilde\Theta(\eta_1^Dc_d)$ and $c_d=\Theta(1/{d})$, we can set $\eta_1^D=\Theta(\sqrt{d})$ so that $\alpha_d^{-2}=\Theta(d)$, which leads to the desired result.
\end{proof}
The following corollary immediately follows from Corollary 22 of \citet{D2022neural}.
\begin{corollary}[Analogue of Corollary 22~\citep{D2022neural}]
    If $M \ge \tilde \Omega(\hat \kappa_p^2 \lambda_\mathrm{min}^2\lambda_\mathrm{max}^2 r^2)$, $d\ge \tilde \Omega(\hat r_p^2 r^2\vee r^3\lambda_\mathrm{min}^2\hat\kappa_p^2)$, $N\ge \tilde \Omega(\hat r_p^2 d^4\vee r\lambda_\mathrm{min}^2\hat \kappa^2_pd^4)$, $J^\ast\ge \tilde \Omega( \hat r_p^2d^{4}\vee  r\lambda_\mathrm{min}^2\hat \kappa^2_pd^{4})$, where $\hat r_p = r^{4p+1/2}\hat \kappa_p^{4p+1}\lambda_\mathrm{min}^{4p+1} \lambda_\mathrm{max}^{8p+1}$ and $\hat \kappa_p =\max_{1\le i \le 4k, 1\le k\le p}{\kappa^{k/i}}$, then for any $k\le p$ and any symmetric $k$ tensor $T$ supported on $S^\ast$, there exists $z_T(w)$ such that 
    \[
    \E_w[z_T(w)(\langle g_M(w),x\rangle^p] = \langle T,x^{\otimes k}\rangle,
    \]
    with $\E_w[z_T(w)^2]\lesssim (dr\kappa^2)^k\|T\|_F^2$ and $|z_T(w)|\lesssim (dr\kappa^2)^k\|T\|_F^2\|g_M(w)\|^k$.
\end{corollary}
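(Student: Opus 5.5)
The plan is to transfer Corollary 22 of \citet{D2022neural} by the same duality argument they use, with their population gradient replaced by our distilled gradient $g_M(w)$. The previous lemma (analogue of Lemma 21) is the only input that changes. It gives the lower bound
\[
\mathrm{Mat}\bigl(\E_w[(\Pi^\ast g_M(w))^{\otimes 2k}]\bigr)\succeq (\alpha_d^{-2}r\kappa^2)^{-k}\Pi_{\mathrm{Sym}^k(S^\ast)},
\]
with $\alpha_d^{-2}=\tilde\Theta(d)$, under exactly the hypotheses on $M_1,d,N,J^\ast$ listed in the statement. Everything else is linear algebra.

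Fix $k\le p$ and a symmetric $k$-tensor $T$ supported on $S^\ast$. Define the moment operator $\mathcal M_k\vcentcolon=\E_w[(\Pi^\ast g_M(w))^{\otimes k}\otimes(\Pi^\ast g_M(w))^{\otimes k}]$, viewed as a map on $\mathrm{Sym}^k(S^\ast)$. By the lemma it is invertible there, with smallest eigenvalue at least $(dr\kappa^2)^{-k}$ up to constants. Set $S\vcentcolon=\mathcal M_k^{-1}T\in\mathrm{Sym}^k(S^\ast)$ and take the candidate
\[
z_T(w)\vcentcolon=\langle S,(\Pi^\ast g_M(w))^{\otimes k}\rangle .
\]
Then
\[
\E_w\bigl[z_T(w)\langle g_M(w),x\rangle^k\bigr]=(\mathcal M_k S)(x^{\otimes k})+\text{(terms involving }(I-\Pi^\ast)g_M).
\]
I read the exponent $p$ in the statement as $k$, or reduce to it by taking $T$ homogeneous of degree $k$. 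To remove the cross terms, I would either project $x$ first, using $\langle T,x^{\otimes k}\rangle=\langle T,(\Pi^\ast x)^{\otimes k}\rangle$, or, as in \citet{D2022neural}, work with $\Pi^\ast g_M$ in the moment identity. The residual component $(I-\Pi^\ast)g_M$ is controlled by $R_2,R_3$ of the previous error lemma, and its contribution can be absorbed the same way Damian et al.\ absorb it.

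For the norm bounds, first
\[
\E_w[z_T^2]=\langle S,\mathcal M_k S\rangle=\langle T,\mathcal M_k^{-1}T\rangle\le (dr\kappa^2)^k\|T\|_F^2 .
\]
Second, by Cauchy--Schwarz,
\[
|z_T(w)|\le\|S\|_F\|g_M(w)\|^k\le (dr\kappa^2)^k\|T\|_F\|g_M(w)\|^k ,
\]
since $\|\mathcal M_k^{-1}\|_{\mathrm{op}}\le (dr\kappa^2)^k$. This matches the stated bound up to the $\|T\|_F$ versus $\|T\|_F^2$ discrepancy, which is harmless under normalization $\|T\|_F\le 1$.

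I expect the main obstacle to be the cross terms between $\Pi^\ast g_M$ and $(I-\Pi^\ast)g_M$ in the moment identity. In \citet{D2022neural} these vanish by construction or are negligible. Here I would bound them using the perturbation estimates on $\tilde\epsilon_m$, that is, the requirements $N,J^\ast\gtrsim d^4$. If exact equality is needed, I would instead define $z_T$ via the full (non-projected) moment operator restricted to the image of $\mathrm{Sym}^k(S^\ast)$ and invoke the lower bound there.
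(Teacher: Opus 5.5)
Your construction $z_T(w)=\langle \mathcal M_k^{-1}T,(\Pi^\ast g_M(w))^{\otimes k}\rangle$ is the duality argument behind Corollary 22 of \citet{D2022neural}, which the paper simply invokes after its Lemma 21 analogue, so your approach is the same as the paper's. The bounds via $\E_w[z_T^2]=\langle T,\mathcal M_k^{-1}T\rangle$ and Cauchy--Schwarz also match.

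Two of your side remarks need correcting.

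First, read the identity with $\langle \Pi^\ast g_M(w),x\rangle^k$, which is exactly what your option of using $\Pi^\ast g_M$ (or evaluating at $\Pi^\ast x$) does. It is then exact, with no cross terms left to absorb. With unprojected $g_M$ and arbitrary $x$, no choice of $z$ can give equality: $\E_w[z(w)\,g_M(w)^{\otimes k}]$ always lies in $\mathrm{Sym}^k(\mathrm{span}\{\tilde z_m\})$, and that space need not contain $T$. So your full-moment-operator fallback would not produce equality, and perturbative bounds on cross terms would only give an approximate identity. Controlling $(I-\Pi^\ast)g_M$ is the job of the subsequent Lemma 23 analogue, where only an approximation on the training points is needed. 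Also, only $R_3$ contributes to $(I-\Pi^\ast)g_M$, not $R_2$, since $g$, $R_1$ and $R_2$ all lie in $S^\ast$.

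Second, the $\|T\|_F^2$ in the pointwise bound must be a typo for $\|T\|_F$, by homogeneity in $T$. The $\|T\|_F$ bound is what you prove. Your justification has the direction reversed: for $\|T\|_F\le 1$, the squared version would be the stronger claim, not a harmless variant.
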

Next, in order to  provide the analogue of Lemma 23~\citep{D2022neural}, we observe the following statement.
\begin{lemma}
    If $\eta_1^{R}=\tilde \Theta(\sqrt{d}/r)$,  $N\ge \tilde \Omega(d^4)$ and $J^\ast\ge \tilde \Omega(d^{4})$, with high probability, we have 
    \[
    \eta_1^{R} \|g_M(w)\|\le 1, \quad 2\eta_1^{R} \langle g_M(w),x_i\rangle\le 1.
    \]
\end{lemma}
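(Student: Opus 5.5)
The plan is to bound $\|g_M(w)\|$ deterministically on a high-probability event, and then control the inner products with the training points by a Gaussian tail bound plus a union bound. Recall that
\[
g_M(w)=\frac{\eta_1^D}{M_1}\sum_m \tilde z_m\sigma'(\langle w,\tilde z_m\rangle),\qquad \tilde z_m=c_dH\tilde x_m+\tilde\epsilon_m,
\]
with $\tilde x_m\sim N(0,I_d)$. Since $0\le\sigma'\le 1$, the triangle inequality gives
\[
\|g_M(w)\|\le \eta_1^D\Big(\max_m c_d\|H\tilde x_m\|+\max_m\|\tilde\epsilon_m\|\Big),
\]
uniformly in $w\in S^{d-1}$.

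For the first term, $H\tilde x_m=D\Lambda D^\top\tilde x_m$ with $D^\top\tilde x_m\sim N(0,I_r)$. The same tail bound used above, $\mathrm P(\|H\tilde x_m\|/\lambda_\mathrm{max}\ge\sqrt r+t)\le \e^{-t^2/2}$, together with a union bound over $m\in[M_1]$ (polynomially many), gives $\max_m\|H\tilde x_m\|\lesssim \lambda_\mathrm{max}\sqrt{r\iota}$ with high probability. For the second term, the earlier lemma together with Corollary~\ref{cor:error_bound_multi} gives $\max_m\|\tilde\epsilon_m\|=\tilde O(rd^{-3/2}+dN^{-1/2}+d{J^\ast}^{-1/2})$. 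Under $N,J^\ast\ge\tilde\Omega(d^4)$ this is $\tilde O(rd^{-3/2}+d^{-1})$, which is dominated by $c_d\sqrt r=\Theta(\sqrt r/d)$ up to logs when $d\gtrsim r$ (true since $d\ge\tilde\Omega(r^3)$ in the theorem). Hence
\[
\|g_M(w)\|\lesssim \eta_1^Dc_d\sqrt r\,\mathrm{polylog}=\tilde O\big(\sqrt{d}\cdot d^{-1}\sqrt r\big)=\tilde O\big(\sqrt{r/d}\big).
\]
Multiplying by $\eta_1^R=\tilde\Theta(\sqrt d/r)$ gives $\eta_1^R\|g_M(w)\|=\tilde O(r^{-1/2})$. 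Choosing the hidden polylog constant in $\eta_1^R$ small enough then makes this at most $1$; I absorb the $\lambda_\mathrm{max}$ dependence into the constants, as the paper does.

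For the second inequality, fix $w$ (in fact all $L$ neurons) and the training points $x_i\sim N(0,I_d)$, which are independent of $g_M(w)$ because $g_M$ is built from $\mathcal D_1^S$, whose randomness comes from $\tilde x_m^{(0)},\tilde y_m^{(0)}$. I expect this independence to be the only subtle point: $\tilde\epsilon_m$ does depend on the training set through the distillation step. To handle it, I would write
\[
g_M(w)=\eta_1^D\frac{1}{M_1}\sum_m c_dH\tilde x_m\sigma'(\cdot)+\text{(error)},
\]
and treat the two pieces separately. The main piece lies in $S^\ast$ and is independent of $x_i$ conditional on the $\tilde x_m$ and on the sign pattern. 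Because the sign pattern itself depends on the $\tilde\epsilon_m$, I would instead bound
\[
\langle g_M(w),x_i\rangle\le \eta_1^D\Big(\max_m c_d|\langle H\tilde x_m,x_i\rangle|+\max_m\|\tilde\epsilon_m\|\,\|x_i\|\Big).
\]
In the first term $H\tilde x_m$ does not depend on $\mathcal D^{Tr}$, so $|\langle H\tilde x_m,x_i\rangle|\lesssim \|H\tilde x_m\|\sqrt\iota$ with high probability, via a union bound over $m$ and $i\le N$. In the second term $\|x_i\|=\tilde O(\sqrt d)$ by Lemma~\ref{lem:norm_gaus}, which turns the error into $\tilde O(rd^{-1}+d^{-1/2})$. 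With $\eta_1^D\sim\sqrt d$ the error contribution becomes $\tilde O(r/\sqrt d+1)$, which is the main obstacle: it is only borderline. I would therefore sharpen it using $N,J^\ast\ge\tilde\Omega(d^4)$, so that $dN^{-1/2}\sqrt d\le d^{-1/2}$, and absorb the remaining term $rd^{-3/2}\sqrt d\,\eta_1^D\eta_1^R=\tilde O(\sqrt d)$. If that remains too large, I would instead use the structural decomposition of $\epsilon$ from Corollary~\ref{cor:hat_g_pop} and Lemma~\ref{lem:order_g}, where the $d^{-2}$-type error lies in $\mathrm{span}\{\tilde x, B\}$ and has $O(\sqrt\iota)$ inner products with $x_i$ rather than $\sqrt d$, exactly as in the proof of Theorem~\ref{ap_th:main_step2}. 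That decomposition gives $2\eta_1^R\langle g_M(w),x_i\rangle=\tilde O(r^{-1/2})\le1$ after adjusting the log factor in $\eta_1^R$.
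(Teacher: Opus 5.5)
Your treatment of the first inequality, and of the signal part $\eta_1^Dc_d\max_m|\langle H\tilde x_m,x_i\rangle|$ of the second, is fine and follows the paper. Your independence bound $|\langle H\tilde x_m,x_i\rangle|\lesssim\|H\tilde x_m\|\sqrt\iota$ is even sharper than the paper's $\|\Pi^\ast x_i\|\,\|H\tilde x_m\|$: it is of order $\sqrt r$ instead of $r$, and the latter is what dictates $\eta_1^R\propto\sqrt d/r$.

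There is, however, a genuine gap in the error part of the second inequality, and you have misidentified where it lies. With $\max_m\|\tilde\epsilon_m\|=\tilde O(rd^{-3/2}+dN^{-1/2}+d{J^\ast}^{-1/2})$ and $\|x_i\|=\tilde\Theta(\sqrt d)$, Cauchy--Schwarz gives
\[
2\eta_1^R\eta_1^D\|x_i\|\max_m\|\tilde\epsilon_m\|=\tilde O\Big(1+\frac{d^{5/2}}{r\sqrt N}+\frac{d^{5/2}}{r\sqrt{J^\ast}}\Big).
\]
The bias piece $rd^{-3/2}$ yields only $\tilde O(1)$, not $\tilde O(\sqrt d)$, and the log factor in $\eta_1^R$ absorbs it. The sampling pieces are the real problem: they yield $\tilde O(\sqrt d/r)$ at $N,J^\ast\asymp d^4$, which is large since $d\ge\tilde\Omega(r^{8p+3})$.

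Your ``sharpening'' merely re-applies $N,J^\ast\ge\tilde\Omega(d^4)$, which is how this bound arose in the first place. Your fallback acts on the wrong term. The $\mathrm{span}\{\tilde x,\beta_1,\ldots,\beta_r\}$ structure (Lemma~\ref{lem:multi_terms}; Lemma~\ref{lem:order_g} is the single-index case) describes only the population bias coming from $\hat C_k$, $k\ge2$. The $N$- and $J^\ast$-dependent parts of $\epsilon_m$ are unstructured vectors in $\mathbb R^d$: these are $G-\hat G$, the $\hat C_0,\hat C_1$ terms, and the label-noise term. Your precedent, the proof of Theorem~\ref{ap_th:main_step2}, concerns a fresh test point $x$, for which $\langle\epsilon,x\rangle=\tilde O(\|\epsilon\|)$ holds simply by independence. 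For a training point $x_i$ these vectors depend on $x_i$ itself (through $\gamma$ and the empirical gradients), so the claimed $\tilde O(r^{-1/2})$ does not follow.

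The paper's proof will not supply the fix. It uses the same split, bounding the error by $\|x_i\|\eta_1^D\frac{1}{M_1}\sum_m\|\epsilon_m^{(i)}\|$, and asserts that $N,J^\ast\ge\tilde\Omega(d^4)$ suffice; the display above shows they do not for that bound.

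What is missing is a direct bound on $\langle\tilde\epsilon_m,x_i\rangle$:
\begin{itemize}
\item Write $\hat f^\ast=(f^\ast-\bar C_0-\langle\bar C_1,\cdot\rangle)+\hat C_0+\langle\hat C_1,\cdot\rangle$. Then every $N$-dependent piece of $\epsilon_m$ is, up to terms Cauchy--Schwarz already controls, a centered average $\frac1N\sum_n$ of independent terms $\phi(x_n)x_n$.
\item In its inner product with $x_i$, the summand $n=i$ contributes $\tilde O(d/N)$. Conditionally on $x_i$, the sum over $n\ne i$ contributes $\tilde O(\sqrt{d/N})$, a factor $\sqrt d$ better than Cauchy--Schwarz.
\item Likewise, conditionally on $\mathcal D^{Tr}$, the $J^\ast$-fluctuation is an average of i.i.d.\ terms in the initializations $w_j$, which are independent of $x_i$. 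Its inner product with $x_i$ is therefore $\tilde O(\mathrm{poly}(r){J^\ast}^{-1/2})$.
\end{itemize}
Multiplying by $(\tilde y_m^{(0)})^2=\tilde\Theta(\sqrt d)$ and $2\eta_1^D\eta_1^R=\tilde\Theta(d/r)$ gives $\tilde O\big(d^2/(r\sqrt N)+d^{5/2}/(rN)+\mathrm{poly}(r)\,d^{3/2}/\sqrt{J^\ast}\big)$. This is below a small constant under $N,J^\ast\ge\tilde\Omega(d^4)$ and $d\gg\mathrm{poly}(r)$. Without such an argument, your Cauchy--Schwarz route proves the lemma only under $N,J^\ast\ge\tilde\Omega(d^5/r^2)$.
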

\begin{proof}
   First of all, 
\begin{align*}
    \|g_M(w)\|\le& \left\|\eta_1^D\frac{1}{M}\sum_m (c_dH\tilde x_m^{(0)}+\epsilon_m)\sigma'(\langle w,c_dH\tilde x_m^{(0)}+\epsilon_m\rangle)\right\|\\
    \le & \eta_1^D\frac{1}{M}\sum_m \left\|(c_dH\tilde x_m^{(0)}+\epsilon_m)\right\|\\
    \le & \eta_1^Dc_d\frac{1}{M}\sum_m \left\|H\tilde x_m^{(0)}\right\|+\eta_1^D\frac{1}{M}\sum_m \left\|\epsilon_m\right\|\\
    = & \tilde \Theta\left(\sqrt{r\iota/(Md)}\right) + \sqrt{d}\max_m{\|\epsilon_m\|}.
\end{align*}
From Corollary~\ref{cor:error_bound_multi}, the second term is sufficiently small for large $d$, $N$ and $J^\ast$. The first term can be made small by multiplying with a factor $\eta_1^{R}=\tilde \Theta(\sqrt{d}/\sqrt{r})$ with sufficiently small constant.
\par Note that $g_M(w)$ depends on the training data $x_i$, through $\{\tilde x_m^{(1)}\}$.

\begin{align*}
    \langle g_M(w),x_i\rangle =& \left \langle \eta_1^D\frac{1}{M}\sum_m c_dH\tilde x_m^{(0)}\sigma'(\langle w,c_dH\tilde x_m^{(0)}+\epsilon_m\rangle),x_i\right\rangle+ \left \langle \eta_1^D\frac{1}{M}\sum_m \epsilon_m^{(i)}\sigma'(\langle w,c_dH\tilde x_m^{(0)}+\epsilon_m^{(i)}\rangle),x_i\right\rangle\\
      = &\left \langle \eta_1^D\frac{1}{M}\sum_m c_dH\tilde x_m^{(0)}\sigma'(\langle w,c_dH\tilde x_m^{(0)}+\epsilon_m\rangle),\Pi^\ast x_i\right\rangle+\left \langle \eta_1^D\frac{1}{M}\sum_m \epsilon_m^{(i)}\sigma'(\langle w,c_dH\tilde x_m^{(0)}+\epsilon_m^{(i)}\rangle),x_i\right\rangle\\
     \le & 2\|\Pi^\ast x_i\|\eta_1^D c_d \frac{1}{M}\sum_m \|H\tilde x_m^{(0)}\|+ \|x_i\|\eta_1^D \frac{1}{M}\sum_m \|\epsilon_m^{(i)}\|.
\end{align*}
Now, since $\|x_i\|=\sqrt{d\iota}$, $\|\Pi^\ast x_i\| = \sqrt{r\iota}$, $\frac{1}{M}\sum_m \|H\tilde x_m^{(0)}\| = (\sqrt{r}+\sqrt{\iota/M})\lambda_\mathrm{max}$ and $\frac{1}{M}\sum_m \|\epsilon_m^{(i)}\| = \tilde O(d^{-1}\sqrt{r\log (2M/\delta)}+dN^{-\frac{1}{2}}+dJ^{\ast-\frac{1}{2}}) $ with high probability, as long as $N\ge \tilde \Omega(d^4)$, $J^\ast\ge \tilde \Omega(d^{4})$, $\eta_1^{R} = \tilde \Theta(r^{-1}\sqrt{d})$ is sufficient to assure $2\eta_1^{R} \langle g_M(w),x_i\rangle\le 1$.
\end{proof}
Let us now state the analogue of Lemma 23~\citep{D2022neural}.
\begin{lemma}[Analogue of Lemma 23~\citep{D2022neural}]
    If $\eta^D_1=\tilde \Theta(\sqrt{d})$, $\eta_1^{R}=\tilde \Theta(r^{-1}\sqrt{d})$, $M \ge \tilde \Omega(\hat \kappa_p^2 \lambda_\mathrm{min}^2\lambda_\mathrm{max}^2 r^2)$, $d\ge \tilde \Omega(\hat r_p^2 r^2\vee r^3\lambda_\mathrm{min}^2\hat\kappa_p^2)$, $N\ge \tilde \Omega(\hat r_p^2 d^4\vee r\lambda_\mathrm{min}^2\hat \kappa^2_pd^4\vee d^4)$, $J^\ast\ge \tilde \Omega( \hat r_p^2d^4\vee  r\lambda_\mathrm{min}^2\hat \kappa^2_pd^{4}\vee d^{4})$, where $\hat r_p = r^{4p+1/2}\hat \kappa_p^{4p+1}\lambda_\mathrm{min}^{4p+1} \lambda_\mathrm{max}^{8p+1}$ and $\hat \kappa_p =\max_{1\le i \le 4k, 1\le k\le p}{\kappa^{k/i}}$, then for any $k\le p$ and any symmetric $k$ tensor $T$ supported on $S^\ast$, there exists $h_T(a,w,b)$ such that if 
    \[
    f_{h_T}(x) \vcentcolon = \E_{a,w,b}[h_T(a,w,b)\sigma(\langle w^{(1)},x\rangle+b)],
    \]
    we have  $\frac{1}{N}\sum_n(f_{h_T}(x_n)-\langle T,x_n^{\otimes p}\rangle)^2\lesssim 1/n$ with 
    \begin{align*}
        \E_{a,w,b}[h_T(a,w,b)^2]\lesssim & r^{3k}\kappa^{2k}\iota^{3k}\|T\|_F^2,\\
        \sup_w|h_T(a,w,b)|\lesssim  & r^{3k}\kappa^{2k}\iota^{6k}\|T\|_F^2.
    \end{align*}
\end{lemma}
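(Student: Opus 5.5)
We follow the proof of Lemma 23 of \citet{D2022neural}, replacing their first-step gradient with $g_M(w)$ and their Corollary 22 with the analogue of Corollary 22 stated above.

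\textbf{Step 1: pass from polynomial features to ReLU features.} Fix a symmetric $k$-tensor $T$ supported on $S^\ast$. The analogue of Corollary 22 gives $z_T(w)$ with $\E_w[z_T(w)\langle g_M(w),x\rangle^k]=\langle T,x^{\otimes k}\rangle$. It also gives the second-moment bound $(dr\kappa^2)^k\|T\|_F^2$ and the pointwise bound $(dr\kappa^2)^k\|T\|_F^2\|g_M(w)\|^k$. Since $w^{(1)}=\eta_1^R a\, g_M(w^{(0)})$ up to the symmetric sign structure, we have $\langle g_M(w),x\rangle^k=(\eta_1^R)^{-k}\langle w^{(1)},x\rangle^k$ when the sign is absorbed by $a$. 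By the preceding lemma, $\eta_1^R\|g_M(w)\|\le 1$ and $2\eta_1^R\langle g_M(w),x_n\rangle\le 1$. So every argument $\langle w^{(1)},x_n\rangle$ lies in $[-1/2,1/2]$.

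On that interval we represent the monomial $u^k$ as an integral over biases of ReLU ridges, $u^k=\E_{b}[v_k(a,b)\sigma(au+b)]$, with $b\sim N(0,1)$ reweighted as in Lemma~\ref{lem:step2_continuous}. The coefficients satisfy $|v_k|\lesssim 1$ on $|b|\le 1$, and the symmetric pair $a=\pm1$ lets the representation cover both signs. We then set
\[
h_T(a,w,b)=(\eta_1^R)^{-k}z_T(w)v_k(a,b).
\]
This gives $f_{h_T}(x_n)=\langle T,x_n^{\otimes k}\rangle$ exactly on the training points. The truncation $\mathbbm 1_{|b|\le1}$ and the residual event where $|\langle w^{(1)},x_n\rangle|>1/2$ are controlled by the high-probability bounds. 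What remains is a $1/N$ empirical error coming from union bounds over the $x_n$.

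\textbf{Step 2: verify the norms.} We have $\E[h_T^2]\lesssim(\eta_1^R)^{-2k}(dr\kappa^2)^k\|T\|_F^2$. With $\eta_1^R=\tilde\Theta(\sqrt d/r)$, the factor $(\eta_1^R)^{-2k}=\tilde\Theta(r^{2k}/d^k)$ cancels the $d^k$, leaving $r^{3k}\kappa^{2k}$ times polylogarithmic factors $\iota^{3k}$. For the supremum, the pointwise bound on $z_T$ contains $\|g_M\|^k\le(\eta_1^R)^{-k}$. Combined with the prefactor, this gives a factor $(\eta_1^R)^{-2k}(dr\kappa^2)^k$, which again yields $r^{3k}\kappa^{2k}\iota^{6k}$, where the extra logs come from Gaussian tails of $b$ and of $\|\Pi^\ast x\|$.

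\textbf{Main obstacle.} The delicate point is the dependence of $g_M$ on the training data through $\tilde x_m^{(1)}$, via the error terms $\epsilon_m^{(i)}$. Because of this dependence, the sample $\{x_n\}$ is not independent of the features, and the uniform containment $|\langle w^{(1)},x_n\rangle|\le 1/2$ cannot be obtained by a simple independence argument. We would handle this using the decomposition from the preceding lemma. The leading part of $g_M$ lies in $S^\ast$ and depends only on the synthetic points, so its pairing with $x_n$ reduces to $\Pi^\ast x_n$ and can be treated as if independent. The error part is bounded deterministically by $\|x_n\|\max_m\|\epsilon_m\|$, and this is small because $N,J^\ast\ge\tilde\Omega(d^4)$. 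With these two pieces in hand, the remaining steps follow \citet{D2022neural} verbatim.
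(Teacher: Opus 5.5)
Your proposal is correct and takes the route the paper intends. The paper gives no separate proof: it transports Lemma 23 of \citet{D2022neural} using the analogue of Corollary 22 and the preceding lemma ($\eta_1^R\|g_M(w)\|\le 1$, $2\eta_1^R\langle g_M(w),x_i\rangle\le 1$). Your bookkeeping $(\eta_1^R)^{-2k}(dr\kappa^2)^k=r^{3k}\kappa^{2k}$ is exactly the source of the $r^{3k}$ (instead of $r^k$) noted in the remark after the statement.

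One minor correction to your obstacle paragraph. The leading part of $g_M$ is not a function of the synthetic points alone, because its activation pattern $\sigma'(\langle w,c_dH\tilde x_m+\tilde\epsilon_m\rangle)$ depends on the data-dependent error. No independence is needed, though: the preceding lemma bounds this term uniformly in $w$ using $|\sigma'|\le 1$ and $\|\Pi^\ast x_n\|\lesssim\sqrt{r\iota}$.
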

\begin{remark}
    Note that the exponent of $r$ in the bounds of $ \E_{a,w,b}[h_T(a,w,b)^2]$ and $\sup_w|h_T(a,w,b)|$ is $3k$, while in \citet{D2022neural} it was $k$.
\end{remark}
To conclude, we obtain the following theorem for the teacher training.
\begin{theorem}\label{th:multi_step2_teacher}
    Under the assumptions of Theorem~\ref{th:multi_step1}, with parameters  $\eta^D_1=\tilde \Theta(\sqrt{d})$, $\eta_1^{R}=\tilde \Theta(r^{-1}\sqrt{d})$, $\lambda_1^{R}=1/\eta_1^{R}$, $\lambda_1^{D}=1/\eta_1^{D}$ , $M \ge \tilde \Omega(\hat \kappa_p^2 \lambda_\mathrm{min}^2\lambda_\mathrm{max}^2 r^2)$, $d\ge \tilde \Omega(\hat r_p^2 r^2\vee r^3\lambda_\mathrm{min}^2\hat\kappa_p^2)$, $N\ge \tilde \Omega(\hat r_p^2 d^4\vee r\lambda_\mathrm{min}^2\hat \kappa^2_pd^4\vee d^4)$, $J^\ast\ge \tilde \Omega( \hat r_p^2d^{4}\vee  r\lambda_\mathrm{min}^2\hat \kappa^2_pd^{4}\vee d^{4})$, where $\hat r_p = r^{4p+1/2}\hat \kappa_p^{4p+1}\lambda_\mathrm{min}^{4p+1} \lambda_\mathrm{max}^{8p+1}$ and $\hat \kappa_p =\max_{1\le i \le 4k, 1\le k\le p}{\kappa^{k/i}}$, there exists $\lambda_2^{{Tr}}$ such that if $\eta_2^{{Tr}}$ is sufficiently small and $T=\tilde\Theta(\{\eta_2^{{Tr}}\lambda_2^{{Tr}}\}^{-1})$ so that the final iterate of the teacher training at $t=2$ output a parameter $a^{(\xi_2^{Tr})}$ that satisfies with probability at least 0.99, 
\[
\E_{x,y}[|f_{(a^{(\xi_2^{Tr})}, W^{(1)},b^{(1)})}(x)-y|]-\zeta\le \tilde O\left(\sqrt{\frac{dr^{3p}\kappa ^{2p}}{N}}+\sqrt{\frac{r^{3p}\kappa ^{2p}}{{L^\ast}}}+\frac{1}{N^{1/4}}\right).
\]
\end{theorem}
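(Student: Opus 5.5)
The plan follows the route of Theorem~\ref{ap_th:main_step2} and of Theorem 1 of \citet{D2022neural}, taking the analogue of Lemma 23 just established as the main input. The first step builds a good second layer. By Assumption~\ref{as:target} and Lemma~\ref{lem:ck_form}, $f^\ast(x)=\sum_{k\le p}\langle \bar C_k,\H_k(x)\rangle/k!$, where every $\bar C_k=B^{\otimes k}C_k$ with $k\ge 2$ is supported on $S^\ast$ and has $\|\bar C_k\|_F^2\le k!$ (Lemma~\ref{lem:order1_ck}). The terms with $k\le 1$ are removed by the preprocessing, up to $\tilde O(\sqrt{d/N})$ residuals controlled by Lemma~\ref{lem:c0_c1_bounds}. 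I will expand each Hermite tensor into monomials $\langle T, x^{\otimes j}\rangle$ with $T$ symmetric and supported on $S^\ast$, apply the analogue of Lemma 23 to each such $T$, and sum the resulting $h_T$ into an infinite-width representation $h$. This representation satisfies $\frac1N\sum_n (f_h(x_n)-\hat f^\ast(x_n))^2\lesssim 1/N$, with $\E[h^2]\lesssim r^{3p}\kappa^{2p}\iota^{3p}$ and $\sup|h|\lesssim r^{3p}\kappa^{2p}\iota^{6p}$.

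The second step passes from infinite width to the $L^\ast$ active neurons of $W^{(1)}$. I will set $a^\ddagger_i = h(a_i,w_i^{(0)},b_i^{(0)})/L^\ast$, using that the $(a_i,w_i^{(0)},b_i^{(0)})$ are i.i.d.; the symmetric pairing can be handled by working with one copy of each pair, as in Corollary~\ref{cor:tilde_x_form}. Concentration, exactly as in Lemma 24 of \citet{D2022neural} and Lemma~\ref{lem:step2_discrete}, gives empirical loss $\mathcal L^{Tr}(a^\ddagger)-\zeta^2\lesssim r^{3p}\kappa^{2p}\iota^{O(p)}/L^\ast + 1/\sqrt N$ and $\|a^\ddagger\|^2\lesssim r^{3p}\kappa^{2p}\iota^{3p}/L^\ast$. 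The sup bound on $h$ supplies the boundedness needed for Hoeffding/Bernstein.

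The third step covers the optimization and generalization of the ridge regression on $a$. By the equivalence between ridge and norm-constrained least squares, there is a $\lambda_2^{Tr}$ whose minimizer $a^{(\infty)}$ satisfies $\mathcal L^{Tr}(a^{(\infty)})\le \mathcal L^{Tr}(a^\ddagger)$ and $\|a^{(\infty)}\|\le\|a^\ddagger\|$. Because the objective is $\lambda_2^{Tr}$-strongly convex, gradient descent with a small step $\eta_2^{Tr}$ reaches it up to negligible error after $\xi_2^{Tr}=\tilde\Theta((\eta_2^{Tr}\lambda_2^{Tr})^{-1})$ iterations. I will then bound the population risk with the Rademacher bound for the class $\{x\mapsto\sum_i a_i\sigma(\langle w_i^{(1)},x\rangle+b_i):\|a\|\le \|a^\ddagger\|\}$, following Lemma 14 of \citet{D2022neural}. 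The complexity is $\|a^\ddagger\|\sqrt{\sum_i \E\sigma(\cdot)^2/N}$. Since $\eta_1^R\|g_M(w)\|\le1$ by the preceding lemma, each feature has $O(\sqrt{\iota})$ magnitude, so the complexity is $\tilde O(\sqrt{r^{3p}\kappa^{2p}/N})$ times $\sqrt{L^\ast}\cdot\sqrt{1/L^\ast}$-type factors. A truncation of the features on the high-probability event $\|x\|\lesssim\sqrt{d\iota}$ contributes the $\sqrt{d}$, producing the $\sqrt{dr^{3p}\kappa^{2p}/N}$ term. Converting the squared loss into the absolute loss minus $\zeta$ then yields the $N^{-1/4}$ term, as in \citet{D2022neural}, and Markov's inequality delivers probability $0.99$.

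I expect the main obstacle to be the second step's handling of dependence. The weights $w_i^{(1)}$ depend on the training data through $\tilde x_m^{(1)}$, since the distillation used $\mathcal D^{Tr}$. The empirical-loss concentration over $x_n$ and the Rademacher argument therefore cannot simply treat the features as fixed independently of the sample. I would first try the approach of \citet{D2022neural}, who likewise face features learned from the same data: establish the bounds uniformly over $w\in S^{d-1}$ (the preceding lemma already bounds $\langle g_M(w),x_i\rangle$ uniformly). Failing that, I would exploit the fact that the dependence enters only through the small error $\tilde\epsilon_m$, which is $o(1)$ under the stated $N,J^\ast\ge\tilde\Omega(\hat r_p^2d^4)$, and absorb it as a perturbation.
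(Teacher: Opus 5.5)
Your plan follows the same route as the paper: as in the single-index Theorem~\ref{ap_th:main_step2}, the paper obtains this theorem by plugging its analogue of Lemma 23 (with the $r^{3k}$ exponent) into the rest of the argument of \citet{D2022neural}, namely discretization to $L^\ast$ neurons as in their Lemma 24, the empirical-loss bound of their Lemma 26, the ridge/norm-constrained equivalence, and the Rademacher bound of their Lemma 14. One clarification: that Rademacher bound is uniform over the class with $\|w_j\|\le 1$ (guaranteed here by $\eta_1^R\|g_M(w)\|\le 1$); this uniformity is where the $\sqrt{d}$ factor comes from, and it also removes your worry that $W^{(1)}$ depends on $\mathcal D^{Tr}$ in the generalization step.
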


\subsection{$t=2$ Distillation and Retraining}\label{subsec:distill2_multi}
Finally, the behavior of distillation and retraining at $t=2$ is similar to the single index model case. Indeed, we prepare $\mathcal D_2^S= \{\hat x_m^{(0)}, \hat y_m^{(0)}\}_{m=1}^{M_2}$ so that it satisfies the regularity condition~\ref{cond:reg}. Please see Appendix~\ref{subsec:discussion_construction} for further details.
\begin{theorem}
     Under the assumptions of Theorem~\ref{th:multi_step2_teacher}, if $\{\hat x_m^{(0)}\}$ satisfies the regularity condition~\ref{cond:reg}, $\eta_1^S = M_1\eta_1^{Tr}$, then for a sufficiently small $\eta^{D}_2$ and $\xi_2^R=\tilde \Theta((\eta^D_2\sigma_{\mathrm{min}})^{-1})$, where $\sigma_{\mathrm{min}}>0$ is the smallest eigenvalue of $\tilde K^\top\tilde K$, the one-step gradient matching~\eqref{eq:one_step_gm} finds $M$ labels $\tilde y_m^{(1)}$ so that
    \[
    \E_{x,y}[|f_{(\tilde a^{(\xi_1^S)}, W^{(1)},b^{(0)})}(x)-y|]-\zeta\le\tilde O\left(\sqrt{\frac{dr^{3p}\kappa ^{2p}}{N}}+\sqrt{\frac{r^{3p}\kappa ^{2p}}{{L^\ast}}}+\frac{1}{N^{1/4}}\right),
    \]
    The overall memory cost is $\tilde \Theta(r^2d+L)$. 
\end{theorem}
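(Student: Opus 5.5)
The plan is to run the single-index proof of Theorem~\ref{ap_th:step2_gm} essentially verbatim, with the multi-index teacher guarantee of Theorem~\ref{th:multi_step2_teacher} substituted for Theorem~\ref{ap_th:main_step2}. The argument has three steps, applied in order.

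\textbf{Step 1: closed form of the distilled labels.} For each initialization $j$, the teacher's $\xi_2^{Tr}$ ridge-regression steps telescope:
\[
\sum_{\tau=0}^{\xi_2^{Tr}-1} g^{Tr}_{\tau,j} = -\bigl(a_j^{(\xi_2^{Tr})}-a_j^{(0)}\bigr)/\eta_2^{Tr}.
\]
The student gradient is $\frac{1}{M_2}\tilde K(\tilde K^\top a_j^{(0)}-\hat y^{(0)})$, which is affine in $\hat y$. Its $\hat y$-derivative is therefore $-\tilde K^\top/M_2$, and one GM step gives
\[
\hat y^{(1)} = \hat y^{(0)} + \frac{\eta_2^D}{\eta_2^{Tr}M_2}\,\tilde K^\top \frac1J\sum_j \bigl(a_j^{(\xi_2^{Tr})}-a_j^{(0)}\bigr).
\]
We initialize $\hat y^{(0)}=\frac1J\sum_j\tilde K^\top a_j^{(0)}$ (the average student prediction) and choose $\eta_2^D=M_2\eta_2^{Tr}$, the step-size matching the statement intends. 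With these choices the initialization terms cancel exactly, leaving $\hat y^{(1)}=\tilde K^\top\bar a$ with $\bar a=\frac1J\sum_j a_j^{(\xi_2^{Tr})}$.

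\textbf{Step 2: retraining recovers $\bar a$.} Retraining runs gradient descent on $\|\tilde K^\top a-\hat y^{(1)}\|^2$ from $a=0$. Its iterates stay in $\mathrm{col}(\tilde K)$, so the limit is the minimum-norm solution $(\tilde K^\top)^\dagger\hat y^{(1)}$, which is the projection of $\bar a$ onto $\mathrm{col}(\tilde K)$. It remains to show $\bar a\in\mathrm{col}(\tilde K)$. By Lemma~\ref{lem:krr_span}, each $a_j^{(\xi_2^{Tr})}$ approximates the common ridge minimizer $a^{(\infty)}\in\mathrm{col}(K)$. This holds because all $a_j^{(0)}$ are bounded and the contraction factor $(1-\eta_2^{Tr}\lambda_2^{Tr})^{\xi_2^{Tr}}$ is negligible for $\xi_2^{Tr}=\tilde\Theta((\eta_2^{Tr}\lambda_2^{Tr})^{-1})$, so we may replace $\bar a$ by $a^{(\infty)}$ up to an arbitrarily small error. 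Lemma~\ref{lem:a_in_colum_tildeK} then transfers $a^{(\infty)}\in\mathrm{col}(K)$ to $\mathrm{col}(\tilde K)$ under the regularity condition~\ref{cond:reg}.

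That lemma was stated for $w_i^{(1)}\propto a_i\frac{1}{M_1}\sum_m\sigma'(\langle w_i^{(0)},\tilde x_m^{(1)}\rangle)\tilde x_m^{(1)}$. This is exactly the multi-index form of $W^{(1)}$, so it applies without change; its proof only uses the split into $D$ and its complement. Finite retraining with $\xi_2^R=\tilde\Theta((\eta^R\sigma_{\min})^{-1})$ reaches any fixed accuracy because $\sigma_{\min}>0$ on $\mathrm{col}(\tilde K)$.

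\textbf{Step 3: transfer the generalization bound.} The retrained $\tilde a$ is arbitrarily close to $a^{(\infty)}$. The map $a\mapsto \E|f_{(a,W^{(1)},b^{(0)})}(x)-y|$ is Lipschitz in $a$ with constant $\E\|\sigma(W^{(1)\top}x+b^{(0)})\|=\mathrm{poly}(L)$, so the perturbation is absorbed. Theorem~\ref{th:multi_step2_teacher} then yields the stated bound.

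\textbf{Memory.} The cost is $M_1 d$ for $\mathcal D_1^S$ plus the cost of $\mathcal D_2^S$. Using the construction of Lemma~\ref{lem:rank-certify}, $\mathcal D_2^S$ consists of $O(L)$ scalars times stored directions, and labels of size $O(L)$. With $M_1=\tilde\Theta(r^2)$ this gives $\tilde\Theta(r^2d+L)$.

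\textbf{Main obstacle.} The delicate point is the approximation $\bar a\approx a^{(\infty)}$ in Step 2, together with its exact membership in $\mathrm{col}(\tilde K)$. The finite iterates $a_j^{(\xi_2^{Tr})}$ start from $a_j^{(0)}\neq0$ and need not lie in $\mathrm{col}(K)$. I would handle this by splitting $a_j^{(0)}$ into its component in $\mathrm{col}(K)$ and its orthogonal component. The orthogonal component decays geometrically at rate $(1-\eta_2^{Tr}\lambda_2^{Tr})$, while the column-space part converges to $a^{(\infty)}$. The projection onto $\mathrm{col}(\tilde K)$ therefore differs from $a^{(\infty)}$ by at most that decayed residual, which is made smaller than the target error by the choice of $\xi_2^{Tr}$.
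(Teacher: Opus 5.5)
Your proposal is correct and follows the paper's route. The paper proves this statement by rerunning the single-index argument of Theorem~\ref{ap_th:step2_gm}, with Theorem~\ref{th:multi_step2_teacher} substituted for the teacher guarantee: telescope the teacher gradients, cancel the initialization using $\eta_2^D=M_2\eta_2^{Tr}$ and $\hat y^{(0)}$ equal to the averaged student predictions so that $\hat y^{(1)}=\tilde K^\top\bar a$, then apply min-norm retraining together with Lemma~\ref{lem:krr_span} and Lemma~\ref{lem:a_in_colum_tildeK}. Your handling of the nonzero initializations $a_j^{(0)}$, via geometric decay of the $\mathrm{col}(K)^\perp$ component, is slightly more careful than the paper, which applies Lemma~\ref{lem:krr_span} (stated for zero initialization) directly to the finite iterates $a_j^{(\xi_2^{Tr})}$.
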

\begin{theorem}
   Under the assumptions of Theorem~\ref{th:multi_step2_teacher}, if $\{\hat x_m^{(0)}\}$ satisfies the regularity condition~\ref{cond:reg}, then one-step performance matching can find with high probability a distilled dataset at the second step of distillation so that 
    \[
    \E_{x,y}[|f_{(\tilde a^{(1)}, W^{(1)},b^{(0)})}(x)-y|]-\zeta\le \tilde O\left(\sqrt{\frac{dr^{3p}\kappa ^{2p}}{N}}+\sqrt{\frac{r^{3p}\kappa ^{2p}}{{L^\ast}}}+\frac{1}{N^{1/4}}\right),
    \]
    where $\tilde a^{(1)}$ is the output of the retraining algorithm at $t=2$ with $\tilde a^{(0)}=0$. The overall memory usage is only $\tilde\Theta(r^2d+L)$.
\end{theorem}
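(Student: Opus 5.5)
The argument parallels the single-index proof of Theorem~\ref{ap_th:pm_relu}, now with the multi-index teacher of Theorem~\ref{th:multi_step2_teacher} as the object to reproduce. Set $a^{(0)}=0$ as the reference second layer, so one student step with $\eta_2^S=\eta_2^R$ gives $a^{(1)}(\hat y)=\frac{\eta_2^S}{M_2}\tilde K\hat y$, where $(\tilde K)_{im}=\sigma(\langle w_i^{(1)},\hat x_m^{(0)}\rangle+b_i^{(0)})$. The one-step PM objective is then
\[
\mathcal L^{Tr}(\hat y)=\frac{1}{N}\Bigl\|K^\top\tfrac{\eta_2^S}{M_2}\tilde K\hat y-y\Bigr\|^2+\tfrac{\lambda_2^D}{2}\|\hat y\|^2 .
\]
This is a strongly convex quadratic in $\hat y$. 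Consequently gradient descent with small $\eta_2^D$ and $\xi_2^D=\tilde\Theta((\eta_2^D\lambda_2^D)^{-1})$ reaches its unique minimizer $\hat y^{(\infty)}$ to arbitrary accuracy. Because $W^{(1)}$ and $b^{(0)}$ are frozen at $t=2$, everything reduces to a question about this convex problem.

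The key step is to show that a good $\hat y^\ast$ exists, namely one with $\frac{\eta_2^S}{M_2}\tilde K\hat y^\ast=a^{(\infty)}$, where $a^{(\infty)}$ is the ridge solution that the teacher of Theorem~\ref{th:multi_step2_teacher} approximates.
\begin{enumerate}
\item Lemma~\ref{lem:krr_span} gives $a^{(\infty)}\in\mathrm{col}(K)$.
\item Lemma~\ref{lem:a_in_colum_tildeK} then gives $a^{(\infty)}\in\mathrm{col}(\tilde K)$. Its hypothesis is the regularity condition~\ref{cond:reg}, and it applies verbatim here since $w_i^{(1)}$ has exactly the averaged form over $M_1$ distilled points stated there.
\item Ridge regression is equivalent to norm-constrained least squares. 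Hence for a suitable $\lambda_2^D$ the minimizer satisfies $\mathcal L^{Tr}$ at $\hat y^{(\infty)}$ no larger than at $\hat y^\ast$, which in turn is at most $\tilde O\bigl(r^{3p}\kappa^{2p}/L^\ast+1/\sqrt N\bigr)$.
\item The same equivalence controls the norm: $\bigl\|\frac{\eta_2^S}{M_2}\tilde K\hat y^{(\infty)}\bigr\|$ is bounded by the norm of the constructed $a^\ddagger$ that comes from the analogue of Lemma~23 in the multi-index section.
\end{enumerate}

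To pass from training loss to population error, I follow Lemma~14 of \citet{D2022neural} as the paper does in Theorem~\ref{ap_th:main_step2}. The second layer $\tilde a^{(1)}$ lies in a norm ball of radius $\|a^\ddagger\|$, whose squared norm is $\tilde O(r^{3p}\kappa^{2p}/L^\ast)$, and the features satisfy $\|w_i^{(1)}\|\le 1/\eta_1^R$-type bounds. A Rademacher complexity argument over this class then converts the training loss into the bound
\[
\tilde O\Bigl(\sqrt{dr^{3p}\kappa^{2p}/N}+\sqrt{r^{3p}\kappa^{2p}/L^\ast}+N^{-1/4}\Bigr).
\]
Finally I replace $L^\ast$ by $L$ and count memory. $\mathcal D_1^S$ costs $\tilde\Theta(r^2 d)$, and $\mathcal D_2^S$ costs $\Theta(L)$ under the compact construction of Lemma~\ref{lem:rank-certify}, which reuses directions from $\mathcal D_1^S$ and stores only scalars and labels.

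I expect the main obstacle to be the third and fourth items. The ridge path is parameterized through $\tilde K$, so I must check that the constraint set $\{\tilde K\hat y\}$ together with the penalty $\|\hat y\|^2$ still admits the norm control needed for generalization. A penalty on $\hat y$ is not the same as a penalty on $a$. To handle this, I will first restrict to $\hat y\in\mathrm{row}(\tilde K)$, where the map $\hat y\mapsto\tilde K\hat y$ is injective. On that subspace $\|a\|\le\|\tilde K\|\,\|\hat y\|$ and $\|\hat y\|\le\sigma_{\min}^{-1/2}\|a\|$, so the two norm balls are nested up to condition-number factors of $\tilde K$. I then choose $\lambda_2^D$ so that the induced constraint on $a$ matches $\|a^\ddagger\|$. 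Gradient descent started from $0$ stays in this subspace, which justifies the restriction.
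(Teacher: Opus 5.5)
Your items 1--3, the convergence argument for the strongly convex quadratic in $\hat y$, and the final appeal to Lemma~14 of \citet{D2022neural} follow the paper's route exactly. The multi-index statement is proved by transplanting the proof of Theorem~\ref{ap_th:pm_relu}. That proof shows $\hat y^\ast$ exists via Lemmas~\ref{lem:krr_span} and~\ref{lem:a_in_colum_tildeK}. It then uses the ridge/constrained equivalence in $\hat y$ to get $\mathcal L^{Tr}$ at $\tfrac{\eta_2^S}{M_2}\tilde K\hat y^{(\infty)}$ no larger than at $a^{(\infty)}$, and finally says ``proceed as in Theorem~\ref{ap_th:main_step2}''. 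You rightly notice that this last step also needs $\|\tilde a^{(1)}\|\lesssim\|a^\ddagger\|$. In Theorem~\ref{ap_th:main_step2} that bound comes for free because the penalty is on $a$; here the penalty is on $\hat y$. The paper never addresses this, and your item~4 and its repair do not close it.

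Concretely, the equivalence applied to the $\hat y$-problem gives only $\|\hat y^{(\infty)}\|\le\|\hat y^\ast\|$, so item~4 as stated does not follow. The repair in your last paragraph cannot be carried out. The feasible set $\{\tfrac{\eta_2^S}{M_2}\tilde K\hat y:\|\hat y\|\le R\}$ is an ellipsoid in $a$-space, not a ball, so no choice of $\lambda_2^D$ makes it ``match'' $\|a^\ddagger\|$.

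\begin{itemize}
\item Keeping the comparator feasible forces $R\ge\|\hat y^\ast\|$. For the minimum-norm preimage this can be as large as $\tfrac{M_2}{\eta_2^S}\sigma_{\min}^{-1/2}\|a^{(\infty)}\|$, where $\sigma_{\min}$ is the smallest nonzero eigenvalue of $\tilde K^\top\tilde K$.
\item The ellipsoid then contains second layers of norm up to $\|\tilde K\|\,\sigma_{\min}^{-1/2}\|a^{(\infty)}\|$.
\item Shrinking $R$ instead excludes every preimage of $a^{(\infty)}$ and loses the loss comparison.
\end{itemize}

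What your argument actually yields is $\|\tilde a^{(1)}\|\le\|\tilde K\|\,\sigma_{\min}^{-1/2}\|a^\ddagger\|$, so the Rademacher term picks up the condition number of $\tilde K$ on its row space. Assumption~\ref{cond:reg} is a purely qualitative rank condition and gives no lower bound on $\sigma_{\min}$ relative to $\|\tilde K\|^2$. So the stated rate does not follow. You need an extra hypothesis such as $\|\tilde K\|^2/\sigma_{\min}=\tilde O(1)$, or you must carry that factor in the bound.

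Separately, the memory count does not need the compact construction, since Assumption~\ref{cond:reg} already stipulates the $\tilde\Theta(r^2d+L)$ cost.
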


\subsection{Summary of Algorithm}\label{subsec:summary_algorithm}
We first summarize the setting of each hyperparameter that was not provided in the assumptions. 
\begin{itemize}
    \item $\eta_1^D = \tilde \Theta(\sqrt{d}) $, $\lambda_1^D = 1/\eta_1^D $,
    \item $\eta_1^R =  \tilde \Theta(d) $ for single index models with $M_1=1$, and else $\eta_1^R =  \tilde \Theta(r^{-1}\sqrt{d}) $, $\lambda_1^R =1/\eta_1^R $, 
    \item $\xi_2^{Tr} = \tilde \Theta(1/(\eta_2^{Tr}\lambda_2^{Tr}))$, sufficiently small $\eta_2^{Tr}$, there exist a $\lambda_2^{Tr}$,
    \item $\eta_2^S = \eta_2^R $, $\lambda_2^S = 0$,
    \item for one-step PM, $\xi_2^D = \tilde \Theta(1/(\eta_2^D\lambda_2^D))$, sufficiently small $\eta_2^D$, there exist a $\lambda_2^D$, $\lambda_2^R = 0$ ,
    \item for one-step GM, $\eta_2^D = M_2\eta_2^{Tr}$, $\lambda_2^D = 0 $ , $\xi_2^R = \tilde \Theta(1/\eta_2^R) $, sufficiently small $\eta_2^R$, $\lambda_2^R = 0$.
\end{itemize}

In the following, Algorithm~\ref{al:3} provides the precise formulation of Algorithm~\ref{al:2} with Assumption~\ref{as:alg} applied.
\begin{algorithm}
    \caption{Analyzed Dataset Distillation Algorithm}
    \label{al:3}
    \begin{algorithmic}
     \STATE {\bfseries Input:} {Training dataset $\mathcal{D}^{Tr}$, model $f_\theta$, loss function $\mathcal{L}$}, fixed initial parameter $\theta^{(0)}= \left(a^{(0)},W^{(0)},b^{(0)}\right).$
     \STATE {\bfseries Output:} {Distilled data $\mathcal{D}^S$}.\\
    Initialize distilled data $\mathcal{D}_0^S$ randomly, $\alpha=\frac{1}{N}\sum_{n=1}^Ny_n$, $\gamma=\frac{1}{N}\sum_{n=1}^Ny_nx_n$, preprocess: $y_n\leftarrow y_n-\alpha-\langle\gamma, x_n\rangle$.\\
    \FOR{$ t= 1$}
        \STATE Sample a batch of initial states $\{\theta^{(0)}_j\}_{j=1}^J$, and initialize distilled data $\mathcal{D}^S$ randomly.\;
        \STATE\textit{I. Training Phase}\\
        \FOR{$ j= 0$  {\bfseries to} $J$}
            \STATE \textbf{Teacher Training} : $W_{j}^{(1)} = W_{j}^{(0)}- \eta^{{Tr}}_1\left\{\nabla_W \mathcal{L}(\theta^{(0)}_j,\mathcal D^{Tr})+\lambda^{{Tr}}_1  W_{j}^{(0)} \right\}$, 
            where $\lambda^{{Tr}}_ 1 =1/\eta^{{Tr}}_1$.\\
            With teacher gradient : $g_{i,j}^{Tr}  =  \nabla_{w_i} \mathcal{L}(\theta^{(0)}_j,\mathcal D^{Tr})$. $w_{i,j}^{(0)}$ is the realization of the $j$-th initialization of $w_i$. \\
            
            \textbf{Student Training} : $\tilde W_{j}^{(1)} = W_{j}^{(0)} - \eta^{{S}}_1\left\{\nabla_W \mathcal{L}(\theta^{(0)}_j,\mathcal D^S)+\lambda^{{S}}_1  W_{j}^{(0)} \right\}$, 
            where $\lambda^{{S}}_1 =1/\eta^{{S}}_1$.\\
            With student gradient: $g_{i,j}^{S} =  \nabla_{w_i} \mathcal{L}(\theta^{(0)}_{j},\mathcal D^S)$.\\
        \ENDFOR\\
        \textit{II. Distillation Phase}\\
        $\mathcal{D}_1^S = \mathcal{D}^S -  \eta^D_1\left(\frac{1}{J}\sum_j\nabla_{\mathcal{D}^S} \left(1-\frac{1}{L}\sum_{i=1}^{L}\langle g_{i,j}^{S},g_{i,j}^{{Tr}} \rangle\right) + \lambda^D_1\mathcal{D}^S \right)$, where $\lambda^D_1 = 1/\eta^D_1$. 
     
        \textit{III. Retraining Phase}\\
        $W^{(1)} = W^{(0)} - \eta^{R}_1\left\{\nabla_W \mathcal{L}(\theta^{(0)},\mathcal D_1^S)+\lambda^{R}_1  W^{(0)} \right\}$.       
    \ENDFOR\\
    Reinitialize $b_i\sim N(0,1)$
    \FOR{$ t= 2$ }
        \STATE Sample a batch of initial states $\{\theta^{(1)}_{j}\}_{j=1}^J$ where $\theta^{(1)}_0=(a^{(0)}, W^{(1)}, b^{(0)})$, and a new distilled data $\mathcal{D}^S$.\;
        \STATE\textit{I. Training Phase}\\
        \FOR{$ j= 1$  {\bfseries to} $J$}
            \STATE \textbf{Teacher Training} : $a_j^{(\tau)} = a_j^{(\tau-1)} - \eta^{Tr}_2\left\{\nabla_a\mathcal L((a_j^{(\tau-1)},W^{(1)},b^{(0)}),\mathcal D^{Tr})+\lambda^{Tr}_2 a_j^{(\tau-1)}\right\}$, from $\tau=1$ to $\tau = \xi_2^{Tr}$. \\
            With teacher gradient: $G^{Tr}_{2,j} = \{g_{\tau,j}^{Tr}\}_{\tau=0}^{\xi_2^{Tr}-1} = \left\{\nabla_a\mathcal L((a_j^{(\tau-1)},W^{(1)},b^{(0)}),\mathcal D^{Tr})+\lambda^{Tr}_2 a_j^{(\tau-1)}\right\}_{\tau=0}^{\xi_2^{Tr}-1}.$\\
             \textbf{Student Training}: $\tilde a_j^{(1)} = a_j^{(0)} - \eta^{S}_2\left\{\nabla_a\mathcal L((a_j^{(0)},W^{(1)},b^{(0)}),\mathcal D^S)\right\}$. \\
             With student gradient: $\{g_{0,j}^{S}\} = \left\{\nabla_a\mathcal L((a_j^{(0)},W^{(1)},b^{(0)}),\mathcal D^S)\right\}$.
             
        \ENDFOR\\
        \IF{One-Step Gradient Matching}
            \STATE \textit{II. Distillation Phase}\\
            $\mathcal{D}_2^S  = \mathcal{D}^S  - \eta^S_2\frac{1}{J}\sum_{j=1}^J\nabla_{\mathcal{D}^S}\left(1-\left\langle  \sum_{\tau=0}^{\xi_2^{D}-1}g_{\tau,j}^{Tr},g_{0,j}^{S}\right\rangle\right)$.\\
            \textit{III. Retraining Phase}\\
            $\tilde a^{(\tau)} = \tilde a^{(\tau-1)} -\eta^R_2 \nabla_a \left(\mathcal L\left((a^{(\tau-1)},W^{(1)},b^{(0)}),\mathcal{D}_2^S\right)\right)$ from $\tau = 1$ to $\tau = \xi_2^R$.\\
        \ENDIF
        \IF{One-Step Performance Matching}
            \STATE \textit{II. Distillation Phase}\\
            \FOR{$ \tau= 1$ {\bfseries{to }} $\xi^R_2$ }
                \STATE $\mathcal{D}^S   \leftarrow \mathcal{D}^S   - \eta^R_2\left( \nabla_{ \mathcal{D}^S}\mathcal L\left((\tilde a_0^{(1)},W^{(1)},b^{(0)}),\mathcal D^{Tr}\right) +\lambda^R_2\mathcal{D}^S\right)$, where $\tilde a_0^{(1)}$ is a function of $\mathcal D^S$. \\
            \ENDFOR\\
            $\mathcal{D}_2^S = \mathcal{D}^S .$\\
            \textit{III. Retraining Phase}\\
            $\tilde a^{(1)} = a^{(0)} - \eta^{R}_2\left\{\nabla_a\mathcal L((a^{(0)},W^{(1)},b^{(0)}),\mathcal{D}_2^S)\right\}$.\\
        \ENDIF\\
        \ENDFOR\\
        {\bfseries return} $\mathcal{D}_1^S\cup \mathcal{D}_2^S$
    \end{algorithmic}
\end{algorithm}

\section{Image Classification Tasks}\label{sec:ape}

In this appendix, we provide additional experiments with real world data and practical models such as convolutional networks to validate our theory. All the experiments follow the setup of \citet{Z2021dataset}: gradient matching with a 3-layer ReLU ConvNet with average pooling and group normalization. We use their default parameter setup unless stated otherwise.  
\par We start by varying distilled set sizes $M$. We measured both distillation test accuracy (Acc.) and the overlap between the leading task-relevant feature subspace of the real data and that of the distilled data (Overlap), computed from penultimate-layer representations of a reference network. The reported accuracy values follow the setup of~\citep{Z2021dataset}. Results is shown in Table~\ref{tab:overlap}. For the overlap metric, 1 indicates perfect overlap and 0 indicates orthogonal subspaces. Stronger subspace preservation  correlates with higher distilled performance, which supports our hypothesis that low-dimensional structure is encoded into the distilled data. 
\begin{table}[t]
    \centering
        \caption{The performance of dataset distillation and overlap with respect to the number of distilled images.}
    \begin{tabular}{ccccc}
    \toprule
       Dataset  & Metric & $M=10$ & $M=100$ & $M=500$\\\midrule
         MNIST& Acc. & 0.92 &0.97 & 0.99\\
         &  Overlap & 0.91 &0.93 & 0.94\\
         CIFAR10& Acc. & 0.28 & 0.44& 0.54\\
          & Overlap&  0.27 & 0.51 & 0.66\\ \bottomrule
    \end{tabular}
    \label{tab:overlap}
\end{table}
\par Next, our theoretical analysis predicts that increasing the training data size ($N$), network width ($L$), and distilled dataset size should improve DD performance. We report in Tables~\ref{tab:conv_data} and~\ref{tab:conv_width} the mean over 5 random seeds. Consistent with our theory, increasing $N$ and $L$ leads to improved DD performance.
\begin{table}[t]
    \centering
        \caption{The performance of dataset distillation (testing accuracy) with respect to the number of training data in the teacher training. We created 10 distilled images per class.}

    \begin{tabular}{cccc}
    \toprule
       Dataset  & $N=10^2$ & $N = 10^3$ & $N=10^4$ \\\midrule
         MNIST& 0.92 & 0.95 & 0.96 \\
         SVHN& 0.70 & 0.75 & 0.75 \\
         CIFAR10 & 0.27& 0.39 & 0.43\\ \bottomrule
    \end{tabular}
    \label{tab:conv_data}
\end{table}
\begin{table}[t]
    \centering
        \caption{The performance of dataset distillation (testing accuracy) with respect to the network width. We created 10 distilled images per class.}
    \begin{tabular}{cccc}
    \toprule
       Dataset  & $L=32$ & $L=64$ & $L=128$\\\midrule
         MNIST&  0.95 &0.96 & 0.96\\
         SVHN& 0.71 & 0.74 & 0.75\\
         CIFAR10 &  0.39 & 0.43 & 0.44\\ \bottomrule
    \end{tabular}
    \label{tab:conv_width}
\end{table}



\end{document}